\newtheorem{thm}{Theorem}[section]
\newtheorem{cor}[thm]{Corollary}
\newcounter{axm}
\newtheorem{axiom}[axm]{Axiom}
\theoremstyle{definition} 
\newtheorem{defn}[thm]{Definition}
\newcommand{\titel}{General Riemannian SOM\xspace} 
\newcommand{\ich}{Jascha Schewtschenko\xspace}
\newcommand{\betreuera}{Prof. Dr. Helge Ritter\xspace}
\newcommand{\betreuerb}{Dr. J\"{o}rg Ontrup\xspace}
\newcommand{\fakultaet}{Technische Fakult\"{a}t\xspace}
\newcommand{\ag}{AG Neuroinformatik\xspace}
\newcommand{\uni}{\protect{Universit\"{a}t Bielefeld}\xspace}
\newcommand{\abgabe}{M\"{a}rz 2009\xspace}
\newcommand{\typ}{Diplomarbeit an der Technischen Fakult\"at\xspace}
\newcommand{\set}[1]{\left\{#1\right\}}
\newcommand{\setgen}[2]{\left\{ #1 | #2 \right\} }
\newcommand{\ind}[1]{\ensuremath{\stackrel{(#1)}=}}
\newcommand{\pdiff}[2]{\ensuremath{\frac{\partial #1} {\partial #2}}}
\newcommand{\Landau}{\mathcal{O}}
\newcommand{\Tr}{\mathrm{Tr}}
\newcommand{\expval}[1]{\langle #1 \rangle}
\newcommand{\norm}[1]{\ensuremath{\left\|#1 \right\|}}
\newcommand{\matrixzwei}[1]{\ensuremath{\left(\begin{array}{cc}#1\end{array}\right)}}
\newcommand{\matrixdrei}[1]{\ensuremath{\left(\begin{array}{ccc}#1\end{array}\right)}}
\newcommand{\setR}{\ensuremath{\mathbb{R}}}
\newcommand{\setZ}{\ensuremath{\mathbb{Z}}}
\newcommand{\setN}{\ensuremath{\mathbb{N}}}
\newcommand{\setC}{\ensuremath{\mathbb{C}}}
\newcommand{\acosh}{\ensuremath{\mathrm{acosh}}}
\newcommand{\asinh}{\ensuremath{\mathrm{asinh}}}
\newcommand{\atanh}{\ensuremath{\mathrm{atanh}}}
\newcommand{\atantwo}{\ensuremath{\mathrm{atan2}}}
\newcommand{\eps}{\ensuremath{\varepsilon}}
\newcommand{\ho}{\ensuremath{h_{rs}^0}}
\newcommand{\hop}{\ensuremath{h_{r's}^0}}
\newcommand{\err}[1]{\tiny{(\ensuremath{\pm} #1)}}
\renewcommand\url{\begingroup \def\UrlLeft{\ensuremath{<}}%
                                   \def\UrlRight{\ensuremath{>}}%
                                   \urlstyle{tt}\small \Url}
\newcommand\wrapfill{\par
\ifx\parshape\WF@fudgeparshape
\nobreak
\vskip-\baselineskip
\vskip\c@WF@wrappedlines\baselineskip
\allowbreak
\WFclear
\fi
}
\definecolor{BoxGrey}{rgb}{0.91, 0.91, 0.91}
\newcommand{\elaboxname}{Kasten}
\newcounter{Lcount}
{\begin{list}{(\alph{Lcount})}
    {\usecounter{Lcount}
  \setlength{\rightmargin}{\leftmargin}}
  \setlength{\itemindent}{0pt} 
 \setlength{\labelsep}{0.5em} 
  \setlength{\itemsep}{0pt} 
}
{\end{list}}
\newcommand{\tocsection}[1]{\phantomsection \addcontentsline{toc}{chapter}{#1}}
\newcommand{\zb}[1]{beispielsweise }
\newcommand{\unterschriftsfeld} {\vspace*{1.5cm}
\hspace{0.55\textwidth } Bielefeld, ~\abgabe
}
\newcommand{\abschliessendeerklaerung}[1]{
\cleardoublepage
\null
\vfill

\begingroup
\let\clearpage\relax
\let\cleardoublepage\relax
\let\cleardoublepage\relax
\chapter*{Versicherung} 
Versicherung gem\"{a}\ss{} Paragraph 20, Absatz 9 der Diplompr\"{u}fungsordnung f\"{u}r den Studiengang Naturwissenschaftliche Informatik an der Technischen Fakult\"{a}t der Universit\"{a}t Bielefeld vom 1. April 2003.\bigskip\\ 
Hiermit versichere ich, da\ss{} ich die vorliegende Diplomarbeit selbst\"{a}ndig erarbeitet und keine anderen als die angegebenen Quellen und Hilfsmittel benutzt sowie Zitate kenntlich gemacht habe. \\ 

\unterschriftsfeld
\endgroup
\thispagestyle{empty}

}
\newcounter{axiom:line}
\newcounter{axiom:metric} 
\newcounter{axiom:param} 
\newcounter{axiom:HP}
\newcounter{axiom:angles} 
\newcounter{axiom:SAS}
\newcounter{axiom:parallel}
\begin{document}
\frenchspacing
\raggedbottom


\pagenumbering{roman}
\pagestyle{plain}
\begin{titlepage}
\setcounter{page}{-1}
\setlength\oddsidemargin{0.95cm}
    \begin{center}
        \begingroup
        	\vspace*{1cm}\large  
            \color{Maroon}\LARGE\titel 	\\ \bigskip
        \endgroup

        \vfill

        \typ \\ der \uni \bigskip \\
        \abgabe

        \vfill             
		{\scshape \ich}        
		\vfill         

		Betreuer~/~Pr\"{u}fer\\  
		\betreuera ~\\ 
		\betreuerb ~\\ 
		~ \\ 
		\uni \\ 
		\fakultaet \\ 
		\ag \\ 
		Universit\"{a}tsstra\ss e 25\\ 
		33615 Bielefeld \\		

		\vspace*{1.32cm}       
		\includegraphics[height=2cm]{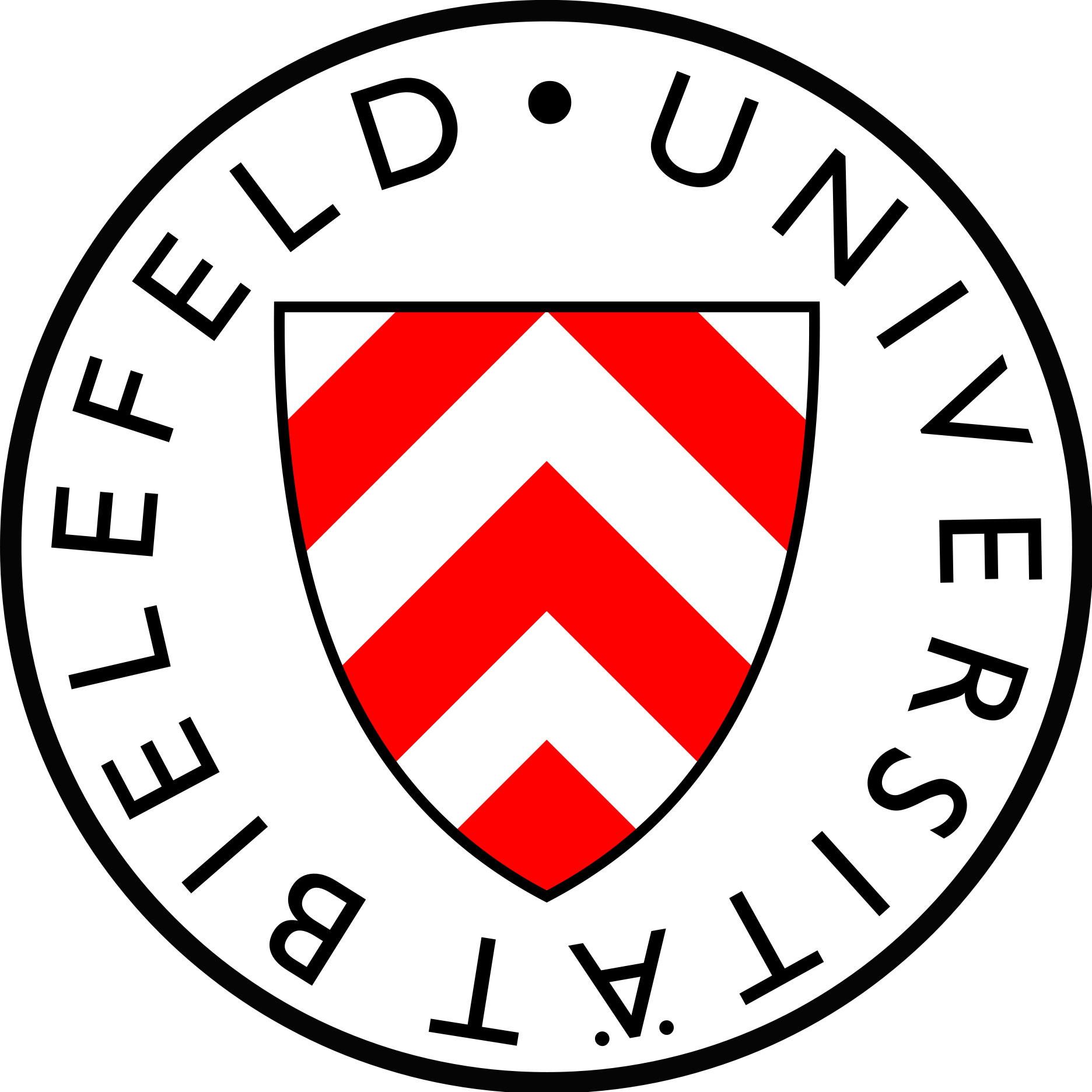}
    \end{center}        
\end{titlepage}

\cleardoublepage
\begin{titlepage}

\hfill
\vfill
\begin{figure}[H]
\includegraphics[width=\linewidth]{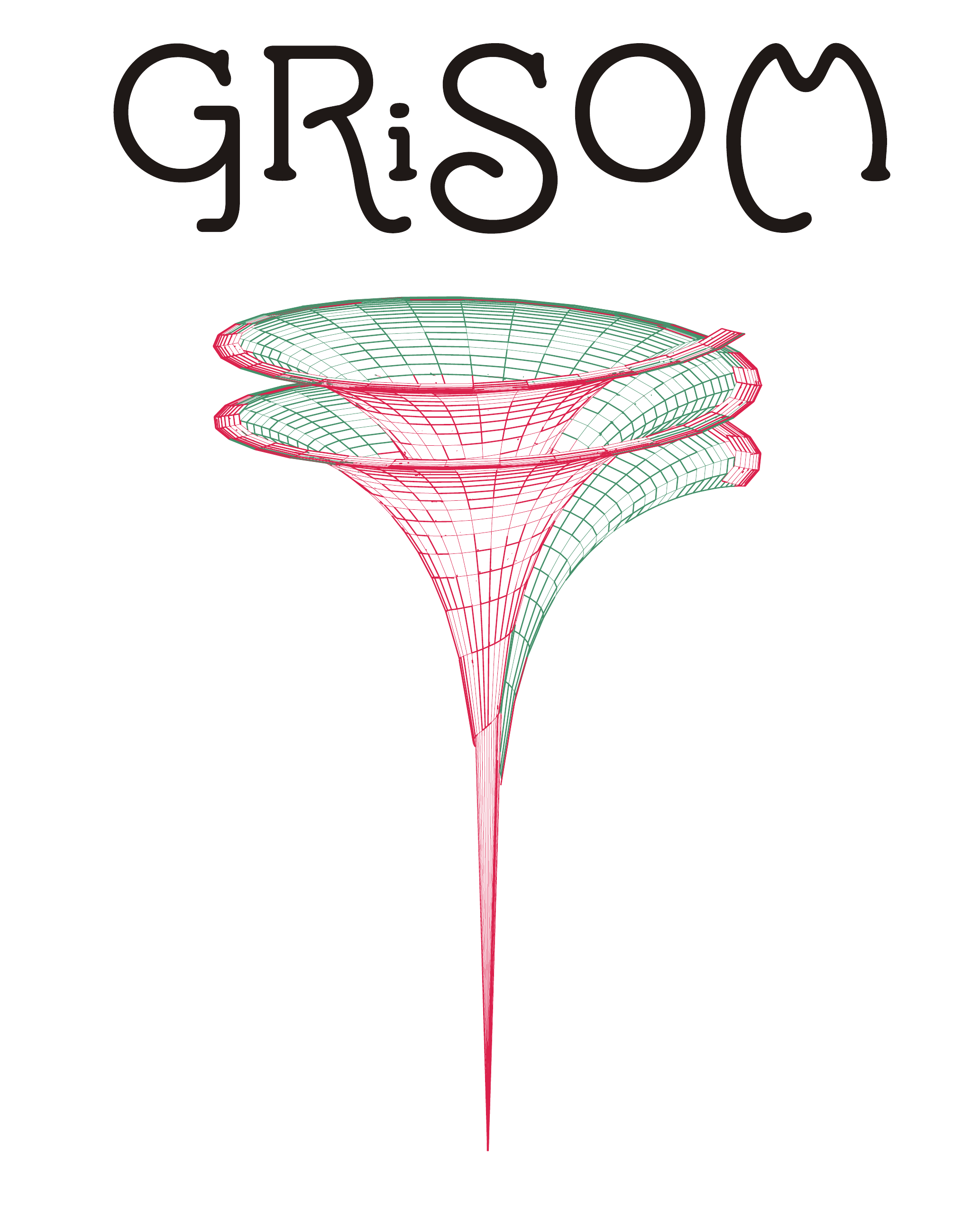}
\end{figure}

\vfill

\noindent\ich: \textit{\titel,} \typ der \uni, 
\textcopyright{}~\abgabe\  --- \LaTeXe

\end{titlepage}

\setcounter{page}{0}

%
%
%
%
%

\cleardoublepage
\setcounter{page}{1}            



\null
\vfill
\tocsection{Acknowledgements}
\begin{flushright}{\slshape    
	Problems worthy of attack prove their worth by fighting back.
	} \\ \medskip
         --- Paul Erdos 
\end{flushright}

\bigskip

\begingroup
\let\clearpage\relax
\let\cleardoublepage\relax
\let\cleardoublepage\relax
\chapter*{Acknowledgments}
An dieser Stelle m\"{o}chte ich all denjenigen Personen danken, die die Entstehung dieser Diplomarbeit in irgend einer Form positiv beeinflusst haben. Meinem Betreuer Prof. Dr. Helge Ritter gilt daher der besondere Dank f\"{u}r die aufgebrachte Geduld und die Bereitschaft zu den Diskussionen, bei denen sich meist zu viele inspirierende Anregungen ergaben, dass diesen gar nicht allen im Rahmen dieser Arbeit nachgegangen werden konnte.\\
\\
Einen weiteren ganz besonderen Dank m\"{o}chte ich zudem meinen Eltern zukommen lassen. Ihr stets bedingungsloser Beistand w\"ahrend meiner gesamten schulischen und akademischen Laufbahn hat es erst erm\"{o}glicht, dass ich dieses Studium heute in dieser Form abschlie\ss en kann.\\
\\
Nicht unerw\"{a}hnt lassen m\"ochte ich auch die unsch\"atzbare Hilfe durch meinen Freunden und B\"urokollegen, allen voran Leena Hanisch, Janina de Jong, Sven Kanies und Stefan Vitz, die durch ihre moralische Unterst\"utzung, Motivation und lektorischen F\"ahigkeiten in der englischen Sprache ihren eigenen Beitrag zum erfolgreichen Abschluss dieser Arbeit geleistet haben. Ihre Geduld, sich meine Arbeitsergebnisse auch meist ohne eigenes Vorwissen auf dem Gebiet anzuh\"oren und geschickt zu hinterfragen, half mir des weiteren oft, so manche H\"urde aus neuen Blickwinkeln zu betrachten und so erst zu l\"osen. Ihnen allen sei daher an dieser Stelle ausdr\"ucklich gedankt.\\
\\
Mein abschliessender Dank gilt zu guter Letzt den unz\"ahligen fleissigen und selbstlosen Mitmenschen, die ihren Beitrag zu der Entwicklung all der freier Software geleistet haben, die ich im Zuge dieser Arbeit genutzt habe.

\unterschriftsfeld

\endgroup

\cleardoublepage



\cleardoublepage

\tableofcontents
\markboth{\contentsname}{\contentsname} 

\pagestyle{scrheadings}
\cleardoublepage

\part{Preamble}
\addcontentsline{toc}{chapter}{Motivation}
\chapter*{Motivation}

Since its invention in 1982 by Teuvo Kohonen, the study of self-organzing maps has become a vast field of research in computer science. In more than 7000 publications based on the SOM methods (cf.\cite{biblio1},\cite{biblio2}) many properties as well as practical applications of the algorithm have been examined and presented and the progress still goes on. This thesis contributes to it by focusing on one already encountered as well as on one new aspect of the SOM that shall be motivated in the following sections.

\section*{The three ``travelling ruler problems''}

The possibility to tackle the famous travelling salesman problem (TSP) by using Kohonen's maps was for the first time discovered by Durbin and Willshaw \cite{durbin}. They used a closed ring of neurons to represent the (shortest) path for the salesman. By adapting the Self-organizing map based on the position of the cities, they obtained a good approximation of the solution of the problem even if it is not guaranteed that the global minimum for the path length is always found. However this method has been always restricted to the case where the cities lie in an Euclidean plane. Thus the question arose for us whether the classical SOM can be generalized in a way to also solve the case where he cities may lie on more general surfaces or spaces. The following three examples named the ``travelling ruler problems'' shall briefly illustrate what is meant by that.

\begin{figure}[!ht]
\begin{center}
\includegraphics[width=0.32\linewidth]{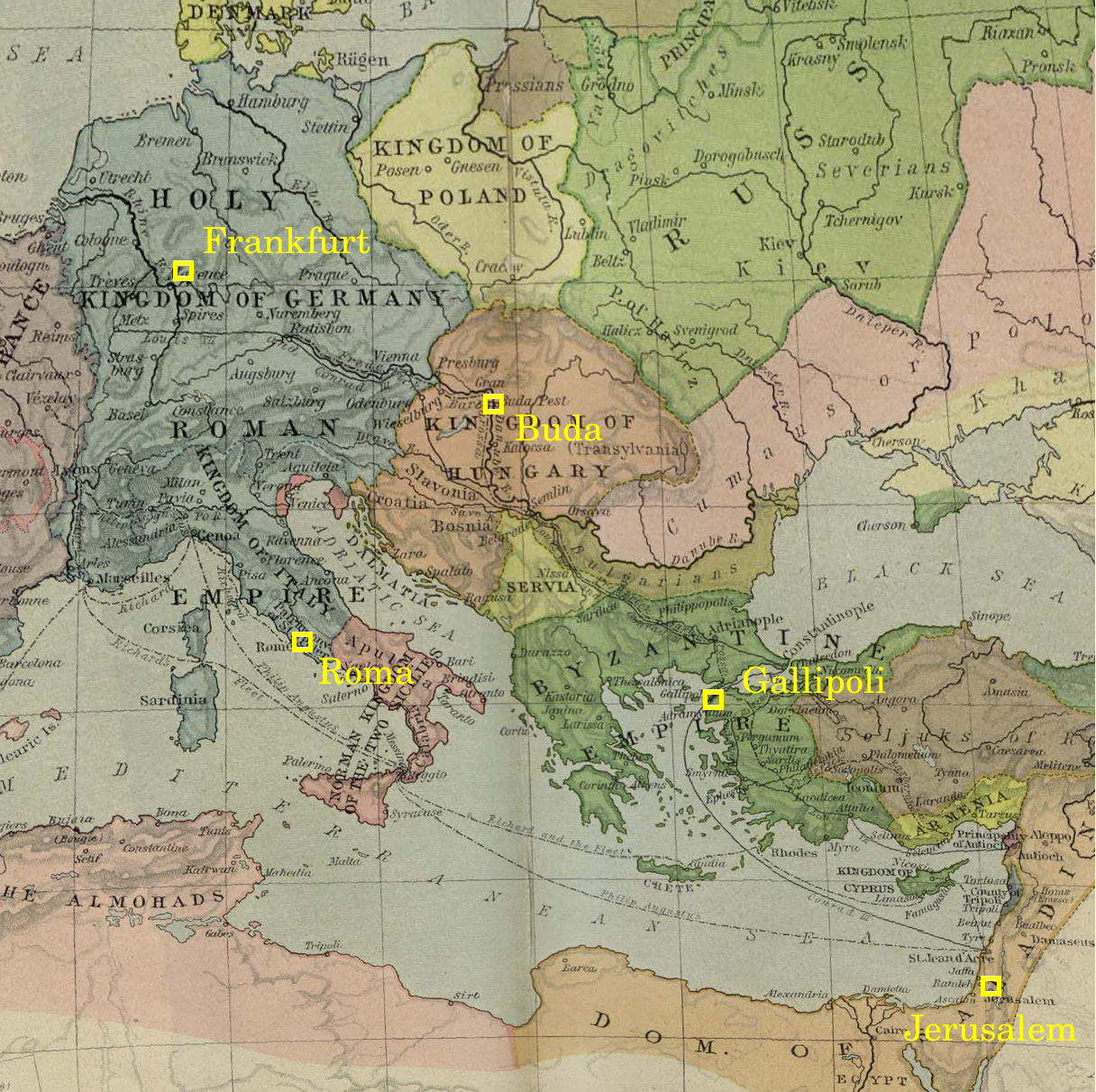}
\includegraphics[width=0.31\linewidth]{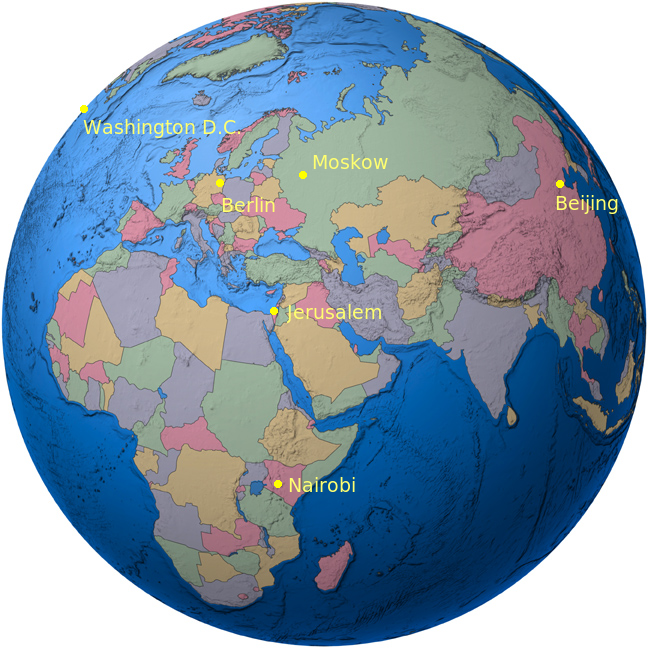}
\includegraphics[width=0.33\linewidth]{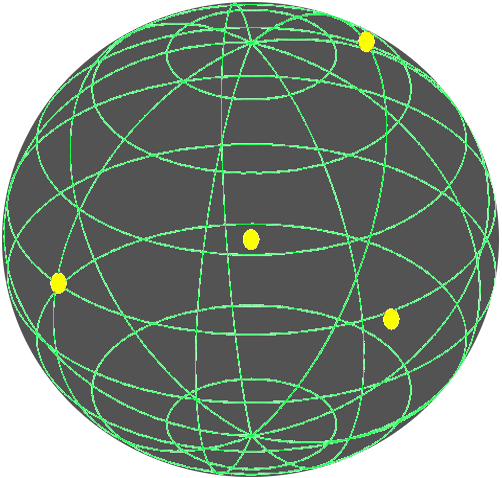}
\end{center}
\caption{Traveling ruler problems: (left) flat (middle) spherical (right) hyperbolic space}
\label{fig:intro:trp}
\end{figure}

\subsubsection*{Emperor Frederick I Barbarossa}
In the year of our lord 1189, Frederick I Barbarossa is about to prepare his crusade to the holy land. He plans to visit at least several cities on his way. They are been shown in the left map in Fig.\ref{fig:intro:trp}. Starting in Frankfurt where he has been elected and crowned many years before he wants to meet the pope in Rome, accompany his son to Hungary where he will get engaged with a local princess, cross the strait between Europe and Asia at Gallipoli and finally ``liberate'' the holy city of Jerusalem. According to the doctrine of the church, this medieval world is flat. So, Frederick could just plan his travel by using a map and a ruler (at least if he wants to take the straight route ignoring any obstacles).

\subsubsection*{Pres. Barack Obama} 
More than 800 years later, the newly-elected US-President Barack Obama is about to make his first official trip abroad. He wants to meet his European allies as well as inspect several ``conflict zones'' of the past and present, namely Russia, China and the Middle East. Furthermore he plans to take the opportunity to also visit some relatives in Nairobi, Xenia. Unlike the medieval world of Barbarossa his world is no longer flat because in the meantime an Italian guy named Gallilei (re)discovered that the surface of the world is actually curved like a sphere. Thus in order to find now the shortest flight route for the ``Air Force One'' this has to be taken into account.

\subsubsection*{President Kse'nu} 
Again a few hundred years later the (fictive) President Kse'nu of the Galactic Confederacy encounters his ``travelling spaceman problem'' as he wants to visit his dominions that are located in several galaxies. Since scientists have finally proven today's conjecture that the space is in fact also globally curved, but this time hyperbolically, this non-Euclidean geometry affects the route that he has to pick to minimize the flight path for his pan-galactic cruiser.
\\ \\
While the problem of Barbarossa in the Euclidean world matches the case formerly discussed by Durbin and Willshaw, the classical SOM does not provide a way to approach the latter two. Our first goal shall therefore be to extend the SOM algorithm, in order to handle non-Euclidean or even more general feature spaces, and to present an implementation of this \emph{General Riemannian SOM} (GRiSOM) model.

\section*{Stability analysis}

A second field of interest of the work concerning this thesis was to verify and extend the analytic and numerical analysis of the SOM that was formerly done by Ritter and Schulten \cite{schulten}. For the numerical case we are mostly interested how the stability limits are affected if we use other regular maps. In the numerical approach we then want to verify the analytic results and in addition to that examine the stability properties of several configurations using a HSOM and even the newly-defined GRiSOM. The latter case is, in particular, very interesting, as recently published papers like one of Tran and Vu\cite{tran_vu} discussed the ability to apply dimensionality reduction techniques in Hyperbolic data spaces and the corresponding reconstruction of the data and the GRiSOM naturally provide a way to perform this task under the given circumstances.

\addcontentsline{toc}{chapter}{Overview / Reader's Guide}
\chapter*{Overview / Reader's Guide}

Based on the given motivation this thesis focuses on two topics. First of all, we want to extend the definition of the classic SOM to obtain the GRiSOM and, secondly, we are going to focus on the analytic and numerical analysis of the stability limits of several SOM configurations. The first goal can be achieved quite straightforwardly and briefly as the specification is kept as abstract as possible and therefore not much previous knowledge is needed. The second goal we are striving for, however, needs much more preparatory work since we have to specify concretely all the components required for the simulation of the GRiSOM. This thesis will hereby try to give a self-contained view on all the subjects we have to deal with. Even with having banished the lesser instructive calculations and results to the appendix, the complexity of the mainpart is therefore still rather high since e.g. some longish derivations have been kept therein as they provide a deeper, essential insight into the currently handled matter. This shall enable the more interested readers to comprehend the detailed train of thoughts and furthermore provide detailed guidelines on how to tackle the problems we faced there for future research in this field.\\
\\
While some readers, who are interested in every detail of the work, are benefited by this style of presentation, the ``readers in a hurry'' on the other hand may be hindered as it is harder to distinguish the less important deviations and the more important intermediate and final results when skim-reading is used on this large volume of text. To support as well this type of readers without tearing the structure of the text apart (which would result in clouding the thread for the more interested reader) we asterisked those sections of the text, which can be skipped as they present only less important deviations or intermediate results that are mostly of importance in the local scope or just provide detailed insights that are not needed to follow the main thread. If, nonetheless, some formulas of these parts have to be used in more important sections they are cross-referenced and thus easily retrievable.\\
\\
Furthermore does each chapter begin with a brief introductory paragraph where the content and its structure are motivated and which provides a more detailed orientation for both more interested and skim readers.
And, finally, to give an overall view, a brief presentation and motivation of the chosen structure of the whole thesis is offered in the following.
\\
\\
As indicated above, the thesis starts with a chapter wherein we introduce the newly-defined \emph{General Riemannian Self-organizing map}. We therefore discuss briefly the classic SOM to have then a look at the intrinsic prerequisite, that the SOM algorithm imposes on the used map and feature spaces. This will then enable us to find a natural modification of the SOM algorithm which finally lead to the specification of the GRiSOM.
\\
As the definition of the GRiSOM in chapter \ref{ch:gsom} is kept quite abstract, we have now to concern ourselves with some preparatory work to specify concrete GRiSOM models before we can perform any real simulations e.g. to examine the stability.\\
\\
First of all, we have to take a look at the map and feature spaces that we want to use. Ch.\ref{ch:geometry} therefore focus on the geometrical properties of certain Euclidean and non-Euclidean spaces. We will thereby have a brief look at the axioms defining these spaces in general and are going to introduce subsequently several concrete models of these spaces that we later want to use in our simulations and calculations. Several structures, such as the geodesics and isometries will also be derived in detail.\\
\\
Given a certain map space, we will have then to decide how to embed the neurons, which form the map, in it. We will therefor study in chapter \ref{ch:tess} the possible regular tessellation of the hyperbolic and Euclidean map space and have to occupy ourselves with the question on how to generate these tilings.\\
\\
Chapter \ref{ch:distr} now raises the issue of the generation of appropriate sample sets for the stability analysis. We will first loosely define the needed distributions and then focus in the second part on deriving mathematically the algorithms needed to generate samples that realize these previously defined distributions.\\
\\
The preparatory work is concluded by presenting a concrete implementation of the GRiSOM, the derived spaces, maps and distributions. We will briefly discuss the design and most important features of the software bundle that has been written in the course of this thesis.

Having finished the discussion of the mathematical background and the implementation needed to construct concrete SOMs, we will be finally able to concern ourselves with the dynamics and, in particular, the stability of certain equilibriums states of the SOM. This will be done by using both analytic and numerical approaches.
\\
First we will focus on the analytical analysis, confining the calculations in chapter \ref{ch:analytic} to regular Euclidean maps and Euclidean feature spaces. We will deduce the Fokker-Planck equation for the corresponding stochastic process and use it then to determine the stability limit for the particular SOM configurations.\\
\\
After having tackled the analytic approach, we are going to verify and furthermore extend these results with numerical means i.e. the Monte Carlo method by using the implemented GRiSOM in combination with the sample sets whose distributions we will have discussed in chapter \ref{ch:distr}. We will start by discussing how the stability limits can be determined by evaluating the data produced by this simulation. We will then focus on verifying the results obtained by the analytic approach using an classic SOM setting with regular Euclidean maps. We will then continue with the numerical stability analysis of a HSOM, which has a hyperbolic map space, and finally are going to try to determine some aspects of the stability behavior of a GRiSOM with both hyperbolic map and feature space. The numerical simulations and therefore the whole analysis part will be concluded by returning to the ``Travelling ruler problems'' that we used as our primary motivation that resulted in the GRiSOM. We will thereby present the results which are obtained when performing the numerical simulation using the GRiSOM with the proper feature spaces.\\
\\
The final chapter of the thesis is reserved for a conclusion of our work. We will briefly take a final look at the results we obtained in the stability analysis, discuss problems we have encountered and finally outline some possible goal for future work on this subject.\\
\\
As noted above, the appendix finally contains proofs, calculations, data and figures that were omitted before as they would have bloated the chapters and/or distract the reader from more important calculations and results. This includes e.g. the elementary geometric calculations and the derivation of several equations needed in the analytic stability analysis. Beside this, an installation guide for the software bundle can also be found there.

\cleardoublepage
\pagenumbering{arabic} \setcounter{page}{1}
\part{GRiSOM}
\chapter{A Generalized View on Self Organizing Maps} \label{ch:gsom}

This first chapter is dedicated to the definition of the \emph{General Riemannian SOM} which is a modification of the classic \emph{self-organizing maps}, formerly introduced by Teuvo Kohonen \cite{kohonen}, to obtain a unifying model for SOMs for both more general map and input spaces. We therefore briefly present the original definition of the SOM followed by a detailed discussion of modifications we made that lead to GRiSOM.

\section{Self-organizing map}

In general, the self-organizing maps (SOM) are a class of artificial neural networks, which is based on \emph{competitive} learning on a fully-connected set of artificial neurons (cf. Fig.\ref{fig:gsom:kfm}). 

\begin{figure}[!ht]
\begin{center}
\includegraphics[width=0.42\linewidth]{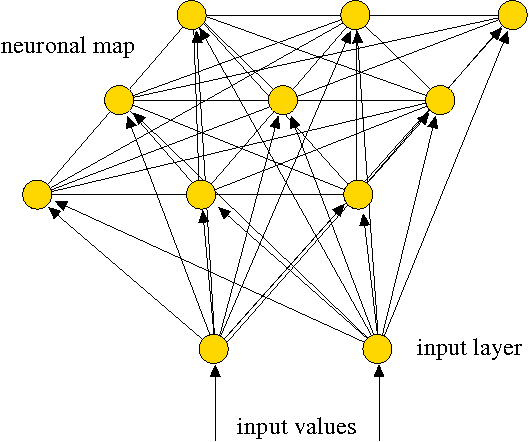}
\end{center}
\caption{formal Kohonen feature map with 2-dimensional input and 3x3 map}
\label{fig:gsom:kfm}
\end{figure}

It represents a topology-preserving generalization of \emph{Vector Quantization}(VQ) and is used for many purposes including visualization of high-dimensional data, classification/clustering and automated feature extraction.\\
\\
The basic idea is to place the artificial neurons at the vertices of a more dimensional lattice which defines a topological structure on the neurons. By taking their \emph{synaptic weight vectors}, they are furthermore mapped to specific locations in the \emph{data}, or as we will call it henceforth, \emph{feature space} $F$. The (output) neurons now compete among themselves and only the \emph{winning neuron} is fired for any input/sample (``winner-takes-all principle''). In other words, the input patterns in the feature space (of arbitrary dimension) are transformed in a (lesser dimensional) discrete map ($\rightarrow$ dimension reduction) as every input is mapped to a unique winning neuron, but, in contrast to VQ, in a topological ordered fashion.\\
\\
The whole process of this transformation can be reduced to the following sub-processes:

\begin{description}
\item[Competition:] As mentioned above, the neurons compete among themselves. For this purpose, each neuron compute a value using a discriminant function which calculates the response of the particular neuron to the input. The neuron belonging to the largest response is then the winner.\\
\\
To be more precise: Given a SOM with a set of neurons $A$ and a (n-dimensional) input sample $\vec v$ and let furthermore $\vec w_j$ be the synaptic weight vector (and therefore its location in the feature space) of neuron $j$ and $disc(\cdot)$ a discriminant function, then the winning neurons $i(\vec v)$ belonging to input $\vec v$ is determined by:
\begin{equation*}
 i(\vec v) = \arg\max_{j\in A}(disc(\vec v,\vec w_j))
\end{equation*}
Due to the neurobiological motivation that led to the SOMs, an inner product $\vec v \cdot \vec w_j$ is typically used to compute the neural response i.e. as the discriminant function. The resulting maximum problem can then be substituted by finding the minimal Euclidean distance:
\begin{equation*}
 i(\vec v) = \arg\min_{j\in A}\norm{\vec v - \vec w_j}
\end{equation*}
where $\norm{\cdot}$ is the canonical Euclidean norm.
As each possible point in the feature space is mapped to the neuron which is located next to $v$ in $F$, we get a \emph{Voronoi tessellation} of the feature space according to its metric (cf. chapter \ref{ch:tess}). Each neuron is then a representative of a whole set of points, namely all the points lying in the corresponding voronoi cell.

\item[Cooperation:] Unlike in VQ, the winning neuron is cooperating with other neurons lying in a certain topological neighborhood i.e. it is enlarging or inhibiting the response of these neighbors. Given two neurons $i$ and $j$, the degree of cooperation is thereby determined by a (unimodal) \emph{neighborhood function} $h_{i,j}$, which typically depends only on the distance between the two neurons in the lattice of the map. There exists many different choices for $h_{i,j}$ and we are going to discuss three (classes of) possible functions in the last section of this chapter.

\item[Synaptic Adaption:] All neurons in the topological neighborhood of the winner neuron are adapted, such that the response of them to similar input pattern is enhanced. This is achieved by applying a simplified form of Hebbian learning, which uses in general the adaption rule for the location of neuron $j$ in the input space:
\begin{equation*}
 \Delta w_j = \eta y_j \vec v - f(y_j) \vec w_j 
\end{equation*} where $\eta$ is an adaption parameter called \emph{learning rate}, $y_j$ is the response of neuron $j$ and $f(y_j) \vec w_j$ a \emph{forgetting term}. We simplify this equation by substituting the response by the neighborhood function and the forgetting term by $\eta h_{i(\vec v),j}$. Hence we obtain ($\vec v$ input and $i(\vec v)$ again winner neuron)
\begin{equation*}
 \Delta w_j = \eta h_{i(\vec v),j} (\vec v - \vec w_j)
\end{equation*} which is now used for the adaption of the neurons in the SOM. Thus, an adaption result in a move of the winner neuron and (at a lesser rate) of its neighbors (along an Euclidean line) in the feature space towards the input. The (partial) move of the neighbors as well ensures thereby that the topology defined by the lattice of the map tends to be preserved.
\end{description}

\subsection{The SOM Algorithm} \label{sec:som:algorithm}

Combing the three processes discussed above, we can formulate the SOM algorithm:

\begin{enumerate}[Step 1:]

\item (Initialization) Given a set of neurons $A$, fix the position of these neurons in the map and choose suitable initial locations for them in the feature space $F$ (e.g. by random).

\item (Sampling) Draw an input sample $\vec v$ in $F$ according to sensory input or a prespecified probability distribution.
\label{theo:SOM:alg_step2}

\item (Matching) Determine winner neuron $s :=i(\vec v)$ which is the neuron that matches $\vec v$ best i.e. has the nearest representatives to the location of the sample in the feature space:
  \begin{equation*} 
    \norm{\vec w_s - \vec v} = \min_{r \in A} \norm{\vec w_r - \vec v}
  \end{equation*}

\item (Updating) Adjust locations of neurons in feature space according to rule found above for the process of the Synaptic Adaption:
   \begin{equation}
     \vec w_r^{\mathrm{new}} = \vec w_r^{\mathrm{old}} + \epsilon \cdot h_{r,s} \cdot (\vec v- \vec w_r^{\mathrm{old}}) \quad \mathrm{for\ all\ } r\in A 
     \label{eq:som:adapt}
   \end{equation}

\item return to Step \ref{theo:SOM:alg_step2} until any predefined break conditions are met (e.g. reaching maximal number of adaption steps or falling below a certain representation error)
\end{enumerate}

\subsection{Dimensionality reduction/conflict}

 We conclude the discussion of the classic SOM by briefly discussing on one of its fields of application, namely the dimension reduction. As mentioned above, the SOM tries to embed itself into a given input set while preserving the topologic structure of the set of nodes of the map space also in the feature space. We thus obtain a mapping of the input onto the discrete lattice of a certain dimension \emph{reducing the dimension} of the feature space to the dimension of the map space. If the intrinsic dimension of the input embedded in the feature space is lower or equal than the map dimension, this reduction works smoothly, but if this intrinsic dimension exceeds the map dimension, we face the so-called \emph{dimension conflict}, i.e. the problem to represent a higher dimensional space with a lesser dimensional one. The SOM tries to still provide a good representation of the input by folding itself into the space, but is unable to take the topological structure of the input into account as even closely neighboring input often belong to Voronoi cells of map nodes, which are located on different foldings and therefore maybe far apart from each other in the map lattice.

\section{General Riemannian SOM}

Many different modifications of the SOM have been proposed over the last decades. Some of them like the Hyperbolic SOM \cite{ritter} focused on replacing the cooperation process by fixing the nodes in more complex lattices in a hyperbolic map space instead of the usual Euclidean square lattice above, which solved the dimensional conflict for at least Euclidean input space.
Others, as the \emph{Batch SOM}, changed the adaption process to improve the behavior of the SOM for non-stationary environments \cite{batch} and again others combined the SOM with kernel methods and thus adapted the competition procedure by transforming the input space \cite{kernel}. \\
\\
Our modification called \emph{General Riemannian SOM} will now be aimed at allowing the SOM to work on a whole range of different feature spaces as well as maps by complicating the SOM algorithm as little as possible. To achieve this goal, we will first have a look at the intrinsic properties of both the map and the feature space of the classic SOM and then try to generalize them to meet our needs.

\subsection{Map space} \label{sec:som:mapspace}

Although the map is above just defined as a discrete lattice, we can consider the nodes of the map to be embedded in a certain space. Obviously, there are hardly any demands to the embedding space. That is, that we only have to be able to quantify the cooperation between two neurons, i.e. a kind of distance between two nodes in the map which is then used in the neighborhood function\footnote{This also determines the topological structure that is inherent in the lattice}. Hence a suitable choice is, to embed the map in a \emph{metric space}, such that we can then naturally obtain the needed distance function on the nodes of the map lattice by transferring the distance function defined in the metric space.\\
\\
In the numerical experiments we will mostly use Riemannian manifolds like the \emph{Euclidean Space} or the \emph{Hyperbolic Space}, but in general every set of points which has defined a metric on it can be used. The most simple one is the \emph{trivial space} i.e. a set of points associated with the discrete metric
\begin{eqnarray*}
 && d(x,x) = 0\\
 && d(x,y) = 1\quad \mathrm{if\ } x\neq y
\end{eqnarray*}
In this case, however, the topology is also the discrete topology and therefore each neighborhood. So its preservation is trivial. Choosing a suitable neighborhood function, we just get a simple \emph{vector quantization}.

\subsection{Feature space}

For the feature space, we have to consider that both the competition and the adaption process take place there. As we have seen above in the discussion of the classic SOM, the search for the neuron with the maximal response to given input $v$, defined by the inner product of weight vector and input position, can be substituted by the search for the neuron with the least Euclidean distance to the input in the feature space. While the definition of the response based on the inner product does not make sense in more general spaces (e.g. non-pre-Hilbert spaces), distance functions represent a more general structure\footnote{Each pre-Hilbert space has a naturally defined metric which turns it into a metric space, but there exists many metric spaces which are not even vector spaces}. Thus, in respect to the competition process, it suffices to have a general metric space.\\
\\
The second task, we have to be able to perform in the feature space, is the adaption process. In order to update the representatives, we need to determine how to move a point in direction to another one, or to be more exact, move the $w^{\mathrm{old}}_r$ towards the input sample $v$. In the classic adaption process, this is done, as seen above, by calculating the difference of the position vectors of the input and the neuron that has to be moved and adding it partially to the current position of this neuron. Obviously, this update method can be transferred to every other (finite-dimensional) vector space. But for more general spaces, like the Dini's surface seen on the titlepage, which is not a vector space, we have to find a more appropriated definition of the update rule. As we mentioned above, this vector operations can be interpreted as a move of the neuron along an Euclidean straight line, which connects both the neuron and the input. The concept of a line is now not restricted to the Euclidean space, but exists at least in many more general metric spaces. The so-called \emph{geodesic} line is thereby, according to (Differential) Geometry (cf.e.g.\cite{aubin},\cite{abraham}), defined as a local length minimizing path in respect to the metric of the particular space. Thus, a suitable new definition of the update rule would be, to define the update in general as a move along these lines while the response of the neuron on the input determines, how far the neuron is moved towards the location of the input.\\
\\
We have, nevertheless, to consider, that in general metric spaces the shortest path has not to be neither existent nor unique, but both existence and uniqueness are essential for our modified update rule. So, in order to satisfy this requirements, we have to put a further restriction onto the used feature spaces. Fortunately this condition is always (at least) locally fulfilled by (pseudo-)Riemannian manifolds (cf.\cite[Thm.5.14]{aubin}). Thus to ensure that our modified adaption process that we will define below is always well-defined we restrict the possible choice of the feature space to those manifolds.

\subsection{The GRiSOM Algorithm}

The conclusion of the two preceding sections is, that it is possible, by just having a more generalized view on the intrinsic prerequisites of the three processes, to generalize the classic SOM in a quite natural manner. This generalized model provides a huge range of possible configurations of the SOM in respect to the map and feature space, which will, from now on, improve our ability to choose the right spaces which reflect the structure of the feature space (which is e.g. Hyperbolic or Spherical as in the three ``travelling ruler problems'') and the expected intrinsic structure of the input (e.g. directional or hierarchical) most adequately to avoid, for example, dimensionality conflicts.\\
\\
By taking the modification made to the competition, cooperation and adaption process now into account, we can finally obtain the following modified SOM algorithm:

\begin{enumerate}[Step 1:]
\item (Initialization) Given a set of neurons $A$, fix the position of these neurons in the map space $M$ and choose suitable initial locations for them in the feature space $F$ (e.g. by random).

\item (Sampling) Draw an input sample $ v$ in $F$ according to sensory input or a prespecified probability distribution.
\label{theo:GRiSOM:alg_step2}

\item (Matching) Determine winner neuron $s :=i(v)$ which is the neuron that matches $v$ best i.e. has the nearest representatives to the location of the sample in the feature space according to the metric/distance function $dist$ in this feature space:
  \begin{equation*} 
    dist(w_s, v) = \min_{r \in A} dist(w_r,v)
  \end{equation*}

\item (Updating) Adjust locations of neurons in feature space according to rule found above for the process of the Synaptic Adaption:
   \begin{equation*}
 	 w_r^{\mathrm{new}} = p
    \end{equation*}
 with $p \in F $ satisfying the following conditions
 \begin{equation*}
	d(w_r^{\mathrm{old}},p) + d(p,v) = d(w_r^{\mathrm{old}}, v), \quad \frac {d(p,v)}  {d(p, w_r^{\mathrm{old}})} 
	= \eps \cdot h(r,s)
\end{equation*}
that means that the representative is moved along a geodesic between $ w_r^{\mathrm{old}}$ and $v$ by the relative distance according to the result of the neighborhood function and the step size. This adaption algorithm is well-defined as long as the geodesics are unique. 

\item return to Step \ref{theo:GRiSOM:alg_step2} 
\end{enumerate}

According to the section above, this will be the case for the right choice of the feature space. We will take a detailed look on this aspect with regard to the concrete spaces used in the simulations right in the next chapter. But before, we will briefly have a look at the (classes of) neighborhood functions that will be used in the SOM.

\section{Neighborhood functions} \label{sec:gsom:neighb_f}

As mentioned above, the SOM allows winner neurons to interact with its neighborhood. This cooperation is then implemented by the neighborhood function, which depends usually only on distances in the mapspace and determines how extensive the adaption process is. We will distinguish here between three classes of possible functions:

\begin{description}

\item[Long range / Gaussian]

  One representative of the \emph{long-range} neighborhood functions is the \emph{Gaussian function} (with \emph{mean value} $\mu = 0$)

\begin{equation*}
  h^{Gauss}(x) = \frac 1 {\sqrt{2 \pi} \sigma} \exp(-\frac{x^2}{2 \sigma^2})
\end{equation*}
or in the case of a discrete grid of neurons
\begin{equation} \label{eqn:gsom:gauss}
  h^{Gauss}_{rr'} = \sum_s \delta_{r+s,r'} \exp(-\frac{s^2}{2 \sigma^2})
\end{equation}

where $\sigma^2$ is the variance. With this class of functions in use, the excitation is laterally inhibited (i.e. $h$ is strictly decreasing), but has no upper limit for the range and thus affects the whole set of neurons. This results in an enormous need of computing time since in every step, each neuron has to be adapted.

\item[Short range / NN]

To evade the problem of the excessive needs of the long-ranged functions in respect to computing time, the short-ranged functions possess an (bounded) compact support and therefore the excitation is locally restricted. Thus only a small fraction of the neurons has to be updated which accelerate the whole algorithm at the cost of losing interactions beyond the neighborhood defined by support. An example for this family of functions is the following function:

\begin{equation} \label{eqn:gsom:NN}
  h^{NN}(x) = \Theta(x-d_{NN}) = \left\{ \begin{array}{ll} 1 & \mathrm{if\ neurons\ are\ nearest\ neighbors} \\  0 & \mathrm{otherwise} \end{array} \right. 
\end{equation}
where $d_{NN}$ is the distance of the nearest neighbors. We will consider this function besides the Gaussian one when working both numerically and analytically.

\item[Ultra-short range / VQ]

In the limit of small ranges and small variances both cases above lose their capability to assure the preservation of the neighborhood in the feature space. Only winner neurons are adapted, i.e. we have a case of \emph{vector quantization}.

\begin{equation}\label{eqn:gsom:vq}
  h^{VQ}(x) = \delta(x)
\end{equation}

\end{description}

\part{Preparatory work}
\chapter{Certain Aspects of Riemannian manifolds}
\label{ch:geometry}

In the preceding chapter we have defined the GRiSOM using Riemannian manifolds as feature (and map) spaces. Since this offers a field of study that is too broad for the scope of this thesis, we will focus on the two- and three-dimensional manifolds with constant sectional curvatures -1, 0 and 1. These are (in same order) the Hyperbolic, Euclidean and Spherical space (or surface) corresponding to the three feature spaces that we have encountered in the three modified travelling salesman problems. Furthermore, have \emph{Hyperbolic} or \emph{Hierarchical Self-Organizing Maps} (HSOM) been already considered in present works, e.g. Ritter et al. (cf.\cite{ritter}, \cite{ontrup}, \cite{ontrup2}) to improve the clustering and dimension reduction for hierarchical data set. These types of SOMs therefore have maps embedded in the Hyperbolic space which take the hierarchical structure into account by providing an exponential growth of the neighborhoods in the map space. So it stands to reason to use this configuration as well in the present work to study the properties of SOMs with Non-Euclidean feature spaces and extend these studies to non-euclidean feature spaces.
\\
\\
The most common way to introduce non-euclidean spaces is to present concrete models and define things like geodesics/lines and distance measures in them. Some of them offer a simple way to visualize e.g. the hyperbolic or spherical plane in a way, that can be easily understood. Most popular in the hyperbolic case are the \emph{hyperboloid model} which is an isometric embedding of the hyperbolic plane in the Minkowski space and the \emph{Poincare Disk/Sphere model}. It is not wrong to decide to work only in a particular model since the hyperbolic space is \emph{categorical} i.e. its axioms define the geometry completely and uniquely and thus all representations are isomorphic, but it may mislead readers to associate every definition in this space with only a single particular model, which may complicate the comprehension of some structures we will have to derive. Thus we will take another, more fundamental approach following \cite{Ramsay}. The euclidean, spherical and hyperbolic geometry will be introduced by their axioms, not just to get a feel for the fundamental structures, but also see, how similar the three geometries are and where are their differences. First this will be done in two dimensions and then be generalized to three dimensions for the hyperbolic case. After these first steps, the later used analytic models and representations will be introduced. The ``readers in hurry'' may focus on these sections.

 Therefore we present in the last part of this chapter the formulas concerning properties and structures, which are needed for the further studies, with an emphasis on hyperbolic geometry.

\newpage

\section{Euclidean Geometry}

Although geometry is one of the oldest sciences and is found throughout many different human cultures for millennia like the famous studies of Euclid in Ancient Greece (ca. 300 BC) or the less known work of the followers of Mozi (ca. 330 BC, China), it had, for long time, a flawed structure with rather informal axioms. This was fixed by mathematicians at the end of the 19th century, especially by David Hilbert who reworked the whole structure by strict axiomatic reasoning and presented 1899 with his ``Grundlagen der Geometrie'' \cite{Hilbert} the modern foundations of geometry.

\subsection{*Hilbert's Axioms of the Euclidean Plane}

Given a plane $\mathbb{P}$ i.e. a set of a priori undefined things called \emph{points} and subsets of it isomorphic to $\setR$ called \emph{lines}, the axioms of Hilbert provide a structure on it and we get the \emph{Euclidean plane} $\mathbb E^2$:

 \setcounter{axiom:line}{\value{axm}}
\begin{axiom}[line] \label{axiom:geom:line}
If $A$ and $B$ are distinct points, i.e. the symbols A and B denote \emph{different} points, then there is one and only one line that passes through them both. This line is denoted by $\overline {AB}$.
\end{axiom}

\setcounter{axiom:metric}{\value{axm}}
\begin{axiom}[metric space] \label{axiom:geom:metric} 
 There is a function $|PQ|$ defined for all pairs of points $P,Q$, such that
\begin{enumerate}[(i)]
 \item $|PQ| \geq 0$ (non-negativity)
\item $|PQ| = |QP|$ (symmetry)
\item $|PQ| \leq |PR| + |RQ|$ for all arbitrary points $R$ (triangle inequality)
\end{enumerate}
\end{axiom}

\setcounter{axiom:param}{\value{axm}}
\begin{axiom}[Isometric Isomorphism of line and $\setR$] \label{axiom:geom:param}
For each line $l$, there exists a bijection $x$ called \emph{coordinatization} from the set $l$ to the set \setR, such that if $A$ and $B$ are any points on $l$, then
$$ |AB| = |x(A)-x(B)|$$
\end{axiom}

\begin{defn}[ray]
Given any line $l$, $x$ a coordinatization of $l$ and any point $A$ on this line, then the subset $\vec r \subset \setgen{Z\in l}{x(A) > x(Z)}$ is a \emph{ray} with origin $A$. The set $l \cap \vec r^c$ is also a ray (called \emph{opposite ray}) with the same origin and will be denoted $- \vec r$.
\end{defn}

\begin{defn}[segment]
Given any line $l$ and two points $A$ and $B$ on $l$, then the \emph{segment} $\overline{AB}$ is the subset of $l$ defined by $$\setgen{Z\in l}{x(A) \leq x(Z) \leq x(B) \lor x(B) \leq x(Z) \leq x(A)}$$ and $|AB|$ is called its length.
\end{defn}

 \setcounter{axiom:HP}{\value{axm}}
\begin{axiom}[Half-planes] \label{axiom:geom:HP}
If $l$ is any line, it defines a partition $\{HP_1,HP_2\}$ of the plane, where the ``parts'' are the corresponding \emph{half-plane}, such that
\begin{enumerate}[(i)]
 \item if $P$ and $Q$ are in the same ``part'', the segment $PQ$ contains no point of $l$
\item if $P$ and $Q$ are in opposite ``parts'', the segment $PQ$ contains a single point of $l$
\end{enumerate}
\end{axiom}

\setcounter{axiom:angles}{\value{axm}}
\begin{axiom}[Measures of angles]\label{axiom:geom:angles}
It exists an mapping $rad$ any angle $\angle \vec h \vec k$ (where $\vec h$ and $\vec k$ have the same origin) to the interval $[0,\pi]$ called the \emph{(radian) measure of the angle}, such that
\begin{enumerate}[(i)]
 \item $\vec h = \vec k \Rightarrow rad(\angle \vec h \vec k)=0$ and $\vec h = -\vec k \Rightarrow rad(\angle \vec h \vec k)=\pi$
\item the sum of an angle and its supplement is $\pi$
\item if $j$ is in the interior of an angle  $\angle \vec h \vec k$, then rad( $\angle \vec h \vec j$) + rad( $\angle \vec j \vec k$) = rad( $\angle \vec h \vec k$)
\item if a ray $k$ from a point $Z$ lies in a line $l$, then for each half-plane defined by $l$ the measure $rad$ is a bijection from the rays $j$ with origin Z lying in the particular half-plane to the set of real numbers $\alpha$ in $]0,\pi[$
\end{enumerate}
\end{axiom}

\begin{defn}[triangle]
 A \emph{triangle} is a set of three (non-collinear) points $A,B,C$ denoted by $\triangle ABC$ and the segments $\overline{AB}$,$\overline{AC}$ and $\overline{BC}$ are called its \emph{sides}.
\end{defn}

\begin{defn}[congruence of segments and angles]
If two segments have the same length or two angle the same measure, they are called \emph{congruent}.
\end{defn}

 \setcounter{axiom:SAS}{\value{axm}}
\begin{axiom}[congruence of triangles / side-angle-side criterion] \label{axiom:geom:SAS}
Given two triangles. If two sides and the included angle of the first triangle are congruent to two sides and the included angle of the second one, then these triangles are congruent.
\end{axiom}

 \setcounter{axiom:parallel}{\value{axm}}
\begin{axiom}[Euclidean parallel axiom] \label{axiom:geom:eucl_parallel}
Given any line and given any point not on it, there is only one line through the given point that never intersects the given line. 
\end{axiom}

Even though this ``construction'' above is for the 2-dimensional space, it can be easily adjusted to match for higher dimensional spaces. An example how to do this will be given when defining the axioms of the 3-dimensional hyperbolic space in section \ref{theo:spaces:3d_hyp}.

\subsection{Models} \label{sec:geom:euclModels}

The standard (analytic) model of the Euclidean plane/space is well known since algebra lessons in school. A point is represented as an element of $\setR^n$ i.e. by a ordered set / tuple $(x_i)_i $ where the $x_i \in \setR$ are called its \emph{Cartesian Coordinates}. We will show below, that, indeed, lines are defined e.g. in two dimensions by either $\set{\vec x | x_1 =a}$ if they are vertical or  $\set{\vec x | x_2 = mx_1+b}$ otherwise. Furthermore a distance function i.e. $d(\vec x, \vec y) = (\vec x - \vec y)^T \cdot (\vec x - \vec y)$ is defined by the canonical metric.
\\
\begin{minipage}{0.4\linewidth}
\begin{center}
\includegraphics[width=\linewidth]{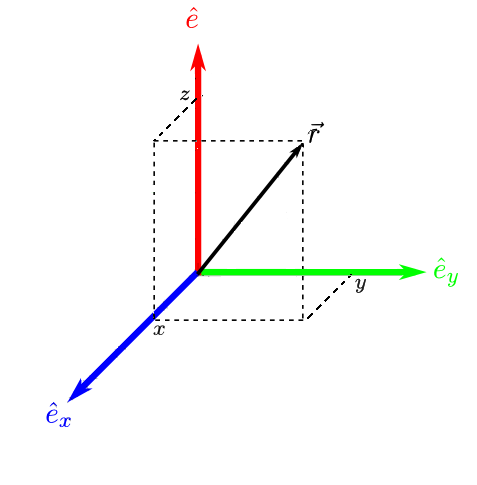}
\captionof{figure}{Cartesian coordinate system in three dimensions \label{fig:geometry:cart_coords}}
\end{center}
\end{minipage}
\begin{minipage}{0.6\linewidth}
\begin{eqnarray*}
  d(x=(x_i)_i,y=(y_i)_i) &=& \sqrt{\sum_i (x_i-y_i)^2}\\
    ds^2 &=& \sum_i dx_i^2 \\
    \Leftrightarrow G &=& \mathbf 1_n
\end{eqnarray*}
where $ds^2$ is the line element and $G$ its corresponding metric tensor.
\end{minipage}\\
\\
Another common model is the \emph{polar} or \emph{(hyper-)spherical model} where each point in the space is identified by its distance from the origin and certain angles.\\
\begin{minipage}{0.4\linewidth}
\begin{center}
\includegraphics[width=\linewidth]{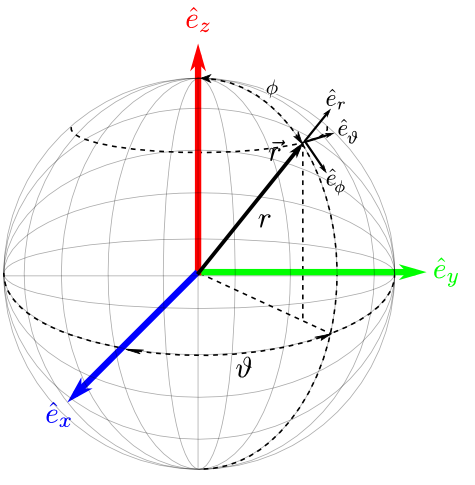}
\captionof{figure}{Spherical coordinate system in three dimensions \cite{sphericalPic} \label{fig:geometry:spherical_coords} }
\end{center}
\end{minipage}
\begin{minipage}{0.6\linewidth}
\begin{eqnarray}
  ds^2 &=& dr^2 + r^2 (\sum_{i=1}^{n-1} \sin^{n-i+1}(\Theta_i) d\Theta_i^2 )\\ 
    \Leftrightarrow G &=& \left(
	\begin{array}{cccc}
	1 & 0 & 0 & \\ 
	0 & r^2 & 0 & \cdots \\	
	0 & 0 & r^2 \sin \Theta_2 & \\
	& \vdots & & \ddots  
        \end{array}\right)
\end{eqnarray}
$r$ is hereby the distance from the origin whereas $\Theta^{(1)} (= \Theta)$ is the azimuth and $\Theta^{(2)} (=\Phi), \Theta^{(3)}, \dots$ are the angles between the 3rd, 4th, \dots - axis.\\
\\
Like fig.\ref{fig:geometry:spherical_coords} suggests, the transformation from spherical to cartesian coordinates is
\begin{eqnarray*}
x_1 &=& r \cos(\Theta_1) \\
x_2 &=& r \sin(\Theta_1) \cos(\Theta_2) \\
&\vdots& \\
x_{n-1} &=& r \sin(\Theta_1) \cdots \sin(\Theta_{n-2}) \cos(\Theta_{n-1}) \\
x_n &=& r \cos(\Theta_1) \cdots \sin(\Theta_{n-2}) \sin(\Theta_{n-1})
\end{eqnarray*}
\end{minipage}\\ \\
\section{Spherical Geometry}
Spherical Geometry has a long history as spheres and ball naturally fascinates mankind since the ancient age due to their high degree of symmetry (which is often seen as a reflection of divine perfection). Besides the pure fascination for this bodies, the most important motivation have been the many applications as in geography, astronomy and navigation. 

\subsection{*Axioms of Spherical space}

Although Spherical Geometry is not so close to the Euclidean Geometry as the Hyperbolic case, that we discuss further down in this chapter, most of the axioms still hold (at least locally) when they are slightly adapted i.e.
\footnote{The reader should, nevertheless, not to be mislead to attach too much value to the similarities between the Spherical and the other geometries. They just enable us to handle the structures on these geometries in analogous ways. In principal, the geometrical attributes of the surface of a sphere (unlike those of the Euclidean and Hyperbolic plane) are not consistent with all the properties of a common Hilbert space.}
\setcounter{axiom:param}{\value{axm}}
\begin{axiom}[Isometric Isomorphism of line and $\setR/\setZ$] 
For each line $l$, there exists a bijection $x$ called \emph{coordinatization} from the set $l$ to the set $\setR/\setZ$, such that if $A$ and $B$ are any points on $l$, then
$$ |AB| = |min(x(A)-x(B),x(B)-x(A))|$$
\end{axiom}
Furthermore, we have to modify the definition of \emph{ray} and have to deal with the fact that we loose the uniqueness of line when encountering antipodal points (cf.\cite[Ch.7]{spherical}). The axiom that obviously fails totally is the parallel postulate. Instead of it we have to use the following:
\setcounter{axm}{\value{axiom:parallel}}
\begin{axiom}[Spherical parallel axiom]
Given a line and and a point not on it, no lines exists through that point that never intersects the given line.\footnote{In other words, two lines always intersect.}
\end{axiom}

\subsection{Models} \label{ch:geom:sph_models}

The most known model of the two-dimensional spherical space is the geographic coordinate system, that is used to map the earth. It is quite the same as the Euclidean spherical coordinates model with a fixed $r$. The only difference is, that the polar angle $\phi$ is replaced by the latitude $\phi_{\mathrm{Lat.}} = \frac \pi 2 - \phi$ whereas the azimuth stays the same and is called \emph{longitude} $\lambda$. The metric is thus given by
\begin{eqnarray*}
  ds^2 &=& d\phi_{\mathrm{Lat.}}^2 + (\cos(\phi_{\mathrm{Lat.}}))^2 d\lambda^2 \\
  G &=& \matrixzwei{1 & 0 \\ 0 & \cos^2(\phi_{\mathrm{Lat.}})}
\end{eqnarray*}
The distance of two points $A$ and $B$ can furthermore be directly computed by using
\begin{equation*}
d(A,B) = \arccos(\cos(\phi_{\mathrm{Lat.}}^A) \cos(\phi_{\mathrm{Lat.}}^B) \cos(\lambda^A - \lambda^B) + \sin(\phi_{\mathrm{Lat.}}^A) \sin(\phi_{\mathrm{Lat.}}^B)
\end{equation*}

\section{Hyperbolic Geometry}
\label{theo:spaces:3d_hyp}

The \emph{Hyperbolic Geometry} has many different roots (cf.\cite{Cannon}). Many of them result from efforts to derive the parallel postulate (Axiom \ref{axiom:geom:eucl_parallel}). Many mathematicians since the ancient times like Proclus (ca. 400 AD) or K\"{a}stner and Kl\"{u}gel in the 18th century tried to prove that it would be possible to deduce this postulate by using the other ones\footnote{They actually didn't use the axioms listed above but based their attempts on older equivalents like the fives axioms of Euclid.}. Due to the many failed attempts, the approach finally changed in the 19th century when mathematicians like Schweikart and Taurinius (1794-1816) or Gauss, Lobachevski and Bolyai no longer tried to find a proof but focused on the consequences of denying this axiom. This resulted in a first grasp of the non-Euclidean geometry but still without an analytic understanding or even model. The fundament for this was then laid by Euler's studies (among others) of curved surfaces and resulted finally in the theory of Riemannian manifolds in the middle of the 19th century and the subject of Differential Geometry. We will also make use of this work later when we will deduce certain structures in the hyperbolic space as it is a simply connected Riemannian manifold with a constant negative (sectional) curvature.\\
\\
Similar to the spherical geometry, the parallel postulate no longer holds here. In the two-dimensional case of the hyperbolic plane it is thus replaced by
\setcounter{axm}{\value{axiom:parallel}}
\begin{axiom}[Hyperbolic parallel axiom]
There exists a line and a point not on it, such that there are at least\footnote{in effect there are infinitely many for any line and any point not on it} two lines through that point that never intersects the given line. 
\end{axiom}
Beside this, all the other axioms and definitions of the Euclidean case are still valid here without any further adaption.\\
\\
Without getting too much into details it is still important to remark that the Hyperbolic (as well as the Euclidean and Spherical) axioms are consistent and categorical. The consistency follows directly from the existence of the models shown below. The proof of the categoricalness is more subtle and will be skipped here. The interested is therefore referred to e.g. \cite[chap.7.7]{Ramsay}. An important consequence of these two properties is, that we can freely choose an arbitrary model to prove anything we want and the results will still hold in any other model.

\subsection{*Axioms of Hyperbolic space}

Before concerning ourselves with concrete models we will have to discuss how to extend the hyperbolic plane given by the axioms above to spaces of higher dimensions. This is necessary as we want to use non-Euclidean spaces as feature spaces. Thus we will take another look on the axioms and try to adapt them in the following for at least the three-dimensional hyperbolic space.\\
\\
The Axioms \ref{axiom:geom:line}, \ref{axiom:geom:metric}, \ref{axiom:geom:param} and \ref{axiom:geom:SAS} are exactly the same in the three dimensional case, so they will be skipped here. Instead, there will be three additional axioms:

\setcounter{axm}{\value{axiom:HP}}
\begin{axiom}[Half-planes]
Give a plane, then any line in this plane separates it into two half-planes as in Axiom \ref{axiom:geom:HP} in the planar case.
\end{axiom}

\setcounter{axm}{\value{axiom:angles}}
\begin{axiom}[Measures of angles]
It exists an angle measure $rad$  such that any angle that lie in a given plane satisfy the Axiom \ref{axiom:geom:angles} in the two dimensional case.
\end{axiom}

\setcounter{axm}{\value{axiom:parallel}}
\begin{axiom}[Hyperbolic parallel axiom]
There exists a plane, a line and a point in that plane such that the point is not on the line and that at least two other lines in the plane can be found that pass the point but don't intersect the first given line.
\end{axiom}

\begin{axiom}\label{axiom:geom:line_plane}
Given two distinct points in a plane, the line defied by them (cf.Axiom \ref{axiom:geom:line}) lies entirely in the plane.
\end{axiom}

\begin{axiom}\label{axiom:geom:points_plane}
Any three noncollinear points lie in a unique plane.
\end{axiom}

\begin{axiom}[Half-space]
Given any plane $P$ there are two so-called \emph{half-spaces $HS_1$ and $HS_2$ bordered by $P$}. That are subsets of $\mathbb H^3$ such that both and $P$ are a partition of the space and thus pair-wise disjoint and if
\begin{enumerate}[1)]
\item if points $A$ and $B$ are in the same half-space, the line segment between them does not intersect $P$
\item if points $A$ and $B$ are in opposite half-spaces, the the line segment between them intersect $P$ (in a single point)
\end{enumerate}
\end{axiom}

\begin{cor} \label{cor:geom:2din3d}
Obviously, the axioms of the two-dimensional case hold in any two-dimensional plane in the three-dimensional (or even higher-dimensional) space.
\end{cor}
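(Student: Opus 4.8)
The plan is to fix an arbitrary plane $P$ inside $\mathbb{H}^3$ and to check, one axiom at a time, that each of the two-dimensional axioms holds verbatim once all points, lines, segments, rays, angles and triangles are required to lie in $P$. The observation I would stress at the outset is that the three-dimensional axiom list was arranged precisely so that this verification is almost entirely bookkeeping: every planar axiom except the parallel postulate has a three-dimensional counterpart that restricts to it directly.

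First I would dispose of the inherited axioms. Axioms \ref{axiom:geom:line}, \ref{axiom:geom:metric}, \ref{axiom:geom:param} and \ref{axiom:geom:SAS} are stated identically in the three-dimensional case, so their planar versions are obtained simply by restricting attention to points and lines of $P$. Here Axiom \ref{axiom:geom:line_plane} does the one piece of genuine work, guaranteeing that the unique line through two points of $P$ is again contained in $P$, so that ``lines of $P$'' is a well-behaved notion closed under the line axiom. The half-plane and angle-measure axioms \ref{axiom:geom:HP} and \ref{axiom:geom:angles} are even more immediate, since their three-dimensional reformulations are literally phrased as ``every line, respectively every angle, lying in a given plane satisfies the planar axiom,'' which is exactly the two-dimensional statement applied to $P$.

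The genuine obstacle is the parallel axiom. The three-dimensional hyperbolic parallel axiom is only an \emph{existence} statement: it produces a single plane $P_0$ carrying the hyperbolic parallel behaviour, whereas the planar parallel axiom must be exhibited inside our \emph{arbitrarily chosen} $P$. Since $P$ already satisfies Axioms \ref{axiom:geom:line}--\ref{axiom:geom:SAS}, it is a neutral (absolute) plane, so by the neutral-geometry dichotomy (the coordinatization Axiom \ref{axiom:geom:param} rules out the elliptic alternative) it is either Euclidean or hyperbolic; the task is to rule out the Euclidean option for every $P$. My approach would be to invoke the homogeneity of $\mathbb{H}^3$: because the space has constant curvature, its isometry group acts transitively on planes, so any $P$ is congruent to the distinguished plane $P_0$ and must share its parallel behaviour, forcing $P$ to be hyperbolic as well.

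I expect this last step to be the crux, and the only place where ``Obviously'' conceals a real argument, since transferring an existence statement about one plane to a universal statement about all planes needs the homogeneity of the ambient space rather than a mere local restriction of axioms. That transitivity is not available from the planar axioms alone, so I would supply it from the categoricalness established earlier in this chapter: all models are isomorphic, and in a concrete model (the hyperboloid or a Poincar\'e model) one sees at once that isometries carry any plane onto any other. The higher-dimensional claim then follows by the identical reasoning, every $2$-plane being recovered as a plane inside a three-dimensional totally geodesic subspace, with each relevant axiom restricting exactly as above.
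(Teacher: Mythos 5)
The paper offers no argument here at all: the corollary is simply asserted with the word ``Obviously'' and no proof environment follows it, so there is nothing for your proposal to coincide with. Your writeup is therefore a genuine addition rather than a rederivation, and it is essentially correct. You rightly observe that Axioms \ref{axiom:geom:line}, \ref{axiom:geom:metric}, \ref{axiom:geom:param} and \ref{axiom:geom:SAS} restrict verbatim (with Axiom \ref{axiom:geom:line_plane} guaranteeing that lines of $P$ stay in $P$), and that the half-plane and angle axioms are already phrased plane-by-plane, so the only non-trivial point is the parallel axiom: the three-dimensional version is an existence statement about a single plane, while the planar version must hold in the arbitrarily chosen $P$. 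Your resolution --- $P$ is a neutral plane, hence Euclidean or hyperbolic by the classical dichotomy (with Axiom \ref{axiom:geom:param} excluding the elliptic case), and the Euclidean alternative is excluded because the isometry group of the ambient space carries $P$ onto the distinguished plane --- is sound, and it is the standard way to see this. The one caveat worth flagging is that transitivity of the ambient isometry group on planes is itself nowhere established in the paper (the isometry section only exhibits a group for the two-dimensional Poincar\'e disk and a rotation subgroup in higher dimensions), so your argument imports a fact about the models that would also need justification; the paper's earlier remark on categoricalness licenses working in a model, but the transitivity claim still deserves a sentence of its own, e.g.\ by exhibiting in the hyperboloid model an isometry taking any totally geodesic $2$-plane to a fixed one. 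A purely axiomatic alternative would propagate the hyperbolic parallel property from the distinguished plane to any plane meeting it in a line and then to all planes, at the cost of considerably more work.
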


\subsection{Models} \label{ch:geom:Hyp_models}

There are several analytical models of the Hyperbolic $n$-space that are commonly used. We will present here the two of them that we will use in this thesis. We will also restrict the discussion to the most basic attributes i.e the metric and the representation of points. All other concrete structures can then be deduced using the axioms above. We will do this in an extra section following below.

\begin{description}

\item[Polar resp. (hyper-)spherical model]

The polar or, in more dimensions, (hyper-)spherical coordinates are defined in the same way they are defined in the Euclidean case as shown in fig. \ref{fig:geometry:spherical_coords} for three dimensions.  Since we use only two and three dimensional spaces, we will restrict the following description to these two cases.

The metric in 2-D (i.e. polar coordinates) is then given by
\begin{eqnarray}
  ds^2 &=& dr^2 + (\sinh(r))^2 d\Theta^2  \label{eqn:geom:HSph_metric}\\
  G &=& \matrixzwei{1 & 0 \\ 0 & \sinh(r)} \nonumber
\end{eqnarray}
In three dimensions this is generalized to
\begin{eqnarray}
  ds^2 &=& dr^2 + (\sinh(r))^2 (\sin^2\phi  d\Theta^2 + d\Phi^2) \label{eqn:geom:line_element_HSph} \\
  G &=& \matrixdrei{1 & 0 & 0 \\ 0 & \sinh(r)^2 \sin^2\phi & 0  \\ 0 & 0& \sinh(r)^2} \label{eqn:geom:metric_tensor_HSph}
\end{eqnarray}

The spherical model may not be as instructive as the two Poincare models following below concerning e.g. geodesic lines and visualization, but it simplifies some calculations because e.g. regarding a point in the origin, $r$ provides directly the distance form this point to any other point without having to compute any ``nasty'' hyperbolic trigonometric functions or complicated transformations. So when facing certain problems in the hyperbolic space, we will often retreat from the given model to this one and then convert the result back into the original model.

\item[Poincare disk/sphere model]

The ($n$-dimensional) Poincare disk/sphere model, often denoted by $\mathbb D^n$, is an (non-isometric) embedding of the hyperbolic space into the $\setR^n$ where geodesic lines are either Euclidean straight lines passing the origin or  Euclidean circles which intersect the \emph{circle at infinity} perpendicularly as shown in fig.\ref{fig:geometry:poincare_disk}.\\ \\ Historically, there are several concrete ways to obtain this models starting at other models. Given the Poincare Half-Space model the Poincare Sphere model can be constructed by ``rolling up'' the bottom \emph{line at infinity} whereas given the hyperboloid model the Poincare sphere can be obtained by stereographic projection. A more detailed view on this subject can be found e.g. in \cite{Cannon}).\\
\begin{figure}[!ht]
\begin{center}
\includegraphics[width=0.4\linewidth]{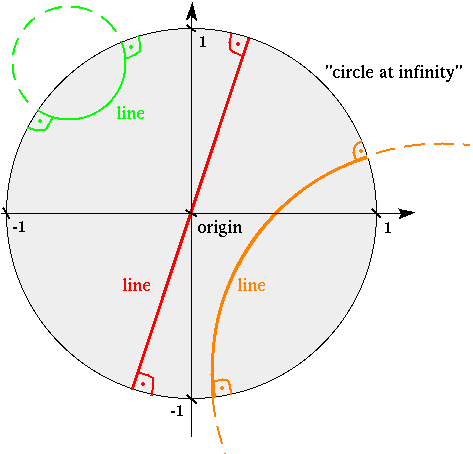}
\caption{2-dim. Poincare disk (grayed-out) with some (colored) geodesic lines}
\label{fig:geometry:poincare_disk}
\end{center}
\end{figure}
The model furthermore is very similar to the spherical model and thus both can be easily converted into each other by transforming the spherical coordinates ($r_PD,\Theta_1^PD, \Theta_2, \dots $) of the hyperbolic space into the spherical coordinates ($r_PD,\Theta_1^PD, \Theta_2, \dots $) of the Euclidean space, in which the model is embedded, and then converting these spherical coordinates into cartesian ones ($x_1,x_2, \dots$). We get similar to the transformation between cartesian and spherical coordinates in the Euclidean space (cf. section \ref{sec:geom:euclModels}) an isomorphism from spherical coordinates to Poincare disk:
\begin{eqnarray} 
u_1 &=& \tanh(\frac r 2) \cos(\Theta_1) \nonumber \\
u_2 &=& \tanh(\frac r 2) \sin(\Theta_1) \cos(\Theta_2) \nonumber \\
&\vdots&  \label{eq:geom:trafo_hsph_pd}\\
u_{n-1} &=& \tanh(\frac r 2) \sin(\Theta_1) \cdots \sin(\Theta_{n-2}) \cos(\Theta_{n-1}) \nonumber  \\
u_n &=& \tanh(\frac r 2) \cos(\Theta_1) \cdots \sin(\Theta_{n-2}) \sin(\Theta_{n-1})\nonumber 
\end{eqnarray}
The metric in the $n$-dimensional case is defined by:
\begin{eqnarray}
  G &=& \frac 2 {(1-\sum_i u_i^2)} \mathbf 1_n \label{eqn:geom:metric_tensor_PD}\\
  ds^2 &=& 4 \frac{\sum_i du_i^2}{(1-\norm{u}^2)^2} \label{eqn:geom:line_element_PD}
\end{eqnarray}
A nice feature of the Poincare model is that the distance between two arbitrary points $P$ and $Q$ can be easily explicitly computed using the following formula:
\begin{equation}
d(P,Q) = \acosh(1+ 2 \frac{\norm{\vec p - \vec q}^2}{(1-\norm{\vec p}^2) (1-\norm{\vec q}^2)})
\label{eqn:geom:distance_PD}
\end{equation} where $\vec p$ and $\vec q$ are the representations of $P$ and $Q$.

\end{description}

\textbf{Remark:} Henceforth, we will work in the Hyperbolic hyperspherical model unless otherwise noted.

\newpage
\section{Various Formulas}

In this section we will examine some important structures and deduce some formulas of the Euclidean and Hyperbolic space that we will need in the next chapters.

\subsection{*Isometries} \label{sec:geom:isom}

One important class of transformations in the metric spaces we defined above are isometries i.e. distance-preserving transformations. 

\subsubsection{*Isometries in Euclidean space}

In the Euclidean case the isometries are elements of the common Euclidean group with subgroups like translations and rotations (about an axis or, in general, simplex). Like in the following hyperbolic case we won't discuss the group in detail but we will focus instead on the subgroup that we will need in the course of this thesis. In the n-dimensional Euclidean space this will be the subgroups already mentioned above combined with reflections in (n-1) hyper-planes. Their corresponding representations in e.g. the cartesian coordinates model are thereby
\begin{description}
\item[translations:] $T_{\vec a}(\vec p) = \vec p + \vec a$ where $\vec a$ is the vector of translation
\item[rotations about the origin:] $R_D(\vec p) = D \cdot \vec p$ with $D \in SO(n)$
\item[(main) reflections:] $M_{i}(\vec p) = \vec \tilde p$ where $\tilde p_j = p_j$ if $j \neq i$ and $-p_i$ otherwise.
\end{description}
Instead of general reflections we defined above the so-called \emph{main reflexions} i.e. the reflexions that invert the orientation of exactly one given axis in the space. They do not form a group by themselves, but combined with the rotations and translations they generate the group of the reflexions about any subspace. To be more exact, any reflection in an arbitrary (n-1) hyper-planes can be obtained by coordinate transformations via translations, rotations and main reflections. Analogously we can decompose each rotation into \emph{main rotations} that are the rotations in the planes spanned by two arbitrary axes. In the cartesian coordinates model the $(i,j)$-th main rotation is then defined by (cf. \cite{aguilera})
\begin{equation}
\mathrm{mainrotation}_{i,j}(\vec p, \vartheta) = D \cdot \vec p
\end{equation} such that the ($i$,$j$)-submatrix of $D$ is element the two-dimensional rotation with rotation angle $\vartheta$ and the rest of $D$ is just the identity matrix.\\
\\
The whole (sub)group of isometries that we will use is then obtained by combining all these subgroups/generators.

\subsubsection{*Isometries in Hyperbolic space}

For the hyperbolic space we will start by describing the complete isometry group of the two-dimensional Poincare Disk model $\mathbb D$. As shown in \cite{Anderson} it is isomorphic to a subgroup of the \emph{Moebius group} i.e. the \emph{projective special linear group} $PGL(2,\setR)$. It can be shown that in the given model, the set of these transformations is defined by
\begin{equation}
 \mathrm{Isom}(\mathbb D) = \setgen{\frac{a g(z)+p}{\bar p a g(z) +1}}{a,p \in \setC, \norm{a}=1, \norm p < 1, g \in \set{\mathrm{Id,Conjugation}} }
\end{equation}
In the case that $p=0$ we get a pure rotation (together with a reflection if $g$ is the conjugation) whereas for $a=1$ we get pure translations. This simplifies our work with these isometries when we need to construct certain isometric transformations. While we won't show that these are in fact already all isometries we will at least briefly prove the following
\begin{thm} \label{thm:geom:hyp_isom}
All elements of the set $\mathrm{Isom}(\mathbb D)$ as defined above are indeed isometries and this set forms a group using the complex multiplication as the group operation and is therefore closed under this operation.
\end{thm}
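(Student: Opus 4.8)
The plan is to establish the two assertions separately: first that every map in $\mathrm{Isom}(\mathbb D)$ preserves the hyperbolic distance, and then that the set is closed under composition and contains the identity and inverses (associativity being automatic for maps). For the isometry part I would work directly with the explicit distance formula~\eqref{eqn:geom:distance_PD} rather than with the Riemannian line element, since this reduces the claim to a purely algebraic identity. I would first treat the holomorphic case $g=\mathrm{Id}$, writing $T(z)=\frac{az+p}{\bar p a z+1}$, and then deduce the case $g=\mathrm{Conjugation}$ for free.

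The two computations I would carry out are the identities
\begin{equation*}
1-\norm{T(z)}^2=\frac{(1-\norm z^2)(1-\norm p^2)}{\norm{\bar p a z+1}^2},\qquad T(z)-T(w)=\frac{a(1-\norm p^2)(z-w)}{(\bar p a z+1)(\bar p a w+1)},
\end{equation*}
both obtained by expanding the relevant products and using $\norm a=1$. Substituting these into~\eqref{eqn:geom:distance_PD}, i.e. forming $1+2\frac{\norm{T(z)-T(w)}^2}{(1-\norm{T(z)}^2)(1-\norm{T(w)}^2)}$, the factors $(1-\norm p^2)^2$ and the denominators $\norm{\bar p a z+1}^2\,\norm{\bar p a w+1}^2$ cancel, leaving exactly $1+2\frac{\norm{z-w}^2}{(1-\norm z^2)(1-\norm w^2)}$, so that $d(T(z),T(w))=d(z,w)$. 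For $g=\mathrm{Conjugation}$ I would simply note that $z\mapsto\bar z$ satisfies $\norm{\bar z}=\norm z$ and $\norm{\bar z-\bar w}=\norm{z-w}$, hence leaves the argument of~\eqref{eqn:geom:distance_PD} invariant; pre-composing this reflection with the already-treated holomorphic map preserves the isometry property.

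For the group structure, the identity map arises from $a=1,\,p=0,\,g=\mathrm{Id}$, and the inverse of $\frac{az+p}{\bar p a z+1}$ is found by solving $w=T(z)$ for $z$, giving again a member of the family with parameters $a\mapsto\bar a$ and $p\mapsto-\bar a p$ (one checks $\norm{\bar a}=1$ and $\norm{-\bar a p}=\norm p<1$); the conjugation cases are handled by the same algebra after pre- or post-composing with $z\mapsto\bar z$. The main obstacle is \emph{closure}. Here I would compose $T_1,T_2$ of the form above, clear the inner fractions, and renormalise so that the constant term of the denominator becomes $1$ by dividing through by $c:=\bar{p_2}a_2 p_1+1$; this division is legitimate because $\norm{\bar{p_2}a_2 p_1}=\norm{p_1}\norm{p_2}<1$, so $c\neq 0$. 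The resulting map takes the canonical shape $\frac{az+p}{\bar p a z+1}$ with $a=\frac{a_1(a_2+p_2\bar{p_1})}{c}$ and $p=\frac{a_2 p_1+p_2}{c}$, and both constraints survive: a short conjugate-expansion shows $\norm{a_2+p_2\bar{p_1}}=\norm{\bar{p_2}a_2 p_1+1}$, whence $\norm a=1$, while the inequality $\norm p<1$ reduces precisely to $\norm{\bar{p_2}a_2 p_1+1}^2-\norm{a_2 p_1+p_2}^2=(1-\norm{p_1}^2)(1-\norm{p_2}^2)>0$.

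I expect this last inequality, together with the bookkeeping of the factor $g$ (the composite is holomorphic or anti-holomorphic according to the parity of the number of conjugations among $T_1,T_2$), to be where the real work lies; everything else is routine algebra. A cleaner conceptual organisation, which also makes precise the sense in which the group law is ``complex multiplication'', is to attach to $T$ the matrix $\matrixzwei{a & p \\ \bar p a & 1}$ (an $SU(1,1)$-type matrix after rescaling by $(1-\norm p^2)^{-1/2}$); composition of the holomorphic transformations then corresponds exactly to multiplication of these matrices, so closure, associativity and inverses can be read off from the fact that such matrices form a group, with the conjugation case recorded by an extra parity bit.
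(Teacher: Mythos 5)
Your proposal follows essentially the same route as the paper's proof: verifying distance preservation by direct substitution into the explicit Poincar\'e-disk distance formula~\eqref{eqn:geom:distance_PD} (the paper reduces to pure translations first, you treat the general holomorphic case directly, which amounts to the same computation), and then checking the group axioms by exhibiting the identity, solving for the inverse, and composing two transformations into canonical form. Your version is in fact slightly more complete on closure, since you explicitly verify the constraint $\norm{p}<1$ for the composite via the identity $\norm{\bar{p_2}a_2p_1+1}^2-\norm{a_2p_1+p_2}^2=(1-\norm{p_1}^2)(1-\norm{p_2}^2)>0$, a point the paper's proof asserts without checking.
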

\begin{proof}
To show that the distances are preserved we make use of Eq.\ref{eqn:geom:distance_PD}. Let $T$ be an element of $Isom(\mathbb D)$ and $v$ and $w$ two arbitrary points in $\mathbb D$. Since the acosh is strictly monotonic increasing it is now sufficient to check that the fraction in Eq.\ref{eqn:geom:distance_PD} is invariant under substitution of $v$ and $w$ by its transformations.
Furthermore, it suffices to show this only for pure translations since a rotation about the origin and reflection in the real axis clearly preserve the norms and therefore the distance:
\begin{eqnarray*}
&&\norm{T_{a,p,g}(v) - T_{a,p,g}(w)}^2 = \norm{\frac{a g(v)+p}{\bar p a g(v) +1} - \frac{a g(w)+p}{\bar p a g(w) +1}}^2 
=  \frac{\norm{v-w}^2(1-\norm{p}^2)^2}{\norm{\bar p v+1}^2\norm{\bar p w+1}^2} \\
&&\land(1-\norm{T_{a,p,g}(v)}^2)(1-\norm{T_{a,p,g}(w)}^2)  = (1-\norm{\frac{a g(v)+p}{\bar p a g(v) +1} }^2) (1-\norm{\frac{a g(v)+p}{\bar p a g(v) +1} }^2) \\
&& =  \frac{(1-\norm{p}^2)^2}{\norm{\bar p v+1}^2\norm{\bar p w+1}^2 (1-v\bar v)(1-w\bar w)} \\ 
&& \Rightarrow\frac {\norm{T_{a,p,g}(v) - T_{a,p,g}(w)}^2} {(1-\norm{T_{a,p,g}(v)}^2)(1-\norm{T_{a,p,g}(w)}^2)} = \frac{\norm{v-w}^2}{(1-\norm{v}^2)(1-\norm{w}^2)}\\ 
&&\Rightarrow \boxed{d(T(v),T(w)) = d(v,w)}
\end{eqnarray*} 
To prove that $\mathrm{Isom}(\mathbb D)$ is a group we have to check the axioms of the group.
\begin{itemize}
 \item Existence of neutral element: $\mathrm{Id}_{\mathrm{Isom}(\mathbb D)} = T_{1,0,\mathrm{Id}}$ 
\item Existence of inverse element: Let $T_{a,p,g}$ be an arbitrary element of $\mathrm{Isom}(\mathbb D)$ and $z$ and arbitrary point in $\mathbb D$. Then we get
\begin{eqnarray*}
 && y := T_{a,p,g}(z) = \frac{a g(z)+p}{\bar p a g(z) +1} \Leftrightarrow y (\bar p a g(z) +1) = a g(z)+p \Leftrightarrow g(z) = \frac {y-p}{a(1-\bar p y)} \\
&&\Rightarrow \boxed{T^{-1}(y) = \frac{g(\bar a) g(y) + g(-p \bar a)}{ g(-\bar p a) g(\bar a) g(y) +1}}
\end{eqnarray*}
This is obviously an element of $\mathrm{Isom}(\mathbb D)$ for any $T$.
\item Closure: Let $T = T_{a,p,g}$ and $\tilde T = T_{\tilde a, \tilde p, \tilde g}$ be two arbitrary elements of $\mathrm{Isom}(\mathbb D)$
\begin{eqnarray*}
 && (T \circ \tilde T)(z) = T_{a,p,g}(T_{\tilde a, \tilde p, \tilde g}(z)) = \frac{a g(\frac{\tilde a \tilde g(z)+ \tilde p}{\bar \tilde p \tilde a \tilde g(z) +1})+p}{\bar p a g(a g(\frac{\tilde a \tilde g(z)+ \tilde p}{\bar \tilde p \tilde a \tilde g(z) +1}) +1}  \\
&&= \frac{a g(\tilde a \tilde g(z)) + a g(p) + a g(\bar \tilde p \tilde a \tilde g(z)) + a}{\bar \tilde p a g( \tilde a \tilde g(z)) + \bar p a g(\tilde p) + g(\bar \tilde p \tilde a \tilde g(z)) +1} 
= \frac{ \frac{a g(\tilde a) + p g(\bar \tilde p \tilde a)}{\bar p a g(\tilde p) + 1} (g \circ \tilde g)(z) + \frac{a g(\tilde p) + p} {\bar p a g(\tilde p) + 1}}{ \frac{ \bar p a g(\tilde a) + g(\bar \tilde p \tilde a) }{\bar p a g(\tilde p) + 1}(g \circ \tilde g)(z) +1}\\
&&= \frac {A \cdot G(v) + P}{\bar P \cdot A \cdot G(v) +1} = T_{A,P,G}(z)
\end{eqnarray*} where $A$,$P$ and $G$ are the parameters of the resulting transformation. $T_{A,P,G}$ is again an element of $\mathrm{Isom}(\mathbb D)$ as $\norm{A}$ = 1 and $G = \mathrm{Id}$ if $g=\tilde g$ and $g = \mathrm{Conj.}$ otherwise.
\item Associativity: With the formula for the result of an group operation obtained above, the associativity of $\mathrm{Isom}(\mathbb D)$ follows directly from the associativity of the complex numbers.
\end{itemize}
Thus, besides calculating some formulas about these isometries that we will need later on, we have shown the claim that $\mathrm{Isom}(\mathbb D)$ is indeed a group of isometries in the Poincare Disk model.
\end{proof}
For the higher dimensional case we will just note that the group generated by the Euclidean main rotations is a subgroup of the isometry group of the Poincare $n$-Hypersphere model. This follows since this model is isotropic i.e. any structure won't change if we rotate the whole space about the origin. This thereby is true as we deduced (or will deduce) any structure from the given metric and the axioms which are obviously all invariant under this kind of transformation.

\subsection{Geodesic lines} \label{sec:geom:geodesics}

Until now we have avoided to discuss what exactly lines are. Taking a close look at Axiom \ref{axiom:geom:metric} and \ref{axiom:geom:param}, we can deduce that, taken two arbitrary points $A$ and $B$, the segment of the line between these points has to be a (local) distance minimizing curve i.e. a so-called \emph{geodesic} or \emph{geodesic lines}. We will now show how we can determine in general the set of points forming a line in any of the two-dimensional analytic models above. The higher-dimensional lines can then be derived analogously.\\
\\
Let $A$ and $B$ two arbitrary points, $\mathcal C_0$ the line segment between them and $a=x(A)$ and $b=x(B)$ the coordinates of these points in respect the coordinatization $x$ of the line. Furthermore let $u$ and $v$ be coordinates and $G(u,v)$ the metric tensor of the model. We can therefore describe the line in terms of the coordinates of the model as follows:
\begin{equation*}
\overline{AB} = \setgen{(u(x),v(x))}{\begin{array}{l} x \in \setR, u_A = u(a), v_A = v(a), u_B = u(b), v_B = v(b)\\ \mathrm{with\ u,v\ continuous\ and\ distance \ minimizing} \end{array}}
\end{equation*}
By Axiom \ref{axiom:geom:param} we then get:
\begin{eqnarray*}
 b - a &=& |AB| = s = \int_{\mathcal C_0} ds = \int_{\mathcal C_0} \sqrt{g_{11}(u,v) du^2 + 2 g_{12}(u,v) du dv + g_{22}(u,v) dv^2} \\
&=& \int_a^b \sqrt{g_{11}(u(x),v(x)) \dot{u}^2 + 2 g_{12}(u(x),v(x)) \dot{u} \dot{v} + g_{22}(u(x),v(x)) \dot{v}^2} dx \\ &=& \int_a^b \sqrt{\Phi(u(x),v(x),\dot{u}(x),\dot{v}(x))} dx
\end{eqnarray*} where $s$ is the arclength of $\mathcal C_0$, $ds^2$ is the line element belonging to the given metric and $\dot{u}$ and $\dot{v}$ denotes the derivative of the coordinates in respect to the parameter $x$. Our goal is now to determine the functions, or to be more exact, functionals $u(x)$ and $v(x)$ that belong to the line i.e. minimize the arclength. This is obviously an extremum problem. A common method to solve it is to use \emph{calculus of variation}.
Thus, let $\mathcal C$ a small variation of $\mathcal C_0$ from $A$ to $B$ determined by function(al)s
\begin{eqnarray}
 \tilde u(x) &=& u(x) + \eps f(x) \label{eqn:geom:tilde_u} \\
 \tilde v(x) &=& u(x) + \eps g(x) \label{eqn:geom:tilde_v}
\end{eqnarray}
where $f$ and $g$ are arbitrary smooth functions vanishing for $a$ and $b$ and $\eps$ is the parameter of the variation. Thus we obtain for the length of $\mathcal C$:
\begin{equation}
\mathrm{length}(\mathcal C) = \mathrm{length}(\eps) = \int_a^b \sqrt{\Phi(\tilde u(x),\tilde v(x),\dot{\tilde u}(x),\dot{\tilde v}(x))} dx
\end{equation}
Since the length has to have a minimum at $\eps = 0$ for any choice of $f$ and $g$, it is required that the derivation in respect to $\eps$ vanishes there
\begin{eqnarray*}
\pdiff{\mathrm{length}(\mathcal C)}{\eps}|_{\eps=0} &=& \pdiff{\mathrm{length}(\eps) }{\eps}  = \left. \pdiff{ }{\eps} \int_a^b \sqrt{\Phi(\tilde u(x),\tilde v(x),\dot{\tilde u}(x),\dot{\tilde v}(x))} dx \right|_{\eps=0} \\
&=& \int_a^b \frac 1 2 \sqrt{\Phi} \pdiff{\Phi}{\eps}|_{\eps=0} dx 
\stackrel{!}=0
\end{eqnarray*}
Eqn.\ref{eqn:geom:tilde_u} and \ref{eqn:geom:tilde_v} and Integration by parts yield
\begin{eqnarray*}
&& \int_a^b \frac 1 2 \sqrt{\Phi} \pdiff{\Phi(u+\eps f,v+\eps g,\pdiff{}{x}(u+\eps f), \pdiff{}{x}(v+ \eps g))}{\eps}\lvert_{\eps=0} dx = 0\\
&\Leftrightarrow& \int_a^b \sqrt{\Phi} \left(\pdiff{\Phi}u f(x) + \pdiff{\Phi}v g(x) \right)\lvert_{\eps=0} dx  + \int_a^b \sqrt{\Phi} \left(\pdiff{\Phi}{\dot{u}} \pdiff f x + \pdiff{\Phi}{\dot{v}} \pdiff g x \right) \lvert_{\eps=0} dx  = 0\\
&\Leftrightarrow& \int_a^b \sqrt{\Phi} \left(\pdiff{\Phi}u f(x) + \pdiff{\Phi}v g(x) \right)\lvert_{\eps=0} dx \\
&& - \int_a^b  \sqrt{\Phi} \left( \left( \pdiff{}{x} \pdiff{\Phi}{\dot{u}} \right) f(x)  + \left( \pdiff{}{x} \pdiff{\Phi}{\dot{v}} \right) g(x) \right) \lvert_{\eps=0} dx = 0 \\
&\Leftrightarrow& \int_a^b  \sqrt{\Phi} \left(\pdiff{\Phi}u - \pdiff{}{x} \pdiff{\Phi}{\dot{u}} \right) f(x) dx \lvert_{\eps=0}  + \int_a^b  \sqrt{\Phi} \left(\pdiff{\Phi}v - \pdiff{}{x} \pdiff{\Phi}{\dot{v}} \right) g(x) dx \lvert_{\eps=0} = 0 
\end{eqnarray*}
As this equation has to hold for any $f$ and $g$, it follows that along $\mathcal C_0$ the following two differential equations called \emph{equations of geodesics} have to be fulfilled
\begin{equation}
 \boxed{\pdiff{\Phi}u - \pdiff{}{x} \pdiff{\Phi}{\dot{u}} = 0}, \qquad \boxed {\pdiff{\Phi}v - \pdiff{}{x} \pdiff{\Phi}{\dot{v}} =0} \qquad (a \leq x \leq b)
\label{eqn:geom:geod_eq}
\end{equation}
We will use this result to apply it on two models.

\subsubsection{Geodesics in Euclidean space}
In the Euclidean $n$-space we will determine the geodesics in the cartesian coordinates model. According to the metric shown in section \ref{sec:geom:euclModels} $\Phi$ is given by
\begin{equation}
\Phi(x_1,\dots,x_n,\dot{x}_1,\dots,\dot{x}_n) = \sum_{i=1}^n \dot{x}_i^2
\end{equation}
Thus we obtain $n$ equations of geodesics:
\begin{eqnarray*}
&& \pdiff{\Phi} {x_i} - \pdiff{}{x} \pdiff{\Phi}{\dot{x}_i} = 0 \qquad i \in \set{1, \dots n}\\
&\Rightarrow& \boxed{\ddot{x}_i = 0}
\end{eqnarray*} 
Hence the geodesics are the known straight lines.

\subsubsection{Geodesics in Hyperbolic space}

As seen in section \ref{sec:geom:euclModels} the line element in the two-dimensional hyperbolic spherical coordinates model is
\begin{equation*}
 ds^2 = dr^2 + \sinh^2(r) d\theta^2 \Rightarrow \Phi(r, \theta, \dot{r}, \dot{\theta}) = \dot{r}^2 + \sinh^2(r) \dot{\theta}^2
\end{equation*}
By using this in Eq.\ref{eqn:geom:geod_eq} we get
\begin{eqnarray*}
&&2 \sinh(r) \cosh(r) - 2 \ddot r = 0  \quad \land \quad
- s \pdiff{}{x} (\sinh^2(r) \dot{\theta}) = 0 
 \end{eqnarray*}
By integrating the right equation and substituting the result in the left one, we obtain:
\begin{eqnarray}
&\Rightarrow& \sinh^2(r) \dot \theta = A  \Leftrightarrow \dot \theta = \frac {C_1} {\sinh^2(r)} \label{eqn:geom:geod_hyp_3} \\
&\Rightarrow& c_1^2 \frac {\cosh(r)}{\sinh^3(r)} = \ddot(r) 
\Rightarrow \pdiff{}{x} (2 \dot r) \frac {C_1^2 \cosh(r)}{\sinh^3(r)} = (2 \dot r) \dot r^2  \nonumber \\
&\Rightarrow& \frac {C_1^2}{\sinh^2(r)} + \dot r^2  = C_2 \label{eqn:geom:geod_hyp_4}
\end{eqnarray} where $C_1$ and $C_2$ are constants. $C_1$ is hereby depending on the two points uniquely defining the geodesic curve whereas $C_2$ is depending on the parameterization of the line. We will now focus on the case that $C_1=0$. Then Eq.\ref{eqn:geom:geod_hyp_3} and \ref{eqn:geom:geod_hyp_4} yield
\begin{eqnarray*}
C_1=0 &\Rightarrow& \dot r = C_2 \quad \land \quad \dot \theta = 0\\
&\Rightarrow& \boxed{r = C_1 \cdot x \quad \land \quad \theta = const.}
\end{eqnarray*}
These are curves that pass through the origin. Transferring them to the Poincare Disk model would result in Euclidean lines in the embedding complex plane.\\
Considering the case of $C_1\neq 0$ would give us the other geodesics (which are not passing the origin and are represented by the segments of certain (Euclidean) circles in the Poincare model). But instead of calculating them we can use isometric transformations of the particular model namely rotations around the origin and translations to convert any possible case in the hyperbolic $n$-space to the one case we already solved above. Indeed, according to Cor.\ref{cor:geom:2din3d} we can handle any problem restricted to an arbitrary plane in the $n$-space as a problem in the two-dimensional hyperbolic space. In the preceding section we already discussed the isometries, namely the group generated by transformations, rotations and reflections in the two-dimensional Poincare disk model and already briefly mentioned the part main rotations are playing in the higher-dimensional spaces.\\
\\
Given two arbitrary points in the n-dimensional Poincare Sphere model. We use first an modification of the \emph{Aguilera-Perez Algorithm} presented in \cite{aguilera} to consecutively rotate both points into the $x_{n-1}$-$x_n$ plane of the embedding $\setR^n$-space as follows (where $\vec a$ and $\vec b$ are the representations of the points $A$ and $B$):

\begin{lstlisting}[captionpos=b,mathescape=true, numbers=none, caption=modified version of Aguilera-Perez Alg. to rotate both $\vec a$ and $\vec b$ into $x_{n-1}$-$x_n$-plane]
let T = Identity
for i := 1 to n-1
	M := MainRotation${}_{i+1,i}$(atan2($a_{i+1}$,$a_i$))
	$\vec a$ := M($\vec a$)
	T = M $\circ$ T

$\vec b$ := T($\vec b$)
for i := 1 to n-2
	M:= MainRotation${}_{i+1,i}$(atan2($b_{i+1}$,$b_i$))
	$\vec b$ := M($\vec b$)
	T = M $\circ$ T
\end{lstlisting}
In the $x_{n-1}$-$x_n$ plane we can now use the already defined two-dimensional translation to move one of the points to the origin. Thus, we are in a case where we already know how the geodesic look like. To get the points on the geodesic back in the $n$-space, we just have to apply the reversal transformations (reverse translation and reverse of rotation $M$ in the algorithm) on the points of the known geodesic in the $x_{n-1}$-$x_n$ plane.\\
\\
By doing so we get for example in the two- and three-dimensional Poincare sphere model the results that are shown in fig.\ref{fig:geom:geodesics}

\begin{figure}[!ht]
\begin{center}
\includegraphics[width=0.48\linewidth]{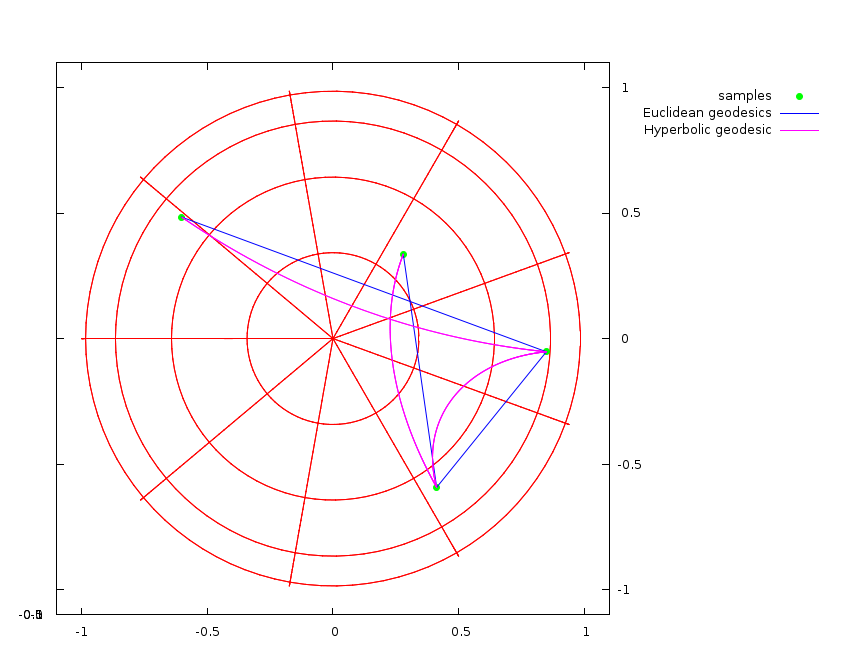} 
\includegraphics[width=0.48\linewidth]{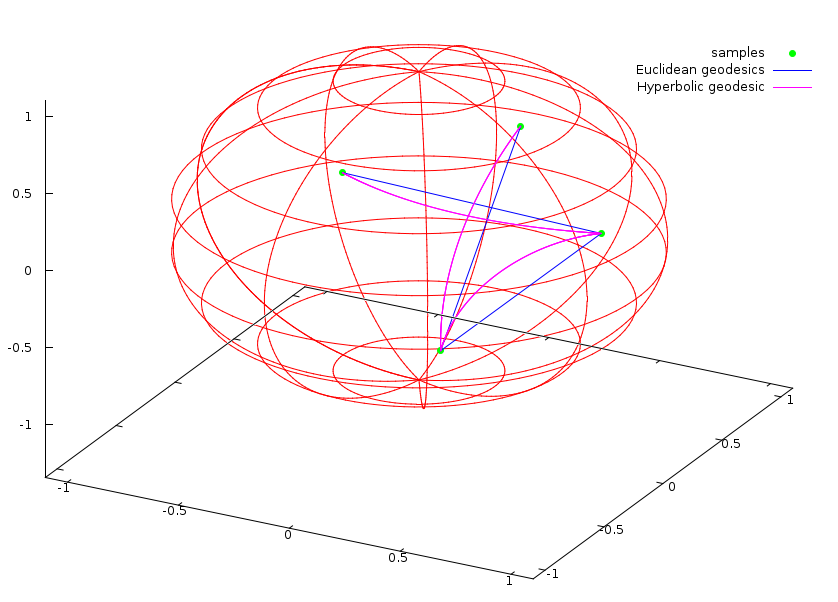} 
\caption{Geodesics in (left) 2-dim (right) 3-dim Euclidean (blue) and Hyperbolic space (purple)}
\label{fig:geom:geodesics}
\end{center}
\end{figure}

\subsubsection{Geodesics in Spherical space}
Although we won't prove it here, it should be briefly noted that the geodesics in spherical space are the so called \emph{Great Circles}. These are the circles around the center if we embed the two-dimensional spherical plane naturally in the three-dimensional case (i.e. sphere).

\subsection{*Trigonometric laws} \label{sec:geom:trig_laws}

Since the Euclidean and Hyperbolic geometries are quite similar, the well-known trigonometric laws of the Euclidean space have their equivalences in the hyperbolic space. Given a triangle with sides $a$,$b$ and $c$ and the corresponding angles $\alpha$, $\beta$ and $\gamma$, these are (proof can be found in e.g. \cite[ch.5]{Anderson}):

\begin{description}
 \item[Law of sines:] 
\begin{equation}
 \frac {\sinh(a)} {\sin(\alpha)} = \frac {\sinh(b)} {\sin(\beta)} = \frac {\sinh(c)} {\sin(\gamma)}
\end{equation}
 \item[Law of cosines I:]
\begin{equation}
 \cosh(c) = \cosh(a)\cosh(b) - \sinh(a)\sinh(b) \cos(\gamma)
\end{equation}
 \item[Law of cosines II:]
\begin{equation}
 \cos(\gamma) = -\cos(\alpha)\cos(\beta) + \sin(\alpha)\sin(\beta)\cosh(c)
\end{equation}
\end{description}
As a property of the hyperbolic space mentioned above, in the limit of infinitesimal small triangles we will get the Euclidean version back.

\subsection{*Equidistant curves} \label{sec:geometry:equidistant}

Concluding this chapter, we will determine the formulas for curves and curved planes that are equidistant to a straight line or flat plane. Assuming wlog. that the representation of the reference plane in the particular models is chosen in such a manner that the coordinates except one are invariant under a projection onto a (n-1)dim subspace of the same geometry, i.e. a plane containing (n-1) axis. This is always possible, because we can transform every flat plane into one that fulfills this condition by using isometries.\\
\\
We will start again with the Euclidean space. In two dimensions, a equidistant curve to a straight line $l$ equals a parallel line of $f$. Thus it is again a (straight) line. \\
The generalization to higher dimensions can be defined recursively. A n-dimensional equidistant plane in the distance $d$ to the plane $P = \set{(x_i)_i | x_1 \dots, x_{n-1} \in \setR, x_n = 0} $ is then determined by $\set{ (x_i)_i | x_1\dots, x_{n-1} \in \setR, x_n = d}$\\
\\
In hyperbolic space equidistants are no longer straight lines and flat planes, but can be again easily derived from the trigonometric laws. 

\begin{figure}[!ht]
\begin{center}
\includegraphics[width=0.6\linewidth]{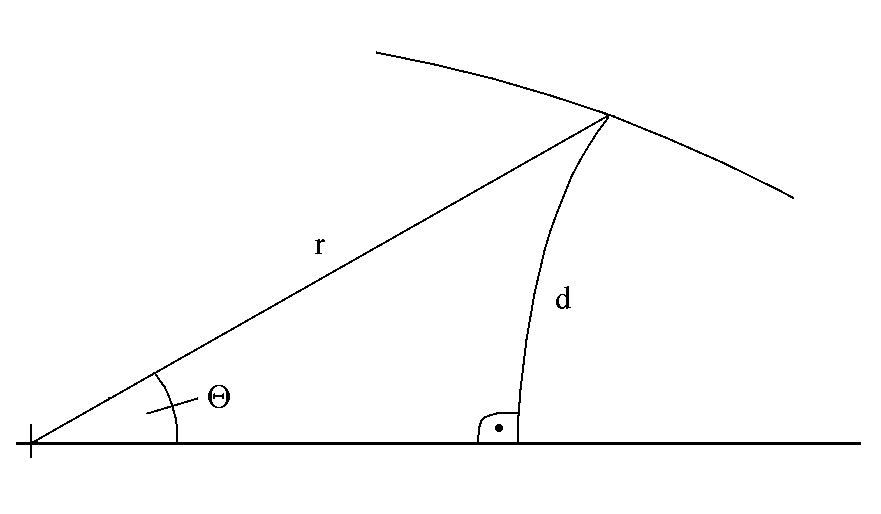} 
\caption{Equidistant curve in 2-dim hyperbolic space}
\label{fig:geometry:skizze_equidistant_hyp}
\end{center}
\end{figure}

Given a right triangle as shown in fig. \ref{fig:geometry:skizze_equidistant_hyp} with on vertex at the origin,$d$ as the distant and one side be part of one of the lines, using the law of sines leads to the following formula defining the equidistant curve.

\begin{equation} \label{eqn:geom:equi_hyp}
  \sinh(r) = \frac {\sinh(d)} {\sin(\Theta)}
\end{equation}

Using $\phi = \pi - \Theta$ instead, we just get here a relation between the radius and the polar angle of the hyperbolic spherical coordinates model that a equidistant has to fulfill. Converting this result e.g. to the two- and three-dimensional Poincare Sphere model we finally get the curves and surfaces that are shown in fig.\ref{fig:geom:equidistants_hyp}.
\begin{figure}[!ht]
\begin{center}
\includegraphics[width=0.49\linewidth]{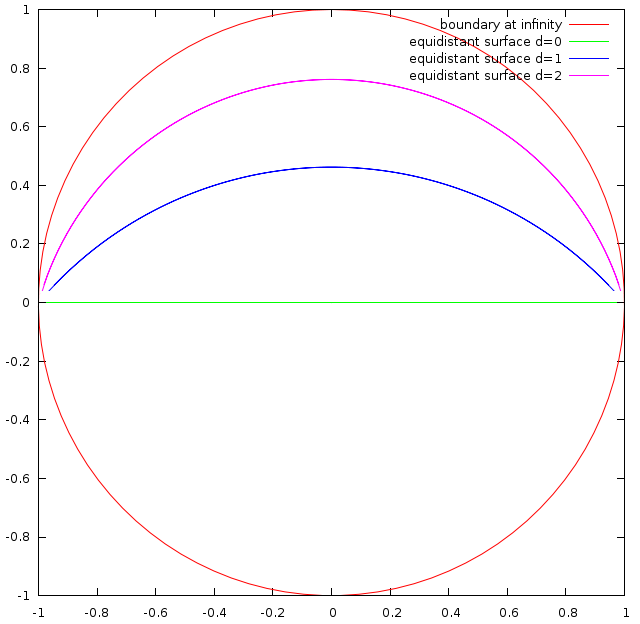}
\includegraphics[width=0.49\linewidth]{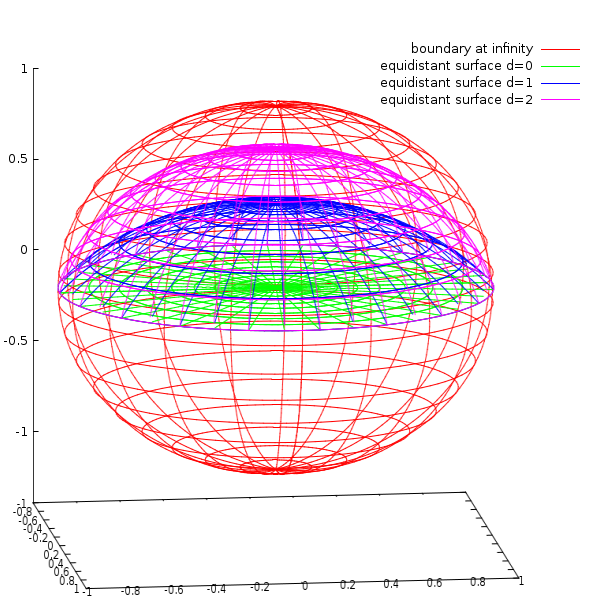}
\caption{Equidistant curves and surfaces in (left) 2-dim (right) 3-dim Hyperbolic Poincare Disk/Sphere model}
\label{fig:geom:equidistants_hyp}
\end{center}
\end{figure}

\chapter{Generating Maps} \label{ch:tess}

As discussed in ch. \ref{ch:gsom}, the neurons form a special kind of structure in the map space. In our case, this structure is chosen in such a way that the characteristics of the space, in which it is embedded, are best represented. A common way to satisfy this criterion is to partition the the space into a set of regularly bounded regions. These form the so called \emph{regular tiling} or \emph{tessellation}. The representatives of the neurons in the map space are then placed at the positions of the centroids $\bar v_i$ of these tiles given by
\begin{equation*}
\bar v_i = \frac{\int_{\mathrm{Tile\ }i} \vec v dV}{\int_{\mathrm{Tile\ }i} dV}
\end{equation*} where $dV$ is the volume element of the particular map space. Thus in return taking this set of nodes, the corresponding \emph{Dirichlet} or \emph{Voronoi tessellation} defined by
\begin{equation*}
 \mathrm{Tile\ }i = \setgen{v}{\norm{v-v_i} \leq \norm {v-w_j} \forall j}
\end{equation*}
which equals the original tiling. The regularity of the tiling ensures a high degree of symmetry like certain rotation and translation invariances that reflect the symmetries of the map space itself.\\
\\
The representatives that are placed in such a manner and the edges between them form itself the \emph{dual tiling}. As it is easier to create the dual tiling than to calculate the centroids of the original tiling, we will use the latter method. To avoid thereby unnecessary confusion about the nomenclature of the maps (i.e. if a triangular map is defined by a triangular Voronoi tessellation of its nodes or by a triangular dual tiling that generates the nodes) we will name henceforth the resulting maps by referring to the shape of the Voronoi cells. Thus, using our example above, a triangular Euclidean map corresponds to the set of vertices of a partition of hexagonal tiles (and vice versa). Furthermore, a map, defined by \emph{Schlaefli symbols} (cf.\cite{polytopes}), will always refer to the respective Voronoi tiling of the map described by these Schlaefli symbols.\\
\\
In the following we will now have a look at the Euclidean and Hyperbolic plane and discuss the possible regular tilings. Then we introduce a method to use these tilings to generate the map.

\newpage
\section{Tilings of the Euclidean plane}

To determine the possible regular tilings for the plane i.e. the tiling by regular polygons, we will examine their \emph{triangulation}, which is a tiling by triangles. Fig. \ref{fig:tesselation:triangl} shows such a triangulation of certain polygons. We will see below that these polygons are all bases for tessellations of the plane and that there exist no others.\\
\\
\begin{figure}[!ht]
\begin{center}
\includegraphics[height=0.25\linewidth]{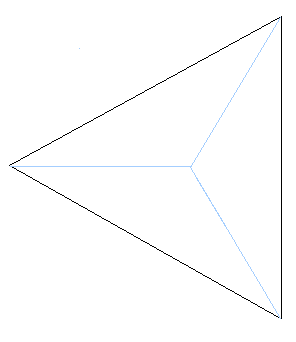}
\hspace{0.1\linewidth}
\includegraphics[height=0.25\linewidth]{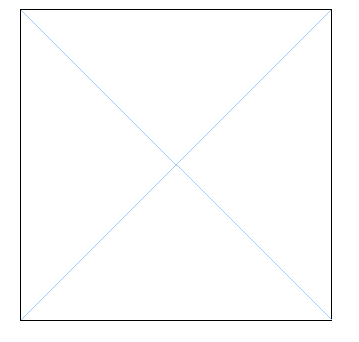}
\hspace{0.1\linewidth}
\includegraphics[height=0.25\linewidth]{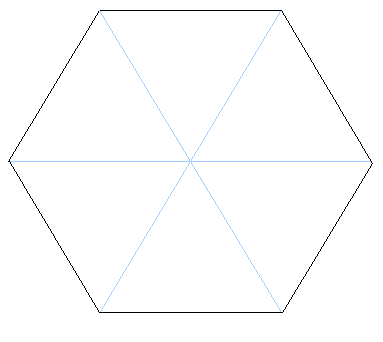}
\end{center}
\caption{triangulation of triangle, square \& hexagon}
\label{fig:tesselation:triangl}
\end{figure}

Each polygon can be divided into triangles by connecting the center with each vertex. So, a n-gon consists of $n$ such triangles. As it was assumed that the n-gons are regular, the triangles are all congruent to each other. The angle $\alpha$ belonging to the vertex in the center of the polygon is therefore $2\pi$ divided by the number of edges or vertices of the n-gon. The other two angles $\beta$ and $\gamma$ are equal. So we get two conditions which the interior angles at the vertices of the polygon have to satisfy. First, if there are $q$ neighbors meeting at each vertex, these angles have to be $ \frac {2 \pi} q$. Secondly, since two of the $p$ triangles meet in each of the vertices, we conclude, that the interior angles are just twice as large as $beta$ and $gamma$. Summarizing all this results in

\begin{eqnarray*}
 \alpha &=& \frac {2\pi} p \\
\beta &=& \gamma = \frac {\pi} q
\end{eqnarray*}

We now use the fact that triangles in Euclidean space have neither an angular defect nor an angular excess i.e. the sum of the three interior angles is always equal to $\pi$ (in contrast to the spherical and hyperbolic case, in which it is strictly larger resp. smaller than $\pi$). Thus we obtain:
\begin{equation*}
 \alpha + \beta + \gamma = \pi \Rightarrow \frac {2\pi}p + \frac {\pi}q + \frac {\pi}q = 1 \Rightarrow \frac 1p + \frac 1q = \frac 12
\end{equation*}
This is fulfilled for only three choices of $p,q \in \setN$:
\begin{center}
\begin{minipage}{0.8\linewidth}
\begin{description}
\item[(triangular)] $p=3, q=6 \rightarrow \alpha = 120^\circ, \beta = \gamma = 30^\circ$
\item[(square)] $p=4, q=4 \rightarrow \alpha = \beta = \gamma = 90^\circ$
\item[(hexagonal)] $p=6, q=3 \rightarrow \alpha = 120^\circ, \beta = \gamma = 30^\circ$
\end{description}
\end{minipage}
\end{center}

So, all valid regular tilings of the Euclidean plane consist of sets of identical tiles which are congruent to one of these three defined above. Even if it seems quite obvious, it is important to mention, that the edge size of this tiles can be chosen at will. Hence arbitrary refinements of the tilings and therefore any density of neurons are possible.

\begin{figure}[!ht]
\begin{center}
\includegraphics[width=0.25\textwidth]{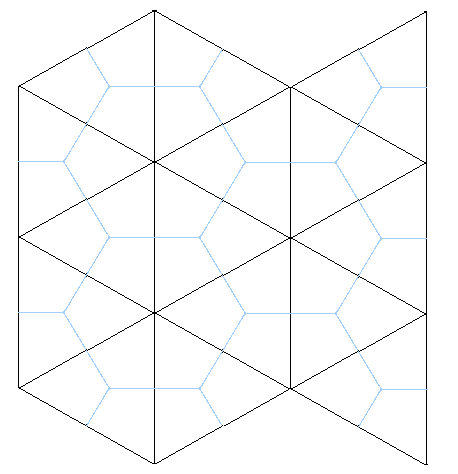}
\hspace{0.07\textwidth}
\includegraphics[width=0.25\textwidth]{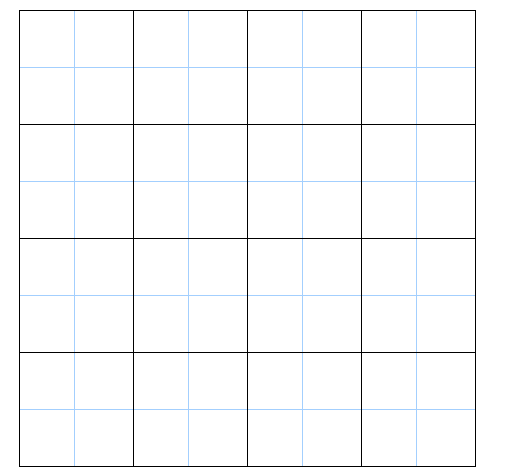}
\hspace{0.07\textwidth}
\includegraphics[width=0.25\textwidth]{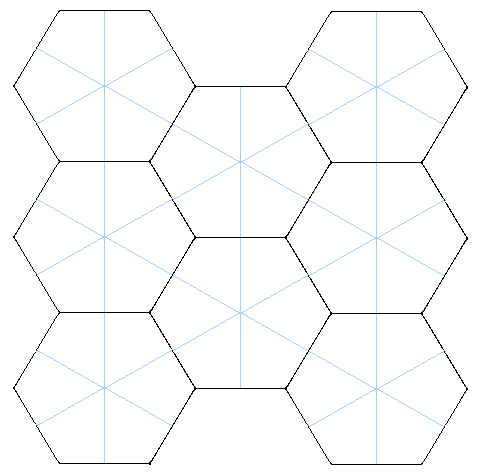}
\end{center}
\caption{Regular tesselation of Euclidean plane to Schlaefli symbols (3,6), (4,4) and (6,3) }
\label{fig:tesselation:triangulation}
\end{figure}

\subsection{Generating tilings} \label{ch:tess:generation}

Each regular tiling belongs to a certain symmetry group. Each member of this group thereby maps tilings onto each other. By using the whole group, it is already possible to generate the whole tiling starting with only one given tile or vertex. Depending on the tiling there are many ways to choose the symmetries. Following a paper of N. Kuiper \cite{kuiper} we will use involutions at the center of the edges, i.e. reflections perpendicular to the edge or halfturns. The needed transformations can be easily composed using the basic isometries of the Euclidean (and hyperbolic) space which are rotations $R(\phi)$ around origin by angle $\phi$, translations $T(z)$ (moving 0 to z) and reflections $M$ along an axis. Given a tiling of $p$-gons with $q$ neighbors at each vertex, the (finite) symmetry group is consisting of
\begin{equation*}
R^{-1}(\frac {2\pi\cdot i} q) \circ T^{-1}(\frac{\mathrm{edge\ length}}2) \circ M \circ T(\frac{\mathrm{edge\ length}}2) \circ R(\frac {2\pi \cdot i} q) \quad i \in \{0 \dots p-1\}
\end{equation*}

Then every vertex can be created by letting a composition of elements of the group acting on a distinguished point $o$ called \emph{origin}, i.e. for every vertex $\nu$ exists a finite sequence $(g_{i_1}, \dots, g_{i_n})$, $g_{i_k} \in G$  such that

\begin{equation} \label{eq:tess:gen_nu}
\nu = (g_{i_1} \circ \dots  \circ g_{i_n})(o)
\end{equation}

Using the involution in a side center as proposed, every $g_{i_k}$ ``transports'' the point at the one end of the side to the other. Fig.\ref{fig:tesselation:generation} illustrates this process using each member of the symmetry group of the (6,3)-tiling once.

\begin{figure}[!ht]
\begin{center}
\includegraphics[width=0.8\textwidth]{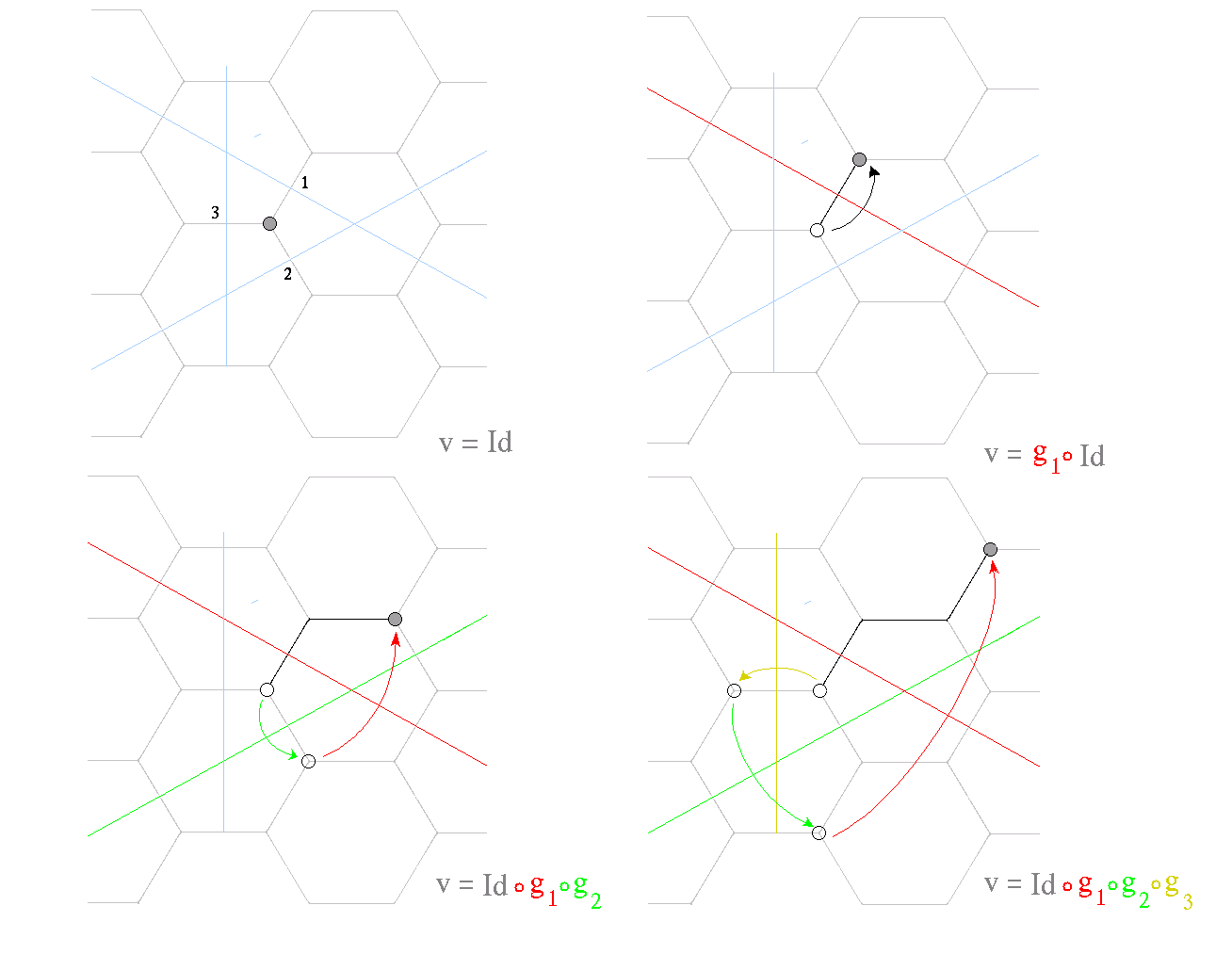}
\end{center}
\caption{} 
\label{fig:tesselation:generation}
\end{figure}

Thus, when applying each of the generator of the group on a certain vertex, we obtain all the neighbors of it, i.e. all vertices connected to the former one by an edge. This will be called \emph{expansion} of the \emph{parent} vertex creating as many \emph{children} as there are generators.\\
\\
It is note-worthy, that the given definition of the generating symmetry group do not use $p$ as a parameter i.e. the tiles are already defined by $q$ and the length of their edges. In the Euclidean plane $p$ depends on $q$ and vice versa. In the hyperbolic case the edge length is also needed. We will see this when discussing hyperbolic tilings below. But beside this fact everything in this paragraph so long holds for both geometries.

\subsection{Periodic boundary conditions}

A useful ``feature'' of the Euclidean space is the possibility to define \emph{periodic boundary conditions} quite easily. Boundary conditions are an important method in numerical computations. Since every simulation has a limited computational time, it is only possible to simulate finite grids. As the neighborhood of points near the edge of these finite grids now differs significantly from the neighborhood of the vertices in the center, every truncation of the map results in boundary effects. A solution to avoid this is to wrap the space, so that each edge is ''glued`` to the opposite one, i.e. if $f(\vec z)$ describes any property of the point $\vec z$ then $f(\vec z) = f(\vec z+\vec r)$ where $|r_i|$ is the size of the grid in the $x_i$-direction. In our case, this means, that distances and adaptions are always calculated in respect to the nearest representative of this class of points. More precisely, in two dimensions we then no longer work in the Euclidean plane but in the quotient space $ \setR^n / \setZ^2 \cong (\setR / \setZ)^2 $ called \emph{flat torus} which has nontheless a zero curvature.\\
Fig. \ref{fig:tesselation:periodic_bound} shows examples of finite square boundaries for each of the three tilings found above. Due to different symmetries of the square and hexagonal tiling the grid size is restricted to even numbers of neurons along each dimension.
\begin{figure}[!ht]
\begin{center}
\includegraphics[width=0.48\textwidth]{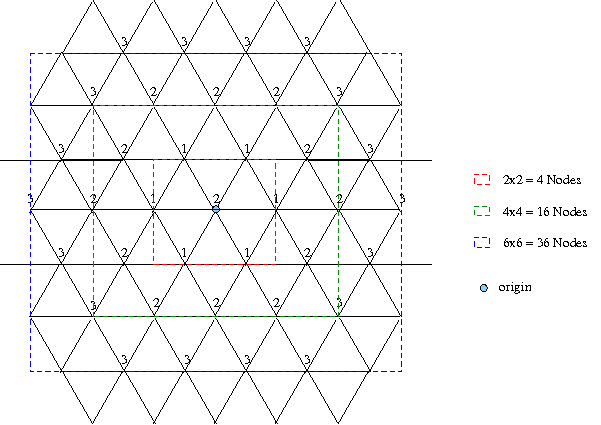}
\includegraphics[width=0.48\textwidth]{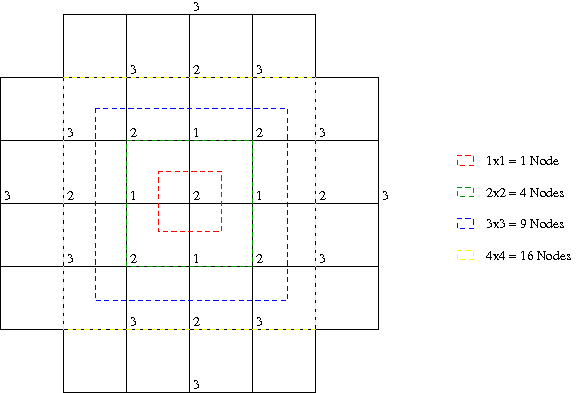}
\includegraphics[width=0.48\textwidth]{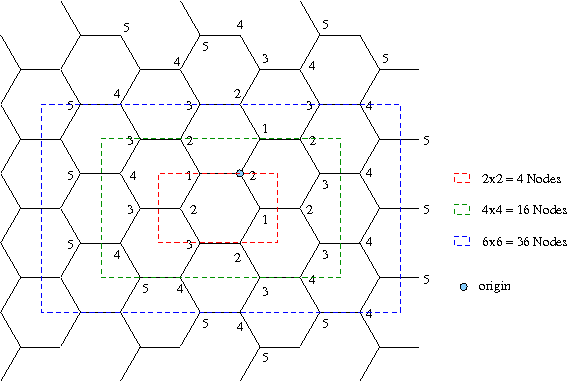}
\end{center}
\caption{Periodic boundaries for tesselation of Schlaefli symbols (3,6), (4,4) and (6,3) }
\label{fig:tesselation:periodic_bound}
\end{figure}

\section{Tilings of the Hyperbolic plane}

The same method, that we used to find an appropriate tiling for the Euclidean plane, can be adapted for the Hyperbolic case. It differs only in the fact that we have now angular defects to take into account. This softens the restrictions to the interior angles, since the sum of the angles of each triangle (of finite size) is now strictly smaller than $\pi$, which allows infinitely many choices for $p$ and $q$. Examples are\footnote{In fact, we will use maps that correspond to these tilings in the numerical analysis.}
\begin{center}
\begin{minipage}{0.8\linewidth}
 \begin{description}
\item[(triangular)] $p=3, q=n \qquad n\geq 7 \Rightarrow \frac 13 + \frac 1n < \frac 12$
\item[(square)] $p=4, q=n \qquad n\geq 5 \Rightarrow \frac 14 + \frac 1n < \frac 12$
\item[(hexagonal)] $p=6, q=n \qquad n\geq 4 \Rightarrow \frac 1 6 + \frac 1n < \frac 12$
\end{description}
\end{minipage}
\end{center}
While therefore the choice of the shape of the tiles and their neighborhood is less strict than in the Euclidean plane, the contrary is true for the third parameter, namely the length of the edges. According to the \emph{theorem of Gauss-Bonnet} (cf.\cite{Anderson},\cite{Ramsay}), the area of a triangle and hence for every polygon is proportional to the angular defect. Since the choice of $p$ and $q$ determines the defect $\Delta$ by
\begin{equation*}
 \Delta = 2\pi - \frac {2\pi} p - 2 \frac {\pi} q > 0
\end{equation*}
the length of the edges is also already uniquely determined by them. To deduce now the formula for this edge length, we will have to remind the hyperbolic law of cosines II (cf.chapter \ref{sec:geom:trig_laws}):
\begin{equation*}
 \cos(\gamma) = -\cos(\alpha)\cos(\beta) + \sin(\alpha)\sin(\beta)\cosh(c)
\end{equation*}
While using the same triangulation of a regular polygon as seen above and regarding one of these resulting triangles, let  $\alpha$, $\beta$ and $\gamma$ be chosen as above and let $c$ be the length of the edge opposite to $gamma$. Then $c$ is equal to the edge length of the polygon. Therefore, we obtain:
\begin{equation*}
c = \acosh(\frac{\cos(\frac {2\pi} p) + \cos^2(\frac \pi q)} {\sin^2(\frac \pi q)})
\end{equation*}
Since we need only the half of the length for defining the symmetry group which generates the tiling (see \ref{ch:tess:generation}) this can be further simplified as follows
\begin{eqnarray*}
(c/2) &=& \frac 12 \acosh\left(\frac{\cos(\frac {2\pi} p) + \cos^2(\frac \pi q)} {\sin^2(\frac \pi q)}\right) 
= \frac 12 \acosh\left(\frac{\cos(\frac {2\pi} p) + 1 - \sin^2(\frac \pi q)} {\sin^2(\frac \pi q)}\right)\\
	   &=& \frac 12 \acosh\left(2 \frac{\cos^2(\frac {\pi} p)} {\sin^2(\frac \pi q)} -1 \right) 
	   = \acosh(\frac {\cos(\frac {\pi} p)}{\sin(\frac \pi q)})
\end{eqnarray*}
Thus, the regular tilings like the three examples shown in fig.\ref{fig:tesselation:tess_hyperbolic} are completely defined.

\begin{figure}[!ht]
\begin{center}
\unitlength 1cm
\includegraphics[width=0.25\textwidth]{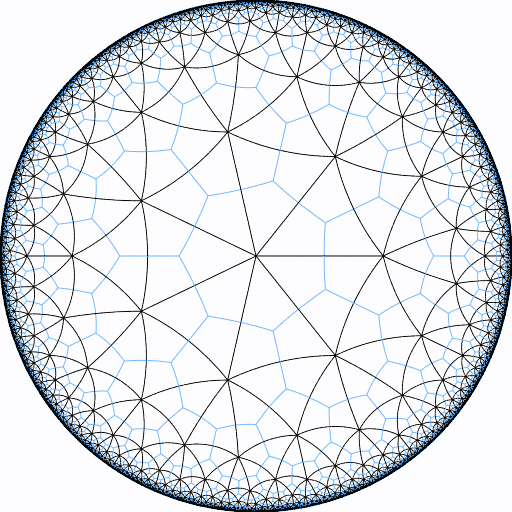}
\hspace{0.07\textwidth}
\includegraphics[width=0.25\textwidth]{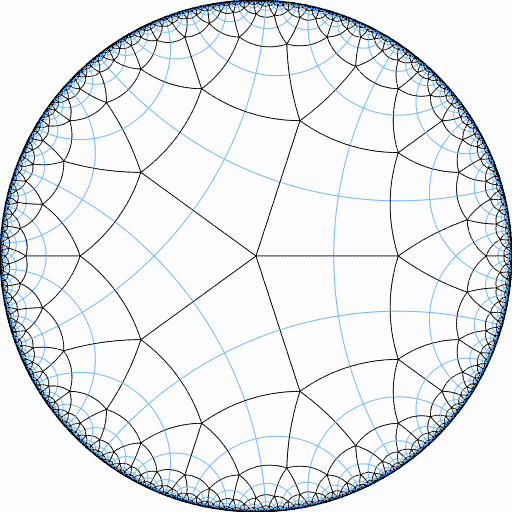}
\hspace{0.07\textwidth}
\includegraphics[width=0.25\textwidth]{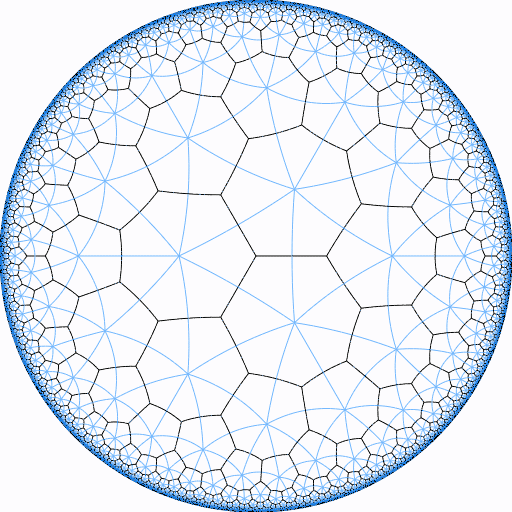}
\end{center}
\caption{Regular tesselation of hyperbolic plane (in Poincare model) to Schlaefli symbols (3,7), (4,5) and (7,3) }
\label{fig:tesselation:tess_hyperbolic}
\end{figure}

\subsection{Generating tilings}

Using a symmetry group defined in the same way as in the Euclidean case, any of the infinitely many regular tessellations of the hyperbolic plane can be generated. But, as we have seen right above, the regular Hyperbolic maps are already completely defined by the Schlaefli number and does not allow a variation of the edge length.\\
\\
Another significant difference to the Euclidean case becomes obvious if we consider the number of nodes lying in a given circular neighborhood $B_r$ in the two-dimensional Euclidean and Hyperbolic plane or, to be more exact, its growth rate in respect to the radius $r$ of the neighborhood. Since all the regular tiles have the same area, the number of nodes multiplied by the area of the faces in the dual tiling is approximately proportional to the area of $B_r$. In the Euclidean case, this area is given by $A^{Eucl.}(r)=\pi r^2$. The area of the Hyperbolic circle on the other side can be analogously determined by the following integration\footnote{using Hyperbolic spherical coordinates}:
 \begin{equation*}
  A^{Hyp.}(r) = \int_0^r \int_0^{2\pi} \Theta \sinh(r) d\Theta dr = 2 \pi (\cosh(r) -1)
 \end{equation*}
Thus, this yields an exponential growth rate in respect to the radius for the Hyperbolic case while the same rate is only polynomial in the Euclidean plane. Tab.\ref{tab:tess:area} lists now the number of nodes for various Euclidean and Hyperbolic maps and different sizes of neighborhoods as an example. It can been seen that the number of nodes lying in a circular neighborhood indeed growths nearly by the same factor as the area of the neighborhood itself, as claimed above.

\begin{table}[!ht]
\begin{tabular}{|l||c|c|c||c|c|c|c|}\hline
& \multicolumn{3}{c||}{Euclidean space} & \multicolumn{4}{c|}{Hyperbolic space}\\
radius & area &(4,4),$d_{NN}=1$&(3,6),$d_{NN}=1$&area&(3,7)&(4,5)&(6,4) \\ \hline
$\sqrt2$&6.28&9&7&7.40&8&6&5\\ \hline
2 &12.57&13&19&17.36&15&11&5\\ \hline
$2\sqrt2$&25.13&25&31&47.05&43&41&21\\ \hline
4 &50.26&49&61&165.30&176&101&81\\ \hline
\end{tabular}
\caption{Number of nodes for Euclidean and Hyperbolic maps and different sizes of circular neighborhoods}
\label{tab:tess:area}
\end{table}

\subsection{Boundaries/Truncation}

Analog to the Euclidean case, we are confronted with the fact that we can only use maps with finite size in numerical simulations, but unlike the Euclidean maps, periodic boundaries can not be defined here in such an easy manner. Furthermore, due to the exponential growth most of the nodes of the Hyperbolic map lie always near the edge of a finite map.
Two sensible methods of truncating the infinite tiling \footnote{not to be mistaken for the ''truncation`` of a tiling!} to get a finite map are:

\begin{itemize}
 \item[-] Truncation at a given maximal distance of the generated nodes to the origin $o$. 
 \item[-] Truncation at a given maximal level of expansion i.e. maximal number of expansions needed to create a node when starting with the origin $o$
\end{itemize}
Given a circular area around $o$, the truncation at the maximal distance ensures the maximal number of nodes in this area. So, this method was for example used to determine the number of nodes in tab.\ref{tab:tess:area} for the various hyperbolic maps.
\\
The second method of truncating at a maximal expansion level, however, results in a shape of the map that is, in general, less circular. Fig.\ref{fig:tesselation:overlap} illustrates this fact by comparing the distances of the nodes of different layers for the (3,7)- and the (6,4)-map. While the nodes of the lower layers are close together, the nodes of higher layers are rather scattered and the layers even overlap. Thus, by truncating at a certain level, many nodes that would lie at the same distances from the origin as the generates vertices would be neglected.

\begin{figure}[!ht]
\begin{center}
\includegraphics[width=0.48\textwidth]{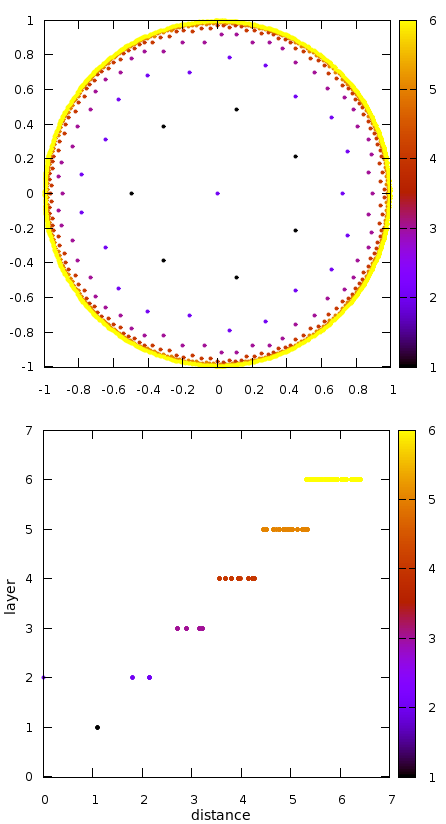}
\includegraphics[width=0.48\textwidth]{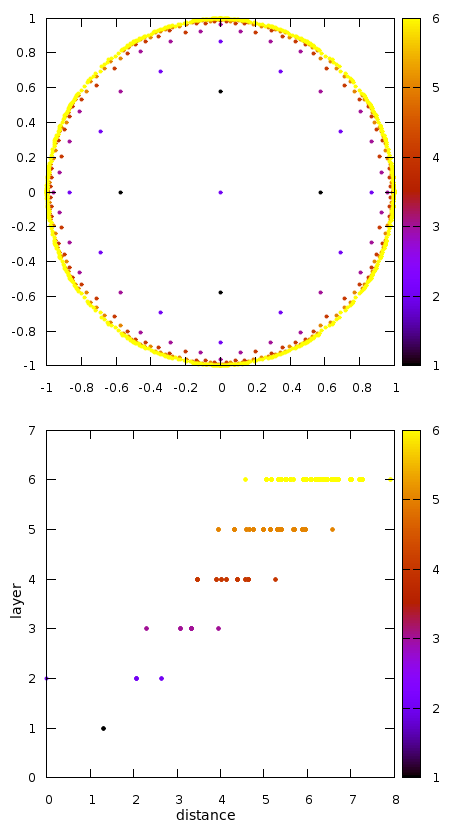}
\end{center}
\caption{Position of nodes/layers in Poincare Disk and distance of nodes to the origin (left) (3,7)-map (right) (6,4)-map}
\label{fig:tesselation:overlap}
\end{figure}

But, on the other hand side, this truncation method ensures, that the map is isotropic in respect to its hierarchical structure, and is used in the papers covering (non-growing) HSOMs (e.g. \cite{ritter},\cite{ontrup2}).

\chapter{Sample distributions} \label{ch:distr}

Normally, the samples in the feature space presented to the SOM have an distribution that reflects the structure of the examined object or networks which is in general very complex. The goal here is to analyze the stability of certain configurations of SOMs concerning an additional dimension (i.e. input in feature space has one more intrinsic dimension than map space. In the following, we will name the dimensions except the additional one \emph{map dimensions} whereas the additional one will be called \emph{extra dimension}) and depending on their type of map and feature space configuration, sample sets with particular constructed probability densities will be used. To adjust the influence of the extra dimension, the spread of the samples will be bounded in this direction and, to ensure the translation invariance of the sample distribution in the dimension except the extra one, two equidistant (curved) hyperplanes are chosen as the boundaries. In the other dimensions there will also be boundaries to limit the sample set volume to keep the probability density normalizable. As described in the section about tessellation, the use of periodic boundaries can then be used to create at least the illusion of an unbounded sample set in all dimensions besides the extra one. Nevertheless should the extra dimension be much smaller than any other to suppress boundary effects.\\
\\
Since we have already collected all necessary geometrical "ingredients" in a former chapter about the geometry of the spaces, we will now concentrate on the definition of these distributions for the different spaces and models fitting best to their inherent structure without imposing any other additional special structures. Thus appropriate distributions are uniform or can be obtained by transferring uniform structures of the map space onto the feature space. They share all the invariances concerning rotations and translations that are intrinsic to the grids in the respective map space. Hence the definition of the probability distribution that are needed here is quite trivial. They will be shortly introduced in the next paragraph. Much more work has to be done to construct and implement these. A detailed view on this will be done further down.

\newpage
\section{Distributions} \label{sec:distr:distr}

Generally speaking, a (continuous) uniform distribution is a class of probability distributions such that all subsets with equal volumes are equally probable i.e. each infinitesimal volume element $dV$ should have the same probability. Knowing the metric and thus also the volume element $dV$ allows us to calculate directly the (joint or multidimensional) probability density function for each particular model of the space we want to use. At this point it should be noted, that the densities that will be presented below, still depend on a normalization constant $N$, that itself depends on the support of the functions, i.e. the volume of the set of possible samples. Since the densities will be used unnormalized, $N$ will not have to be determined in every case.

Before starting to go through the individual cases, a theorem we will make use of has to be mentioned (cf. \cite[Thm.4.2]{Luc}).

\begin{thm}[Transformation of (n-dimensional) random variables] \label{thm:distr:luc4.2}
Let X have a continuous density $f$ on its support set $S \subseteq \setR^d$ and given a transformation which is a $C^1$-diffeomorphism $h:S \stackrel{~}\rightarrow T\subseteq \setR^d$, i.e. continuous and bijective function such that, if $g(y) := h^{-1}(y) : T\stackrel{~}\rightarrow S$ is the inverse of the transformation, its first partial derivatives and therefore its Jacobian matrix $J = (\frac {\partial g_i}{\partial y_i})_{ij}$ exists and the derivatives are continuous on T. Then $Y=h(X)$ has density
$$ f(g(y)) \det(J) \qquad y \in T$$ 
\end{thm}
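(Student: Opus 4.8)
The plan is to verify the claim directly from the defining property of a probability density: a random vector $Y$ taking values in $T$ has density $p$ precisely when $P(Y \in B) = \int_B p(y)\, dy$ holds for every Borel set $B \subseteq T$. So it suffices to exhibit $p(y) := f(g(y)) \lvert \det(J) \rvert$ and show it satisfies this identity for arbitrary $B$. Note already that the absolute value is what is really needed, since a density must be nonnegative while $\det(J)$ may be negative.

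First I would fix a Borel set $B \subseteq T$ and exploit the bijectivity of $h$. Since $h$ is a bijection from $S$ onto $T$ with inverse $g$, the event $\{Y \in B\} = \{h(X) \in B\}$ coincides with $\{X \in h^{-1}(B)\} = \{X \in g(B)\}$. Because $g$ is continuous, $g(B)$ is again Borel, so this probability is well-defined, and using that $X$ has density $f$ we get
\begin{equation*}
P(Y \in B) = P(X \in g(B)) = \int_{g(B)} f(x)\, dx.
\end{equation*}

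The heart of the argument is then the \emph{analytic} change-of-variables (substitution) theorem for multiple integrals: since $g : T \to S$ is a $C^1$-diffeomorphism, the substitution $x = g(y)$ transforms the integral over $g(B)$ into one over $B$ while introducing the Jacobian factor $\lvert \det(J) \rvert$, giving
\begin{equation*}
\int_{g(B)} f(x)\, dx = \int_B f(g(y)) \lvert \det(J(y)) \rvert\, dy.
\end{equation*}
Combining the two displays yields $P(Y \in B) = \int_B f(g(y)) \lvert \det(J) \rvert\, dy$ for every Borel $B \subseteq T$, which is exactly the assertion that $f(g(y)) \lvert \det(J) \rvert$ is the density of $Y$.

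I expect the only genuine subtlety to lie in invoking the multivariate substitution theorem: it is a nontrivial result of real analysis (and the main tool being imported here), and applying it cleanly requires the full diffeomorphism hypotheses assumed in the statement — continuity and bijectivity of $h$, existence and continuity of the partial derivatives of $g$, and nonvanishing of $\det(J)$ on $T$. The remaining steps (measurability of $g(B)$ and the passage between probabilities and integrals) are routine. I would also remark that, strictly, one should write $\lvert \det(J) \rvert$ rather than $\det(J)$, the absolute value being immaterial only in those later applications where the diffeomorphisms chosen happen to be orientation-preserving.
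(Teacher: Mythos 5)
Your argument is correct and is the standard one: reduce $P(Y\in B)$ to $P(X\in g(B))=\int_{g(B)}f(x)\,dx$ via bijectivity, then apply the multivariate change-of-variables theorem for the $C^1$-diffeomorphism $g$ to obtain $\int_B f(g(y))\,\lvert\det J(y)\rvert\,dy$. Note that the paper does not actually prove this theorem — it is imported verbatim from Devroye (Thm.~4.2 of the cited reference) and used as a black box to transport densities between coordinate models — so there is no in-paper proof to compare against; your write-up supplies exactly the argument the citation stands in for. Your remark that the density should read $f(g(y))\,\lvert\det(J)\rvert$ rather than $f(g(y))\det(J)$ is a genuine (if minor) correction to the statement as printed; in the paper's later applications (e.g.\ the transformation from hyperbolic spherical coordinates to the Poincar\'e model) the Jacobian determinant happens to be positive on the relevant domain, which is why the omission does no harm there, but the absolute value is needed for the theorem to be correct in general.
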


With this theorem and the transformations defined in ch.\ref{ch:geometry} we can convert densities given in one model to the corresponding densities for another model.

\subsection{Euclidean-Euclidean case} \label{sec:distr:def_EE}

In the Euclidean space we will use the well-known cartesian coordinate system. The corresponding metric tensor is simply the identity. Thus the volume element is given by 
\begin{equation*}
 dV = dx_1 \cdot dx_2 \cdot \dots \cdot dx_n
\end{equation*}
Let $V$ be the volume of the compact subset $s \subset \setR^n$, in which the samples shall be located. Then the joint probability density function of the cartesian coordinates is:
\begin{equation*}
 \varrho(X_1,\dots,X_n) = 1/V =: N
\end{equation*}
where $X_1,\dots,X_n$ are the random variates representing the cartesian coordinates. An illustration of this can be found in fig \ref{distribution:EE}.
\begin{figure}[!ht]
\begin{center}
\includegraphics[width=0.45\linewidth]{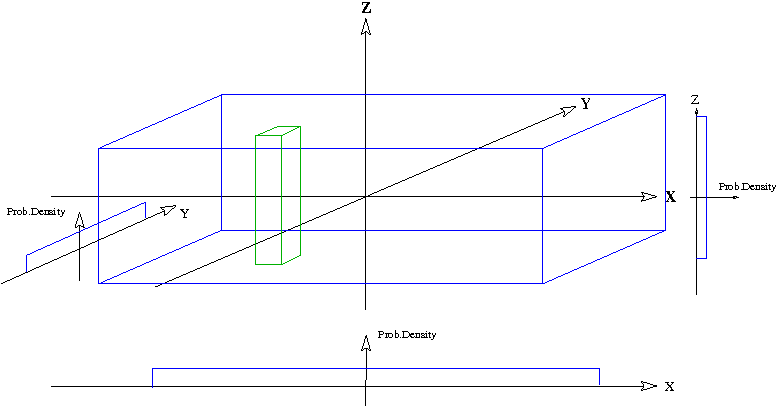}
\includegraphics[width=0.45\linewidth]{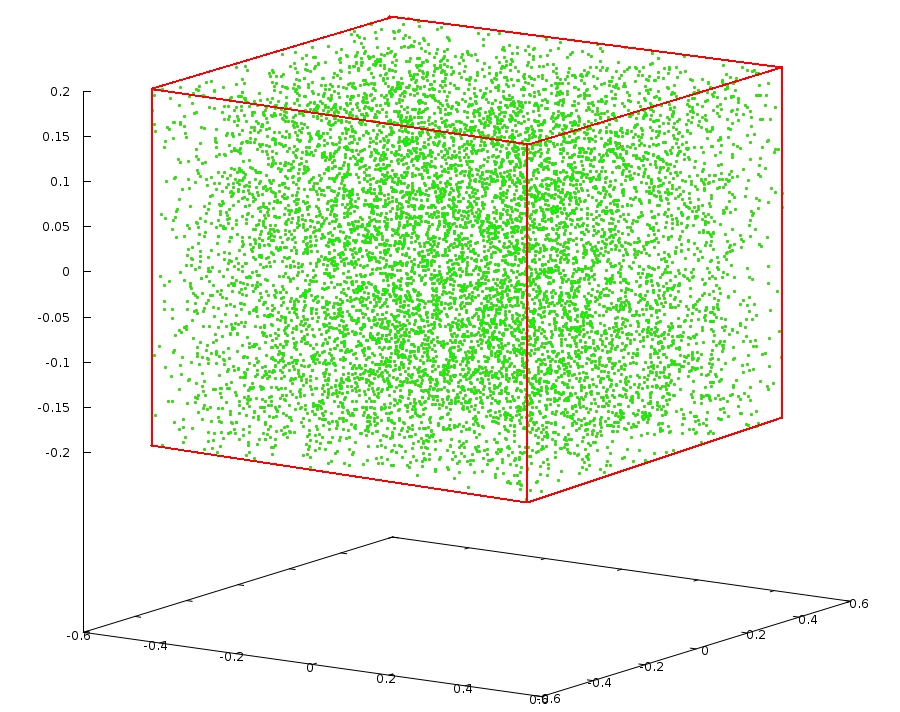}
\end{center}
\caption{Euclidean space: (left) illustration of distribution boundaries, densities and voronoi cell (right) realization of uniform distribution}
\label{distribution:EE}
\end{figure}

Since the equidistant surfaces are planes in the Euclidean space, the shape of the compact subset $S$ will simply be cuboidal (as seen in the figure).

\newpage
\subsection{Hyperbolic-Hyperbolic case} \label{sec:distr:def_HH}

In hyperbolic space (as in any other) the probability density depends on the chosen model. We will focus here on the spherical coordinate and Poincare (hyper-)sphere model, because both will be used in the following. Again since both map and feature space are again equal, a uniform distribution will be used. The way to define it is analog to the Euclidean case above. Only the volume element and therefore the density is more complicated.

\begin{description}
 \item[Hyperbolic Spherical Coordinates model] Due to the metric defined in Eq. \ref{eqn:geom:HSph_metric}, the volume element is given by:
\begin{equation*}
 dV_{HSph} = (\sinh r)^2 \sin \phi dr d\theta d\phi
\end{equation*}
and so we obtain as the joint density function:
\begin{equation*}
\varrho_{HSph}(R,\Phi,\Theta) = N (\sinh r)^2 \sin \phi
\end{equation*}
where $V$ is again the volume of the compact subset $S \subset H^2$, from which the samples are drawn.
 \item [Poincare (hyper-)sphere model] There are now two ways to get the uniform density for this model. We can again have a look at the volume element and deduce the density by using it analogously to the cases above. But since we are going to implement the uniform distribution belonging to this model by just using the spherical equivalent, it is more suitable here to derive the density also by using the coordination transformation between the two models and applying theorem \ref{thm:distr:luc4.2}.
\\
We have already seen in section \ref{ch:geom:Hyp_models} that this transformation $T_{HSph \rightarrow PD}$ is defined by Eq.\ref{eq:geom:trafo_hsph_pd}. Thus the reverse transformation $T_{PD \rightarrow HSph}$ for the desired three-dimensional case is:
\begin{eqnarray*}
r &=& 2 \atanh(\sqrt{x_1^2 + x_2^2 + x_3^2}) \\
\phi &=&  \atantwo(\sqrt{x_1^2 + x_2^2} ,x_3^2) \\
\theta &=& \atantwo(x_2,x_1)
\end{eqnarray*}
where $\atantwo$ is the usual two-argument variant of arctangent. Furthermore is the Jacobian matrix of $T_{Sph \rightarrow PD}$ given by:
\begin{eqnarray*}
 &&J_{HSph \rightarrow PD} = \pdiff{T_{Sph \rightarrow PD}(r,\phi,\theta)}{(r, \phi,\theta)} \\
 &&  = \matrixdrei{ \frac{1- \tanh^2(\frac r2)}2 \sin(\phi) \cos(\theta) & \tanh(\frac r2)  \cos(\phi) \cos(\theta) & - \tanh(\frac r2)  \sin(\phi) \sin(\theta) \\
 \frac{1 - \tanh^2(\frac r2)}2 \sin(\phi) \sin(\theta) & \tanh(\frac r2)  \cos(\phi) \sin(\theta) & \tanh(\frac r2)  \sin(\phi) \cos(\theta) \\
\frac{1 - \tanh^2(\frac r2)}2 \cos(\phi)  & - \tanh(\frac r2)  \sin(\phi) & 0} \\
\end{eqnarray*}
The Jacobian determinant of the inverse transformation, that will be needed to apply the theorem can then simply be obtained by calculating the determinant of $J$ above and then by using the fact that the determinant is a multiplicative map taking the reciprocal\footnote{We skip at this point to specify explicitly the Jacobian of the inverse transformation, but its existence follows directly by the non-vanishing determinant and thus the existence of the reciprocal.}. This yields:
\begin{eqnarray*}
  \det(J_{HSph \rightarrow PD}) &=& \tanh^2(\frac r2) \sin \phi \cdot ( \frac{ 1-\tanh^2(\frac r2)} 2) \\
\Rightarrow \det(J_{PD \rightarrow HSph}) &=& \frac 1 {\tanh^2(\atanh(\sqrt{x_1^2 + x_2^2 + x_3^2})) \sin(\atantwo(\sqrt{x_1^2 + x_2^2} ,x_3^2))}\\ &\cdot& \frac 2 {( 1-\tanh^2(\atanh(\sqrt{x_1^2 + x_2^2 + x_3^2})))} \\
&=& \frac 2 {R^2 \sin(\atantwo(\sqrt{x_1^2 + x_2^2} ,x_3^2)) (1-R^2) }
\end{eqnarray*} 
with $R = \sqrt{x_1^2 + x_2^2 + x_3^2} = \norm{x}$. Now we can finally transfer the spherical probability density $\varrho_{HSph}$ to the Poincare Sphere model:
\begin{eqnarray*}
&&\varrho_{PD}(x_1,x_2,x_3) \stackrel{\mathrm{Thm.}\ref{thm:distr:luc4.2}}= \varrho_{HSph}(T_{PD \rightarrow HSph}(x_1,x_2,x_3)) \cdot \det(J_{PD \rightarrow HSph}) \\
&&= N \frac {2 (\sinh^2(2 \atanh(R))} {R^2 \cdot ( 1-R^2)}
= N\frac { 2 \frac{\tanh^2(2 \atanh(R))}{1-\tanh^2(2 \atanh(R))} }{R^2 \cdot ( 1-R^2)}\\
&&= N\frac { 2 \frac{(2R/(1+R^2))^2}{1-(2R/(1+R^2))^2}}{R^2 \cdot ( 1-R^2)}
= N\frac { 2 \cdot 4 R^2/(1-R^2)^2}{R^2 \cdot ( 1-R^2)} = \frac{\tilde N} {( 1-R^2)^3}\\
&\Rightarrow& \varrho_{PD}(x_1,x_2,x_3) = \frac{\tilde N} {( 1-\norm{x}^2)^3}
\end{eqnarray*}

\end{description}

Unlike in the Euclidean case the shape of the sample set as shown in the Fig.\ref{fig:distribution:HH} is not so trivial, since the equidistant curves and surfaces (cf. section \ref{sec:geometry:equidistant}) are no longer lines and planes in the hyperbolic space. To avoid boundary effects, the ``lateral'' boundary is shaped in a way to preserve the voronoi cells of the outermost nodes, i.e. is defined by the geodesics at each point of the edge of the disk such that they are perpendicular to it. 

\begin{figure}[!ht]
\begin{center}
\unitlength 1cm
\includegraphics[width=0.49\linewidth]{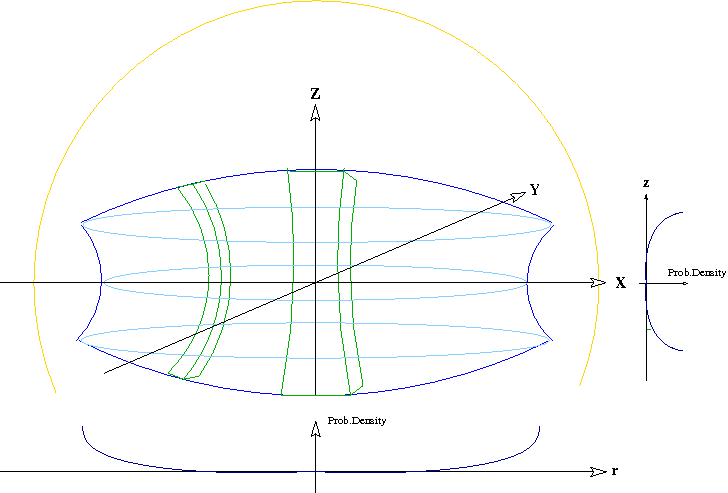}
\includegraphics[width=0.49\linewidth]{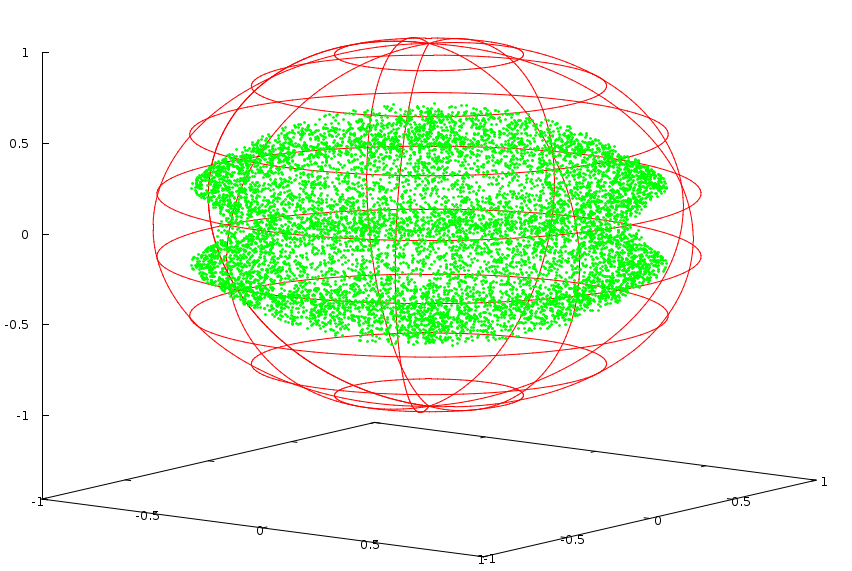}
\end{center}
\caption{Hyperbolic space / Poincare Sphere: (left) illustration of distribution boundaries, densities and voronoi cells (right) realization of uniform distribution}
\label{fig:distribution:HH}
\end{figure}

\subsection{Hyperbolic-Euclidean case} \label{sec:distr:def_HE}

This is the first case where the map space is different than the feature space. It is not longer enough to just use a uniform distribution in the feature space, because this would result in the lost of the hierarchical structure that we have in the map space and therefore in the grid of neurons. To conserve this, we adapt the probability density by using the volume element of the map space. Only for the extra dimension a uniform Euclidean distribution is used. Thus by embedding the Poincare Disk model into our Euclidean feature space we get:
 
\begin{equation*}
dV = dV_{PD} \cdot ds_{Eucl} = (1 - \norm{x}^2)^{-(n-1)} dx_1\dots dx_n
\end{equation*}

In other words, we present the SOM in the map dimensions a hierarchical structure and an uniform Euclidean scattering in the extra dimension as we would obtain by statistical fluctuations around a ``hierarchical object''. The shape is therefore as seen in fig. \ref{fig:distribution:HE} a ``fat'' disk.

\begin{figure}[!ht]
\begin{center}
\unitlength 1cm
\includegraphics[width=0.49\linewidth]{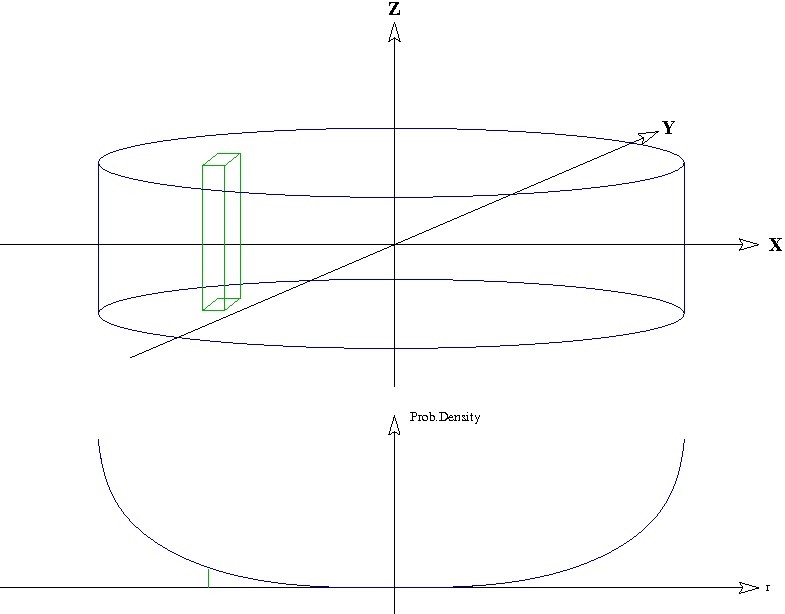}
\includegraphics[width=0.49\linewidth]{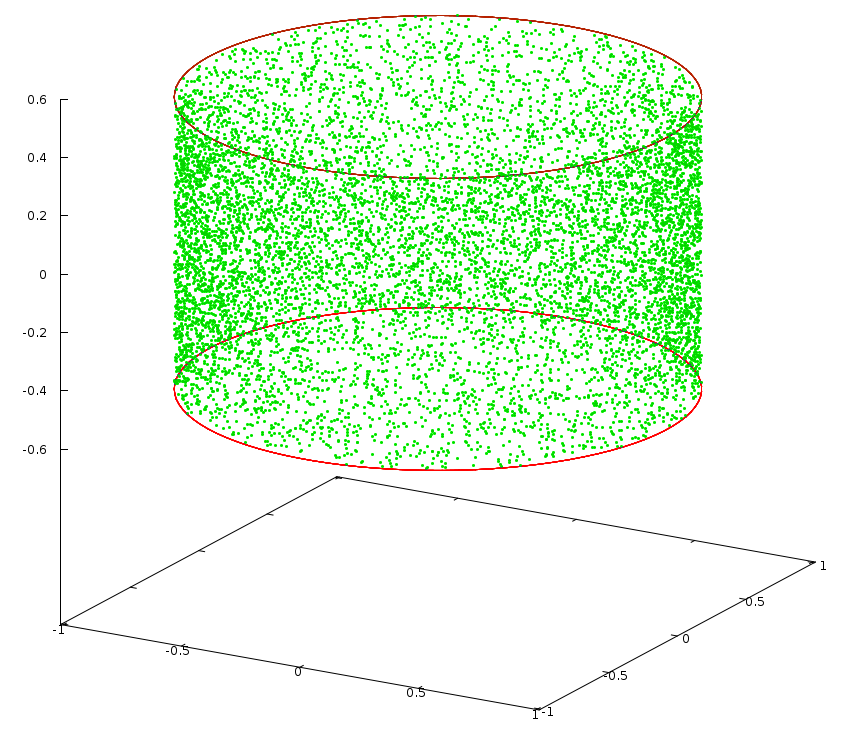}
\end{center}
\caption{Euclidean space: (left) illustration of distribution boundaries, densities and voronoi cells (right) realization of distribution}
\label{fig:distribution:HE}
\end{figure}

\newpage
\section{*Generating random variates} \label{sec:distr:generate}

Now we have to discuss how to generate the random variates defined by a particular density function. Referring to Luc Devroye's ``bible'' for non-uniform number generation \cite{Luc} we will present two important principles. In order to do this as short as possible, we will skip the proofs of the thereby used theorems, but the interested reader may find them in the appendix \ref{ch:app_distr}. We will further assume that the reader has knowledge of the basic vocabulary of probability theory.

\subsection{*Inversion principle}

This method is based upon the property given by \cite[thm.II.2.1]{Luc}

\begin{thm}
Let $F$ be a continuous cumulative distribution function (CDF) on \setR with Inverse $F^{-1}$ defined by
$$ F^{-1}(u) = \inf\{x \in \setR | F(x) = u\} $$
If $U$ is a uniform random variable in $[0,1]$, then $F^{-1}(U)$ has distribution function $F$. Also, if $X$ has distribution function $F$, then $F(X)$ is uniformly distributed on $[0,1]$.
\label{thm:distributions:inversion}
\end{thm}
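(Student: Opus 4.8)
The plan is to prove both claims from a single fundamental equivalence relating the generalized inverse to the CDF: for every $u \in (0,1)$ and every $x \in \setR$,
$$ F^{-1}(u) \leq x \iff u \leq F(x). $$
First I would establish this equivalence, using the continuity of $F$ in an essential way. For the forward direction, if $F^{-1}(u) \leq x$, then choosing a sequence $y_n \downarrow F^{-1}(u)$ with $F(y_n) = u$ and passing to the limit gives $F(F^{-1}(u)) = u$ by continuity; monotonicity of $F$ then yields $F(x) \geq F(F^{-1}(u)) = u$. For the reverse direction, if $u \leq F(x)$, then since $F$ is continuous and $F \to 0$ at $-\infty$, the intermediate value theorem produces a point $y \leq x$ with $F(y) = u$, so $F^{-1}(u) = \inf\{z : F(z)=u\} \leq x$.

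For the first claim, I would then compute the distribution function of $Y = F^{-1}(U)$ directly. Using the equivalence,
$$ P(Y \leq x) = P(F^{-1}(U) \leq x) = P(U \leq F(x)). $$
Since $U$ is uniform on $[0,1]$ and $F(x) \in [0,1]$, the right-hand side equals $F(x)$, so $Y$ has distribution function $F$. For the second claim, suppose $X$ has distribution function $F$ and fix $u \in (0,1)$. Applying the equivalence in the form $\{F(X) \geq u\} = \{X \geq F^{-1}(u)\}$ and using that continuity of $F$ forces $X$ to have no atoms (so $F$ is left-continuous at $F^{-1}(u)$ and $P(F(X) = u) = 0$), I would obtain $P(F(X) \leq u) = F(F^{-1}(u)) = u$. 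A distribution function equal to $u$ throughout $(0,1)$ is exactly that of the uniform law on $[0,1]$, which closes the argument.

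The main obstacle is the careful handling of the generalized inverse when $F$ is not strictly increasing, that is, on intervals where $F$ is flat (corresponding to regions of vanishing density). There the set $\{x : F(x) = u\}$ is a nondegenerate interval rather than a single point, so one cannot simply invert $F$ pointwise, and the equivalence above must be verified without appealing to strict monotonicity. The continuity hypothesis on $F$ is precisely what makes everything go through: it rules out jumps, guarantees that every level $u \in (0,1)$ is actually attained, and delivers the identity $F(F^{-1}(u)) = u$ used in both directions. Without continuity this identity can fail and the conclusion that $F(X)$ is uniform breaks down, so I would take care to flag exactly where continuity is invoked.
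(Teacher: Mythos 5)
Your proposal is correct and follows essentially the same route as the paper's own proof, which simply asserts the two chains of equalities $P(F^{-1}(U)\leq x)=P(U\leq F(x))=F(x)$ and $P(F(X)\leq u)=P(X\leq F^{-1}(u))=F(F^{-1}(u))=u$ without further comment. You merely make explicit the justifications the paper leaves implicit — the equivalence $F^{-1}(u)\leq x \iff u\leq F(x)$, the identity $F(F^{-1}(u))=u$, and the role of continuity on flat intervals — which is a welcome tightening rather than a different argument.
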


Using this theorem and given an arbitrary continuous CDF $F$ which has an explicitly known inverse, a random variate with this distribution function can be generated by the following simple algorithm:

\begin{cor}[Inversion method] 
Random variates distributed with CDF $F$ can be obtained as follows::
\begin{itemize}
\item[] \texttt{Generate a uniform [0,1] random variate U}.
\item[] \texttt{RETURN X} $\Rightarrow$ $F^{-1}(U)$
\end{itemize}
\label{thm:distributions:inversion_method} 
\end{cor}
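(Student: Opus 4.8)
The plan is to observe that this Corollary is an immediate restatement of the forward direction of Theorem~\ref{thm:distributions:inversion}, so the proof amounts to verifying that the output of the stated algorithm matches the hypothesis of that theorem. First I would make precise what the algorithm produces: it draws a single realization $U$ of a random variable uniformly distributed on $[0,1]$, and then outputs $X := F^{-1}(U)$, where $F^{-1}$ is the (generalized) inverse defined in Theorem~\ref{thm:distributions:inversion} by $F^{-1}(u) = \inf\{x \in \setR \mid F(x) = u\}$. The claim to establish is simply that this $X$ has cumulative distribution function $F$.

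The key step is then a direct appeal to Theorem~\ref{thm:distributions:inversion}: since $U$ is uniform on $[0,1]$ and $F$ is a continuous CDF, the theorem asserts verbatim that $F^{-1}(U)$ has distribution function $F$. Because the algorithm returns exactly $F^{-1}(U)$, the generated variate $X$ therefore has the desired distribution, which is all that the Corollary claims. No further computation is required; the correctness of the method is logically equivalent to the statement already proved (or, here, cited from \cite{Luc}) in the theorem.

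Since the substantive content lives entirely in Theorem~\ref{thm:distributions:inversion}, I do not expect a genuine obstacle in the Corollary itself. The only point deserving a remark is the practical hypothesis implicitly attached to the method, namely that $F^{-1}$ be \emph{explicitly} computable: the theorem guarantees the distributional identity for the generalized inverse regardless of a closed form, but the algorithm is only useful when $F^{-1}(u)$ can be evaluated in practice. I would thus close the proof by noting that the inversion step is well-defined for any continuous $F$ (so the method is always valid in principle), while its applicability as a \emph{generation} scheme is limited to those $F$ whose inverse is available in explicit form, exactly as emphasized in the text preceding the statement.
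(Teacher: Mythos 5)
Your proposal is correct and matches the paper's intent exactly: the paper omits the proof as trivial precisely because, as you observe, the corollary is an immediate application of the forward direction of Theorem~\ref{thm:distributions:inversion} to the variate $U$ generated by the algorithm. Your closing remark about the practical need for an explicitly computable $F^{-1}$ is also consistent with the paper's own discussion following the corollary.
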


(The proof of this corollary is trivial and will be omitted)

\subsection{*Rejection method}

The requirement of having a CDF with known inverse is often hard to fulfil. So even if the inverse is well-defined, there may often be no analytical solution of $F(x)=u$ and therefore it can difficult to compute the inverse fast and accurate. One way to fix this, is to use a numerical solution of  $F(x)=u$. This would lead to an unavoidable trade-off between computation time and accuracy. Instead we will make use of the \emph{rejection principle}. This method allows to create accurate random variates at the cost of having to reject some generated ones as will be shown below. But first two theorems.

\begin{thm}
 \begin{enumerate}[1)]
\item Let $X=(X_1,\dots,X_d)$ be a collection of $d$ independent and identically-distributed (iid) random variates i.e. a random vector  with joint density $f$ on $\setR^d$ and let $U$ be an independent  uniform [0,1] random variate.
Then $(X, cUf(x))$ is uniformly distributed on $A = \setgen{(x,u)}{x \in \setR^d, 0 \leq u \leq cf(x)}$ where $c > 0$ is an arbitrary constant.
\item Vice versa, if $(X,U)$ is a random vector in $\setR^{d+1}$ uniformly distributed on $A$, then $X$ has density $f$ on $\setR^d$.
\end{enumerate}
\label{thm:distributions:rejection_1}
\end{thm}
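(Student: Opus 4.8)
The plan is to prove both parts by explicitly identifying densities, leaning on the transformation formula of Theorem~\ref{thm:distr:luc4.2} for the first part and on a one-line marginalization for the second. The key preliminary observation is that the $(d+1)$-dimensional Lebesgue volume of $A$ equals $c$: since $f$ is a density, $\mathrm{vol}(A) = \int_{\setR^d} \left( \int_0^{cf(x)} du \right) dx = c \int_{\setR^d} f(x)\, dx = c$. Consequently ``uniformly distributed on $A$'' is synonymous with having the constant joint density $\frac{1}{c}\,\mathbf{1}_A$, and both parts come down to recognising this constant.

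For the first part I would set $V := cUf(X)$ and consider the map $h(x,u) = (x, c u f(x))$. Restricted to the set where $f(x) > 0$ this is a $C^1$-diffeomorphism onto (the relevant part of) $A$, with inverse $g(x,v) = (x, v/(cf(x)))$; since $g$ fixes the first $d$ coordinates, its Jacobian matrix is triangular and $\det J = 1/(cf(x))$. By independence the joint density of $(X,U)$ is $f(x)\cdot \mathbf{1}_{[0,1]}(u)$, so Theorem~\ref{thm:distr:luc4.2} gives the density of $(X,V)$ as $f(x) \cdot \frac{1}{cf(x)} = \frac{1}{c}$ for $(x,v) \in A$ and $0$ elsewhere, which is exactly the uniform density on $A$. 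Equivalently, to sidestep the diffeomorphism question on $\{f=0\}$, one can argue conditionally: given $X = x$ with $f(x)>0$, the variable $V = cf(x)\,U$ is uniform on $[0,cf(x)]$ with conditional density $1/(cf(x))$, and multiplying by the marginal $f(x)$ again yields the joint density $\frac{1}{c}$ on $A$.

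For the second part I would assume $(X,U)$ has the uniform density $\frac{1}{c}\,\mathbf{1}_A$ and simply integrate out the last coordinate: for each $x$, $f_X(x) = \int_0^{cf(x)} \frac{1}{c}\, du = \frac{1}{c}\cdot c f(x) = f(x)$, so $X$ has density $f$, as claimed.

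The only genuine subtlety, and thus the step to treat with care, is the behaviour on the null set $\{x : f(x) = 0\}$, where $h$ fails to be a diffeomorphism and the corresponding slice of $A$ is degenerate; this set contributes nothing to any of the integrals, so it can be excised at the outset (or absorbed into the conditional argument above), after which both computations are routine. I would also remark that the stated independence/iid hypothesis on the components of $X$ is not actually needed — only that $X$ possesses the joint density $f$ is used anywhere in the argument.
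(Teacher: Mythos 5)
Your proposal is correct and takes essentially the same route as the paper: the paper proves part 1 by exactly your ``conditional'' computation (Tonelli over the sections $B_x$, with the factor $f(x)\cdot\frac{1}{cf(x)}=\frac{1}{c}$ and the observation that $A$ has volume $c$), and part 2 by the same marginalization, merely written as a volume ratio. Your added remarks --- that the set $\{f=0\}$ is harmless and that only the existence of the joint density $f$, not the iid hypothesis, is actually used --- are both accurate.
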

\begin{thm}
Let $X_1, X_2, \dots$ be a sequence of iid random vectors taking values in $\setR^d$, and let $A \subseteq \setR^d$ be a Borel set such that $P(X_1 \in A) = p > 0$. Let $Y$ be the first $X_i$ taking values in $A$. Then $Y$ has a distribution that is determined by
$$P(Y\in B) = \frac {P(X_1 \in A \bigcap B)} p $$
where $B$ is another Borel set in $\setR^d$.
In particular, if $X_1$ is uniformly distributed in $A_0$, where $A_0 \supseteq A$, them $Y$ is uniformly distributed in A.
\label{thm:distributions:rejection_2} 
\end{thm}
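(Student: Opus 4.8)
The plan is to realize $Y$ explicitly as $Y = X_N$, where $N := \min\{i \geq 1 : X_i \in A\}$ is the first index at which the sequence lands in $A$, and then to compute $P(Y \in B)$ by decomposing over the possible values of $N$. First I would check that $Y$ is well-defined almost surely: since the $X_i$ are iid with $P(X_i \in A) = p > 0$ and hence $0 \le 1-p < 1$, the event that \emph{no} $X_i$ ever lands in $A$ has probability $\lim_{n\to\infty}(1-p)^n = 0$, so $N < \infty$ with probability one.

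The core step is the hitting-time decomposition
$$P(Y \in B) = \sum_{n=1}^\infty P(X_1 \notin A, \dots, X_{n-1} \notin A, X_n \in A \cap B).$$
The event $\{N = n\}$ is precisely that the first $n-1$ vectors avoid $A$ while $X_n \in A$; on this event $Y = X_n$, so additionally imposing $Y \in B$ amounts to $X_n \in A \cap B$. Using independence to factor each summand, and the identical distribution to replace every $P(X_i \notin A)$ by $1-p$ and $P(X_n \in A\cap B)$ by $P(X_1 \in A \cap B)$, each term collapses to $(1-p)^{n-1}\, P(X_1 \in A \cap B)$.

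Summing the resulting geometric series $\sum_{n\ge 1}(1-p)^{n-1} = 1/p$ then yields the claimed formula
$$P(Y\in B) = \frac{P(X_1 \in A \cap B)}{p}.$$
As a consistency check I would take $B = \setR^d$, which recovers $P(Y \in \setR^d) = p/p = 1$, confirming that $Y$ is a genuine random vector. For the ``in particular'' statement I would specialize to $X_1$ uniform on $A_0 \supseteq A$, so that for any Borel $B$ one has $p = \mathrm{vol}(A)/\mathrm{vol}(A_0)$ and $P(X_1 \in A \cap B) = \mathrm{vol}(A \cap B)/\mathrm{vol}(A_0)$; substituting into the formula, the common factor $1/\mathrm{vol}(A_0)$ cancels and leaves $P(Y \in B) = \mathrm{vol}(A\cap B)/\mathrm{vol}(A)$, which is exactly the uniform law on $A$.

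I do not expect any genuine analytic obstacle here: everything reduces to independence, the iid hypothesis, and a convergent geometric series. The only points requiring care are the bookkeeping in the hitting-time decomposition (making sure the conditions ``first $n-1$ avoid $A$'' and ``$X_n \in A \cap B$'' are stated correctly and that they partition $\{Y \in B\}$) and the justification that $N$ is almost surely finite so that $Y$ is well-defined in the first place.
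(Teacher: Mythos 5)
Your proposal is correct and follows essentially the same route as the paper's own proof: the hitting-time decomposition $P(Y\in B)=\sum_{i\ge 1}P(X_1\notin A,\dots,X_{i-1}\notin A,\,X_i\in A\cap B)$, factorization by independence into $(1-p)^{i-1}P(X_1\in A\cap B)$, and summation of the geometric series to $1/p$, with the uniform case handled by the same cancellation of volumes. Your added remark that $N<\infty$ almost surely (so $Y$ is well-defined) is a small point the paper leaves implicit, but otherwise the arguments coincide.
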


Combining both theorems we get the following:

\begin{thm}[Rejection method]
Given a density function $f$ let $g$ be a function and  $c \leq 1$ be a constant such that
$$ f(x) \leq c \cdot g(x) \quad \forall x$$.  Then the following algorithm  generates random variates with density $f$:
\begin{itemize} 
\item[] \texttt{REPEAT}
\item[] \texttt{Generate X with density g and U uniformly distributed on [0,1]}
\item[] \texttt{set } $T \leftarrow c \cdot \frac{g(X)}{f(X)}$
\item[] \texttt{UNTIL} $U\cdot T \leq 1$
\item[] \texttt{RETURN X}
\end{itemize}
\label{thm:distributions:rejection_method} 
\end{thm}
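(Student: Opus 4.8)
The plan is to read the accept/reject loop as a geometric sampling procedure on $\setR^{d+1}$ and then chain the two preceding theorems. Write $A_0 = \setgen{(x,u)}{x \in \setR^d,\ 0 \leq u \leq c\,g(x)}$ for the region under the graph of $c g$, and $A = \setgen{(x,u)}{x \in \setR^d,\ 0 \leq u \leq f(x)}$ for the region under $f$. The domination hypothesis $f(x) \leq c\,g(x)$ is precisely the statement $A \subseteq A_0$, so every point lying under $f$ also lies under $c g$; this inclusion is what makes the whole scheme consistent and is the first thing I would record.

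First I would lift the pair $(X,U)$ produced in one pass of the loop into the plane. Since $X$ has density $g$ and $U$ is an independent uniform $[0,1]$ variate, part 1 of Theorem \ref{thm:distributions:rejection_1} (applied with $g$ in the role of the density and with the constant $c$) shows that the point $(X, c\,U\,g(X))$ is uniformly distributed on $A_0$. Abbreviating $W := c\,U\,g(X)$, I would then identify the stopping rule with membership in $A$: the condition $U \cdot T \leq 1$ with $T = c\,g(X)/f(X)$ rearranges (multiplying through by $f(X) > 0$) to $c\,U\,g(X) \leq f(X)$, i.e. $W \leq f(X)$, i.e. $(X,W) \in A$. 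Thus acceptance is exactly the event that the uniform-on-$A_0$ point falls into the subregion $A$.

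Next I would apply Theorem \ref{thm:distributions:rejection_2} to the i.i.d.\ sequence of lifted pairs $(X_1,W_1),(X_2,W_2),\dots$, each uniform on $A_0$, taking the Borel set to be $A$. The acceptance probability is $p = \mathrm{vol}(A)/\mathrm{vol}(A_0) = 1/c > 0$ (using that both $f$ and $g$ integrate to one), so the ``in particular'' clause of that theorem applies and gives that the first accepted pair $(X,W)$ is uniformly distributed on $A$. Finally, part 2 of Theorem \ref{thm:distributions:rejection_1} asserts that if $(X,W)$ is uniform on $A = \setgen{(x,u)}{0 \leq u \leq f(x)}$, then the first coordinate $X$ has density $f$. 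Since $X$ is exactly the returned variate, this is the claim.

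The argument is a clean chaining of the two lemmas, so the only real obstacle is the bookkeeping with the constant $c$: one must verify that the algebraic stopping condition translates \emph{without slack} into membership in $A$, confirm the inclusion $A \subseteq A_0$ forced by the domination bound, and check that $p = 1/c$ is strictly positive so that ``the first accepted pair'' is almost surely well defined and the loop terminates. This last point also quietly repairs the stated inequality: for two genuine densities the bound $f \leq c\,g$ forces $c \geq 1$, whence $p = 1/c \in (0,1]$.
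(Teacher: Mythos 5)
Your proof is correct and is precisely the combination of Theorems \ref{thm:distributions:rejection_1} and \ref{thm:distributions:rejection_2} that the paper itself invokes (it states only ``Combining both theorems we get the following'' and omits the details you supply). Your observation that the acceptance probability $p=1/c$ forces $c\geq 1$ for genuine densities, so the stated hypothesis ``$c\leq 1$'' should read ``$c\geq 1$'', is a correct catch of a typo in the statement.
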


In other words, three things are required here:
\begin{enumerate}[(i)]
\item a (good) dominating density $g$
\item a simple (and efficient) method for generating random variates with density $g$
\item knowledge of $c$
\end{enumerate}
Although $g$ and $c$ can be easily found by an analytical analysis of $f$, it's quite important to choose $g$ wisely, because we have just exchanged our problem of generating random variates using the density $f$ with same problem with using $g$ instead. When we are going to construct the generators for the sample sets below, we will thus choose $g$ such that one the hand its CDF is easily invertible, such that the inversion method can be used, and, on the other hand, it will be close enough to $f$ so that the rejection rate is kept low to maintain the efficiency.

\subsection{*Euclidean-Euclidean case}
This case is simple if it is done by using the Euclidean model of cartesian coordinates. First, we assume that the sample set is centered in the origin. Then the equidistant surfaces i.e. the boundaries in the extra dimension are given by $x_3= \pm s$ where $2s$ is the distance of the opposite boundaries. The boundaries in the map dimension are obtained in the same manner. Hence the choice of each coordinate is independent of the choice of the other ones. Thus we get our uniform distributed sample set by combining the realizations of three iid random variates to a vector, i.e. let $X_1:\Omega \rightarrow ]X_1^{min},X_1^{max}]$,$X_2:\Omega \rightarrow ]X_1^{min},X_1^{max}]$ and $X_3:\Omega \rightarrow ]X_1^{min},X_1^{max}]$ be three uniform idd random variates. Then the resulting three-dimensional random variate $\vec X: \Omega \rightarrow S$ is obtained by:
\begin{equation*}
 \vec X(\cdot) = \left(X_1(\cdot), X_2(\cdot), X_3(\cdot) \right)^T
\end{equation*}

\subsection{*Hyperbolic-Hyperbolic case}
The hyperbolic-hyperbolic case is the most complex one of the three. To simplify the problem we will work in the hyperbolic spherical model, because, as we have seen above, the probability density $\varrho_{HSph}$ only depends on one of the three dimensions, namely $r$. Furthermore we will only take care of the case that the sample set is less extended in the extra dimension than in the map dimensions (This is not a severe restriction since we can simply enlarge the map arbitrarily). We then transfer our results to the Poincare Disk by transforming the resulting realizations.
\\
Regardless of the beneficial form of the probability density the choice of the radial component $r$, the polar angle $\phi$ and the $azimuth$ $\theta$ still depend on each other due to the shape of the sample set. We will therefore fix $r$ and have a look at the respective ``shells'' of the sample set. The propability that a sample has a certain distance $r$ from the origin is then given by the ``relative mass'' of the shell where the measure is determined by the probability density and thus the volume element as seen above\footnote{As announced we will hereby omit the normalization constant $N$. Thus actually we do not use the infinitesimal mass of the shell, but the ``relative mass'' which is the (finite) product of the mass of the shell and the total mass of the bounded volume of the sample set and therefore the density and cumulative distribution functions are normalizable but not normalized.}. Let us assume that we have the two-dimensional planar disk defined by $\rho =0$ and $r \leq R$. Then the boundaries in the extra dimension are the equidistant surfaces above and beneath this plane at the distant $s$. As noted before, we will only regard the case where $s \leq R$. The other boundaries are given as already defined in section \ref{sec:distr:def_HH}. 
Fig.\ref{fig:distribution:skizze_cases} shows the three cases that may occur for the shells where $R_{max}$ is the maximal distance between the origin and the points in the volume. It can obtained by using the law of cosines in the right triangles with the side length $R_{max}$, $R$ and $s$:
\begin{equation*}
R_{max} = \arccos(\cosh(s) \cosh(R)) 
\end{equation*}

\begin{figure}[!ht]
\begin{center}
\includegraphics[width=0.7\linewidth]{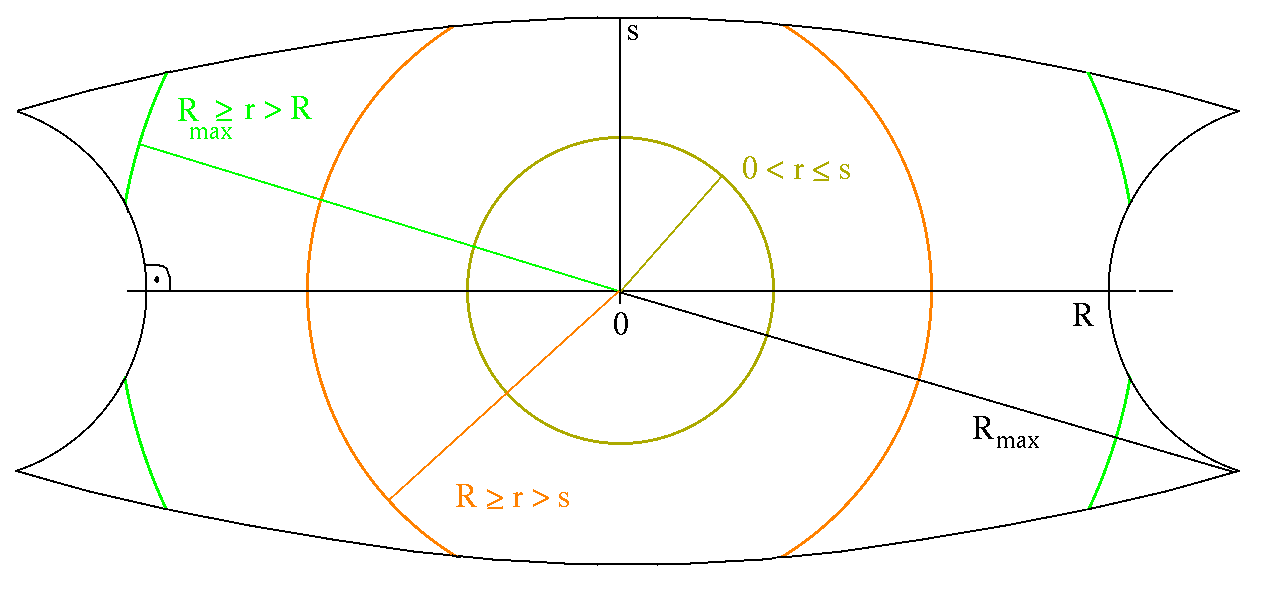}
\end{center}
\caption{3 cases for shells corresponding to fixed $r$ (projected onto plane $\theta = 0 \lor \theta = \pi$)}
\label{fig:distribution:skizze_cases}
\end{figure} 

If $r$ is smaller than $s$ the ``relative mass'' is simply the area of the surface of the three-dimensional sphere, i.e.
\begin{equation*}
\rho_1(r) = \int_0^\pi \int_0^{2\pi} dV =  \int_0^\pi \int_0^{2\pi} (\sinh r)^2 \sin \phi dr d\theta d\phi = 4 \pi \sinh^2 r 
\end{equation*}
The CDF is then obtained by taking the volume of the sphere that is
\begin{equation*}
F(r) = \int_0^r 4 \pi \sinh^2(\tilde r) d \tilde r = 2 \pi ( \sinh(r) \cosh(r) - 1) 
\end{equation*}
The constant summand unfortunately causes a problem if we would try to invert this CDF to apply the inversion method. We will see how to approach this when we are going to deal with the overall density function.\\
\\
When $r$ exceeds $s$ the surface is not longer the one of the whole sphere but restricted to the component between the two equidistant surfaces. Thus we have to determine the bounds of the polar angle $\phi$ corresponding to this restriction. We get these values by using the equation, which we already derived for the equidistant curves (cf. Eq.\ref{eqn:geom:equi_hyp}). The only thing we have to remind is, that we have to substitute the $\theta$ which was originally used by $\pi - \phi$. Hence we get:
\begin{eqnarray*}
 \phi_{min}(r) &=& \frac \pi 2 - \arcsin\left( \frac {\sinh(s)}{\sinh(r)} \right) = \arccos\left( \frac {\sinh(s)}{\sinh(r)} \right) \\
\phi_{max}(r) &=& \frac \pi 2 - \arcsin\left( \frac {\sinh(s)}{\sinh(r)} \right) = \pi - \arccos\left( \frac {\sinh(s)}{\sinh(r)} \right)
\end{eqnarray*}
Thus we obtain for the (non-normalized) density in this case:
\begin{eqnarray*}
\rho_2(r) &=& \int_{\phi_{min}(r)}^{\phi_{max}(r)} \int_0^{2\pi} dV =  \int_{\arccos\left( \frac {\sinh(s)}{\sinh(r)} \right)}^{\pi - \arccos\left( \frac {\sinh(s)}{\sinh(r)} \right)} \int_0^{2\pi} (\sinh r)^2 \sin \phi dr d\theta d\phi\\
 &=& 4 \pi \sinh(s) \sinh(r)
\end{eqnarray*}
In the case that $r$ is now larger than $R$ we get an additional restriction for the polar angle as we can see in Fig.\ref{fig:distribution:skizze_phi}.
\begin{figure}[!ht]
\begin{center}
\includegraphics[width=0.7\linewidth]{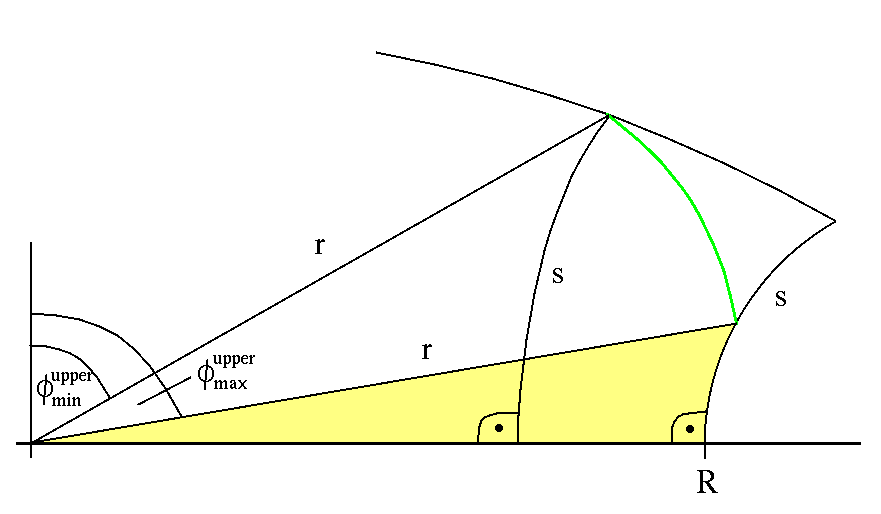}
\end{center}
\caption{Case $r>R$ (projected onto plane $\theta = 0 \lor \theta = \pi$)}
\label{fig:distribution:skizze_phi}
\end{figure}
$\phi_{min}^{upper}$ is equal to $\phi_{min}$ of the preceding case whereas $\phi_{max}^{lower}$ (not shown in diagram) equals $\phi_{max}$. The remaining two angles can be determined by regarding the yellow right triangle. The law of cosines yields
\begin{eqnarray*}
\cosh(r) &=& \cosh(s) \cosh(R) \\ 
\Leftrightarrow s &=& \acosh\left(\frac{\cosh(r)}{\cosh(R)} \right) =\acosh\left(\frac{\cosh(r)}{\cosh(R)} \right)
\end{eqnarray*}
Combining this result with the law of sines we obtain
\begin{eqnarray*}
\sinh(r) &=& \frac {\sinh(s)}{\cos(\phi_{max}^{upper})} \Leftrightarrow \cos(\phi_{max}^{upper}) = \frac {\sinh(s)}{\sinh(r)} \\
\Rightarrow \phi_{max}^{upper} &=& \arccos \left(\frac {\sinh(\acosh\left(\frac{\cosh(r)}{\cosh(R)} \right))}{\sinh(r)} \right)\\
&=& \arccos \left(\frac {\sqrt{ \cosh^2(\acosh\left(\frac{\cosh(r)}{\cosh(R)} \right)) -1 }}{\sinh(r)}\right)
= \arccos \left(\frac {\sqrt{ \frac{\cosh^2(r)}{\cosh^2(R)}-1 }} {\sinh(r)}\right)
\end{eqnarray*}
Analogously follows
\begin{equation*}
 \phi_{min}^{lower} = \pi - \arccos \left(\frac {\sqrt{ \frac{\cosh^2(r)}{\cosh^2(R)}-1 }} {\sinh(r)}\right)
\end{equation*}
Thus we obtain for the density:
\begin{eqnarray*}
\rho_3(r) &=& \int_{\phi^{upper}_{min}(r)}^{\phi^{upper}_{max}(r)} \int_0^{2\pi} dV + \int_{\phi^{lower}_{min}(r)}^{\phi^{lower}_{max}(r)} \int_0^{2\pi} dV \\
&=& 4 \pi \sinh(r) \left( \sinh(d) - \sqrt{ \frac{\cosh^2(r)}{\cosh^2(R)}-1 } \right)
\end{eqnarray*}
This function has a null at $R_{max}$ as can be checked easily. \\
\\
Now we can define the overall density function:
\begin{equation*}
 \rho(r) = \left\{ \begin{array}{ll}
                    \rho_1(r) \qquad & \mathrm{if\ } 0 \leq r \leq s \\
		    \rho_2(r)  & \mathrm{if\ } s < r \leq R \\	
		\rho_3(r)  & \mathrm{if\ } R \leq r \leq R_{max} \\
0 & \mathrm{otherwise}
                   \end{array} \right.
\end{equation*}
It is worth mentioning that by construction $\rho(r)$ is continuous and piece-wise even smooth. Since it has a compact support, the CDF exists. Unfortunately due to the part of the third case it is very hard to determine it analytically, let alone to invert it afterwards. Thus we cannot use the inversion method and we will have to use the rejection method instead. Thereby we first have to find a suitable dominating density function $g$. We will do this for each part individually.\\
\\
In the interval $[0,s]$ the density is convex. This can be seen by differentiating it two times with respect to $r$. Thus, a good dominating approximation is the linear approximation and, by using sufficient many sampling points, the approximation error is small. Furthermore can the corresponding CDF $F_1(r)$ easily be inverted as it is quadratic in each segment between the sampling points.\\
\\
For the second interval i.e. $r \in [s,R]$ the CDF can be obtained by using the original density function:
\begin{eqnarray*}
F_2(r) &=& F_1(s) + \int_s^r 4 \pi \sinh(s) \sinh(\tilde r)d\tilde r \\
&=& F_1(s) + 4 \pi \sinh(s) (\cosh(r) - \cosh(s))
\end{eqnarray*}
The inverse is then given by:
\begin{eqnarray*}
&\Leftrightarrow& \frac {F_2(r) - F_1(s)} {4 \pi \sinh(s)} =  \cosh(r) - \cosh(s)\\
&\Rightarrow& r =: F^{-1}_2(X) = \acosh\left(\frac {X - F_1(s)} {4 \pi \sinh(s)} + \cosh(s) \right)
\end{eqnarray*}

The density function in the third interval has a convex and a concave part. For some choices $R$ and $s$ it may not even be monotonic. We will nevertheless use the same approach as in the first interval, because even if the density may be therefore not dominating for all points the error is sufficiently small, if enough sampling points are chosen, since the original function is continuous. Using the rejection method, this will lead to a modelled distribution whose density is given by $\min(\rho(x),g(x))$. So, if the linear approximation is already good, the error in the modelled distribution is small as well.

Fig.\ref{fig:distribution:dominating_density} shows the graph of the density function $\rho(r)$ using $R=2$ and $s=1$ and a rough approximation using 5 sampling point for each linear approximation.
\begin{figure}[!ht]
\begin{center}
\includegraphics[width=0.8\linewidth]{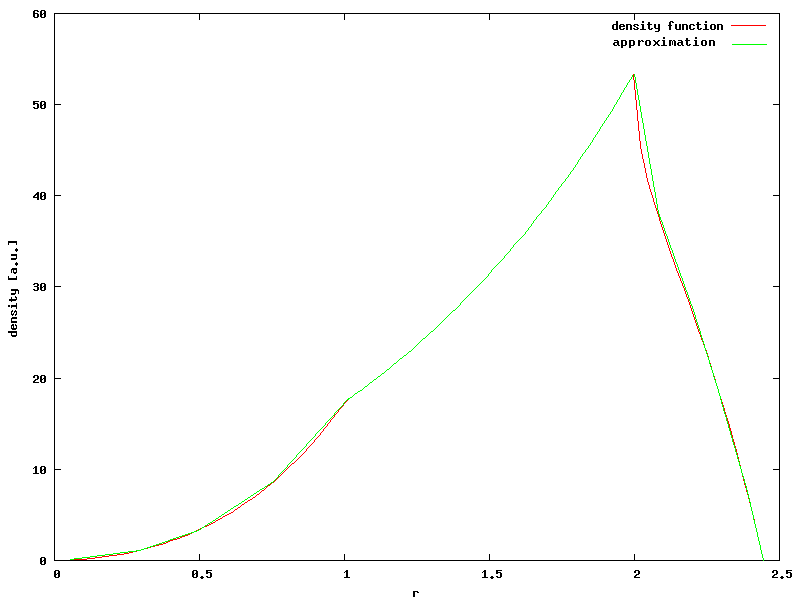}
\end{center}
\caption{example for the original and dominating density $g$}
\label{fig:distribution:dominating_density}
\end{figure}
To get the wanted distribution for the random variate $X_r$, belonging to $r$, we now compute the CDF of the (mostly) dominating density $g$. As already shown, or rather, mentioned above, this CDF can be inverted and thus the inversion method can be applied. The random variate thus obtained can then be used in the rejection method to finally get $X_r$. Due to the isotropy of the shells (restricted to the bounds of $\phi$) the random variates 
\begin{equation*}
X_\phi:\Sigma \rightarrow [\phi_{min}(r),\phi_{max}(r)] \mathrm{\ or\ } [\phi^{upper}_{min}(r),\phi^{upper}_{max}(r)]\cup [\phi^{lower}_{min}(r),\phi^{lower}_{max}(r)] 
\end{equation*}
and
\begin{equation*}
X_\theta:\Sigma \rightarrow [0,2\pi] 
\end{equation*}
belonging to the two angles are independent and uniformly distributed. To retrieve the according distribution in the Poincare Sphere model we just have to convert the realization of the radius according to the known transformation.

\subsection{*Hyperbolic-Euclidean case}

As noted in the definition, this distribution is just a product of the two-dimensional hyperbolic uniform distribution that is embedded in the Euclidean space and a one-dimensional Euclidean uniform one. For the first one, we use the same approach as in the Hyperbolic-Hyperbolic case above. We determine the CDF of the random variate $X_r$:

\begin{equation*}
F(r) = \int_0^r \int_0^{2\pi} \sinh(\tilde r) d\theta d\tilde r = 2 \pi \sinh(r)
\end{equation*}
The inverse is then given by:
\begin{eqnarray*}
F^{-1}(X) = \asinh \left(\frac {X} {2 \pi} \right)
\end{eqnarray*}
Thus we can use the inversion method. The random variate $X_\theta:\Omega\rightarrow [0,2\pi]$ is due to the isotropy again independent and uniformly distributed. \\
\\
For the latter distribution we get the uniform distributed random variate $X_z : \Omega \rightarrow [-s,s]$ where $2s$ is the distance between the boundaries in the extra dimension. The wanted three-dimensional random variate is then obtained by combining these three in cylindrical coordinates and then converting the result to the cartesian coordinate system.

\chapter{Implementation of a General Riemannian SOM} \label{ch:src}

Besides an analytic approach, we intend to perform the stability analysis of the classic SOM and the GRiSOM by using computer-based numerical means. We therefore need a computer program that simulates the GRiSOM and furthermore allows us to study its behavior in respect to the stability. In the course of the work on this thesis a whole software library has been written to meet this needs. Using C++ as the programming language, we took a special emphasis on the modularity, universality and (re)useablity of the implementation.\\
\\
Modularity has been achieved by using interfaces and abstract classes and avoiding cross-dependencies as far as possible. This has been visualized in Fig.\ref{fig:uml:dep} in the Appendix. It illustrates all dependencies between the header files of the project (and some extern ones). As it can be seen, the only cross-dependencies are between the headers defining templates and the corresponding files that implement their specialization. This is unfortunately unavoidable due to the improper implementation of the C++ language standard in most of the compilers.\\
The universality has been realized by using templates and specializations. This allows to design the software in a manner that is close to the mathematical structures. Thus e.g. points and spaces can be handled as abstract objects. Only if we want to use a particular representation of these mathematical objects, we have to specialize the templates to work with the corresponding implementations.\\
The third goal was finally approached, besides the already listed features of the other two points, by adding certain structures for convenience purposes. This includes e.g. wrapping structures of the SOM to allow to handle and even implement analyzers and classes for the automatization of the process of the numerical analysis easily. The software itself has been separated in three parts. Two of them are libraries, namely the \emph{Core} and the \emph{Auxiliary library}, that can be used by linking them statically or dynamically to other projects. The third part consists of the \emph{tester program} that can then be used to obtain the numerical results in the next chapter.\\
\\
We will now take a look at each of these parts, but before we continue, it should be noted that that this chapter does not intend to presents a full API of the software library but rather shall show how all the more interesting aspects discussed in the preceding chapters have been designed and implemented in the software that will be used in the upcoming numerical analysis. Thus the reader should be aware, that also inside the listings, sections as e.g. most of the comments may have been skipped and only but a few class diagrams are presented where they ease the understanding. However, a full documention of the package can be found in an additional document. The chapter \ref{ch:software} in the appendix provides more information about this.

\lstset{language=C++, 
	numbers=left,frame=single,
	stepnumber=2, breaklines=true,
	captionpos=b,
        basicstyle=\tiny,
        keywordstyle=\color{Purple},
        commentstyle=\color{BrickRed}, 
        stringstyle=\color{Maroon},
        showstringspaces=false,
        emph={export,class,bool,int,double,unsigned,char,true,false,void}, emphstyle=\color{Green}
}
\section{Core library [\texttt{libgsom\_core.\{a,so\}}]}

The core library bundles all the classes that are at the least needed to provide the most basic implementation of the generalized SOM.

\subsection{Required basic structures [\texttt{topology.\{h,cpp\}}]}

Back in chapter \ref{ch:gsom} we have presented the generalization of the SOM with more general types of map space and feature space. To take this into account we have define these spaces in our implementation:

\lstinputlisting[linerange={174-177,183-185,192-195,199-199,206-207,210-212,219-222,230-231,236-237,244-244,247-248}, caption=topology.h - definition of space]{simulations/topology.h}

According to the axiomatic construction of the spaces, we have not given any further definition of a point. Therefore also in the implementation of the SOM, a point can be anything by strictly using it as a template parameter. But since we want to do calculations in certain models, three possible types of point have been picked.

\lstinputlisting[linerange={22-24,30-30,37-39,44-46,50-50,56-56,61-61,65-66, 69-73,84-86,94-94,98-98,100-100,107-107,111-111,117-117,123-123,129-129,133-134,137-140}, caption=topology.h - definition of point types]{simulations/topology.h}

When implementing the particular spaces for the simulations, we will confine ourselves to using the (infinite-dimensional) \texttt{Coordinate} model as each of the defined models of the Euclidean, Hyperbolic and Spherical space make use of coordinate systems to describe the location of points. \texttt{Simple\_Point} is mostly used in combination with the trivial metric space to provide a simple map space for the vector quantization. Nonetheless, it would be possible, since we only need to distinguish between a finite number of points in the map spaces (namely the fixed positions of the neurons), to use \texttt{Simple\_Point} in a more complex map space by explicitly defining all the distances between the element of the finite set.

\subsection{Generalized SOM algorithm [\texttt{som.\{h,cpp\}}]}

The next step is to implement a formal neuron. As defined in section \ref{sec:som:mapspace}, its just a pair of two points where one of them lies in the map space, the other one in the feature space.

\lstinputlisting[linerange={16-19,24-25,30-30,33-33,39-39,44-45,47-47,49-51}, caption=som.h - definition of neuron]{simulations/som.h}

The last missing ``ingredient'' is the implementation of a neighborhood function. To keep it still most general it is simply defined in the SOM class as a function (object) getting a distance represented by a \texttt{<double>} and returning its result in form of another \texttt{<double>}.

\lstinputlisting[linerange={95-95}, caption=som.h - definition of neighborhood function]{simulations/som.h}

The concrete implementations of the particular neighborhood functions are thereby not part of the core library as they are not necessarily needed here.\\
\\
This is already everything that we need to implement the SOM now.

\lstinputlisting[linerange={82-85,89-92,94-97,99-99,107-111,123-128,134-134,141-141,150-150,161-163,166-166,174-183,192-193}, caption=som.h - definition of SOM]{simulations/som.h}

As we can see here, each instance of SOM has its own map and feature space, neighborhood function and set of neurons to work with. The adaption parameters $\eps$ and $\sigma$ will be then passed for each adaption step. At last we will have a closer look at the two important functions in the algorithm. The first one is to determine the neuron, in whose feature set or voronoi cell the given sample lies. Since we have emulated all the mathematical structures needed in the formal algorithm, the implementation is quite straightforward.

\lstinputlisting[linerange={113-118,123-128,130-130,133-136,174-189,192-194}, caption=som\_template.cpp - implementation of adaption and determination of winner neuron]{simulations/som_template.cpp}

\section{Auxiliary Library [\texttt{libgsom\_aux.\{a,so\}}]}

While the Core library, as seen above, only provides the most basic implementation of the GRiSOM, the Auxiliary Library now provides the means needed to work seriously with it. This includes so-called \emph{Factory classes} to create certain spaces (Ch.\ref{sec:src:space_fac}), tessellations (Ch.\ref{sec:src:tess}), distributions (Ch.\ref{sec:src:dist}) and finally the particular SOMs in whole (Ch.\ref{sec:src:som_factory}). Furthermore tools to analyze the behavior of the SOM are made available (Ch.\ref{sec:src:eval}).

\subsection{*Space Factory [\texttt{space\_factory.\{h,cpp\}}]} \label{sec:src:space_fac}

Regarding its dependencies the Space Factory would have to be discussed further down as it specializes structures that are abstractly defined in sections still following below. That means that, following the design decisions mentioned in the introduction of this chapter, e.g. some types of spaces are not defined until in the evaluation section as they will be mainly used there, but will be inherited by the concrete space models that have been implemented in the Space factory. We will nonetheless start with the look on the Space Factory as it reflects better the chronologic order of the approach of the implemention of the GRiSOM (since the implemented models are used to construct the particular SOMs that we want to simulate even without any evaluation of the results).\\
\\
In Chap.\ref{ch:geometry} we discussed in detail the geometrical properties of the particular spaces and their models which we will use in the stability analysis. We now have to transfer them into the implementation. Beside the necessary structures of the metric and geometric space, our implemented spaces inherit three additional spaces namely \texttt{Vector\_Space}, \texttt{Pre\_Hilbert\_Space} and \texttt{Projectable\_Space} which on their part have structures that will be needed for certain Analyzers (cf. Ch.\ref{sec:src:eval}). They provide e.g. a sense of addition, scalar multiplication, (canonical) inner products or projections in some of our models. A further discussion about them will be delayed until we will have to use them in the section about the evaluations. The complete definition of our models are now:

\lstinputlisting[linerange={342-348,355-364,649-654,661-669,732-733,735-736,744-748,750-751}, caption=space\_factory.h - definition of several spaces and models]{simulations/space_factory.h}

To avoid confusions due to the multiple inheritance, the class diagram in Fig.\ref{fig:uml:space} illustrates the relations between these special spaces and the overall structure. All methods, that can used to get instances of these special spaces, are now bundled in the template \texttt{Space\_Factory} using a specialization of it for each model. The constructors of the space classes itself are defined as private to assure the consistency by restricting the possibility to create instances to the particular specialized factory class. Henceforth we will focus the further discussion to two models, namely the Poincare Disk model embedded in the complex plane and the Poincare (hyper)sphere model embedded in the $\setR^d$. All other spaces and models have been analogously implemented.\\
\\
Since the \texttt{Poincare disk space} is a subclass of both the geometric and the projectable space, it has to provide all their functions i.e. the ``metric'' and the ``geodesic line segment'' of the geometric space and the ``canonical projection'' of the projectable space. The 'projections' thereby map the points in the space onto a pre-defined subset. In our case it is just a less dimensional subspace and the projected point is the point in the subspace that is closest to the one that we project. In our implementation we then obtain the result of the projection by simply reflecting the point in this subspace and then computing the point on the geodesic in the middle between these two points.\\
\\
We get now the following specialization of the space factory template:

\lstinputlisting[linerange={521-524,529-529,534-534,539-539,543-549}, caption=space\_factory.h - specialization of space factory]{simulations/space_factory.h}

The private methods define the needed functions or function objects while the corresponding public creation method bind the passed parameters to these functions before returning them. This allows to have e.g. a boundary condition as an inherent attribute of the metric or geodesic. The implementation of the distance function thereby simply uses Eq.\ref{eqn:geom:distance_PD}:

\lstinputlisting[linerange={1118-1134,1136-1137}, caption=space\_factory.cpp - implementation of creating distance function]{simulations/space_factory.cpp}

For the implementation of the geodesics (and therefore also for the projection) we will have to discuss another structure, we make use of, namely the isometries.

\subsubsection{*Isometries [\texttt{isometry.h}]}

The definition and implementation of isometric transformations in the library serve several purposes. First of all, as concluded in chapter \ref{ch:tess:generation}, we can use a subgroup of them to generate the regular tessellations of the space. Furthermore, referring to section \ref{sec:geom:geodesics}, they provide a more comfortable way to compute certain geometric structures like the geodesics. In Listing \ref{lst:src:isom_def} we see the generic definition of this concept in the source code.

\lstinputlisting[linerange={16-19,25-25,30-30,37-37,44-44,48-48,54-55,59-60}, label={lst:src:isom_def}, caption=isometry.h - generic definition of isometries]{simulations/isometry.h}

Besides the group operation and a way to compare, invert and apply the isometries, we added two more functions. The first one, \texttt{apply\_equal}(\dots), provides a way to determine, if the isometry object and another given one map a given point to the same new point. \texttt{get\_error()} however allows to return the intern numerical error of the isometry. This error occurs in any numerical computation and determines the precision of the results. Thus, when we get two points which are results of transformations, they may represent the same analytical result if they differ by at least by this error margin. This has been taken into consideration when generating the tessellations. Nonetheless does this generic definition and its implementation provide the structure of the trivial isometry group i.e. the one only containing the Identity.\\ \\
As an example for a specialization of the isometries, we will once again take the Hyperbolic Poincare (hyper-)sphere space with complex numbers as representations of points. We have already done all the needed preparatory work to determine the formulas for the operations, as for inverting or combining, in the process of proving Thm.\ref{thm:geom:hyp_isom}. So again the implementation is straightforward.

\lstinputlisting[linerange={1459-1460,1464-1469,1473-1476,1478-1480,1492-1497}, caption=space\_factory.cpp - specializations of isometry (implementation)]{simulations/space_factory.cpp}

All the specializations for the other spaces and models can be again obtained in an analogous manner. For this case we can now finally transfer our results of section \ref{sec:geom:geodesics} to implement the function of the geodesics.

\lstinputlisting[linerange={1144-1209,1224-1225}, caption=space\_factory.cpp - geodesic function of \texttt{Hyperbolic\_Disk\_Space<Coordinates>})]{simulations/space_factory.cpp}

\subsubsection{*Precision algebra/complex [\texttt{prec\_algebra/complex.\{h,cpp\}}]}
In the shown implementations of the chosen specializations of the isometries, several structures can been seen that we have not introduced yet. They present two different ways to approach the problems with numerical errors. \\ \\
The bundle consisting of \texttt{P\_Float},\texttt{P\_Coordinates} and \texttt{P\_Matrix} serves two purposes. The first one is to provide matrix and vector calculations. The second is to estimate the numerical error of its instances due to the finite precision of the nested representation of floats and the error propagation when using these error-prone representations in calculations. They do not improve the precision, but they provide at least a way to get the resulting error margins to take them into consideration when using the representations.\\ \\
The second approach is actually to substitute the floating point numbers with data type with much higher precision. This is of great importance when we are going to generate tessellations of the Poincare model as we will have to decide there, if two representations denote the same point. Near the origin we could still use the approach of estimating the error, but for nodes,  generated in a larger distance to the origin, the Euclidean distance between two neighboring nodes tends to zero while the error increases since more and more Isometries are combined to generate these points. We therefore quickly reach the limit where the error exceeds the quarter of the distance and therefore makes it impossible to decide if a new node already exists or not. To tackle this problem, we made use of two external C-libraries (\cite{mapm} and/or \cite{mapmx}) that already implemented high-precision numbers, so we have had to write a wrapper class for their functions to use them in our project.

\subsection{*Generating tessellations [\texttt{tesselation.\{h,cpp\}}]} \label{sec:src:tess}

When discussing tessellations in chapter \ref{ch:tess}, we have seen, that any kind of regular tiling can be generated easily by using one of its symmetry groups. This can be transferred one-to-one to an implementation. We will first create the symmetry group $G=\set{g_1,g_2,\dots}$ as proposed which is the group of involutions in the side centers. The following listing shows as an example the function to create the symmetry group, or rather, the generators for the Euclidean Cartesian Coordinates model.

\lstinputlisting[linerange={890-917}, caption=space\_factory.cpp - creating symmetry group]{simulations/space_factory.cpp}

According to ch.\ref{ch:tess} we can expand each node using the respective symmetry group. The following listing of the implementation shows the generation of the first \emph{ring} or \emph{layer} of nodes which is therefore the expansion of the origin (the expression ``ring''  originates from using triangular tailings, where each layer is ringlike connected. This holds not for arbitrary (regular) tesselations.)

\lstinputlisting[linerange={16-36}, caption=tesselation\_template.cpp - creating first layer]{simulations/tesselation_template.cpp}

Each layer can now be obtained by expanding each node of the preceding one.

\lstinputlisting[linerange={40-70, 121-121,124-131}, caption=tesselation\_template.cpp - creating layers recursively]{simulations/tesselation_template.cpp}

As, in general, there exists many pathes in the tiling connecting the origin and $\nu$ by following the edges and each of this path represents one of the possible sequences such that \ref{eq:tess:gen_nu} holds, it is important to consider that the sequence is not unique for each point. That does not mean, by the way, that the compositions corresponding to these sequences are equal. It only implies that they act equally on the origin by mapping it to $\nu$. We have to consider this in order to avoid creating a vertex twice or even more times. Since each layer is fully expanded before we start to expand the first vertex of the following one, there are only three possibilities for these double creations to take place (assumed there are still none in the already expanded layers). Regarding an expansion of a vertex of the $n$-th layer:

\begin{itemize}
\item newly created vertex lies in the $(n-1)$-th layer
\item newly created vertex lies  in the $n$-th layer
\item newly created vertex lies  in the $(n+1)$-th layer
\end{itemize}

It cannot occur in lower layers since these have been already expanded and, thus, also their children vertices which lie at least a layer below the $n$-th one. Therefore the vertex that we are actually expanding would have to be a twin of a node in a layer below which contradicts our assumption that this has not occurred yet.
And it cannot take place in higher layers since this would obviously be a contradiction to the definition of the layers. Thus it is sufficient to check for each new vertex if it already belongs to one of the three layers. In the implementation, this is performed by the following code:

\lstinputlisting[linerange={16-36}, caption=tesselation\_template.cpp - implementation of the tessellation algorithm]{simulations/tesselation_template.cpp} 

The truncation has been realized by, on the one hand, allowing to specify the maximal level of layers, and on the other hand, including an acceptance function, that allows to explicitly decide which node should be accepted in the map and which should be rejected due to e.g. a too high errors in the generation process or, in the case of the truncation in respect to given boundaries (as used to create e.g. the rectangular boundaries for the Euclidean maps), due to its position. Thus, in the case that we want to truncate in respect to the expansion level we simply pass this level when calling the generator and when we want to truncate in respect to boundaries, we pass a function object, that returns \texttt{false} for each passed node that lies outside these wanted boundaries and \texttt{true} otherwise (consideration of other reasons of rejection neglected). The maximal number of layers for the generation process has thereby to be set to infinity (The termination of the algorithm is then only ensured by the acceptance function).

\subsection{Generating sample sets [\texttt{stochastics.\{h,cpp\}}]} \label{sec:src:dist}

For the SOM configuration that we want to analyze numerically we had to implement all the distribution that we discussed in detail in Ch.\ref{ch:distr}. Besides the interface class \texttt{Distribution}, which works as a base class for all the distributions, the algorithm presented in Ch.\ref{sec:distr:generate} could be transfered one-to-one to the implementation. The \texttt{Uniform\_Hyperbolic\_Disk\_Distribution} and \texttt{Hyperbolic\_Disk\_Euclidean\_Cart\_} \texttt{Coord\_Distribution} thereby use the \texttt{Uniform\_Hyperbolic\_Spherical\_Distribution} and \texttt{Uniform} \texttt{\_Euclidean\_Cart\_Coord\_Distribution} as generators the corresponding random sample set. The class diagram in Fig/\ref{fig:uml:dist} in the appendix shows these aggregations.

\subsection{SOM Factory [\texttt{som\_factory.*}]} \label{sec:src:som_factory}

The SOM factory now uses all the structures that had been defined in the preceding sections to finally ensemble the SOMs for the numerical simulations. Similar to many generic classes above \texttt{SOM\_Factory} is thereby specialized for any needed combination of map and feature space. These specializations on their part provide functions to create all the variations of SOMs with the given spaces that we want to analyze. Since a listing of this implementation is not very instructive the interested reader searching for the concrete implementation may be therefore referred to the API.

\subsection{Evaluation methods and tools} \label{sec:src:eval}

The evaluation process, that has been used to analyze the stability features of the different som configurations, has been realized by dividing the problem into to subproblems. On the one hand we needed an automatization of the ``training'' of the SOM for a given set of parameters and sample distributions and, on the other side, we needed methods to analyze the SOM using various measurements. We will begin with having a look at the latter requirement.

\subsubsection{Analyze methods \& measurement [\texttt{evaluation.\{h,cpp\}}]}

As mentioned above, we need tools to analyze the SOM e.g. compute e.g. the Fourier transformation of its feature space or the mean value of each neuron taking multiple results of adaption steps into account. To meet this need, an abstract \texttt{Analyzer} class has been defined (cf. listing \ref{lst:src:analyzer}) and then a Fourier and a mean analyzer have been implemented.


\lstinputlisting[linerange={150-160,163-170}, label=lst:src:analyzer, caption=evaluation.h - Analyzer]{simulations/evaluation.h} 

Since these tools should work on our generalized maps, they can only use the properties and structures that are inherent to the spaces of the particular SOM. For example does the mean analyzer need to work on at least a geometric feature space to calculate the mean point of the sequence of results. Since this has been already a prerequisite of the SOM itself, it is always fulfilled. By contrast, this is not enough for the Fourier analyzer which has to work on a \setR-Vectorspace. It is obvious that there are geometric feature spaces that have not this needed structure (e.g. the models used for the hyperbolic space) and a Fourier analyzer would not be able to work on them. So, it has to be possible for the analyzers to get access to the spaces of the SOM and in return the SOM should be able to reject analyzers that would be able to work on it. These problems have been solved by extending the \texttt{SOM} class into the \texttt{Analyzable SOM} class that is shown below:

\lstinputlisting[linerange={36-87,89-89}, caption=evaluation.h - Analyzable SOM]{simulations/evaluation.h} 

It offers the possibility to add analyzers directly to the SOM without having to regard any of the prerequisites that have been discussed above. By adding an analyzer, the SOM itself will then provide both the map and the feature space for it. It will furthermore pass a copy of its internal state to allow the analyzer to use it as a reference or a first sample. In return the analyzer tests if the map and feature spaces meet its needs and accepts or rejects them, respectively. The rejection will be signaled by throwing an \texttt{Insufficient\_Space\_Exception}. If successfully added, the SOM will then automatically call the \texttt{add\_sample} method of the analyzer after a predefined amount of adaption steps (the size of this interval has been passed to the SOM when the analyzer has been added). This will be repeated until the analyzer has been removed from the SOM. In general, a SOM can handle many analyzers at the same time. It would also be possible to design an analyzer that can examine several SOM simultaneously. But in the case that the SOM or the analyzer have reached their maximal limit, this will be signaled by the \texttt{Max\_Reg\_Exception}.\\
\\
The so defined combination of extended SOM and analyzers thus allows to examine specific attributes. But, for our goal to find the stability limits in the parameter space, we have to define a kind of error function i.e. a mapping of a state or attribute of the SOM belonging to a given parameter set on a positive real number. This abstract class is defined as following:

\lstinputlisting[linerange={335-340}, caption=evaluation.h - Error Analyzer]{simulations/evaluation.h} 

The naive approach would be to measure the distance between the positions of the neurons in a reference state and in the actual state and calculate the error depending on this results. A problem thereby is, that the feature spaces may have many symmetries. In the Euclidean and hyperbolic models, these are certain translation and rotation invariances. Without breaking these symmetries (e.g. by using a (finite) bounded subspace or fixing a few neurons), the representation of map can rotate and in the case of a virtual unbounded space by using periodic boundary conditions even translate freely. That means, that there exists a whole continuum of equivalent som states. The calculated error should therefore depend only on the equivalence classes. That would be obviously not fulfilled by the naive approach above.
An implementation of this \texttt{Error\_Analyzer} is the \texttt{Mean\_Extra\_Dim\_Analyzer}. It extends the mean analyzer. It calculates the error by taking the mean som state and computing in the feature space the distance of each neuron to a corresponding neuron in a given reference som by regarding only a particular dimension. This is done by projecting both the mean neuron and the reference neuron onto a given plane and calculating the distance of the neurons and their projection. The error is then the sum or difference of these distances depending if both are on the same or on opposite sites of the projection plane. Thus this analyzer needs a feature space which has a projection mapping defined on it. We already briefly discussed in Ch.\ref{sec:src:space_fac} how our models of interest realize this projections and thus meet the requirement of this analyzer.\\
\\
Although we won't discuss this in detail, a few more analyzers and error analyzers had been implemented like the already mentioned Fourier analyzer and an additional error analyzer that computes the mean representation error of the neurons in respect to the presented sample set. In general, any (error) analyzer, which depends only on the available data that is passed by the Analyzable SOM (i.e. the current som state, the currently presented sample and the position of the representative of the winner neuron in the feature space), can be implemented.

\subsubsection{*Automatization [\texttt{test\_suite.\{h,cpp\}}]} \label{sec:src:auto}

At the beginning, we tried to implement an automatization that should place us in a position to run the stability analysis totally unsupervised, but the first attempts failed as sometimes the increasing fluctuations near the stability limit were by mistake identified as the limit itself and therefore the further sweeps of the search were performed at the wrong positions. We therefore decided to implement at least a set of functions that ease a supervised analysis. The class \texttt{Test} provides two simple functions that take a SOM, a generator for samples, a function that maps the current index of the adaption step to a parameter set for the adaption allowing to vary $\eps$ and $\sigma$ throughout the simulations. Then they run the adaption procedure on the given SOM for a given number of adaption steps.\\
\\
Since we want to analyze the behavior of a given SOM configuration in a particular one-dimensional parameter space, namely the size $s$ of the extra dimensions, we furthermore implemented the class \texttt{Stability\_Test} that inherits \texttt{Test}. It provides a function that automatizes the search at least for one sweep as it takes a discrete set of values of $s$, a set of the corresponding generators for the samples and additionally a list of error analyzers. The adaption process will then be performed for each $s$ and the error values, computed by the single analyzers, and the initial som state are stored in separate output files. If furthermore an error analyzer is also a mean analyzer (as it is the case in our simulations as we use the \texttt{Mean\_Extra\_Dim\_Analyzer}) the mean states of each search steps are also saved to allow further studies and visualizations of the results later on.

\section{Testers [\texttt{tester\_*.cc}]} \label{sec:src:tester}

Besides the two libraries the written software package comes with several tester programs. Most of them just have the purpose to test certain parts of the library. For example generates and visualize \texttt{tester\_stochastics} any of the pre-defined random distributions and \texttt{tester\_tesselation} any of the regular tessellations\footnote{The visualization requires an installed gnuplot}. \\
\\
But the most important for our purposes is \texttt{main\_test.cc}. In its compiled form it is the program we use for the numerical analysis. It takes several parameters that completely define the SOM configuration as well as the testing process\footnote{the specific syntax for the use is given Ch.\ref{sec:software:main_test} in the appendix.}. Given the specific parameters, it automatically generates the according GRiSOM and distribution as well as a \texttt{Mean\_Extra\_Dim\_Analyzer} and a \texttt{Mean\_Rep\_Error\_Analyzer}. Then it passes them to the \texttt{sweep} function of the \texttt{Test\_Suite}. Furthermore a new subfolder of folder \texttt{data/} in the working directory is created and the result will be written there for future analysis.

\part{Analysis}
\chapter{Stability Analysis of SOM / Analytic approach} \label{ch:analytic}

As mentioned in the motivation of this thesis we will introduce and extend two approaches formerly done by H. Ritter and K. Schulten in \cite{schulten}. The first one is the analytical approach, which we will present in this chapter by analyzing the dynamics of the (generalized) SOM and following closely the calculations of the paper. As an result we will obtain the Fokker-Planck Equation of this stochastic process. This result can then be used to determine analytically the stability under the special circumstance of having spatially uniform sample densities.\\
\\
The second approach that will be used is a numerical one, i.e. we will make use of \emph{Monte Carlo simulations} to search the parameter space for the limits of stability. It will be used to confirm the analytical results on the one hand and to compute results beyond these cases (e.g. for hyperbolic space) on the other hand (cf. chap. \ref{ch:numerical}).\\
\\
To keep this chapter as brief as possible, all longish, lesser instructive intermediate steps of the calculations are skipped. This includes also the geometric calculations needed for the concrete analysis of the regular maps.
For the interested reader they are put in the appendix to allow to quickly check the used results in detail. Nonetheless are the concrete calculations of the stability limits kept in this chapter, but have been partly shortened. The reader in a hurry, who is not interested in the calculations at all, may be referred to the last section, where all the results are briefly summarized.

\section{Definition of Stability}

At the beginning it should be clarified what is meant when we talk about ``stability of a SOM'' in the context of this thesis.
As mentioned in chapter \ref{ch:gsom}, the SOM face a dimension conflict if the number of the intrinsic dimensions of the sample set in the feature space exceed the dimension of the map space. The SOM, trying to detect the most significant dimensions, embed the map as best as possible in the given set of samples and therefore in the additional dimension, if the inputs are scattered deep enough, to minimize representation error as best as possible. But an additional dimension in e.g. sensory data, used as input, may not always result from interesting structures of an examined object, but instead from rather unwanted data like background noise. So, we are eager to know, how large the scattering of this noise can be, such that the SOM still regards the additional dimension as insignificant.\\
\\
To approach this question, we examine the behavior of a SOM with a two-dimensional map working in a three-dimensional feature space. The set of the input vectors is now bounded by the faces of a flat cuboid. The significant structure lies in the two directions, in which the cuboid is wider ($x$- and $y$-direction) , and the remaining ($z$-)direction is reserved for the noise. The case in which the cuboid is totally flat, thereby corresponds to the situation where the number of intrinsic dimensions of the input exactly matches the number of dimensions of the SOM, while it exceeds it by one dimension otherwise. 
\\
A straightforward definition of \emph{stable} is therefore, that the SOM despite the additional dimension in the feature space rest in its former equilibrium state of the flat cuboid. It may show, nevertheless, small distortions due to equilibrium fluctuations. In regard to our question above, we are thus interested in determining the limit of the edge length $2s^*$ of cuboid in the $z$-direction up to which the equilibrium remains a stable one. In this context, a certain SOM is then regarded as being ``more stable'' as another one, if the equilibrium remains stable for larger $s^*$.\\
\\
Fig.\ref{fig:analytic:states} shows a HSOM that is on the left side still in the starting equilibrium state.

\begin{figure}[!ht]
\begin{center}
\includegraphics[width=0.98\linewidth]{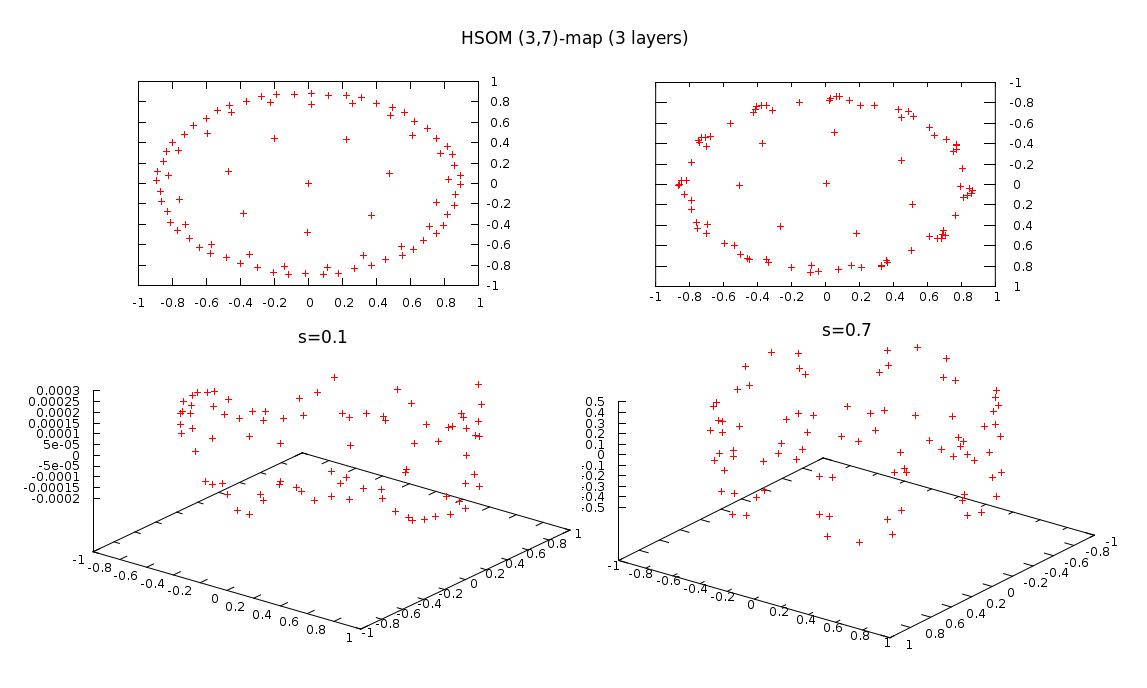}
\end{center}
\caption{HSOM with (3,7)-4 map (left) below stability limit (s=0.1) and (right) beneath (s=0.7)}
\label{fig:analytic:states}
\end{figure}

In the figures on the right side, the size of the scattering of the input towards the additional dimension already exceeds the stability limit. Having a SOM state with the representants only lying in or near the old, ``flat'' equilibrium would yield then a very poor representation for many samples. Therefore, the SOM leaves its former state and is drawn into a new one to embed itself better into the given set of samples and thereby minimizing the representation error. This can be easily observed by looking for significant lasting distortions in direction of the additional dimensions. This criterion will be used to determine the limit numerically in section \ref{ch:numerical}, but for the analytic study following below, we are instead having a look at the dynamics of the SOM, or to be more exact, at the direction of the ``drift'' of the map.

\section{*Fokker-Planck Equation for the SOM process}

We start our analytic approach by examining the dynamics of the ``classical'' SOM. It defines a learning system, i.e. a system that will reach its goal by a sequence of adaption steps defined by a learning algorithm. Each step results from the presentation of a input vector/sample $v$ and the interaction of each neuron with it (or with the cooperation with the winner neuron). This is represented by the transformation

\begin{equation}
w = T(w', v, \varepsilon)
\label{eq:stability:trafo}
\end{equation}
where $w'$ denote the state of the SOM before and $w$ after the transformation.\\
\\
 In general, a sample can be any observable in the environment like the measurement of an experiment, which are in most cases hardly predictable. However given a stationary environment, it is sensible to idealize the system by assuming that there is a stationary probability distribution $P(v)$ for the samples. Eq.\ref{eq:stability:trafo} does then not define a deterministic but a stochastic \emph{Markov process} where the state of the SOM is also only defined by a non-stationary distribution function $\tilde S(w,t)$ of the states where $t$ is the iteration time. Knowing this distribution at a certain time step, it is then possible to compute $\tilde S(w,t+1)$ using the transition probability in the next adaption step to get state $w'$ when starting in $w$.

\begin{equation} 
Q(w,w') = \underbrace{\sum_r \int_{F_r(w_r)} dv }_{\int dv} \delta(w-T(w',v,\eps)) P(v)
\label{eqn:stability:trans_prob}
\end{equation}

In order to analyze the dynamics of the stochastic process of the SOM we shall focus on how to derive the time development of the distribution function of the SOM. We will further assume, that the map is already in the vicinity of its equilibrium i.e. stationary state and that the step size $\eps$ is therefore sufficient small, i.e. we want to investigate the system only in the limit of small $\eps$. A mathematical approach is, to adapt the \emph{Fokker-Planck} equation, which describes the time evolution of probability functions. It was first used as a statistical description of Brownian motion of a particle in a fluid, but has been generalized to other observables as well. In our case, the respective observable will be the state of the map.\\
\\
Let us begin with a given array $A$ of neurons , labeled by their discrete positions $\vec r$ in the map space and let 
$w = (\vec w_{\vec r_1},\dots, \vec w_{\vec r_N})$ denote a state of them. Given that, we can then determine the \emph{Dirichlet} or \emph{Voronoi tesselation} $\set{F_{\vec r_1}, \dots F_{\vec r_n}}$ of the feature space $V$ in respect to $A$ by
\begin{equation}
 F_{\vec r_i}(w) = \setgen{\vec v \in V}{\norm{\vec v- \vec \vec w_{r_i}} \leq \norm {\vec v- \vec w_{r_j}} \forall \vec r_j \in A}
\end{equation}
We will call then a tile $F_{\vec r_i}(w)$ the \emph{feature set} or \emph{voronoi cell} of neuron $\vec r_i$.
Then the probability for a sample to belong to $F_{\vec r_i}(w)$ is given by
\begin{equation}
\hat P_{\vec r_i}(w) := \int_{F_{\vec r_i}(w)} dv P(v)  \label{eqn:stability:22}
\end{equation}
The expectation value of the samples restricted to the voronoi cell i.e. the \emph{centroid} is then defined by
\begin{equation}
 \bar \vec v_{\vec r_i} := \frac 1 {\hat P_{\vec r_i}(w)} \int_{F_{\vec r_i}(w)} d\vec v P(\vec v) \vec v \label{eqn:stability:23}
\end{equation}
According to the update rule in eq. \ref{eq:som:adapt}, for each selection of an input vector $\vec v$ the Transformation $T$ of the current state of the neurons $w$ is defined by:
\begin{equation}
 w_{\vec r_i} = (1-\varepsilon h^0_{\vec r_i,\vec s}) \vec w'_{\vec r_i} + \eps h^0_{\vec r_i,\vec s} \label{eqn:stability:5}
\end{equation}
where $\vec s$ is the winner neuron, i.e. $\vec v \in F_{\vec s}(w')$.
As mentioned in the introduction above, we are now interested in the dynamics of the SOM (in the limit of small step sizes \eps), i.e. the evolution of $w$, or to be exact, $\tilde S(w,t)$ in the time. Since we have a Markovian process, we will make use of the \emph{Chapman-Kolmogorov equation} (cf.e.g.\cite{kampen}). Inserting the transition probability (Eq.\ref{eqn:stability:trans_prob}) we get
\begin{equation}
\tilde S(w,t+1) = \int  d^N w' Q(w,w') \tilde S(w',t) = \sum_r \int d^Nw  \int_{F_r(w_r)}  dv P(v) \delta(w-T(w',v,\eps))\tilde S(w',t) \label{eqn:stability:10}
\end{equation}
which is the distribution of the states at the iteration time $t$.\\
\\
If we are in or rather very close to a stationary state, the probability to leave it should tend to be zero i.e. $\tilde S(w,t)$ is peaked around a $\bar w$ which is chosen such that
\begin{equation}
\int dv P(v) T(\bar w, v, \eps)- \bar w = 0 \label{eqn:stability:20}
\end{equation}
We can then substitute the distribution function by the distribution $S(u,t)$ of the deviations $u$ from the stationary expectation value $\bar w$.
\begin{equation}
S(u,t) := \tilde S (\bar w +u,t) \label{eqn:stability:21}
\end{equation}
After another few, barely instructive transformations, our version of the Fokker-Planck Equation is then finally obtained:
\begin{equation}
\boxed{ \begin{array}{rcl} \frac 1 \eps \partial_t S(u,t)& =& \sum_{\vec rm \vec r'n} \pdiff{}{u_{\vec rm}} \left(\underbrace{\pdiff{V_{\vec rm}}{w_{\vec r'n}}(\bar w)}_{=: B_{\vec rm \vec r'n}} u_{\vec r'n} S(u,t) \right) \\&+& \frac \eps 2 \sum_{\vec rm \vec r'n} D_{\vec rm \vec r'n}(\bar w) \frac {\partial^2 S(u,t)}{\partial u_{\vec rm} \partial u_{\vec r'n}}
\end{array}
 }
\end{equation}
where $B$ and $D$ are two matrices expressing the drift of the neurons in the feature space and the correlation of the drifts i.e. kind of diffusion, respectively. \\
\\
Given that equation we can now derive explicit expressions for the expectation value and the correlation matrix of the deviations $u_{\vec rm}$ namely $\bar u_{\vec rm}(t)$ and $C_{\vec rm \vec rs}(t)$ at any given point of time $t$:
\begin{eqnarray} 
 \bar u(t) &=& Y(t) \bar u(0) \label{eqn:stability:36} \\ 
C(t) &=& Y(t) \left[ C(0) + \int_0^t \eps(\tau)^2 Y(\tau)^{-1} D(Y(\tau)^{-1})^T d\tau \right] Y(t)^T  \label{eqn:stability:37}
\end{eqnarray}
where $Y(t)$ a matrix given by
\begin{equation*}
 Y(t) = \exp(-B \int_0^t \eps(\tau) d\tau)
\end{equation*}
A special case for Eq.\ref{eqn:stability:36} which we will need in the analysis of the regular maps, arises, when we assume that $B$ and $D$ commutes and $\eps$ is constant. Then the equation can be simplified and we get:
\begin{equation} \label{eqn:stability:39}
 C = \left(\expval{(u_{rm}-\bar u_{\vec rm})(u_{\vec sn}-\bar u_{\vec sn})}\right) = \eps (B + B^T)^{-1} D
\end{equation}

\section{*Analysis for Euclidean space with regular maps} \label{ch:analytic:regular_maps}

We are now going to determine the stability limit for the three possible regular Euclidean maps which we have found in chapter \ref{ch:tess}. Analogously to the approach in \cite{schulten} we shall thereby have a closer look at the case of spatially uniform probability densities $P(v)$ of the samples that we present to the SOM. Since the paper already discussed the case of having a square map, we will mainly analyze here the other two regular maps i.e. the triangular and hexagonal tiling and only extend the square case to cover the VQ neighborhood function and the Gaussian neighborhood function in the limit of small $\sigma$.\\
\\
We therefore assume that we have a three-dimensional parallelepiped $V$ bounded by 
\begin{eqnarray*}
&& 0 \leq x,y \leq N \qquad(\mathrm{square}) \\
&& 0\leq x\leq \frac 32 N, 0 \leq y\leq \frac{\sqrt 3} 2 N \qquad(\mathrm{hexagonal})\\
&& 0\leq x\leq N, 0 \leq y\leq \frac{\sqrt 3} 2 N \qquad(\mathrm{triangular})\\
\end{eqnarray*}
and $-s \leq z \leq s$ where $x$,$y$ and $z$ are cartesian coordinates of the Euclidean space in which $V$ is embedded. We then define a uniform distribution restricted to $V$ and thereby given by
\begin{eqnarray*}
P(v)=[2s N^2]^{-1} \qquad(\mathrm{square}) \\
P(v)=[\frac{3\sqrt 3} 4 s N^2]^{-1} \qquad(\mathrm{hexagonal})\\
P(v)=[\frac{\sqrt 3} 2 s N^2]^{-1} \qquad(\mathrm{triangular})\\
\end{eqnarray*}
The map is a two-dimensional $N \times N$ array (cf. section \ref{ch:tess}) of the particular form i.e. square, triangular or hexagonal. To avoid edge effects we furthermore impose periodic boundary conditions along the $x$- and $y$-directions in both map and feature space. Thus, obviously, an equilibrium state for the particular cases using the sample distribution as defined above is given by
\begin{eqnarray*}
\bar w_r &=& m \vec e_1 + n \vec e_2 \qquad(\mathrm{square}) \\
\bar w_r &=& \frac {\sqrt 3} 2 n \vec e_2 + \left\{ \begin{array}{ll} m \vec e_1 & \qquad \mathrm{if\ m\ and\ n\ even} \\
								(m+\frac 12) \vec e_1  & \qquad \mathrm{if\ m\ even\ and\ n\ odd}\\
								(m+2) \vec e_1 & \qquad \mathrm{if\ m\ odd\ and\ n\ even} \\
								(m+\frac 32) \vec e_1 & \qquad \mathrm{else}
						\end{array}\right. \qquad(\mathrm{hexagonal})\\
\bar w_r &=& \frac {\sqrt 3} 2 n \vec e_2 + \left\{ \begin{array}{ll} m \vec e_1 & \qquad \mathrm{if\ n\ even} \\
								(m+\frac 12) \vec e_1  & \qquad \mathrm{else} 
						\end{array}\right. \qquad(\mathrm{triangular})  
\end{eqnarray*}
since it fulfills the condition  of an equilibrium as in Eq.\ref{eqn:stability:20}:
\begin{equation} \label{eqn:stability:48}.
 \int dv P(v) [T(\bar w, v, \eps]_r - \bar w_r = 0
\end{equation}
We should remind us that, as already noted in section \ref{sec:src:eval}, this equilibrium state is not unique but represents a whole class of states closed under the symmetry transformations which are the rotations and translations in the $x$-$y$-plane. This particular choice of the representant here is just for the purpose to get a convenient selection of origin and orientation of the coordinate system.\\
\\
Reminding that we decided to work here with small and constant $\eps$ and if a distribution $S(\vec u,t)$ with finite variance and expectation value $\bar w$ exists, we can calculate the fluctuations about this stable equilibrium state by using Eq.\ref{eqn:stability:39}. $S(u) := \lim_{t\rightarrow \infty} S(\vec u,t)$ thereby denotes the corresponding stationary distribution function of the deviations in the equilibrium.\\
\\
One difficulty concerning the Fokker-Planck equation is that the deviations $u_{\vec r}$ are coupled. Fortunately, $D_{\vec rm \vec r'n}$ and $B_{\vec rm \vec r'n}$ are translational invariant i.e. depend only on the difference $\vec r- \vec r'$ and on $m$,$n$. Thus, we can easily switch to the Fourier space by Fourier transforming the deviations:
\begin{equation*}
 \hat u_{\vec k} = \frac 1 N \sum_r e^{i \vec k \vec r} u_{\vec r}
\end{equation*}
As each mode amplitude is distributed independently, we can decouple the distribution function and thus finally get a mutually independent stationary Fokker-Planck equations for each of the so-obtained individual mode distributions $\hat S_{\vec k}$:
\begin{equation} \label{eqn:stab:fokker_transformed}
\boxed{ 
\sum_{mn}\hat B(\vec k)_{mn} \frac{\partial}{\partial \hat u_m} \hat u_m \hat S_{\vec k}(\hat u) + \frac \eps 2 \sum_{mn} \hat D(\vec k)_{mn} \frac{\partial^2}{\partial \hat u_m \partial \hat u_n} \hat S_{\vec k}(\hat u) = 0 
}
\end{equation}
with
\begin{equation*}
\hat D(\vec k) = \sum_r e^{i\vec k(\vec r- \vec r')} D_{\vec r \vec r'} = \frac 1 N^2 [(\nabla_{\vec k} \hat h(\vec k))(\nabla_{\vec k} \hat h(\vec k))^T + M |\hat h(\vec k)|^2]
\end{equation*}
\begin{equation*}
\hat B(\vec k) = \frac {\hat h(0)}{N^2} \left[ 1- \frac {\hat h(\vec k)}{\hat h(0)} \hat a(\vec k) \right] - \frac 1 {N^2} (i \nabla_{\vec k} \hat h(\vec k))\hat b(\vec k)^T  
\end{equation*}
where
\begin{equation*}
M = \frac 1 {2s} \int_{F_{\vec r}(\bar w)} d\vec v (\vec v \vec v^T - \bar \vec v_r \bar \vec v_r^T) 
\end{equation*} 
is the covariance matrix of the input vectors $\vec v$ over a feature set $F_{\vec r}(\bar w)$, $\hat h(\vec k)$ is the Fourier transform of the (translation-invariant) neighborhood function $h_{\vec r \vec s}$ and $\hat a(\vec k)$ and $\hat b(\vec k)$ are the Fourier transform of the shift of the centroid of the voronoi cells $F_{\vec r}$ for small deviations of a node $\vec w_{\vec r'}$ in the feature space, given by the matrix $a_{\vec r \vec r'}$, and the corresponding relative change in the volume and therefore probability of the cells, respectively. The concrete matrices are hereby derived in the appendix \ref{ch:geometric_aspects} where we compute them in detail for the various types of maps.\\
\\
To simplify the further consideration, we postulate that in the case of regular maps $\hat B$ and $\hat D$ will have a diagonal block structure i.e. the components with index $(3,i)$ and $(i,3)$ vanish except for $i=3$. Thus we can easily determine the third eigenvector of both matrices. Furthermore do we postulate that the matrices commute\footnote{for the Gaussian neighborhood function at least for the approximation in the limit of large $\sigma$} (We will check both assumptions when we deal with particular maps below). The latter property now allows us to meet the prerequisites of the simplification done in Eq.\ref{eqn:stability:39} and can make use of it even if we have to deal here with the Fourier transformed version. The only detail, we have to modify, is, that instead of the transposed matrices we have to use the transposed and conjugate-complex counter-parts. We thus get
\begin{eqnarray*}
&&\expval{\hat u_n(\vec k)^2} = \frac {\eps \lambda_n^{\hat D}(\vec k)} {2 \Re(\lambda_n^{\hat B}(\vec k))}
\end{eqnarray*} 
An instability in the $z$-direction arises where 
\begin{equation} \label{eq:stab:exp_hat_u3}
\expval{\hat u_3(\vec k)^2} = \frac {\eps \hat D(\vec k)_{33}} {2 \Re(\hat B(\vec k)_{33})}
\end{equation}
exhibit a singularity i.e. the limit where the denominator becomes negative for certain choices of $\vec k$ and $s$. This corresponds to the point at which the equilibrium defined by our differential equation \ref{eqn:stab:fokker_transformed}
is no longer attractive as $\hat B$ is no longer positive definite. The smallest $s$ where this occurs is then the stability limit, denoted by $s^*$.
\\
As we are now only interested in the fluctuation along the $z$-direction, it is hereby sufficient to consider only the following components of $\hat B$,$\hat D$ and $M$ instead of the whole matrices:
\begin{eqnarray*}
\hat D_{33}(\vec k) &=&  \frac 1 N^2 [(\pdiff{}{k_3}\hat h(k))(\pdiff{}{k_3} \hat h(k)) + M_{33} \hat h(\vec k)^2]
\end{eqnarray*}
\begin{eqnarray}\label{eqn:stab:hatB33}
\hat B_{33}(\vec k) &=& \frac {\hat h(0)}{N^2} \left[ 1- \frac {\hat h(\vec k)}{\hat h(0)} \hat a_{33}(\vec k) \right] - \frac 1 {N^2} (i \pdiff{}{k_3} \hat h(\vec k))\hat b_{33}(\vec k) 
\end{eqnarray}
As in or near the equilibrium the distributions of the $z$-components for the input vectors in each voronoi cell are not correlated to the other components and the voronoi cells themselves are all identical, all but one values in the third column and row vanish. $M_{33}$ is the only non-vanishing one and is equal to the variance of the uniform distribution which is given by
\begin{equation}
 M_{33} = \frac  1 {12} (2s)^2 = \frac {s^2} 3
\end{equation}
Thus, we have finally derived everything we need for the following analytical analysis.

\subsection{*Square (4,4)-map}

We want to extend first the result of \cite{schulten} by studying the ultra-short ranged neighborhood function i.e. explicit vector quantization and the limit of $\sigma \rightarrow 0$ for the Gaussian neighborhood function.

\subsubsection{Long-ranged, Gaussian neighborhood function}

We start with the Gaussian neighborhood function as it has already been analyzed in the paper for a similar case (i.e. for $\sigma \gg 1$) and we can use some intermediate results. $\hat D$ and $\hat B$ (cf.\cite[Eq.64-69]{schulten}) satisfy the block-matrix condition and both commute in the limit of small $\sigma$ since we can then approximate $\hat D \approx \frac{4 \pi^2 \sigma^4}{N^2} M \exp(-k^2 \sigma^2)$ and, in addition, $B_{12}$ and $\hat B_{21}$ vanish in respect to the other components of $\hat B$ ($\hat B$ and $\hat D$ in this approximation are then both diagonal). We thus obtain for the fluctuation of the eigenmode amplitude of $\hat u_3$ for small $sigma$:
\begin{equation*}
 \expval{\hat u_3(\vec k)^2} = \eps \pi \sigma^2 \frac{s^2 \exp(-k^2 \sigma^2)}{3- 2s^2(2 - \cos k_1 - \cos k_2)}
\end{equation*}
As mentioned above we now have to find the smallest $s$ for which the denominator becomes negative i.e. has a null:
\begin{equation*}
 3- 2s^2 (2 - \cos k_1 - \cos k_2) \stackrel{!}= 0 \rightarrow s(\vec k) = \sqrt{\frac 3 { 4 - 2\cos k_1 - 2\cos k_2}}
\end{equation*}
Since $\sup_{\vec k}(4 - 2\cos k_1 - 2\cos k_2) = 8$, the stability limit $s^*$ is given by
\begin{equation}
\boxed{s^* =  \sqrt {\frac 3 8}}
\end{equation}

\subsubsection{Ultra-short-ranged VQ neighborhood function}

Next we will take a look at the explicit VQ neighborhood function, which was defined in Eq.\ref{eqn:gsom:vq}.
As $\hat a$,$\hat b$ and $M$ are already given, it just rests to determine the Fourier transform of $h$.
\begin{equation}\label{eqn:stability:hat_h_vq}
 \hat h(\vec k) = 1
\end{equation}
Since $\hat B$ and $\hat D$ obviously fulfill the block-matrix condition above,  we can directly determine the eigenvalues that we need:
\begin{eqnarray*}
\lambda^{\hat B}_3(\vec k) &=& \hat B_{33} = \frac 1 {N^2} \left( 1 - \frac {2 s^2} 3 (2-\kappa) \right)\\
\lambda^{\hat D}_3(\vec k) &=&\hat D_{33} = \frac {M_{33}} {N^2} = \frac {s^2}{3N^2}
\end{eqnarray*} where $\kappa(\vec k) = \cos(k_1) + \cos(k_2)$.
Thus we get:
\begin{equation*}
\expval{\hat u_3(\vec k)^2} = \frac{\eps s^2}{6 (1-\frac{2s^2}3(2-\kappa))}
\end{equation*}
The smallest $s$ for which the denominator vanishes is then determined by:
\begin{eqnarray*}
 && 1-\frac{2s^2}3(2-\kappa) = 0 \Rightarrow s^2(\vec k) = -\frac 3 {2 \kappa -4}
\end{eqnarray*}
The position of the minimum of $s(\vec k)$ coincides with the position of the minimum of $\kappa(\vec k)$ in respect to $\vec k$. The minimum of $\kappa$ is thereby $-2$. Thus the stability limit is
\begin{equation}
\boxed{s^* =  \sqrt {\frac 3 8}}
\end{equation}
Thus, we obtained the same result as for the limit of small neighborhoods in the Gaussian case above.

\subsection{*Hexagonal (3,6)-map}

We will now examine the first of the two other possible regular maps of the Euclidean plane, namely the (3,6)-map. Using the geometric calculations done in the appendix, we now obtain for small deviations of one node a shift of the centroids of the voronoi cells of each neighboring node in the feature space given by:

\begin{eqnarray*}
 a_{r,r'} &=& \delta(\vec r'-\vec r) \matrixdrei{\frac 5 9 & 0 & 0\\ 0 & * & 0 \\ 0 & 0& \frac 4 3 s^2} 
	- \delta(\vec r+\vec e_1-\vec r')  \matrixdrei{\frac 1 6  & 0 & 0 \\ - \frac 1 {3 \sqrt 3} & - \frac 1 {48} & 0 \\ 0 & 0 & - \frac 2 9 s^2} \\
	&-& \delta(\vec r+\vec e_1-\vec r')  \matrixdrei{\frac 1 6  & 0 & 0 \\ + \frac 1 {3 \sqrt 3} & - \frac 1 {48} & 0 \\ 0 & 0 & - \frac 2 9 s^2}\\ 
	&-& \delta(\vec r+\frac {\vec e_1}2 + \frac{\sqrt{3}}2 \vec e_2-\vec r')  \matrixdrei{\frac 1 {36}  & \frac 1 {9 \sqrt{3}} & 0 \\ \frac 5 {36 \sqrt 3} & * & 0 \\ 0 & 0 & - \frac 2 9 s^2} \\
	&-& \delta(\vec r+\frac {\vec e_1}2 - \frac{\sqrt{3}}2 \vec e_2-\vec r')  \matrixdrei{\frac 1 {36}  & - \frac 1 {9 \sqrt{3}} & 0 \\ - \frac 5 {36 \sqrt 3}  & * & 0 \\ 0 & 0 & - \frac 2 9 s^2} \\
	&-& \delta(\vec r-\frac {\vec e_1}2 + \frac{\sqrt{3}}2 \vec e_2-\vec r')  \matrixdrei{\frac 1 {36}  & - \frac 1 {9 \sqrt{3}} & 0 \\ - \frac 5 {36 \sqrt 3} & * & 0 \\ 0 & 0 & - \frac 2 9 s^2} \\
	&-& \delta(\vec r-\frac {\vec e_1}2 - \frac{\sqrt{3}}2 \vec e_2-\vec r')  \matrixdrei{\frac 1 {36}  & + \frac 1 {9 \sqrt{3}} & 0 \\ \frac 5 {36 \sqrt 3} & * & 0 \\ 0 & 0 & - \frac 2 9 s^2} 
\end{eqnarray*}

The three-dimensional Fourier transformation for $\hat a(\vec k)_{33}$ then yields:
\begin{eqnarray*}
 \hat a(\vec k)_{33} &=&  \frac {4 s^2} 3- (e^{i \vec e_1 \vec k} + e^{-i \vec e_1 \vec k}) \frac {2 s^2} 9 \\
&&-(e^{i (\frac {\vec e_1} 2 + \frac {\sqrt{3}} 2 \vec e_2) \vec k} +e^ {i (\frac {\vec e_1} 2 - \frac {\sqrt{3}} 2 \vec e_2) \vec k} +e^ {i (-\frac {\vec e_1} 2 + \frac {\sqrt{3}} 2 \vec e_2) \vec k} +e^ {-i (\frac {\vec e_1} 2 + \frac {\sqrt{3}} 2 \vec e_2) \vec k}  ) \frac {2 s^2} 9\\
&=&  \frac {4 s^2} 9 ( 3 - \cos(\vec k_1) - \cos(\frac {\vec k_1} 2 + \frac {\sqrt{3}} 2 \vec k_2)  -\cos(\frac {\vec k_1} 2 - \frac {\sqrt{3}} 2 \vec k_2) )\\
&=& \frac {4 s^2} 9 ( 3 - \tilde \kappa(\vec k))
\end{eqnarray*} where $\tilde \kappa$ is a substitution to simplify the notation.

Analogously we obtain for the $z$-component of $\hat b(\vec k)$:
\begin{eqnarray*}
\hat b(\vec k)_3 = 0
\end{eqnarray*}
Given these results we can again easily confirm that the matrices $\hat B$ and $\hat D$ have indeed such a diagonal block structure as we have claimed. We can thus easily calculate the stability limits for the three classes of neighborhood functions that we listed in \ref{sec:gsom:neighb_f}.

\subsubsection{Long-ranged, Gaussian neighborhood function}

We start by determining the Fourier of the Gaussian transform given in Eq.\ref{eqn:gsom:gauss}. We get
\begin{equation} \label{eqn:stability:63}
 \hat h(\vec k) = 2 \pi \sigma^2 \exp(- \frac {\sigma^2 k^2} 2) 
\end{equation}

\paragraph{Case $\sigma \gg 1$} $~~$\\

By inserting $a(\vec k)_{33}$,$b(\vec k)_3$ and $h(\vec k)$ in Eq.\ref{eqn:stab:hatB33} we obtain $\lambda^{\hat B}_3(k)$, but before, we will simplify $\tilde \kappa(\vec k)$ by observing that in the case of  large $\sigma$ either $\exp(-\sigma^2 k^2)$ is very small or $k_1$ and $k_2$ are sufficiently small and we can expand the trigonometric functions to leading order without causing a significant error:
\begin{eqnarray*}
\tilde \kappa(\vec k) &=& \cos(k_1) + \cos(\frac {k_1} 2 + \frac {\sqrt{3}} 2 k_2)  + \cos(\frac {k_1} 2 - \frac {\sqrt{3}} 2 k_2) \\ &\approx& (1- \frac{k_1^2} 2) + (1 - \frac {(\frac {k_1} 2) + \frac {\sqrt{3}} 2 k_2)^2} 2  + (1- \frac {(\frac {k_1} 2 - \frac {\sqrt{3}} 2 k_2)^2} 2) \\
&=& 3 - \frac {k^2_x} 2 - \frac {k^2_x} 4 - \frac {3k^2_y} 4 = 3 - \frac {3k^2} 4
\end{eqnarray*}
Hence we get for $\lambda^{\hat B}_3(k)$:
\begin{equation}
\lambda^{\hat B}_3(\vec k) =\hat B_{33} = 
\frac {2 \pi \sigma^2}{N^2} (1 - \frac {4 s^2} 9 ( 3 - \tilde \kappa ) \exp({- \frac{k^2 \sigma^2} 2}) )
= \frac {2 \pi \sigma^2}{N^2} (1 - \frac {s^2 k^2} 3  \exp({- \frac{k^2 \sigma^2} 2}) )
\label{eq:stab:3_6_Gauss_B}
\end{equation}
and analogously for $\lambda^{\hat D}_3(k)$
\begin{equation}
\lambda^{\hat D}_3(\vec k) = \hat D_{33} = \frac{4 \pi^2 \sigma^4}{N^2} M \exp(-k^2 \sigma^2)
= \frac{4 \pi^2 \sigma^4 s^2}{3 N^2} \exp(-k^2 \sigma^2) \label{eq:stab:3_6_Gauss_D}
\end{equation}
Using Eq.\ref{eq:stab:exp_hat_u3} we now get for the correlation of the corresponding eigenmode amplitude:
\begin{equation} \label{eq:stab:3_6_u}
\expval{\hat u_3(\vec k)^2} = \eps \frac{\pi \sigma^2 s^2 \exp(-k^2 \sigma^2)}{3 (1 - \frac {s^2 k^2} 3  \exp(- \frac{k^2 \sigma^2} 2) )} 
\end{equation}
Once more, we now have to find the smallest $s$ for which the denominator vanishes:
\begin{eqnarray*}
&& 6 \pi \sigma^2 (1 - \frac {s^2 k^2} 3  \exp({- \frac{k^2 \sigma^2} 2}) ) = 0 
\Leftrightarrow s^2 = \frac 3 {k^2} \exp({\frac{k^2 \sigma^2} 2})
\Rightarrow s(\vec k) = \frac{\sqrt{3}} k \exp({\frac{k^2 \sigma^2} 4})
\end{eqnarray*}
To get the minimum, the derivative of $s(\vec k)$ in respect to $\vec k$ has to vanish:
\begin{eqnarray*}
&\Rightarrow& s'(\vec k^*) = -\frac {\sqrt{3}} {2k^{*2}} \exp({\frac{k^{*2} \sigma^2} 4}) + \frac {\sqrt{3} \sigma^2} 4 \exp({\frac{k^{*2} \sigma^2} 4}) \stackrel{!}= 0\\
&\Rightarrow& \boxed{k^* = \frac {\sqrt 2} \sigma} \Rightarrow \boxed{s^* = \sigma \sqrt{\frac {3e} 2} \approx 2.02 \sigma}
\end{eqnarray*}

\paragraph{Case $\sigma \rightarrow 0$} $~~$\\

The case of small $\sigma$ can be tackled the same way as done for the (4,4)-map. We first notice, that $\hat B$ and $\hat D$ are blocked-shaped and diagonal and thus obtain now for the fluctuation (using approximation of Eq.\ref{eq:stab:3_6_Gauss_B}\&\ref{eq:stab:3_6_Gauss_D} for small $\sigma$):
\begin{equation*}
\expval{\hat u_3(\vec k)^2} =  \eps \pi \sigma^2 \frac{s^2 \exp(-k^2 \sigma^2)}{3 - \frac {4 s^2} 3 ( 3 - \tilde \kappa )}
\end{equation*}
And again, we have to find the null of denominator in respect to $s$ to determine $s^*$:
\begin{eqnarray*}
 && 3 - \frac {4 s^2} 3 ( 3 - \tilde \kappa ) \stackrel{!}=0 \Rightarrow s(\vec k) = \sqrt{\frac{9}{12 - 4\tilde \kappa}} 
\Rightarrow \boxed {s^* = \sqrt{\frac 1 2} \approx 0.707}
\end{eqnarray*} since by using global optimization methods we get $\inf_{\vec k}(\tilde \kappa(\vec k)) = -1.5$.

Thus we obtain a higher stability limit for this hexagonal map as above for the square case in the limit of small $\sigma$, while the results for the very large $\sigma$ are the same.

\subsubsection{Short-ranged NN neighborhood function}

Here we have to determine the Fourier transform of the NN-neighborhood function. Therefore we have to adapt Eq.\ref{eqn:gsom:NN} to fit in this case of six nearest neighbors at the (relative) coordinates $(\pm \frac 12, \pm \frac{\sqrt3}2)$,$(\pm \frac 12, \mp \frac{\sqrt3}2)$ and $(\pm 1,0)$. Thus the Fourier transformation result in:
\begin{eqnarray*}
 \hat h(\vec k) &=& \int e^{i \vec k \vec x} h(\vec x) d^3 x = 1 + ( \exp(i(\frac {k_1} 2+ \frac {\sqrt 3}2 k_2)) + \exp(-i(\frac {k_1} 2+ \frac {\sqrt 3}2 k_2))) \\ 
&& + ( \exp(i(\frac {k_1} 2- \frac {\sqrt 3}2 k_2)) + \exp(-i(\frac {k_1} 2- \frac {\sqrt 3}2 k_2))) + (\exp(ik_1)+\exp(-ik_1))\\
&=& 1 + 2 \cos(\frac {k_1} 2+ \frac {\sqrt 3}2 k_2) + 2 \cos(\frac {k_1} 2 - \frac {\sqrt 3}2 k_2) 
+ 2 \cos(k_1) = 1 + 2 \tilde \kappa(\vec k)
\end{eqnarray*} where $\tilde \kappa$ is the same substitution that we used above.
By Eq.\ref{eqn:stab:hatB33} we obtain for $\lambda^{\hat B}_3(k)$
\begin{eqnarray*}
\lambda^{\hat B}_3(\vec k) &=&\hat B_{33} = \frac 1 {N^2} \left[ 7 - (1 + 2 \tilde \kappa) \frac  4 9 s^2 (3-\tilde \kappa) \right]
\end{eqnarray*}
Analogously for $\lambda^{\hat D}_3(k)$:
\begin{equation*}
\lambda^{\hat D}_3(\vec k) =\hat D_{33} = \frac{s^2}{3 N^2} (1 + 2 \tilde \kappa)^2 
\end{equation*}
Thus we obtain in this case for $\expval{\hat u_3(k)^2}$:
\begin{equation*}
\expval{\hat u_3(\vec k)^2} = \eps \frac{s^2  (1 + 2 \tilde \kappa)^2 }{6 \left[ 7 - (1 + 2 \tilde \kappa) \frac  4 9 s^2 (3-\tilde \kappa) \right]}
\end{equation*}
Similar to the other cases we now get $s^*$ by determining the null of the denominator and find the minimum in respect to $\vec k$, or to be exact, to $\tilde \kappa$.
\begin{equation*}
 s^2(k) = - \frac {63}{8 \tilde \kappa^2 - 20 \tilde \kappa - 12} \Rightarrow \tilde \kappa^* = \frac 54 
\Rightarrow \boxed{s^* = \sqrt{\frac {18} 7} \approx 1.604}
\end{equation*}So the stability limit is slightly larger than for the square case as we expected.\footnote{Here, we don't need to determine the exact wave number $\vec k$. We just have to note that $\frac 54$ is within the range of $\tilde \kappa(\vec k)$ }.

\subsubsection{Ultra-short-ranged VQ neighborhood function}

For the final case we can again compute $s^*$ straightforward. We already have computed $\hat h$ for the square case above in Eq.\ref{eqn:stability:hat_h_vq} and since $M_{33}$ does not differ likewise, $\hat D$ has same third eigenvalue as in the square case:
\begin{equation*}
\lambda^{\hat D}_3(\vec k) = \frac {s^2}{3N^2}
\end{equation*}
For the corresponding eigenvalue of $\hat B$ we get:
\begin{equation*}
\lambda^{\hat B}_3(\vec k) = \hat B_{33} = \frac 1 {N^2} (1-\hat a_{33}) = \frac 1 {N^2}(1-\frac {4s^2}9(3-\tilde \kappa))
\end{equation*}
Thus we obtain for the fluctuations:
\begin{equation*}
\expval{\hat u_3(k)^2} = \eps \frac{s^2}{3 - s^2 ( 4 - \frac {4 \tilde \kappa}3)} \Rightarrow s^2(k) = \frac 3 {4- \frac 43 \kappa}
\end{equation*}
As in the square case, the minimum of the stability limits $s(\vec k)$ is determined by the minimum of $\tilde \kappa$ which is $-1.5$. Thus we get:
\begin{equation*}
\boxed{s^* = \sqrt{\frac 1 2} \approx 0.707 }
\end{equation*}
which is again the same result as in the Gaussian case for small $\sigma$.

\subsection{*Trigonal (6,3)-map}

Finally we will take a look at the last remaining possible regular map, namely the (6,3)-map. Since the geometric calculations are in this case due to some asymmetries quite annoying we restricted them to the needed results for the deviation and the shifts in z-direction. Thus we got:
\begin{eqnarray*}
 a_{\vec r,\vec r'} &=& \delta(\vec r'- \vec r) \matrixdrei{* & * & 0\\ * & * & 0 \\ 0 & 0 & \frac 4 3 s^2} 
	- \delta(\vec r+\vec e_1-\vec r')  \matrixdrei{* & * & 0 \\ * & * & 0 \\ 0 & 0 & \frac 4 9 s^2} \\
	&-& \delta(\vec r-\frac {\vec e_1}2 + \frac{\sqrt{3}}2 \vec e_2-\vec r')  \matrixdrei{* & * & 0 \\ * & * & 0 \\ 0 & 0 & \frac 4 9 s^2} \\
	&-& \delta(\vec r-\frac {\vec e_1}2 - \frac{\sqrt{3}}2 \vec e_2-\vec r')  \matrixdrei{ * & * & 0 \\ * & * & 0 \\ 0 & 0 &  \frac 4 9 s^2} 
\end{eqnarray*}

The three-dimensional Fourier transformation $\hat a(\vec k)_{33}$ therefor is:
\begin{eqnarray*}
 \hat a(\vec k)_{33} &=&  \frac {4 s^2} 3 - \frac {4 s^2} 9 \exp({i \vec e_1 \vec k}) - \frac {4 s^2} 9 \exp({i (-\frac {\vec e_1} 2 + \frac {\sqrt{3}} 2 \vec e_2) \vec k}) - \frac {4 s^2} 9 \exp({i (-\frac {\vec e_1} 2 - \frac {\sqrt{3}} 2 \vec e_2) \vec k})\\
&=& \frac {4 s^2} 9 ( 3 - \tilde {\tilde \kappa})
\end{eqnarray*} where $\tilde {\tilde \kappa}$ is once more a substitution to simplify the notation.

Furthermore we have for the $z$-component of $\hat b(\vec k)$:
\begin{eqnarray*}
\hat b(\vec k)_3 = 0
\end{eqnarray*}
As in the hexagonal case, we can easily confirm that the matrices $\hat B$ and $\hat D$ have indeed such a diagonal block structure and we can, again, calculate the stability limits for the three classes of neighborhood functions.
The only small difference, which we have to consider, is, that $\tilde {\tilde \kappa}$ is here complex-valued. Apart from that, most of the calculations can be shortened as they are the identical or at least very similar.

\subsubsection{Long-ranged, Gaussian neighborhood function}

\paragraph{Case $\sigma \gg 1$} $~~$\\

First, we simplify $\tilde {\tilde \kappa}(\vec k)$ or, to be more exact, in the real part $\Re(\tilde {\tilde \kappa}(\vec k))$ \footnote{since we are only interested in the denominator of $\expval{\hat u_3(k)^2}$.} for the limit of large $\sigma$: 
\begin{eqnarray*}
\Re(\tilde \kappa(\vec k)) &=&  \Re(\exp({i \vec e_1 \vec k}) + \exp({i (-\frac {\vec e_1} 2 + \frac {\sqrt{3}} 2 \vec e_2) \vec k}) + \exp({i (-\frac {\vec e_1} 2 - \frac {\sqrt{3}} 2 \vec e_2) \vec k}) ) \\
&=& \cos(\vec e_1 \vec k) + \cos({ (-\frac {\vec e_1} 2 + \frac {\sqrt{3}} 2 \vec e_2) \vec k}) + \cos({(-\frac {\vec e_1} 2 - \frac {\sqrt{3}} 2 \vec e_2) \vec k}) \\
&=& 3 - \frac {k^2_x} 2 - \frac {k^2_x} 4 - \frac {3k^2_y} 4 = 3 - \frac {3k^2} 4
\end{eqnarray*}
Since the eigenvalues $\lambda^{\hat B}_3(k)$ and $\lambda^{\hat D}_3(k)$ are now given by
\begin{eqnarray*}
\lambda^{\hat B}_3(\vec k) &=& \hat B_{33} = 
\frac {2 \pi \sigma^2}{N^2} (1 - \frac {4 s^2} 9 ( 3 - \tilde{ \tilde \kappa} ) \exp({- \frac{k^2 \sigma^2} 2}) )
\\
\lambda^{\hat D}_3(\vec k) &=& \hat D_{33} = \frac{4 \pi^2 \sigma^4}{N^2} M \exp(-k^2 \sigma^2)
= \frac{4 \pi^2 \sigma^4 s^2}{3 N^2} \exp(-k^2 \sigma^2)
\end{eqnarray*}
we get here for the fluctuation
\begin{eqnarray*}
\expval{\hat u_3(k)^2} &=& \eps \pi \sigma^2 \frac{s^2 \exp(-k^2 s^2)}{3 \cdot \Re(1- \frac {4s^2} 9 (3- \tilde{\tilde \kappa})\exp({- \frac{k^2 \sigma^2} 2})} = \eps \pi \sigma^2 \frac{s^2 \exp(-k^2 s^2)}{3- \frac {4s^2} 3 (3- \Re(\tilde{\tilde \kappa}))\exp({- \frac{k^2 \sigma^2} 2})}\\
&=& \eps \pi \sigma^2 \frac{s^2 \exp(-k^2 s^2)}{3 - \frac{s^2 k^2}3 \exp({- \frac{k^2 \sigma^2} 2})}
\end{eqnarray*}
This result is identical to Eq.\ref{eq:stab:3_6_u} of the (3,6)-map. Thus the stability limit $s^*$ has to be also the same as for the other two maps:
\begin{equation*}
\boxed{s^* = \sigma \sqrt{\frac {3e} 2} \approx 2.02 \sigma}
\end{equation*}

\paragraph{Case $\sigma \rightarrow 0$} $~~$\\
 Since $\tilde{kappa} = \Re(\tilde{\tilde \kappa})$, both $\lambda^{\hat D}_3(\vec k)$ and the real part of $\lambda^{\hat B}_3(\vec k)$ are identical to their counterparts for the (3,6)-map. Thus, the stability limit for small $\sigma$ is again given by:
\begin{equation*}
\boxed{s^* = \sqrt{\frac 1 2} }
\end{equation*}

\subsubsection{Short-ranged NN neighborhood function}

The Fourier transformation of the NN-neighborhood function adapted for the triangular map is:
\begin{equation*}
\hat h(\vec k) = 1 + \exp({i \vec e_1 \vec k}) + \exp({i (-\frac {\vec e_1} 2 + \frac {\sqrt{3}} 2 \vec e_2) \vec k}) + \exp({i (-\frac {\vec e_1} 2 - \frac {\sqrt{3}} 2 \vec e_2) \vec k}) = 1+ \tilde{\tilde \kappa}
\end{equation*}
Thus we get:
\begin{eqnarray*}
 \hat B_{33} &=& \frac 4 {N^2} \left[ 1 - (1+ \tilde{\tilde \kappa}) \frac {s^2} 9 (3 - \tilde{\tilde \kappa}) \right] 
= \frac 4 {9 N^2} (9 - 3 s^2 - 2 s^2 \tilde{\tilde \kappa} + s^2 \tilde{\tilde \kappa}^2)\\
 \hat D_{33} &=& \frac {s^2} {3 N^2} |1 + \hat B_{33}|^2
\end{eqnarray*}
For the fluctuation of the eigenmode amplitude we obtain:
\begin{eqnarray*}
 \expval{\hat u_3(k)^2} &=& \frac{\eps s^2|1 + \hat B_{33}|^2}{\frac 8 3 (9 - 3 s^2 - 2 s^2 \Re(\tilde{\tilde \kappa}) + s^2 \Re(\tilde{\tilde \kappa})^2)} \Rightarrow s^2(k) = - \frac 9 {\Re(\tilde{\tilde \kappa})^2 - 2 \Re(\tilde{\tilde \kappa}) -3}
\end{eqnarray*}
As $s^2(\vec k)$ is minimal for $\Re(\tilde{\tilde \kappa}^*) = 1$, the wanted stability limit $s^*$ finally is:
\begin{equation*}
\boxed{s^* = \frac 3 2 = 1.5} 
\end{equation*}
So this is the smallest stability limit for the NN case of all three regular maps.

\subsubsection{Ultra-short-ranged VQ neighborhood function}

At last, we will briefly determine the stability limit for the case of the vector quantization. Since
\begin{equation*}
 \hat B_{33} = \frac 1 {N^2} ( 1 - \frac {4 s^2} 9 ( 3 - \tilde {\tilde \kappa}))
\end{equation*}
and
\begin{equation*}
 \hat D_{33} = \frac {s^2}{3 N^2}
\end{equation*}
are very similar to the corresponding components in the hexagonal case (even $\tilde K$ and $\Re(\tilde {\tilde \kappa})$ are identical), we get by the same calculations the same result:
\begin{equation*} 
\boxed{s^* = \sqrt{\frac 1 2} }
\end{equation*}

\section{Summary of the analytic results}

Tab.\ref{tab:stab:result} summarize all the results that we have obtained in the preceding section in detail (including resulting stability limits of the square map for NN and Gaussian neighborhood, which have been derived in \cite[sec.5]{schulten}).

\begin{table}[!ht]
\begin{tabular}{|l||c|c|c|c|}\hline
map & Gauss (large $\sigma$) & Gauss (small $\sigma$) & NN & VQ \\ \hline \hline
(4,4) &$\sigma \sqrt{\frac {3e}2}\approx 2.02 \sigma $&$\sqrt {\frac 3 8} \approx 0.612 $&$\sqrt{\frac {12}5} \approx 1.549 $&$\sqrt {\frac 3 8} \approx 0.612 $\\ \hline
(3,6) &$\sigma \sqrt{\frac {3e}2}\approx 2.02 \sigma $&$\sqrt {\frac 1 2} \approx 0.707$&$\sqrt{\frac {18}7} \approx 1.604 $&$\sqrt {\frac 1 2} \approx 0.707 $\\ \hline
(6,3) &$\sigma \sqrt{\frac {3e}2}\approx 2.02 \sigma $&$\sqrt {\frac 1 2} \approx 0.707 $&$\frac {3}2 = 1.5 $&$\sqrt {\frac 1 2} \approx 0.707 $\\ \hline
\end{tabular}
\caption{Results of the analytic stability analysis}
\label{tab:stab:result}
\end{table}

It can be noticed that for the three choices of maps, the results for the long-ranged Gaussian with $\sigma \gg 1$ are the same. This may result from the fact that, given a winner node, the neighboring nodes of it are all adapted in the same way as the neighborhood function hardly differs there and in large distances where the neighborhood finally decreases significantly, the differences of the lattices from the distant viewpoint of the winner node become blurred as the differences between the distances from these outer.nodes to the winner are all relatively small.\\
\\
The stability limits for the Gaussian function with very small $\sigma$ are, as may have been expected, identical to the one obtained by Vector Quantization. The (4,4)-map thereby exhibit a smaller limit than the other two maps.
\\
The use of nearest neighborhood functions results in stability limits that grow with the number of nearest neighbors, i.e. a SOM using the hexagonal map with 6 nearest neighbors at each node has the highest stability while the triangular map with half the number of nearest neighbors yields to lesser stable SOMs. A possible explanation may be that the ``visibility range'' of a node in respect to inputs is enlarged by the size of the voronoi cells of its neighbors.
 Thus, the ``height'' of the input space becomes smaller compared to the spread in the other directions. It is the same effect that we can notice in the case of the Gaussian neighborhood where the stability rises proportional to $\sigma$.\\
\\
In the following chapter \ref{ch:numerical}, we are now going to verify these result using a numerical approach.

\chapter{Numerical Analysis / Monte Carlo Simulations} \label{ch:numerical}

In chapter \ref{ch:analytic} we have analytically determined the stability limit of the case of both an Euclidean map and feature space where the feature space exceeds the two-dimensional map space by one dimension. In the following we will not only try to verify these analytic results, but furthermore extend this analysis to the case where we have a hyperbolic map space (i.e. HSOM) and, in addition, an hyperbolic feature space (i.e. GRiSOM). Again, the readers in a hurry may be referred to the last section of this chapter, where the results of this analysis are very briefly summarized.\\
\\
Before starting to discuss the simulation in detail, it has to be noted that the used software library (cf. chapter. \ref{ch:src}) has been optimized to support an easy intuitive implementation of map and feature spaces. Thus, the SOM algorithm is implemented in the most general way and could therefore not be specially optimized for particular choices of spaces, maps or neighborhood functions. That means, that, for each adaption step, it has to explore and adapt the whole map which may take a lot of time depending on the number of neurons in it and, since time is a limited resource, this drastically restrains the configurations of the SOM that we are able to examine. In first testing runs it has been observed that each of the available processor cores (single cores of Intel(R) Core(TM)2 Duo CPU E6550  @ 2.33GHz) was capable of adapting about one billion neurons per hour using an Euclidean map and feature space and a long-range neighborhood function\footnote{The use of NN and VQ neighborhood functions would, of course, boost the speed}. By using an hyperbolic map space the speed does not differ significantly as we still use Euclidean geodesics, but in the case of using hyperbolic map and feature spaces this decreases even down to only 25 millions per hour. 
This led to the dilemma that in order to avoid possible boundary effects the map had to be as large as possible but at the same time the sample set has to be large enough so that there are enough samples in each Voronoi cell of the neurons to ensure that the map has enough ``time'' to adapt to the distribution. Therefore we had to choose a trade-off that bounded the computing time of the result for one parameter set at maximal 2-3 hours. Fortunately, the whole exploration of the parameter space could be easily parallelized, because the result seldomly depended on each other. Thus the whole search could be rolled out onto the about available 20 workstations (i.e. 40 processor cores). That reduced the duration of the whole numerical work to ``merely'' several weeks instead of many months.

\section{Test procedure}

To determine the stability limits of the SOM configurations numerically \emph{Monte Carlo simulations} of the SOM algorithm are carried out. That is be done by generating samples by using the according uniform distribution as defined in chapter.\ref{ch:distr} which is then used to adapt the neurons using the SOM algorithm. By doing so, the SOM will adapt itself to the given sample set embedding itself into it.\\
\\
In order to determine a particular stability, we have to consider all the possible parameters on which the result can depend. In general, these are:
\begin{itemize}
\item type of map space
\item type of feature space
\item type of regular map i.e. tessellation, which is subdivided into:
\begin{itemize}
\item type of polygon (number of vertices)
\item number of neighbors at each vertex
\item edge length
\end{itemize}
\item neighborhood function (here: Gaussian, NN, VQ)
\item adaption parameters $\eps$ and $\sigma$
\item distribution of samples, or to be more exact
\begin{itemize}
\item type of distribution
\item boundaries is map dimension
\item parameter $s$ for boundaries is extra dimension
\end{itemize}
\item size of sample set
\end{itemize}

It has to be remarked, that several of these parameters depend on each other. So determines the type of the map and the feature space not only the possible choices for the tessellations (cf. Ch.\ref{ch:tess}) but also the type of distribution of the samples that is used as listed in Ch.\ref{ch:distr}. Furthermore is the size of the sample set restricted by the available resources (i.e. computing time and storage space) as mentioned above. The used values for the various maps are listed in tables \ref{tab:numerical:sample_size_SOM}, \ref{tab:numerical:sample_size_HSOM} and \ref{tab:numerical:sample_size_GRiSOM}.
\\ \\
The most important parameter is, of course, $s$. By fixing all other parameters and varying $s$ we now determine the stability limit $s^*$ i.e. the smallest value for $s$ where the equilibrium state of the SOM that lies in a plane in the map dimensions is no longer stable. In the analytic approach we examined the fluctuations in the extra dimension to determine, if the equilibrium is still attractive. Here, we use a different method based on the \texttt{Mean\_Extra\_Dim\_Analyzer} introduced in the evaluation section of chapter \ref{ch:src}. In other words, we measure the mean deviation $\bar u_3$ or, to be more exact, the normalized deviation $\bar u_3/s$ from the initial equilibrium state, which is the equilibrium for ($s=0$) in direction of the extra dimension\footnote{see Ch.\ref{sec:src:eval} for more details about exact definition for non-Euclidean spaces}. As long as the initial equilibrium is attractive, the analytic value of $\bar u_3$ vanishes. Since we work here numerically with a finite sample set, the mean state will show distortions around the equilibrium, but these fluctuations and thus the measured errors are very small and can be (easily) identified and reduced as they depend on $\eps$. By contrast, above the stability limit, where the SOM has moved into a new equilibrium state which also uses the extra dimension, the measured error is significantly larger and is only slightly affected by a variation of $\eps$. As an example, Fig.\ref{fig:numerical:phase_transitions} shows a graph of the error in respect to $s$.

\begin{figure}[!ht]
\begin{center}
\includegraphics[width=0.98\linewidth]{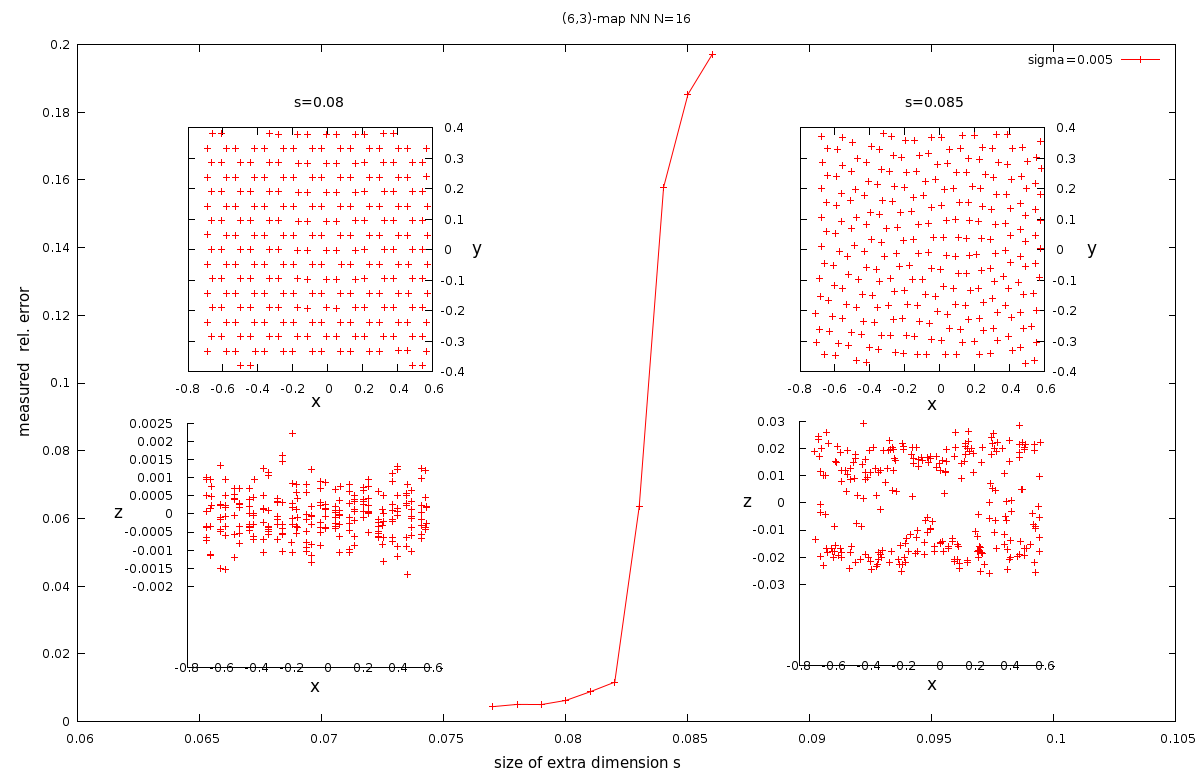}
\end{center}
\caption{Transition between former and new equilibrium at the stability limit}
\label{fig:numerical:phase_transitions}
\end{figure}

To find $s^*$ for a given SOM configuration, we thus have to do a first search run to roughly locate it by looking out for significantly large errors. Once we narrow the limit down to a interval, we can iteratively refine the search by using smaller step sizes for the sweeps until we will have pinpointed the stability limit up to a sufficiently small error margin. In Tab.\ref{tab:result:search_runs}, the search runs for a (6,3)-map with N=16 are listed as an example. Fig.\ref{fig:numerical:search_runs} shows then the corresponding graph.

\begin{figure}[!ht]
\begin{center}
\includegraphics[width=0.98\linewidth]{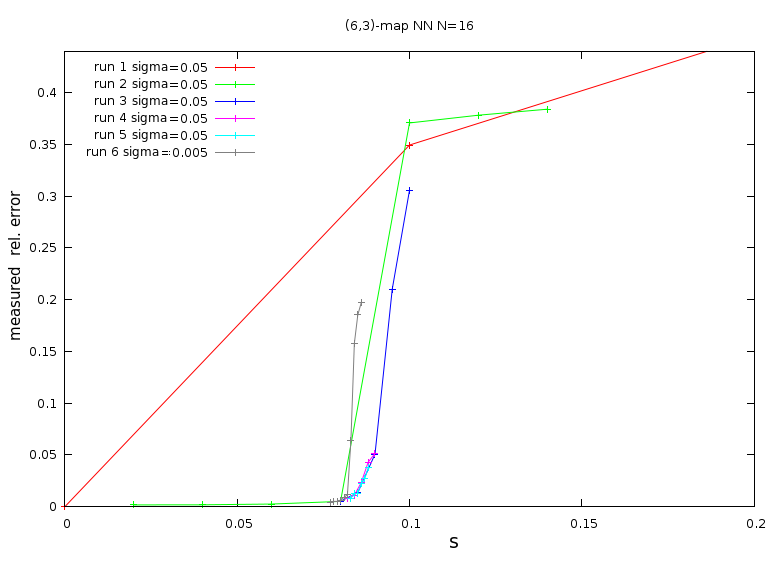}
\end{center}
\caption{}
\label{fig:numerical:search_runs}
\end{figure}

For the first runs $\epsilon=0.05$ was used. In the last run $\epsilon$ was then reduced to 0.005, which result in a sharper edge as can be seen in the graph. It can also be noticed that we did not use ideal nested intervals for the search of $s^*$. In fact, we used about. 10 steps for each sweep. This may have been slower, but allows a better overview about the neighborhood around the position of the stability limit. This is very important for the cases where the transition between the equilibriums is not such a sharp increase but instead ``blurred'' by accompanying fluctuations. Another benefit of the refinement by a factor 5 or 10 at each run is that the search needs less supervision as the sweeps have then a runtime of 12-24 hours. With 30-40 searches running parallel and the need to adapt the parameters after each search run, the required time for the supervision of the whole simulation is thus reduced to a feasible amount. The search is now stopped when the stability limit is determined within a wanted accuracy. For most of our search runs, this accuracy limit was set to 1-5\%. Only for very small values that were obtained by using the HSOM or for very time-consuming searches we already stopped the search earlier which resulted in higher error margins. The stability limits thus obtained are listed in detail in the appendix (cf. Tab.\ref{tab:results:eucl_3_6},\ref{tab:results:eucl_4_4},\ref{tab:results:eucl_6_3}). They are the basis for all the evaluations that are carried out in the following sections.

\section{Verification of the analytical results in the Euclidean case}

As mentioned, we want to verify the analytical results obtained in chapter \ref{ch:numerical}. In order to do so, we generate the three types of maps i.e. the triangular, square and hexagonal maps of the Euclidean plane. The shape of the maps is a rectangular grid of $N\times N$ neurons as seen in Fig.\ref{fig:tesselation:periodic_bound}. Instead of working with a constant length of the edges as we did in the analytical approach, we fix the area of the 2-dimensional map to 1. Thus we use the following normalized edge lengths:
\begin{equation} \label{eqn:analysis:EuclNNdist}
 d_{NN}^{(3,6)} = \frac 2 {\sqrt[4]{3} N} ,\qquad d_{NN}^{(4,4)} = \frac 1 N ,\qquad d_{NN}^{(6,3)} = \frac 2 {\sqrt[4]{27} N}
\end{equation}
In the case of the (3,6)-map, the periodic boundaries (if used) are therefore given by 
$ - \frac 2 {\sqrt[4]{3} N} \pm \frac {\sqrt[4]{3}}2$ in the $x$-direction and $ \pm \frac 1 {2 \sqrt[4]{3}}$ in the $y$-direction. Analogously for the (4,4)-map, we use $\pm \frac 12$ (in both directions) and for the (6,3)-map $\pm \frac 1 {\sqrt[4] 3}$ and $\pm \frac {\sqrt[4] 3} 2$. As we now embed the map one-to-one into the 3-dimensional Euclidean feature space, the same boundaries are used for the uniform sample set, which we use to examine the stability, in two of the three dimensions and, if used, the periodic boundaries in the feature space.\\
\\
\begin{table}[!ht]
\begin{center}
 \begin{tabular}{|l|c|c|c|c|c|c|}
\hline 
&\multicolumn{6}{|c|}{Euclidean case - lattice size} \\  
& 6x6 & 10x10 & 12x12 & 16x16 & 24x24 & 32x32  \\ \hline \hline
number of nodes & 36 & 100 & 144 & 256 & 576 & 1024 \\ \hline
number of meas. per param. & 10000 & 10000 & 10000 & 5000 & 2500 & 2500 \\ \hline
number of adapt.steps & 20 Mio & 20 Mio & 20 Mio & 10 Mio & 5 Mio &  5 Mio  \\ \hline
\end{tabular}
\end{center}
\caption{Choice of the size of the sample sets for Euclidean maps}
\label{tab:numerical:sample_size_SOM}
\end{table}

\FloatBarrier

\subsection{Short-ranged NN neighborhood}

Since the distance $d_{NN}$ depends on the choice of the map type as well as on its ``size'' $N$, the range of the NN neighborhood function differs for most configurations. So, in order to make the ``raw'' results, that we obtained by the Monte Carlo simulation, comparable, we have to renormalize them in respect to $d_{NN}$ i.e. dividing them by the corresponding distance between nearest neighbors. These ``normalized'' stability limits are listed in Tab.\ref{tab:numerical:NN_final} and visualized in Fig.\ref{fig:numerical:NN_final}\footnote{The corresponding errors have hereby been calculated by standard Gaussian error propagation.}.

\begin{table}[!ht]
\begin{center}
 \begin{tabular}{|c|c|c|c|}
\hline 
&\multicolumn{3}{c|}{map type} \\  
lattice size& (3,6)&(4,4)&(6,3)  \\ \hline \hline
6x6&1.627\err{0.008}&1.548\err{0.006}&1.525\err{0.007}\\ \hline
10x10&1.592\err{0.007}&1.540\err{0.010}&1.504\err{0.011}\\ \hline
12x12&1.595\err{0.008}&1.536\err{0.012}&1.504\err{0.014}\\ \hline
16x16&1.600\err{0.011}&1.536\err{0.016}&1.514\err{0.018}\\ \hline
24x24&1.595\err{0.016}&1.536\err{0.024}&1.504\err{0.027}\\ \hline
32x32&1.600\err{0.021}&1.536\err{0.032}&1.495\err{0.036}\\ \hline
mean &1.602\err{0.01}&1.540\err{0.01}&1.508\err{0.01}\\ \hline
analytical&1.604&1.549&1.500 \\ \hline
\end{tabular}
\end{center}
\caption{Normalized stability limits for Euclidean maps using the NN neighborhood function}
\label{tab:numerical:NN_final}
\end{table}

\begin{figure}[!ht]
\begin{center}
\includegraphics[width=0.98\linewidth]{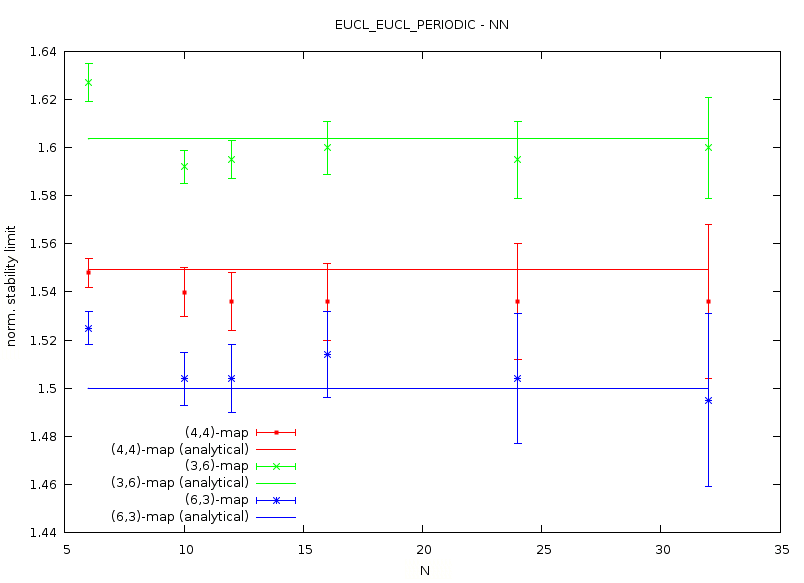}
\end{center}
\caption{Stability limits of the three Euclidean maps for the NN neighborhood function}
\label{fig:numerical:NN_final}
\end{figure}

Except a few outliers in the case of smaller grid sizes, these results verify nicely our analytical calculations and confirm the conjecture that the stability limit rises with number of nearest neighbors (at least in the case of regular Euclidean maps). The outliers may thereby result from the periodic boundary conditions, since they put constraints onto the whole map and may thereby have an effect on the stability of the map.

\FloatBarrier

\subsection{Long-ranged Gaussian neighborhood} \label{sec:numerical:Eucl_Gauss}

In the case of the Gaussian neighborhood function, we want to verify the results we obtained analytically for the cases of either very small or very large $\sigma$ and furthermore analyze at least qualitatively the stability behavior in between.\\
\\
As noted above, the distances between nearest neighbors strictly depend on the type of the map and the size of the grid. So, we again have to renormalize the determined results. This includes $\sigma$, which becomes the relative $\tilde \sigma = \sigma / d_{NN}$,and ,analogously, is $s^*$ normalized in the same way. Fig.\ref{fig:numerical:Gauss_final} now plots the results by using $\tilde \sigma$ as $x$-values and the quotient of the normalized stability limit and $\sigma$ for the second axis.

\begin{figure}[!ht]
\begin{center}
\includegraphics[width=0.98\linewidth]{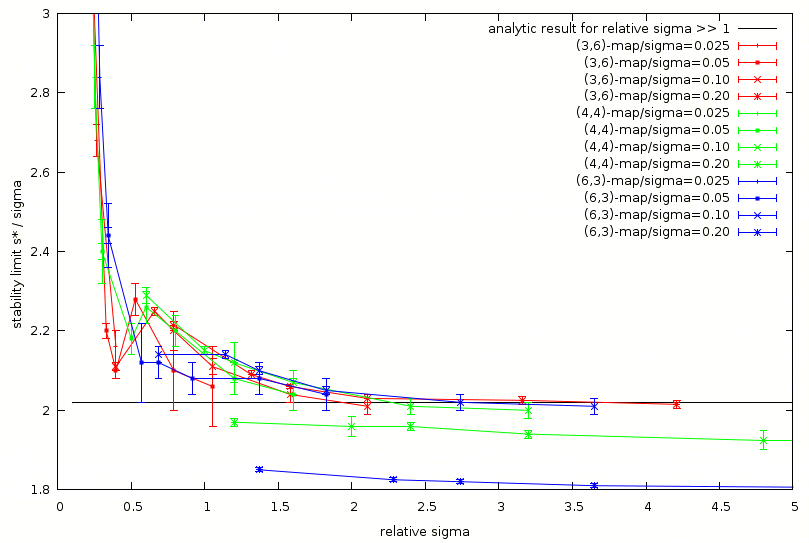}
\end{center}
\caption{Stability limits of the three Euclidean maps for the Gaussian neighborhood function}
\label{fig:numerical:Gauss_final}
\end{figure}

We can easily identify the two subdomains we will discuss below. For small $\tilde \sigma$ the graphs are similar to hyperboles, while for larger $\tilde \sigma$, they tend to approximate a constant function for rather large $\tilde \sigma$. As these are the domains that we already analyzed in the analytic approach, we will examine them now more closely. After that, we will also briefly discuss the domain in between at least qualitatively.

\subsubsection{Case $\tilde \sigma \gg 1$}

In chapter \ref{ch:analytic} we have seen, that for very large $\tilde \sigma$ the stability limit increases approximately proportionally to $\tilde \sigma$. Thus, we examine here the quotient of the normalized stability limit and the normalized range of the Gaussian neighborhood. The results thus obtained have been already shown in Fig.\ref{fig:numerical:Gauss_final}. As we may have expected, the approximation error, i.e. the difference between a point on the curves and the constant function at the stability limit for $\tilde \sigma \gg 1$, decreases exponentially in the domain of large $\tilde \sigma$. This can be easily checked by plotting the logarithm of this error, which has been done in Fig.\ref{fig:numerical:Gauss_3_6_log} for the hexagonal map.

\begin{figure}[!ht]
\begin{center}
\includegraphics[width=0.98\linewidth]{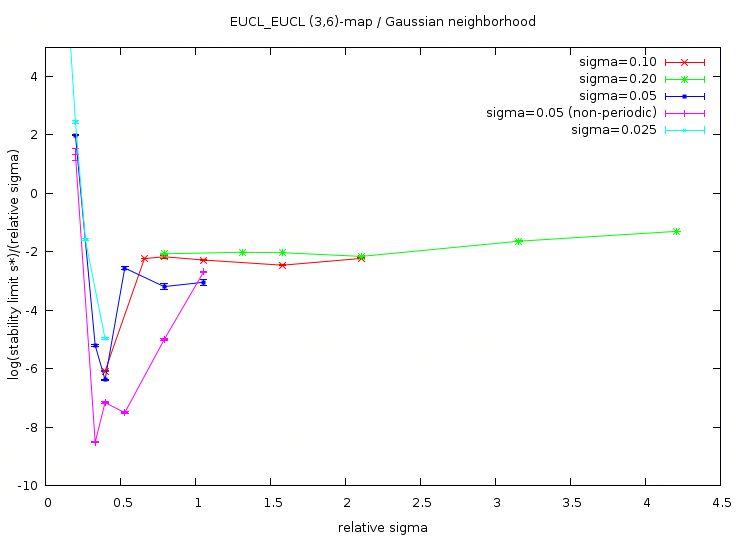}
\end{center}
\caption{Logarithmic plot of the approximation error for a hexagonal (3,6)-map}
\label{fig:numerical:Gauss_3_6_log}
\end{figure}

Thus, we can extrapolate the stability in the limit $\tilde \sigma \gg 1$ by fitting the curves in the domain of large $\tilde \sigma$ in respect to an exponentional function $a \cdot e^{bx} +c$. The values for $c$ thus obtained are the desired results, which are now listed in Tab.\ref{tab:numerical:Gauss_large_final} \footnote{we hereby only consider the parts of the curves for $\sigma=0.1$ and $\sigma=0.2$, which satisfy $\tilde \sigma\geq 1$.}. 

\begin{table}[!ht]
\begin{center}
 \begin{tabular}{|c|c|c|c|}
\hline 
& \multicolumn{3}{c|}{map type} \\
$\sigma$ & (3,6) & (4,4) & (6,3)\\ \hline \hline
\multicolumn{4}{c}{periodic boundaries} \\ \hline
0.1 & 2.01\err{0.01}& 1.99\err{0.01}& 1.99\err{0.02} \\ \hline
0.2 & 2.00\err{0.01}& 1.92\err{0.01}& 1.78\err{0.01} \\ \hline
\multicolumn{4}{c}{non-periodic boundaries} \\ \hline
0.05 & 1.98 \err{0.02}& 1.75\err{0.10}& 1.82\err{0.09} \\ \hline
0.2 & --- & 1.50 \err{0.02}& --- \\ \hline
analytical&\multicolumn{3}{c|}{2.02} \\ \hline
\end{tabular}
\end{center}
\caption{Extrapolated stability limits for Euclidean maps using the Gaussian neighborhood function with large $\sigma$}
\label{tab:numerical:Gauss_large_final}
\end{table}

We see that the analytic results are well verified for the hexagonal maps with and without periodic boundaries and for the square and the triangular map in the case of periodic boundaries and $\sigma=1$. For the other cases we observe some significant deviations in respect to the analytic results. The lower stability limits in the case of missing periodic boundary conditions may be caused by the fact that the neurons at the ``loose'' edge have far fewer neighbors. As we have discussed in the summary of the analytic results, the size of the neighborhood does have a direct impact on the stability of a SOM. The lower stability limits here seem to provide further evidence for this assumption. The reason for the lower stability of the SOM configurations using $\sigma = 0.2$ may be first not so evident, but a possible explanation may be found by taking into consideration, that $\sigma$ is very large in respect to the size of the whole feature space, which has, as is known, a base area of 1. We actually imposed boundary conditions to avoid edge effects. But this does work only locally i.e. the winner neurons only cooperate with one of the infinite many representatives, which are a result of the periodic condition. Thus the cooperation always only works on a volume which is identical to the bounded space i.e. the neighborhood cannot be larger than the space itself. Thus, if the range of the neighborhood is increased beyond this limit, the size of the neighborhood and therefore, as we assume, the stability
remains the same and thus the quotient that we examined above becomes significantly smaller. Thus, also this observed effect may back up our assumption. To be more certain, it would be necessary to perform further studies of SOMs with such large-ranged neighborhood functions.

\FloatBarrier

\subsubsection{Case $\tilde \sigma \rightarrow 0$}

Due to results of the analytic analysis in the preceding chapter, we expect, that, for the case of small (relative) $\sigma$, we observe a stability limit which, unlike to the other limit, does not depend on $\sigma$. To get sufficient small $\tilde \sigma$ we restrict our evaluation of the numerical results to the two smallest choices of $\sigma$ and the lattices with a rather large edge length. According to Eq.\ref{eqn:analysis:EuclNNdist}, this are those with fewer nodes. Fig.\ref{fig:numerical:Gauss_small_final} and Tab.\ref{tab:numerical:Gauss_small_final} show these results.

\begin{table}[!ht]
\begin{center}
 \begin{tabular}{|c|c|c|c|c|c|c|c|}
\hline 
&&\multicolumn{6}{c|}{map type} \\
lattice size & $\sigma$ & \multicolumn{2}{c|}{(3,6)} & \multicolumn{2}{c|}{(4,4)}&\multicolumn{2}{c|}{(6,3)}\\
&& $\tilde \sigma$&norm. $s^*$&$\tilde \sigma$&norm. $s^*$&$\tilde \sigma$&norm. $s^*$  \\ \hline 
\multicolumn{8}{c}{periodic boundaries} \\ \hline
6x6&0.025&0.099&0.703\err{0.004}&0.150&0.738\err{0.012}&0.171&0.827\err{0.014}\\ \hline
10x10&0.025&0.165&0.711\err{0.007}&0.250&0.710\err{0.020}&0.285&0.809\err{0.023}\\ \hline
12x12&0.025&0.197&0.719\err{0.016}&0.300&0.720\err{0.024}&0.342&0.834\err{0.027}\\ \hline
16x16&0.025&0.263&0.705\err{0.011}&0.400&---&0.456&--- \\ \hline
6x6&0.05&0.197&0.691\err{0.008}&0.300&0.720\err{0.006}&0.342&0.834\err{0.007}\\ \hline
10x10&0.05&0.329&0.724\err{0.007}&0.500&1.090\err{0.020}&0.570&1.208\err{0.057}\\ \hline
12x12&0.05&0.395&0.829\err{0.008}&0.600&1.356\err{0.024}&0.684&1.450\err{0.027}\\ \hline
\multicolumn{8}{c}{non-periodic boundaries} \\ \hline
6x6&0.05&0.197&0.655\err{0.039}&0.300&0.696\err{0.006}&0.342&0.827\err{0.007}\\ \hline
10x10&0.05&0.329&0.684\err{0.007}&0.500&1.070\err{0.010}&0.570&1.106\err{0.023}\\ \hline
12x12&0.05&0.395&0.774\err{0.016}&0.600&1.284\err{0.024}&0.684&1.299\err{0.027}\\ \hline
\multicolumn{2}{|c|}{constant fit}&\multicolumn{2}{c|}{0.703\err{0.008}}&\multicolumn{2}{c|}{0.726\err{0.006}}&\multicolumn{2}{c|}{0.826\err{0.003}}\\ \hline
\multicolumn{2}{|c|}{analytical}&\multicolumn{2}{c|}{0.707}&\multicolumn{2}{c|}{0.612}&\multicolumn{2}{c|}{0.707} \\ \hline
\end{tabular}
\end{center}
\caption{Normalized stability limits for Euclidean maps using the Gaussian neighborhood function with small $\sigma$}
\label{tab:numerical:Gauss_small_final}
\end{table}

\begin{figure}[!ht]
\begin{center}
\includegraphics[width=0.98\linewidth]{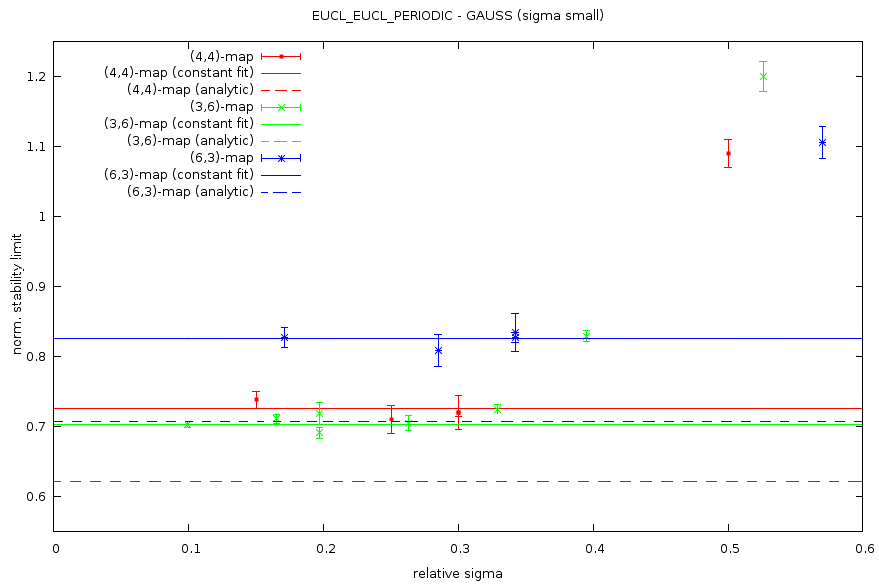}
\end{center}
\caption{Stability limits of the three Euclidean maps for the Gaussian neighborhood function with small $\sigma$ (only for periodic boundaries)}
\label{fig:numerical:Gauss_small_final}
\end{figure}

We now furthermore fitted the data points, which have a $\tilde \sigma$ lesser than 0.35, using constant functions and taking the error margins into account. The results thus obtained have also been added to the table and the graph. Remembering the analytic results, we have expected, that the stability limit is $s^* \approx 0.612$ for the square map and $s^* \approx 0.707$ for the other two maps. But instead, the numerically determined stability limits for all but one are a good deal higher. Only the hexagonal (3,6)-map shows the stability limit that we calculated before. The analytic value lies even inside the error margins of the computed constant fit. As we can not ad hoc pinpoint the source of this very large systematic error for the other two cases, we will delay the discussion until the results of the VQ are evaluated to see if this deviation only occurs when using the Gaussian neighborhood function or even when vector quantization is used.

\FloatBarrier

\subsubsection{Case $\tilde \sigma$ ``between the limits''}

Even if we have omitted a detailed discussion of the $\sigma$ between the limits, we can explain, at least, a basic aspect of the shape of the curves shown in Fig.\ref{fig:numerical:Gauss_final}. As we neglected all higher terms of $k_1^2$ and $k_2^2$ in the approximation for large Gaussian neighborhood ranges, these terms would have decreased the $s^2 k^2$ term in the equation of $\lambda_3^{\hat B}$. Thus, the resulting stability limit increases then. This is the reason why the curves are converging to the proportional factor of the limit from above.

\subsection{Ultra-short-ranged VQ neighborhood}

The last analytic results, which we still have to verify, are those for the ultra-short-ranged neighborhood i.e. vector quantization. In the analytic approach we have determined the same stability limit for the three types of maps (using periodic boundaries) as for the Gaussian neighborhood function in the limit of small $\sigma$. Now, by using the simulation we obtain the corresponding numerical results shown in Tab.\ref{tab:numerical:VQ_final} and Fig.\ref{fig:numerical:VQ_final} (The renormalization is thereby the same as in the NN case).

\begin{table}[!ht]
\begin{center}
 \begin{tabular}{|c|c|c|c|}
\hline 
&\multicolumn{3}{|c|}{map type} \\  
lattice size& (3,6)&(4,4)&(6,3)  \\ \hline \hline
6x6&0.691\err{0.004}&0.690\err{0.030}&0.821\err{0.007}\\ \hline
10x10&0.691\err{0.007}&0.680\err{0.010}&0.775\err{0.011}\\ \hline
12x12&0.687\err{0.016}&0.696\err{0.024}&0.793\err{0.027}\\ \hline
16x16&0.684\err{0.011}&0.672\err{0.032}&0.766\err{0.018}\\ \hline
24x24&0.695\err{0.032}&0.696\err{0.072}&0.766\err{0.055}\\ \hline
32x32&0.632\err{0.063}&0.704\err{0.064}&0.802\err{0.073}\\ \hline
constant fit&0.690\err{0.005}&0.683\err{0.003}&0.803\err{0.010}\\ \hline
analytical&0.707&0.612&0.707 \\ \hline
\end{tabular}
\end{center}
\caption{Normalized stability limits for Euclidean maps using the VQ neighborhood function}
\label{tab:numerical:VQ_final}
\end{table}

\begin{figure}[!ht]
\begin{center}
\includegraphics[width=0.98\linewidth]{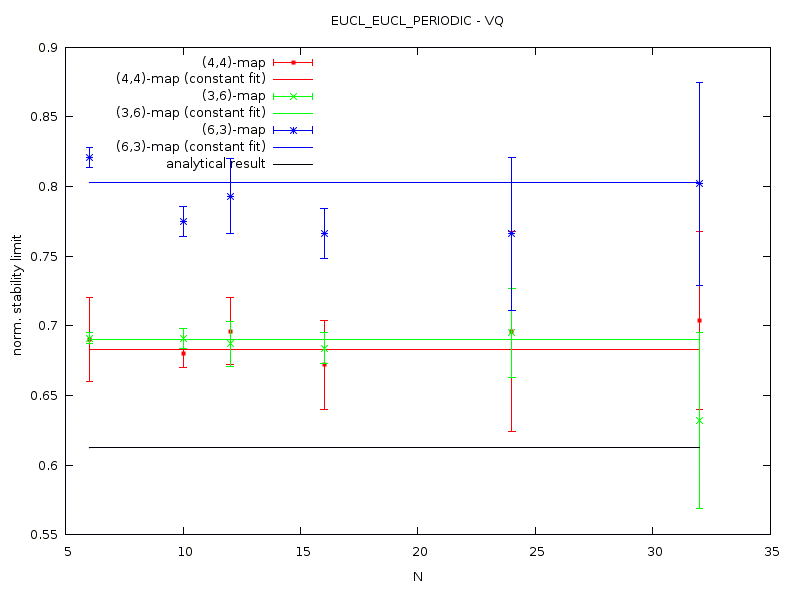}
\end{center}
\caption{Stability limits of the three Euclidean maps for the VQ neighborhood function}
\label{fig:numerical:VQ_final}
\end{figure}

The first thing, that can be noticed, is, that these results show the same anomaly in respect to the analytic limits as for the Gaussian neighborhood function, i.e. the stability limits of both the square and the triangular lattice exceed the analytic value significantly. These discrepancies thereby do not show any sign of dependence on the grid size, which let us assume that they are not caused by any boundary effect. Indeed, the numerical values match the ones obtained in the Gaussian case quite well. The same is almost true for the comparison of the numerical and analytic result for the hexagonal case, which may lead to the assumption that either we have a numerical error that only shows up for the lesser symmetric grids (we have seen such a difference in the behavior of the hexagonal map for $\sigma \gg 1$ as well) or, also possible, we missed in the analytic approach a constraint of $\vec k$, which may then increase the infimum of $\kappa$,$\tilde \kappa$ and $\tilde {\tilde \kappa}$ and thus the corresponding stability limits for the VQ and the Gaussian neighborhood function with small $\sigma$.

\FloatBarrier

\section{Analysis of configuration with hyperbolic map space (HSOM)}

After having a look at the ''Euclidean-Euclidean`` case above we want now investigate the stability properties of SOMs consisting of non-Euclidean map or feature spaces. More precisely, we will start by examining the case where we use a HSOM i.e. having an hyperbolic map space and continue with the case of having additionally a hyperbolic feature space in the next section.\\
\\
In chapter \ref{ch:tess}, we have discussed two possible truncations for the hyperbolic map. Due to lack of time as the implementation of software took longer as expected, we, unfortunately, had only the time to examine one of them. We decided to use the equi-hierarchical method, i.e. we created the map by restricting the depth of the hierarchical structure, because this is the method used in founded papers (e.g. \cite{ritter},\cite{ontrup2}) and we therefore obtain results which are comparable and may e.g. help to estimate the sensibility of the used HSOM in respect to non-hierarchical noise in a hierarchical sample set.\\
Since we nonetheless use a disk-like spread of the sample set (cf. section \ref{sec:distr:def_HE}) with the radius of $1.5 \cdot d_{NN}$, this will result in certain deformations of the embedding of the map in the feature space, as the non-circular map will try to fill the whole disk as best as possible. These deformations may then cause a difference between the so determined stability limits and the ones we would gain if we would use the other truncation method which would lead to a more circular map and therefore in lesser distorsions.\\
\\
Nevertheless, we try to determine certain stability properties of the HSOM.

\subsection{Stability vs. Size}

One of the most important features of this SOM, which is gained by using a hyperbolic map, is the capacity to easily adapt to higher-dimensional Euclidean space as the number of nodes at the edge growth exponentially in respect to the number of layers/radius. Tab.\ref{tab:numerical:sample_size_HSOM} lists, besides the number of measurements and size of sample sets we used, the number of nodes in the map and of the nodes in the outermost layer.
\begin{table}[!ht]
\begin{center}
 \begin{tabular}{|l|c|c|c|}
\hline 
&\multicolumn{3}{c|}{HSOM case - maps} \\  
&(3,7)-\{3,4,5\} layers&(3,9)-\{3/4\} layers &(4,5)-\{3/4/5\} layers\\ \hline \hline
\#nodes&85,232,617&271,1306&61,166,441 \\ \hline
\#nodes in outer layer&56,147,385&216,1035&40,105,275\\ \hline 
\# meas./param.& 2500 & 2500 & 2500\\ \hline
\# adapt.steps&3.75 Mio&3.75 Mio&3.75 Mio\\ \hline
\end{tabular}\\
\vspace{0.05 \linewidth}
\begin{tabular}{|l|c|c|} \hline
&\multicolumn{2}{c|}{HSOM case - maps} \\  
&(6,4)-\{3/4/5/6\} layers&(7,3)-\{3/4/5/6\} layers\\ \hline \hline
\# nodes&49,133,353,929&22,40,70,115\\ \hline
\#nodes in outer layer&32,84,220,576&12,18,30,45\\ \hline
\# meas./param.&2500 & 2500\\ \hline
\# adapt.steps&3.75 Mio&3.75 Mio\\ \hline
\end{tabular}
\end{center}
\caption{Size of the sample sets/maps/layers for HSOM}
\label{tab:numerical:sample_size_HSOM}
\end{table}
We want to start  by examining the dependency of the stability limit on the size of the map, or, to be more exact, the number of nodes at the edge. We therefor calculate the ratio between the stability limits of maps which differ only in respect of the number of layers and compare it to the corresponding reciprocal ratio of the number of outer map nodes (cf. Fig.\ref{fig:numerical:HSOM_ratio}).

\begin{figure}[!ht]
\begin{center}
\includegraphics[width=0.98\linewidth]{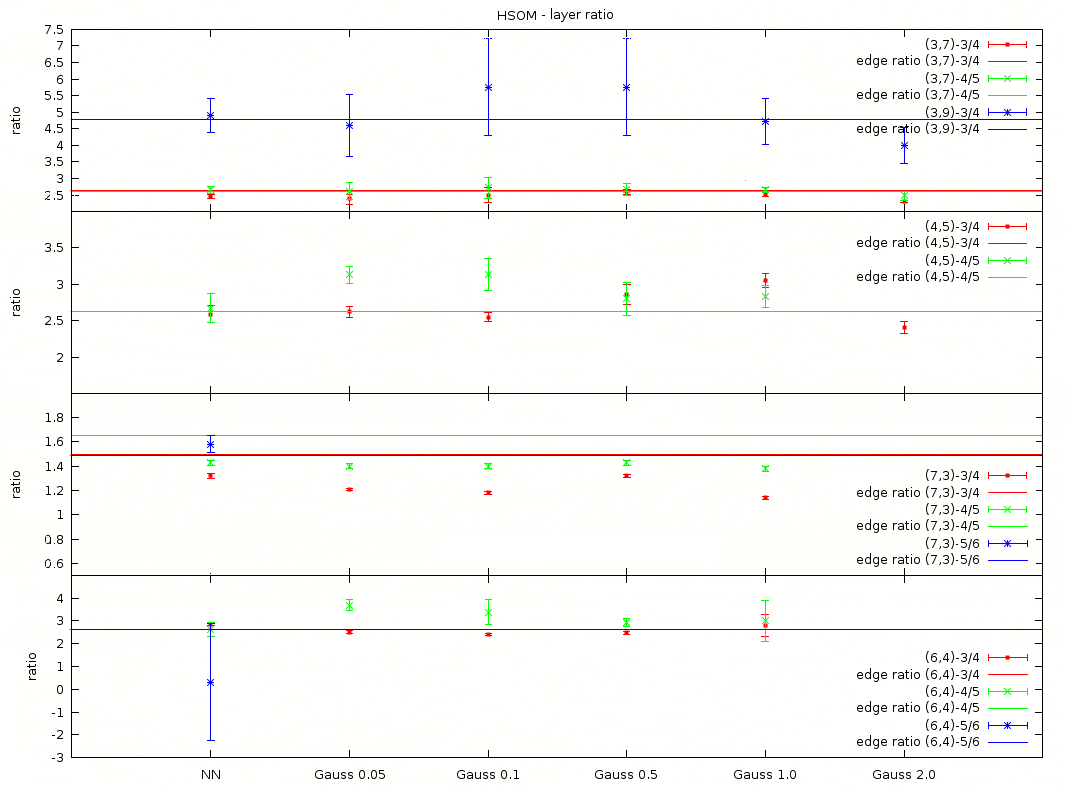}
\end{center}
\caption{Ratio of the stability limits between HSOM maps with different sizes}
\label{fig:numerical:HSOM_ratio}
\end{figure}

Even if the ratio of the number of edge nodes for some of the results does not lie within the calculated error margins, the relation between the size of the outermost layer and the stability is quite clear i.e. the stability limit appear to be inversely proportional to the number of nodes in the outermost ring. This corresponds to the fact that the HSOM can solve or, at least, reduce the dimensional conflict due to its much larger growth rate in respect to the Euclidean extra dimension of the input.

\subsection{Short-ranged NN neighborhood}

It has been verified above, that the stability for the classic SOMs using NN neighborhood functions depends on the type of map. We suggested that the reasons are the higher number of nearest neighbors. We want to check, if this conjecture also holds when a HSOM is used. To be able to compare the different maps and number of layers we use the knowledge about the dependence of the stability on the edge size and thus normalize the stability limit in respect to the number of edge nodes. So we get rid of the already known influence of the different edge size and get the results plotted in Fig.\ref{fig:numerical:HSOM_NN}.

\begin{figure}[!ht]
\begin{center}
\includegraphics[width=0.98\linewidth]{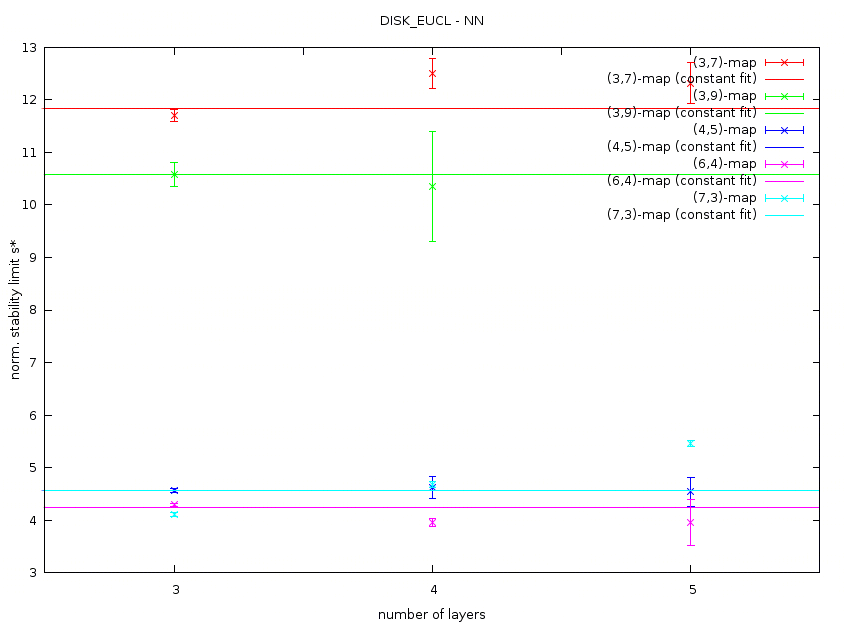}
\end{center}
\caption{Stability limits of the five Hyperbolic maps for the NN neighborhood function}
\label{fig:numerical:HSOM_NN}
\end{figure}

The (3,7)- and (3,9)-maps seem to have the highest stability and there is a gap between them and the others. It is thereby difficult to determine how large the influence of the deformations exactly is (see above). For the (3,6)-map with 3 layers, it may certainly be negligible as this map is mostly circular, but the others will be more effected, which may, for example, cause that the (3,9)-map has a slightly lower stability limit than the (3,6)-map or the lower limit of the (6,4)-map compared to the (7,3)-map.

\subsection{Long-ranged Gaussian neighborhood} \label{sec:numerical:HSOM_Gauss}

Now, we want to check if the HSOM shows similar dependencies on the size of $\sigma$ i.e. range of the Gaussian neighborhood as the classic SOM did above. We therefore determine again the quotient of the stability limit and $\sigma$. By normalizing again the result by the size of the edge, we obtain the results shown in Fig.\ref{fig:numerical:HSOM_gauss_ratio}.

\begin{figure}[!ht]
\begin{center}
\includegraphics[width=0.98\linewidth]{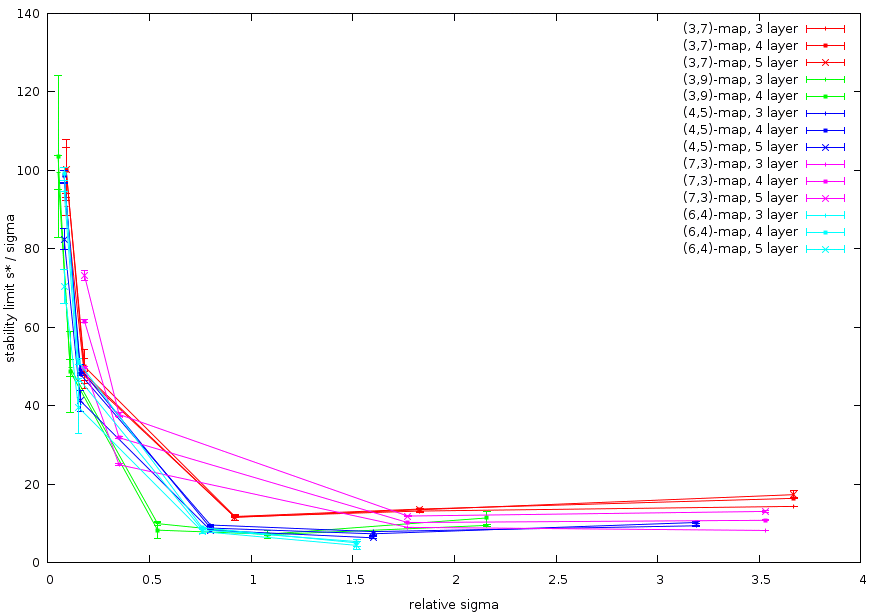}
\end{center}
\caption{Stability limits of the five Hyperbolic maps for the Gaussian neighborhood function}
\label{fig:numerical:HSOM_gauss_ratio}
\end{figure}

The similarity to the corresponding graph in the classic case (cf. Fig.\ref{fig:numerical:Gauss_final}) is quite obvious. We can again identify the three domains, i.e. the domain of very small $\tilde \sigma$, the one with $\tilde \sigma \gg 1$ and the third one in between. We want to take a brief look at the first domain.

\subsubsection{Case $\tilde \sigma \rightarrow 0$}

For the classic SOM, we had observed that in the limit of small $\sigma$ the stability does not depend on $\sigma$ i.e. was constant. To check this for the HSOM, we now have a look at the normalized stability limits for $\tilde \sigma < 0.5$ in Fig.\ref{fig:numerical:HSOM_gauss_small}.

\begin{figure}[!ht]
\begin{center}
\includegraphics[width=0.98\linewidth]{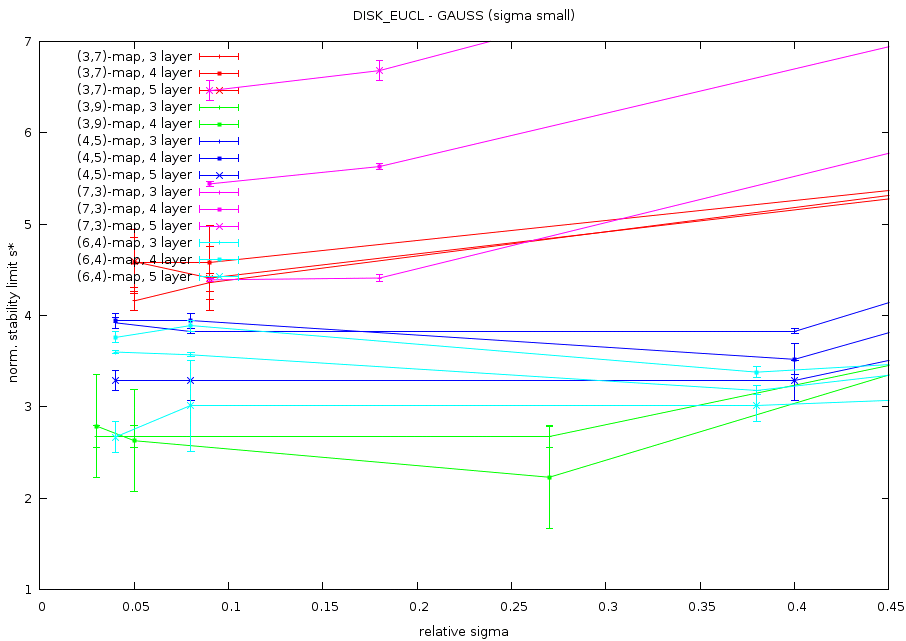}
\end{center}
\caption{Stability limits of the five Hyperbolic maps for the Gaussian neighborhood function with very small $\tilde \sigma$}
\label{fig:numerical:HSOM_gauss_small}
\end{figure}

It can be seen, that the stability limits are indeed nearly constant. Furthermore can we again notice the influence of the deformations as even the stability limits for different sizes of the same map differ significantly. While the (3,6)-map is again less effected, the split-up for the other maps is more intense. This also makes a further comparison of the different maps futile.







\FloatBarrier 

\section{Analysis of configuration with hyperbolic map and feature space (GRiSOM)}

At last, we are going to analyze a SOM with a non-Euclidean feature space. As mentioned in the introduction of this chapter, the adaption steps for this configuration are much more time-consuming as the program has to compute rather complex geodesics instead of simple vector differences. Thus, we only had the time to study one class of maps. We chose the (7,3)-map as it 
has the slowest growth rate. This allowed us despite the lack of time to find at least the stability limits for up to 5 layers and 4 different choices of neighborhoods.

\begin{table}[!ht]
\begin{center}
 \begin{tabular}{|l|c|c|c|} \hline
&\multicolumn{3}{c|}{GRiSOM case - (7,3)-maps} \\  
\# layers& 3& 4& 5 \\ \hline \hline
\# nodes& 22 & 40 & 70\\ \hline
\# meas./param.& 1500 & 1500 & 1500\\ \hline
\# adapt.steps&1.5 Mio&1.5 Mio&1.5 Mio\\ \hline
\end{tabular}
\end{center}
\caption{Choice of the size of the sample sets for the Hyperbolic-Hyperbolic GRiSOM}
\label{tab:numerical:sample_size_GRiSOM}
\end{table}

As there are still too few results to perform a meaningful quantitative analysis, we will just briefly focus on the most significant qualitative results, we can obtain from the stability limits listed in Tab.\ref{eqn:numerical:result_grisom}.

\begin{table}[!ht]
\begin{center}
\begin{tabular}{|l||c|c|c|} \hline
$d_{NN} \approx 0.566$&\multicolumn{3}{|c|}{(7,3)}\\ \hline
neighborhood & 3 layers & 4 layers & 5 layers \\ \hline \hline
NN&1.10\err{0.05}&1.15\err{0.05}&1.180\err{0.05}\\ \hline
Gauss $\sigma$=0.05&0.63\err{0.05}&0.61\err{0.02}&0.68\err{0.04}\\ \hline
Gauss $\sigma$=0.1&0.65\err{0.02}&0.62\err{0.02}& ---\\ \hline
Gauss $\sigma$=0.5&1.25\err{0.01}&1.34\err{0.01}& 1.44\err{0.05}\\ \hline
\end{tabular}
\end{center}
\caption{Results for (7,3)-hyperbolic map in Hyperbolic Poincare disk space}
\label{eqn:numerical:result_grisom}
\end{table}

First we notice, that the stability limit is no longer influenced by the number of layers as it has been for the HSOM. This is evident since the geometry of the input is the same as the map, but its dimension exceeds the map dimension. Thus the SOM face the same kind of dimensionality conflict as the classic SOM. Another similarity to the classic SOM is the observed height of the stability limit. If we normalize, for example, the stability limit for the NN neighborhood function in respect to the edge length of the map, we get approximately $s^*(\mathrm{normalized}) = 2.0$ which lies still relatively close to the corresponding limits of the classic case. Even more obvious are the similarities for the Gaussian neighborhood function. As the edge length is rather large in respect to $\sigma$, the relative range $\tilde \sigma$ is very small ($\approx 0.1$). The relative range for the two smaller choices of Gaussian neighborhoods is thereby lower than the limit of 0.35, below which we observed the constant stability limits in the classic case (cf. section \ref{sec:numerical:Eucl_Gauss}). The same constancy can be found for the Hyperbolic-Hyperbolic case, even if the stability exceeds the found classic ones by a factor of 2. As even $\sigma=0.5$ does not result in a large relative range, but are not able to study the other limit. This and as well the VQ case would be therefore an interesting subject for further future studies.

\section{Summary of numerical results}

To provide a final overview, the results, which have been obtained by evaluating the numerical simulations, have been listed in Tab.\ref{tab:numerical:result}.

\begin{table}[!ht]
\begin{tabular}{|l||c|c|c|c|}\hline
\multicolumn{5}{c}{Classic SOM}\\ \hline
map & Gauss (large $\sigma$) & Gauss (small $\sigma$) & NN & VQ \\ \hline \hline
(3,6) & \err{0.00}&0.703\err{0.008}&1.602\err{0.01}&0.690\err{0.005}\\ \hline
(4,4) & \err{0.00}&0.726\err{0.006}&1.540\err{0.01}&0.683\err{0.003}\\ \hline
(6,3) & \err{0.00}&0.826\err{0.003}&1.508\err{0.01}&0.803\err{0.010}\\ \hline
\multicolumn{5}{c}{Hyperbolic-Hyperbolic GRiSOM}\\ \hline
(7,3) &$\approx 1.13$ & --- & $\approx 2.01$ & --- \\ \hline
\end{tabular}
\caption{Results of the analytic stability analysis ($N_{outer}$=number of edge nodes)}
\label{tab:numerical:result}
\end{table}
For the classic SOM we could therefore verify most of the analytic results as most of the discrepancies between the numeric and corresponding analytic results could be explained. Only stability behavior for the Vector Quantization  still remain to be studied further(cf. Conclusion).\\
\\
The main problem of the evaluation of the HSOM was the underlying effect of the deformations caused by the choice of the truncation method. Nevertheless we could show that the stability limit depend on the number of edge nodes and that all maps exhibit the same stability behavior for the Gaussian neighborhood function that we have seen for the classic SOM before.\\
\\
The evaluation of the Hyperbolic-Hyperbolic GRiSOM finally does not provide much informations as we only had determined the stability limits for just one map, but we at least could deduce that the stability limit does not depend on the size of the map as one would expect since the map and feature space have the same hyperbolic geometry.

\FloatBarrier

\section{''travelling ruler problems``}

We will conclude this chapter about the numerical analysis by returning to the three ''travelling ruler problems`` which we have presented in the motivation. After defining and implementing the GRiSOM we have finally the means to solve all of them using the numerical simulations. We therefore use a map space that is homeomorphic to the $\mathbb S^1$. In this space we embed $N=50$ neurons regularly. The feature space is chosen such that it meets the requirements of the particular problem. Furthermore, we use a Gaussian neighborhood function and the adaption parameters following parameters in all three cases
\begin{equation*}
 \eps = 0.1 \qquad \sigma = 10 \cdot \mathrm{dist.neurons} \cdot 0.02^{\frac m {\mathrm{\#adapt.steps}}}
\end{equation*} 
where $m$ is the index of the current adaption step and $\mathrm{\#adapt.steps}=10000$ is the number of adaption steps. The large $\sigma$ in the beginning allows intense deformations of the initial path to form the rough shape of the final path and as the range of the Gaussian neighborhood function decreasing over time, the path becomes more and more detailed. The positions of the cities are thereby used as the input samples randomly picking at each step one of them. To give a better visualization, the results were finally plotted into the maps that we used to motivate these problems.

\subsubsection*{Emperor Frederick I Barbarossa}

In this first case  we work with an Euclidean feature space with no boundaries. As mentioned, the samples, that we use to train the SOM, are the positions of the cities are given by the coordinates in the map (i.e. the coordinates of the respective pixels in the image of the map) as shown in Tab.\ref{tab:numeric:medieval_cities}.

\begin{table}[!ht]
\begin{center}
 \begin{tabular}{|l|c|c|}
\hline 
&\multicolumn{2}{|c|}{position} \\  
city&x&y\\ \hline \hline
Frankfurt&201&-298\\ \hline
Roma&334&-708\\ \hline
Jerusalem&1100&-1095\\ \hline
Gallipoli&790&-772\\ \hline
Buda&540&-445\\ \hline
\end{tabular}
\end{center}
\caption{cities and their positions in the Barbarossa TRP}
\label{tab:numeric:medieval_cities}
\end{table}

\begin{figure}[!ht]
\begin{center}
\includegraphics[width=0.46\linewidth]{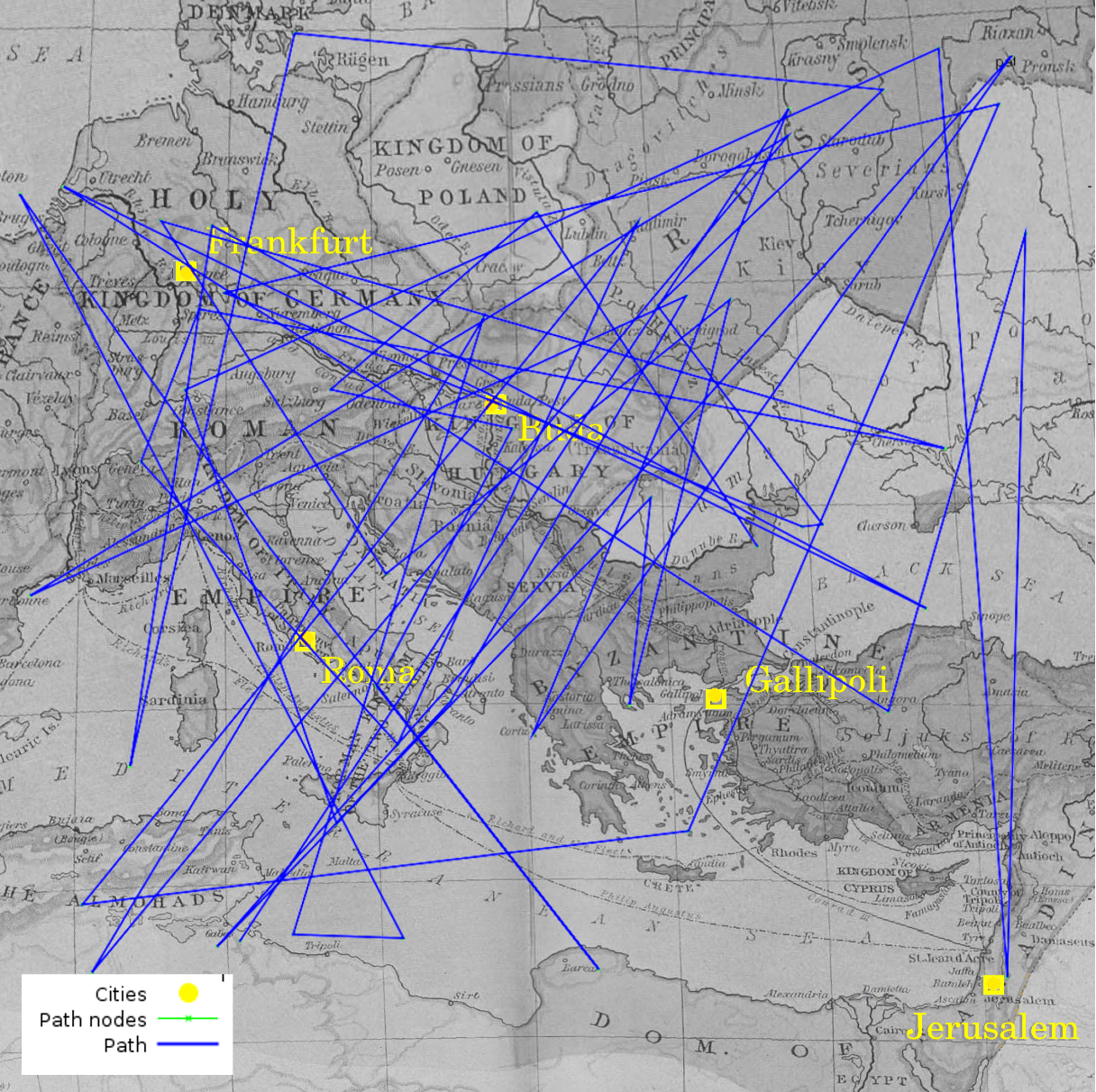}
\hspace{0.06\linewidth}
\includegraphics[width=0.46\linewidth]{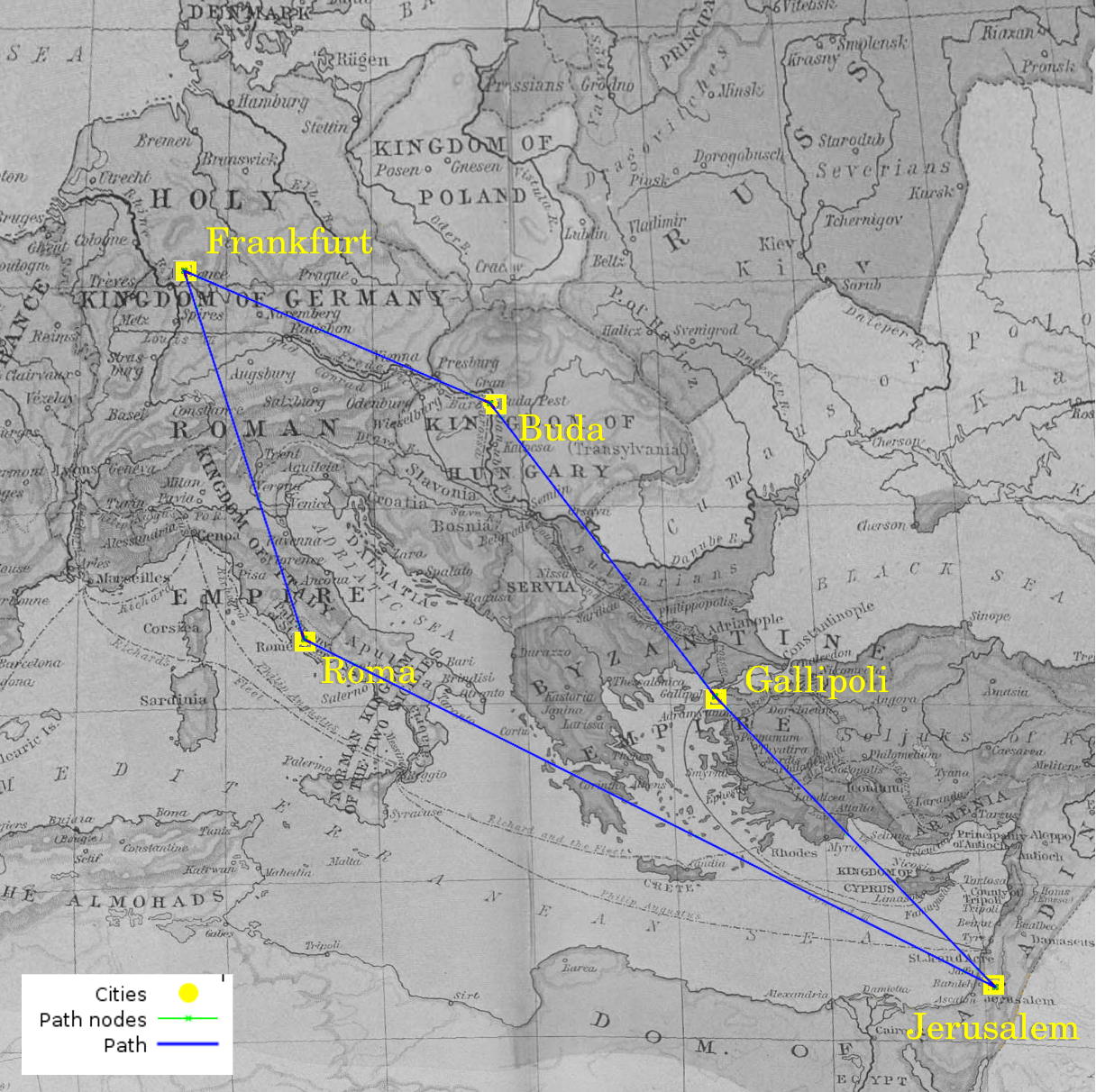}
\end{center}
\caption{Traveling ruler problem: (left) init (right) final path}
\end{figure}
\FloatBarrier
\subsubsection*{Pres. Barack Obama} 

For the case of the TRP of Pres. Obama,  we now have an non-Euclidean feature space i.e. sphere. Again, the samples, that we use to train the SOM, are the positions of the cities, which  are this time given by the geographic coordinates on earth as shown in Tab.\ref{tab:numeric:obama_cities}.

\begin{table}[!ht]
\begin{center}
 \begin{tabular}{|l|c|c|}
\hline 
&\multicolumn{2}{c|}{position} \\  
city&longitude&latitude\\ \hline \hline
Washington D.C.&38N&77W\\ \hline
Berlin&52N&13E\\ \hline
Jerusalem&31N&35E\\ \hline
Moskow&55N&37E\\ \hline
Nairobi&1S&36E\\ \hline
Beijing&39N&116E\\ \hline
\end{tabular}
\end{center}
\caption{cities and their positions in the Obama TRP}
\label{tab:numeric:obama_cities}
\end{table}

Using the GRiSOM with a Spherical feature space, gives us the shortest path shown in Fig.\ref{fig:numerical:trp_obama}.

\begin{figure}[!ht]
\begin{center}
\includegraphics[width=0.48\linewidth]{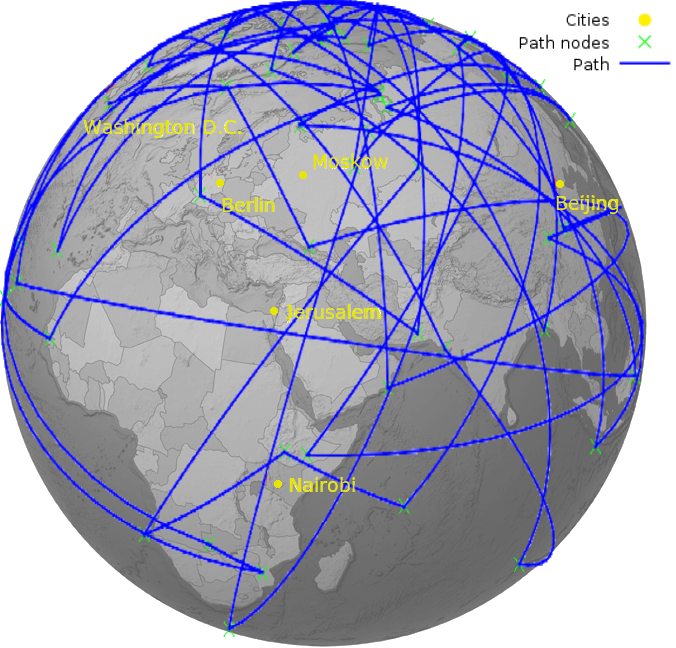}
\includegraphics[width=0.48\linewidth]{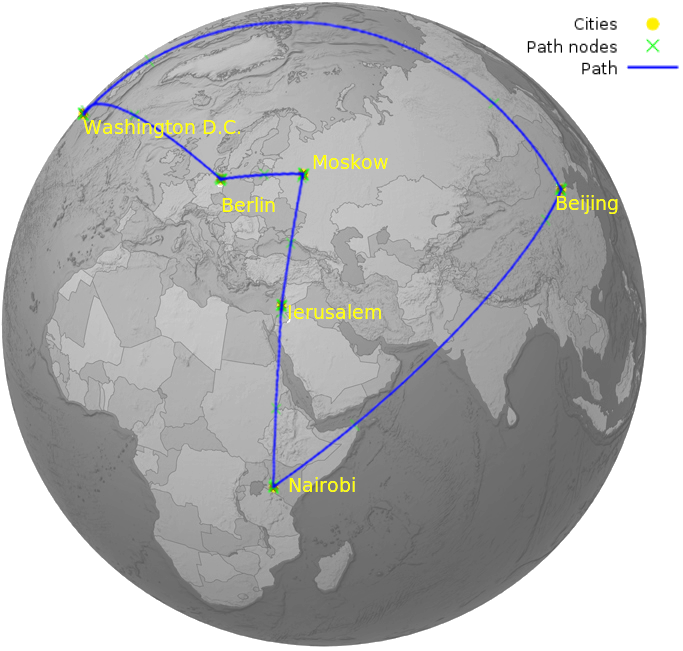}
\end{center}
\caption{Traveling ruler problem: (left) init (right) final path}
\label{fig:numerical:trp_obama}
\end{figure}

It can be noticed that all the neurons lie perfectly on the geodesics that connect the cities. The minimal flight distance needed to visit these 5 cities by following the linear map, is 35 232,3 km.
\FloatBarrier
\subsubsection*{President Kse'nu} 

The last of the three problems was situated in the hyperbolically-curved space, where we have to minimize the flight path of Kse'nu pan-galactic cruiser on its tour to visit several galaxies located at the (fictional) Poincare disk coordinates listed in Tab.\ref{tab:numeric:galaxies}.

\begin{table}[!ht]
\begin{center}
 \begin{tabular}{|l|c|c|}
\hline 
&\multicolumn{2}{|c|}{position} \\  
galaxy&x&y\\ \hline \hline
Milky Way&0&0\\ \hline
M31&-0.4&0.8\\ \hline
NGC 6822&0.1&0.65\\ \hline
Sombrero Galaxy&-0.8&-0.4\\ \hline
\end{tabular}
\end{center}
\caption{galaxies and their (fictional) positions in the Kse'nu TRP}
\label{tab:numeric:galaxies}
\end{table}

By training the GRiSOM for this data, we obtain the shortest route plotted in Fig.\ref{fig:numerical:galaxies}.

\begin{figure}[!ht]
\begin{center}
\includegraphics[width=0.48\linewidth]{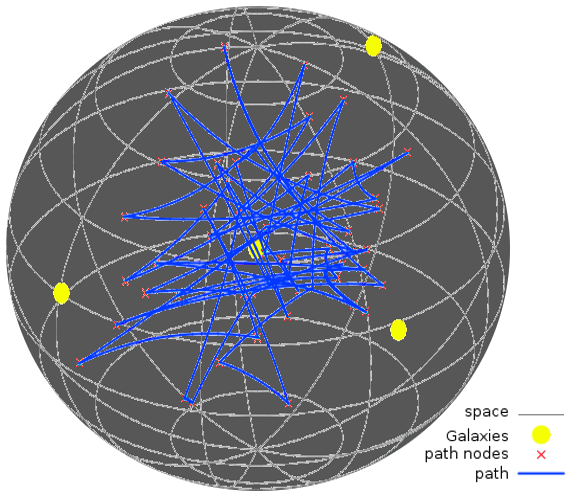}
\includegraphics[width=0.48\linewidth]{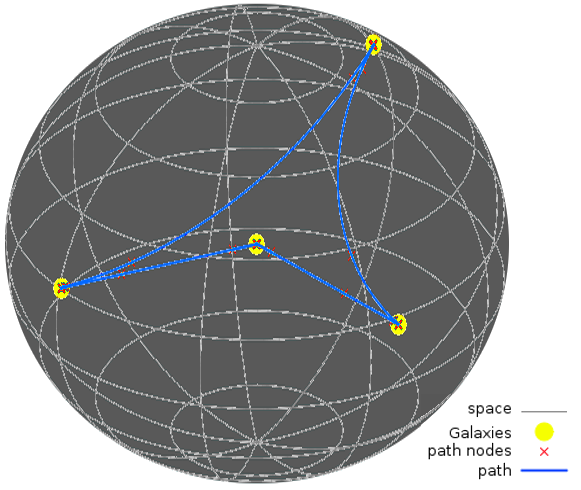}
\end{center}
\caption{Traveling ruler problem: (left) init (right) final path}
\label{fig:numerical:galaxies}
\end{figure}

Thus, we have also finally achieved our primary goal to solve the three travelling ruler problems despite the non-Euclidean feature spaces.

\part{Epilogue}
\chapter{Conclusion and Outlook}

Motivated by the three ``travelling ruler problems'', we have defined in this thesis the General Riemannian Self-Organizing Maps as a modification of the classic SOM in order to be able to use more general map and input spaces. In addition, we have provided a concrete implementation of this GRiSOM for at least Hyperbolic and Spherical map and feature spaces, which enabled us to finally solve the TRPs in their underlying spaces.\\
\\
Our second goal was also achieved as we studied both the triangular and hexagonal Euclidean maps and hence finally got an overall view on the stability properties of SOM with arbitrary regular Euclidean maps in regard to the three used classes of neighborhood functions. We were also able to obtain numerical results concerning the stability of non-Euclidean SOMs as we studied the HSOM and moreover got a glimpse of the stability behavior of a Hyperbolic-Hyperbolic GRiSOM configuration which was not possible with any other formerly defined SOM.\\
\\
All things considered, this thesis raises many new and interesting questions and subjects to be studied in future works. First of all, the gap between the numerical and analytic stability limit for the square and hexagonal Euclidean maps when using very small neighborhoods has to be further analyzed to find the specific reason for it. Secondly, it would be very interesting to extend the stability analysis of the HSOM and GRiSOM by using e.g. the distant-based map truncation to get rid of probable deformation effects or even perform an analytic approach. One of the most interesting subjects can the extension of this generalization of map and input spaces on related techniques like the vector quantization. First tests on spherical input spaces have, for example, shown that in the case of only few quantization vectors the representation errors is often smaller if we use the inherent space instead of mapping the space onto an Euclidean space and using then the classic Euclidean VQ technique. While this can be credited to the better suited competition process, the following short example shows, how VQ can also benefit from the modified update rule using moves along geodesics. 

\subsubsection*{``N-valley problem''}

We assume that we have given a landscape with $n$ concentric circular valleys, pairwise separated by high mountains. The metric is then given by the cost to travel there. While is quite easy (i.e. cheap) to reach any point in the same valley, it is quite expensive to visit any other point outside. To simplify the definition of the geodesics, we assume that the travelling costs inside a valley are thereby very small compared to the cost for travelling the mountains and that the valleys are also very narrow. Thus, the corresponding line element is approximated by
\begin{equation*}
 ds^2 \approx dr^2 + r^2 \left(1-\sum_i \delta(r-R_i)\right) d\Theta + r^2 \eps \sum_i \delta(r-R_i) d\Theta
\end{equation*}
where $r$ and $\Theta$ are the polar coordinates with the origin in the common center of the circular valleys, $R_i$ is the radius of the $i$th valley and $\eps$ the ratio between the cost of moving in the valley and in the mountains. The geodesics therefor exists and are (almost) everywhere unique\footnote{Only for points which are antipodal in the same valley or lie in different valleys the geodesics are not unique.}. They are simply combinations of segments of the circles forming the valleys and almost straight lines perpendicular to them. \\
Our goal is now, to quantize samples that are distributed only in the valleys. The top left plot in Fig.\ref{fig:numerical:VQtest} shows the initial positions of the quantization vector which have been randomly drawn from the area bounded by the outermost valley. First, we used the classic Euclidean VQ. The result can be found in the plot in the top right showing the positions of the quantization vector after a training phase with 10000 samples and the Voronoi tessellation in respect to the Euclidean metric. Although the result is stationary, it is not a good quantization for intrinsic structures of the valleys. Now, we modified the competition process to take the metric, we defined above into account. This leads to the result plotted in the bottom left of Fig.\ref{fig:numerical:VQtest} where the Voronoi cells are now defined by the metric of the valleys. While one outermost vector represents nearly all the valleys, the rest of them is located near the origin. This state is not even stationary for finite learning rates as it happens that the outermost vector, by following an Euclidean line, gets closer to the origin than the outermost of the rest of the vectors. In this case, the roles between both are swapped. Finally, both the competition and adaption process have been modified as described in for the GRiSOM, which will finally provide the desired quantization. It is note-worthy, that, given the same number of vectors as valleys initialized as described above and a sufficient long training phase (or large enough learn rate), this quantization is always obtained. If a vector quantize more than one ring at a given state, it is certain due to the ergodicity of the stochastic VQ process, that the vector will be at one point in time further away from the innermost valley represented by it than a vector lying closer to the origin. This will allow the latter one to take over this inner valley. Thus the vectors ``trickle'' towards
the outer valleys until each of them represents one valley.
\begin{figure}[!ht]
\begin{center}
\includegraphics[width=0.48\linewidth]{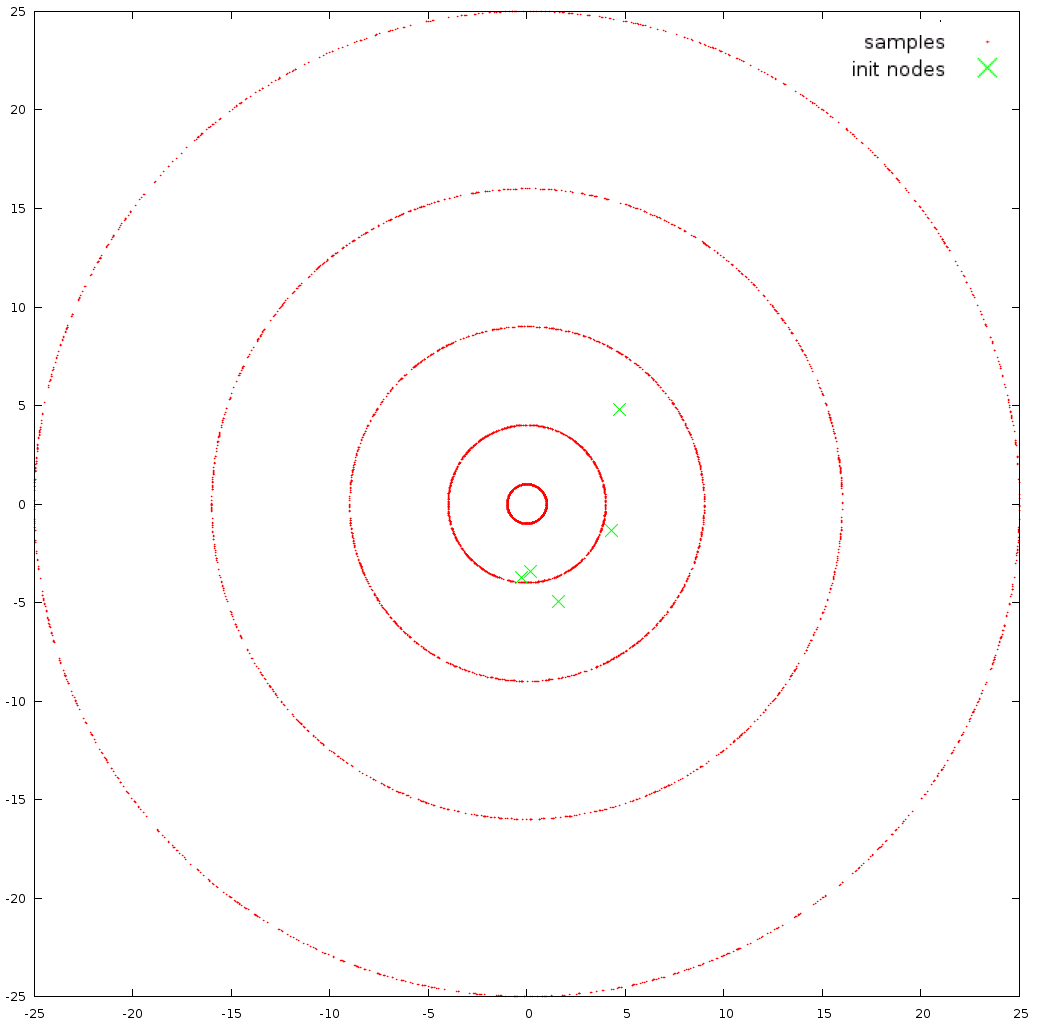}
\includegraphics[width=0.48\linewidth]{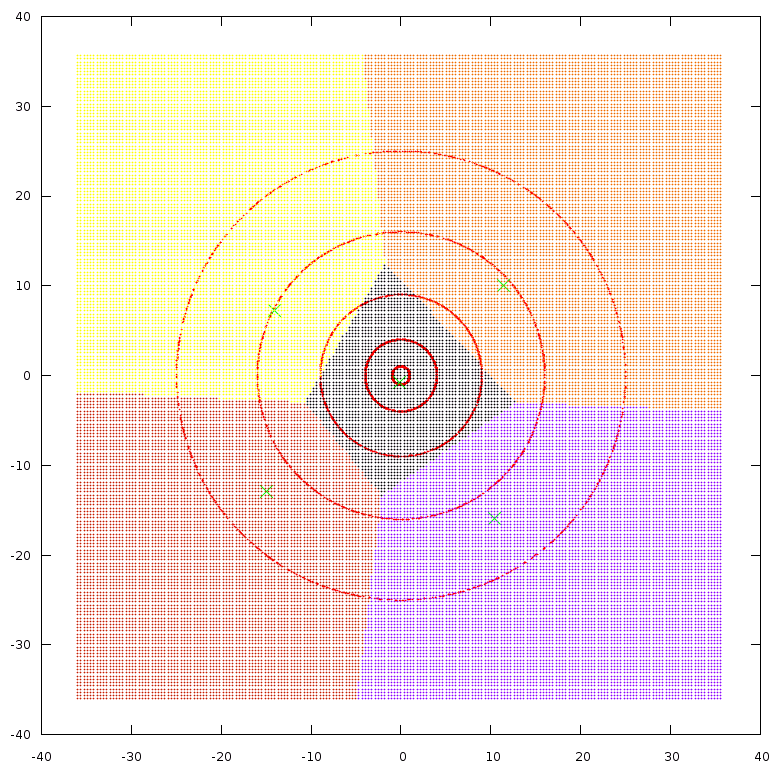}\\
\includegraphics[width=0.48\linewidth]{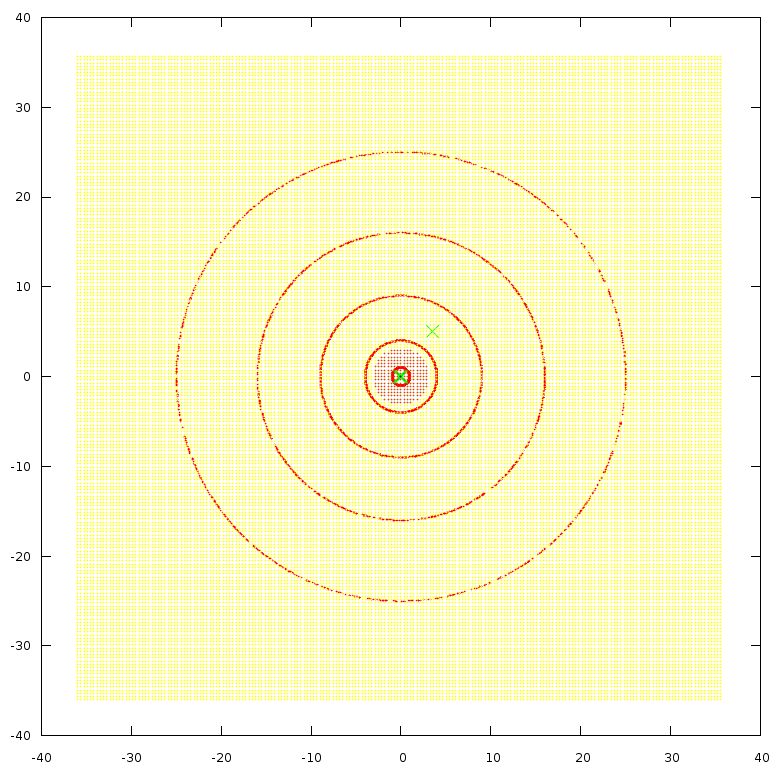}
\includegraphics[width=0.48\linewidth]{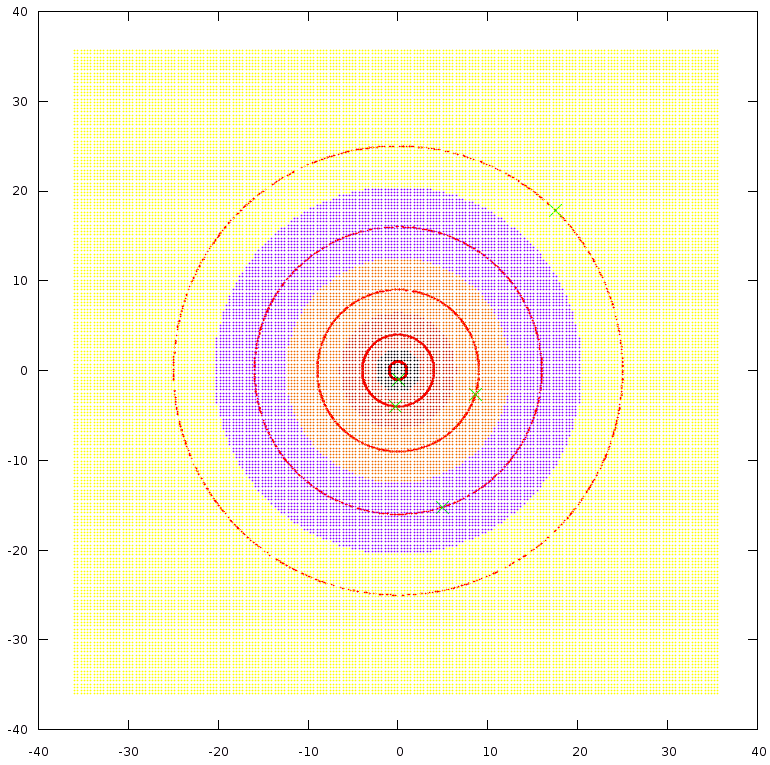}
\end{center}
\caption{VQ in ``N-valley problem'' (top left) initial state (top right) classic VQ (bottom left) VQ with modified competition and classic update process (bottom right) VQ with modified competition and update process}
\label{fig:numerical:VQtest}
\end{figure}
\\ \\
Even if the conditions that have specified are very particular, this example serves well to show the possibilities that are inherent to our proposed modification and may motivate future work on this field of study.

\tocsection{Bibliography}
\nocite{*}
\bibliographystyle{alpha}
\bibliography{diplom}

\tocsection{List of Figures}
\listoffigures

\part{Appendix}
\appendix

\chapter{Calculations/Proofs}

This chapter contains the most important (longish) calculation that have been skipped or shortened in chapter \ref{ch:analytic} due to lack of space. We start with closing the gap in the derivation of the Fokker-Planck equation, and list then derivations and geometric calculations which had been skipped in chapter \ref{ch:analytic}, but are not too obvious. The proofs of the theorems in chapter \ref{ch:distr} are finally concluding this chapter.

\section{Derivation of the Fokker-Planck-Equation}

By inserting in the \emph{Chapman-Kolmogorov equation} the transition probability (Eq.\ref{eqn:stability:trans_prob}) we get
\begin{equation*}
\tilde S(w,t+1) = \int  d^N w' Q(w,w') \tilde S(w',t) = \sum_r \int d^Nw  \int_{F_r(w_r)}  dv P(v) \delta(w-T(w',v,\eps))\tilde S(w',t) \label{eqn:stability:10}
\end{equation*}
To simplify the integral we'll now substitute $w:= T(w',v,\eps)$. Therefore we need the reciprocal Jacobian determinant
 \begin{eqnarray*}
\pdiff {T_{rm}}{w_{r'n}} & = & \pdiff {(w_{rm} + \eps \ho (v_m - w_{rm})}{w_{r'n}} = \delta_{rr'}(1- \eps  \ho) \\
\Rightarrow J(\eps) &=& \left[ \det (\delta_{rr'} ( 1- \eps \ho)^d) \right]^{-1} \\
&\ind{\mathrm{diagonal\ matrix}}& \left[ \prod_r (1- \eps \ho) \right]^{-d} 
\end{eqnarray*} 
where $s$ is the winning neuron for sample $v$ in the state $w'$. Thus we can now easily perform the integration by $w$ and resubstitute $w$.
\begin{eqnarray*}
             \tilde S(w, t+1) & \ind{10}&  \sum_r \int J(\eps) dz  \int_{F_r(T^{-1}(z,v,\eps)} dv P(v) \delta(w-z) S(T^{-1}(z,v,\eps)) \quad \footnotemark  \\
	&\ind{\delta fct.}& \sum_r J(\eps) \underbrace{\int_{F_r(T^{-1}(w,v,\eps))} dv}_{\int dv \chi_r(T^{-1}(w,v,\eps),v)} P(v) S(T^{-1}(w,v,\eps))  
             \end{eqnarray*} \footnotetext{Subst. $T(w',v,\eps)=z \Rightarrow dw' = dz J(\eps)$}
where $\chi_r(w,v)$ is the characteristic function of the Voronoi cell of neuron $r$:
\begin{equation*}
 \chi_r(w,v) = \left\{\begin{array}{l@{,\quad}l} 1 & \mathrm{if\ } v \in F_r(w) \\  0 & \mathrm{else\ } \end{array} \right.
\end{equation*}
Thus we obtain
\begin{eqnarray*} 
\Rightarrow w_r &=& \left[ T(w',v,\eps) \right]_r = w'_r + \eps \ho (v-w'_r) = w'_r (1- \eps \ho) + \eps \ho v\\
\Rightarrow w'_r &=& \left[ T^{-1}(w,v,\eps)\right]_r = \frac{w_r -  \eps \ho v}{1- \eps \ho} = \frac {w_r}{1- \eps \ho} - \eps \underbrace{\frac {\ho }{1-\eps \ho}}_{=:h_{rs}} v \approx w_r  - \eps h_{rs} v
 \end{eqnarray*}
where we have introduced  the function $h_{rs} := \frac {\ho }{1-\eps \ho}$ which difference to $\ho$ is only of an order of \eps.
Since $ \eps$ is small, we can expand $\tilde S(w',t)$ and likewise $J(\eps)$ and obtain
\begin{eqnarray*}
\tilde S(T^{-1}(w,v,\eps),t) &\stackrel{\mathrm{Taylor\ at\ w}}=& \tilde S(w,t) + \sum_{r,m} (\left[T^{-1}(w,v,\eps)\right]_{rm} - w_{rm} ) \pdiff{\tilde S}{w_{rm}}(w,t) \\ 
&+& \frac 12 \sum_{rm} \sum_{r'n} (\left[T^{-1}(w,v,\eps)\right]_{rm} - w_{rm} )(\left[T^{-1}(w,v,\eps)\right]_{r'n}\\ & -& w_{r'n} )\frac{\partial^2 \tilde S}{\partial w_{rm} \partial w_{r'n}}(w,t) + \Landau(\eps^3) \\ 
&=& \tilde S(w,t) + \sum_{rm} (w_{rm} + \eps h_{rs} (w_{rm}- v_m) - w_{rm}) \pdiff{\tilde S}{w_{rm}}(w,t)\\ &+& \frac 12 \sum_{rm} \sum_{r'n} \eps^2 h_{rs}(w_{rm}-v_m)h_{r's}(w_{r'n} - v_n) \frac{\partial^2 \tilde S}{\partial w_{rm} \partial w_{r'n}}(w,t) + \Landau(\eps^3)  
\end{eqnarray*}
and
\begin{eqnarray*}
 J(\eps) &=& 1 + \eps \pdiff{J}{\eps}|_{\eps=0} + \Landau(\eps^2) \qquad (\mathrm{Taylor\ of\ \eps\ at\ 0}) \\
&=& 1 + \eps \frac {(-d)\sum_r (-\ho)\sum_{r' \neq r}(1-\tilde \eps \hop)}{\left[\prod_r (1- \tilde \eps \ho)\right]^{d+1}}|_{\tilde \eps =0} + \Landau(\eps^2) \\
&=& 1 + \eps d \sum_r \underbrace{\frac \ho {(1- \tilde \eps \ho)}}_{h_{rs}} \frac{\prod_{r' \neq r}  (1- \tilde \eps \hop)}{\prod_{r' \neq r}  (1- \tilde \eps \hop)(\prod_{t} (1- \tilde \eps h_{ts}^0)^d  }|_{\tilde \eps =0} + \Landau(\eps^2) \\
& = & 1 + \eps d \sum_r h_{rs} + \Landau(\eps^2) \ind{\mathrm{transl.inv.}} 1+ \eps \underbrace{d \sum_r h_{r0}}_{=:J_1} + \Landau(\eps^2) 
\end{eqnarray*}
$h_{rs}$ could be replaced by $h_{r0}$ since we sum in $J_1$ over the whole set of neurons.
Combining these result with the result of the integral, we get
\begin{eqnarray}
 && \frac 1 \eps ( \tilde S(w, t+1) - \tilde S(w,t))\ind{13} \frac 1 \eps  \left[\sum_r J(\eps) \int dv \chi_r(T^{-1}(w,v,\eps),v) P(v) S(T^{-1}(w,v,\eps)) \right. \\ && \left. - \tilde S (w,t) \right)\nonumber  \\ 
&=& \frac 1 \eps  \left(\sum_r (1+\eps J_1 + \Landau(\eps^2)) \int dv \chi_r(T^{-1}(w,v,\eps),v) P(v) (\tilde S(w,t)  \right.  \nonumber \\  
&+& \left. \sum_{rm} (w_{rm} + \eps h_{rs} (w_{rm}- v_m) - w_{rm}) \pdiff{\tilde S}{w_{rm}}(w,t) \right. \nonumber \\ 
&+&  \left. \frac 12 \sum_{rm} \sum_{r'n} \eps^2 h_{rs}(w_{rm}-v_m)h_{r's}(w_{r'n} - v_n) \frac{\partial^2 \tilde S}{\partial w_{rm} \partial w_{r'n}}(w,t) + \Landau(\eps^3)) - \tilde S (w,t) \right] \nonumber \\
& =& \overbrace{\frac 1 \eps ((\underbrace{\sum_s \int dv  \chi_r(T^{-1}(w,v,\eps),v) P(v)}_{=1}) \tilde S(w,t) - \tilde S(w,t)) }^{=0} \nonumber \\
&+& \sum_s \int_{F_s(w)} dv P(v) \sum_{rm} h_{rs} (w_{rm} - v_m) \pdiff{\tilde S}{w_{rm}}(w,t) + J_1 \underbrace{\sum_s \int_{F_s(w)} dv P(v)}_{=1} \nonumber \\
&+& \frac \eps 2 \sum_s \int_{F_s(w)} dv P(v)  \sum_{rm} \sum_{r'n} \eps^2 h_{rs}(w_{rm}-v_m)h_{r's}(w_{r'n} - v_n) \frac{\partial^2 \tilde S}{\partial w_{rm} \partial w_{r'n}}(w,t) \nonumber \\ 
&+& \Landau(\eps^3) + \frac \eps 2 J_2 \tilde S(w,t)     \label{eqn:stability:19}
\end{eqnarray}
If we are in or rather very close to a stationary state, the probability to leave it should be zero i.e. $\tilde S(w,t)$ is peaked around a $\bar w$ which is chosen such that
\begin{equation*}
\int dv P(v) T(\bar w, v, \eps)- \bar w = 0 
\end{equation*}
Thus, we define 
\begin{equation*}
S(u,t) := \tilde S (\bar w +u,t)
\end{equation*}
For the following it is convenient to have following quantities defined:
\begin{equation}
V_{rm}(w) := \sum_s (w_{rm} - \bar v_{sm}) h_{rs} \hat P_s (v) \Rightarrow V_r(w) = \sum_s (w_r - \bar v_s ) h_{rs} \hat P_s(w) 
\end{equation}
\begin{eqnarray*}
&&D_{rmr'n}(w) := \sum_s h_{rs} h_{r's} \left[ (w_{rm} - \bar v_{sm}) (w_{r'n} - \bar v_{sn}) \hat P_s(w)+ \int_{F_s(w)} (v_m -v_n - \bar v_{sm} \bar v_{sn} P(v) dv \right] \\
&\Rightarrow& D_{rr'}(w) = \sum_s h_{rs} h_{r's} \left[ (w_r-\bar v_s)(w_{r'} - \bar v_s)^T \hat P_s(w) + \int_{F_s(w)} (v v^T - \bar v_s \bar v_s^T) P(v) dv \right] 
\end{eqnarray*}
 In the limit of small \eps, we may obtain then a first Fokker-Planck equation for our problem:
\begin{eqnarray*}
&& S(u,t+1) - S(u,t) \stackrel{\delta t\rightarrow 0} \rightarrow \partial_t S(u,t) \\
&\stackrel{(\ref{eqn:stability:19})}\Rightarrow& \frac 1 \eps S(u,t) = J_1 S(u,t) + \sum_{rm} v_{rm}(\bar w + u) \pdiff {S(u,t)}{u_{rm}} \\
&& + \frac \eps 2 \sum_{rmr'n} D_{rmr'n}(\bar w) \frac {\partial^2 S(u,t)}{\partial u_{rm} \partial u_{r'n}} + \frac \eps 2 J_2 S(u,t)
\end{eqnarray*}
We are now going to find more convenient notation of it by finding other expression for the first two terms on the RHS:\\
\\
As the first order term represents the restoring force, is has to vanish at $u=0$ which yields
\begin{eqnarray*}
\sum_{rm} V_{rm}(\bar w +u) \pdiff{S(u,t)}{u_{rm}} &\ind{\mathrm{Taylor}}& \underbrace{-\sum_{rm} \pdiff{V_{rm}}{w_{rm}} S(u,t) +\sum_{rmr'n} \pdiff{V_{rm}}{w_{rm}} S(u,t) \delta_{rr'}\delta_{mn}}_{=0}\\ &+& \sum_{rm}(\underbrace{V_{rm}(\bar w)}_{=0\footnotemark} + \sum_{r'n} \pdiff{V_{rm}}{w_{r'n}}(\bar w) u_{r'n} + \Landau(u^2) ) \pdiff{S(u,t)}{u_{rm}}\\
&\ind{\mathrm{chain rule}}&  -\sum_{rm} \pdiff{V_{rm}}{w_{rm}} S(u,t)  + \sum_{rmr'n} \pdiff{}{u_{rm}} \left(\pdiff{V_{rm}}{w_{r'n}}(\bar w) u_{r'n} S(u,t) \right) 
\end{eqnarray*} \footnotetext{Indeed, the restoring force vanishes in the equilibrium state}

if $\eps \ll 1$ 
\begin{eqnarray*}
V_r(w) &\ind{24}& \sum_s (w_r - \bar v_s) h_{rs} \hat P_s(w) = \sum_s (w_r - \frac 1 {\hat P_s(w)} ) \int_{F_s(w)} dv P(v)v) h_{rs} \\&=& \sum_s (\hat P_s(w) w_r - \int_{F_s(w)} dv P(v) v) h_{rs} =\sum_s \int_{F_s(w)} dv P(v)(w_r - v) h_{rs} \\
T(w,v,\eps)_r &=& w_r + \eps \ho (w_r-v)\\
\Leftrightarrow h_{rs}(w-v) &=& \frac {w_r - T(w,v,\eps)_r}{\eps (1-\eps \ho)} \stackrel{\eps \ll 1}\approx \frac 1 \eps (w_r - T(w,v,\eps)_r)\\
\Rightarrow V_r(w) & =& \sum_s \int_{F_s(w)} dv P(v) \frac 1 \eps (w_r - T(w,v,\eps)_r) = \frac 1 \eps \int dv P(v) (w_r - T(w,v,\eps)_r) 
 \end{eqnarray*}
Thus we obtain:\\
 $\sum_{rm} \pdiff{V_{rm}}{w_{rm}} \ind{28} \frac 1 \eps \sum_{rm} \int dv P(v) (1- \pdiff{T_{rm}}{w_{rm}}) = \frac 1 \eps \int dv P(v) \underbrace{\sum_{rm} (\delta_{rm} - \pdiff{T_{rm}}{w_{rm}})}_{=\Tr(1 - \pdiff{T}{w})} $\\ 
And by using the fact that $det(1 + X) \approx 1 +X$ if $X$ is small, we get:\\
$$\Rightarrow J(\eps) = (1- \eps A) + \Landau(\eps^2) = 1- \eps Tr A + \Landau(\eps^2) $$
Comparison with coefficients. of Eq.\ref{eqn:stability:19} yields\\
 $J(\eps) = 1+ \eps J_1 + \Landau(\eps^2) $\\ 
$\Rightarrow J_1 = - \Tr A = - \frac 1 \eps  \Tr(\eps A) = \frac 1 \eps \Tr(1-\pdiff{T}w)$ 
$\stackrel{(\ref{eqn:stability:19})}\Rightarrow \sum_{rm} \pdiff{V_{rm}}{w_{rm}} = \frac 1 \eps \underbrace { \int dv P(v)}_{=1} \eps J_1 = J_1$ \\
 Thus we obtain the final form of our version of the Fokker-Planck Equation:
\begin{eqnarray*}
&& \Rightarrow \frac 1 \eps \partial_t S(u,t) = J_1 S(u,t) + \left[- \sum_{rm} \pdiff{V_{rm}}{w_{rm}} S(u,t) \right.\\
&& \left. + \sum_{rmr'n} \pdiff{}{u_{rm}} \left( \pdiff{V_{rm}}{w_{r'n}}(\bar w) u_{r'n} S(u,t) \right) \right] \\
&& + \frac \eps 2 \sum_{rmr'n} D_{rmr'n}(\bar w) \frac {\partial^2 S(u,t)}{\partial u_{rm} \partial u_{r'n}} \\
&&= \underbrace{J_1 S -J_1 S}_{=0} + \sum_{rmr'n} \pdiff{}{u_{rm}} \left(\underbrace{\pdiff{V_{rm}}{w_{r'n}}(\bar w)}_{=: B_{rmr'n}} u_{r'n} S(u,t) \right) \\&+& \frac \eps 2 \sum_{rmr'n} D_{rmr'n}(\bar w) \frac {\partial^2 S(u,t)}{\partial u_{rm} \partial u_{r'n}} 
\end{eqnarray*}

\section{Various calculations}

\begin{proof}[Proof of Eq. \ref{eqn:stability:36}] Initial cond.: $ S(u,0) = \delta^N (u-u_0)$ \\
\begin{eqnarray*}
&& C_{rmsn}(0) = \int du S(u,0) (u_{rm}(0)- \bar u_{rm}(0))(u_{sn}(0)- \bar u_{sn}(0)) \\
&& = (u_{rm,0}- u_{rm,0})(u_{sn,0}- u_{sn,0})=0\\
&\Rightarrow& \partial_t \bar u_{sl}(t) = -\eps \sum_{r'n} B_{slr'n} \bar u_{r'n} \Rightarrow \partial_t \bar u(t) = -B \cdot \bar u(t) 
\end{eqnarray*}
The solution of this differential equation is given by:
\begin{equation*}
\Rightarrow \bar u_0 \exp\left( -B \int_0^t \eps(\tau) d\tau \right) = Y(t) \bar u_0 
\end{equation*}
The 2nd moment can then be deducted by:
\begin{eqnarray*}
&\Rightarrow& \partial_t \expval{u_{sl}(t)\cdot u_{s'l'}} = -\eps \left( \sum_{rm} B_{slrm}  \expval{u_{rm}(t)\cdot u_{s'l'}} + \sum_{r'n} B_{s'l'r'n}  \expval{u_{sl}(t)\cdot u_{r'n}} \right) +  \eps^2 D_{sls'l'} \\
 &\Rightarrow& \partial_t C_{sls'l'} = \partial_t \left( \expval{u_{sl} \cdot u_{s'l'} } - \expval{u_{sl}} \expval{u_{s'l'}} \right) =  \partial_t \expval{u_{sl} \cdot u_{s'l'} } - (\partial_t \expval{u_{sl}}) \expval{u_{s'l'}} - \expval{u_{sl}}(\partial_t \expval{u_{s'l'}})\\
&&= -\eps \left( \sum_{rm} B_{slrm}  \expval{u_{rm}(t)\cdot u_{s'l'}} + \sum_{r'n} B_{s'l'r'n}  \expval{u_{sl}(t)\cdot u_{r'n}} \right) +  \eps^2 D_{sls'l'}\\
&& - (- \eps) \sum_{rm} B_{slrm} \bar u_{rm}  \bar u_{s'l'} - (-\eps) \sum_{r'n} B_{s'l'r'n} \bar u_{r'n}  \bar u_{sl} \\
&&=  -\eps \left( \sum_{rm} B_{slrm}  (\expval{u_{rm}(t)\cdot u_{s'l'}}- \expval{u_{rm}(t) \cdot u_{s'l'}})
 + \sum_{r'n} B_{s'l'r'n}  (\expval{u_{sl}(t)\cdot u_{r'n}}  -   \bar u_{sl} \bar u_{r'n} )\right) \\
&& +  \eps^2 D_{sls'l'} =  -\eps \left( \sum_{rm} B_{slrm} C_{rms'l'}  + \sum_{r'n}  C_{slr'n}  B^T_{r'ns'l'} \right) +  \eps^2 D_{sls'l'} \\
&&=  -\eps \left( (BC)_{sls'l'}  +  (CB^T)_{sls'l'} \right) +  \eps^2 D_{sls'l'} \\
&\Rightarrow& \partial_t C(t) = -\eps(t) (BC + CB^T) + \eps^2(t) D ()\\
&&C(t)=:S Y(t) C^*(t) Y^T(t) = e^{ -B \int_0^t \eps(\tau) d\tau} C^*(t) e^{ -B^T \int_0^t \eps(\tau) d\tau} \qquad \small{\mathrm {(interaction\ picture)}}
\end{eqnarray*}
\end{proof}

\begin{proof}[Proof of Eq. \ref{eqn:stability:37}]
\begin{eqnarray*}
&\stackrel{\ref{eqn:stability:36}}\Rightarrow&  \partial_t C(t) = -B \eps e^{ -B \int_0^t \eps(\tau) d\tau} C^*(t) e^{ -B^T \int_0^t \eps(\tau) d\tau} \\ &+&  e^{ -B \int_0^t \eps(\tau) d\tau } (\partial_t C^*(t)) e^{ -B^T \int_0^t \eps(\tau) d\tau} +  e^{ -B \int_0^t \eps(\tau) d\tau} C^*(t) (-B^T \eps) e^{ -B^T \int_0^t \eps(\tau) d\tau} \\
&\land&\  \partial_t C(t) =  -\eps(t) (B  (e^{ -B \int_0^t \eps(\tau) d\tau} C^*(t) e^{ -B^T \int_0^t \eps(\tau) d\tau}) \\&+& (e^{ -B \int_0^t \eps(\tau) d\tau } C^*(t) e^{ -B \int_0^t \eps(\tau) d\tau})  B^T) + \eps^2(t) D \\
&\Rightarrow&  e^{ -B \int_0^t \eps(\tau) d\tau} (\partial_t C^*(t)) e^{ -B^T \int_0^t \eps(\tau) d\tau} = \eps^2(t) D \\
&\Rightarrow&  \partial_t C^*(t) = \eps^2(t) e^{B \int_0^t \eps(\tau) d\tau} D  e^{B^T \int_0^t \eps(\tau) d\tau}\\
&\Rightarrow& C^*(t) = C^*(0) + \int_0^t d\tau \eps^2(\tau) e^{B \int_0^\tau \eps(\tilde \tau) d\tilde \tau} D  e^{B^T \int_0^\tau \eps(\tilde \tau) d\tilde \tau} \\
&\Rightarrow& C(t) = Y(t)\left(\underbrace{C^*(0)}_{=0} +\int_0^t d\tau \eps^2(\tau)  \underbrace{e^{B \int_0^\tau \eps(\tilde \tau) d\tilde \tau}}_{Y^{-1}(t)} D   \underbrace{e^{B^T \int_0^\tau \eps(\tilde \tau) d\tilde \tau}}_{(Y^T(t))^{-1}} \right) Y^T(t) 
\end{eqnarray*}
\end{proof}

\begin{proof}[Proof of Eq. \ref{eqn:stability:39}] now $\eps$ constant, $B,D$ (and therefore $D,Y$) commute:
\begin{eqnarray*}
\Rightarrow C &\ind{\ref{eqn:stability:37}}&  Y(t)\left(\int_0^t d\tau \eps^2 e^{B \int_0^\tau \eps(\tilde \tau) d\tilde \tau} D  e^{B^T \int_0^\tau \eps(\tilde \tau) d\tilde \tau} \right) Y^T(t)\\
&=& \eps^2 Y(t) \int_0^t d\tau  e^{(B+B^T) \int_0^\tau \eps(\tilde \tau) d\tilde \tau}  Y^T(t) D \\
&=& \eps^2 Y(t) \frac {(B + B^T)^{-1}} \eps \underbrace{e^{(B+B^T) \int_0^t \eps(\tilde \tau) d\tilde \tau}}_{=Y^{-1}(t)(Y^T(t))^{-1}}  Y^T(t) D = \eps(B + B^T)^{-1} D 
\end{eqnarray*}
\end{proof}

\begin{proof}[Proof of Eq. \ref{eqn:stability:48}] L.H.S. of (\ref{eqn:stability:48}):
\begin{eqnarray*}
\int dv P(v) [T(\bar w, v, \eps]_r - \bar w_r &=& -\bar w_r +\underbrace{\int dv P(v)}_{=1} \bar w_r + \int dv P(v) \eps h^0_{rs} (v-\bar w_r) \\
&\stackrel{\eps \ll 0} \approx& \eps \sum_s h_{rs} \int_{F_s(\bar w)} dv P(v)(v-\bar w_r) \\ &=&  \eps \sum_s h_{rs} \int_{F_s(\bar w)} dv P(v)v - \underbrace{\int_{F_s(\bar w)} dv P(v)}_{=N^{-2}} \bar w_r 
\\& =&  \eps \sum_s h_{rs} N^{-2} (\bar w_s - \bar w_r) =\footnotemark 0 
\end{eqnarray*}\footnotetext{follows since $h_{rs}$ depends only on Distance between $r$ and $s$ and periodic bounding condition}
\end{proof}

\section{Geometric Aspects of regular Euclidean maps} \label{ch:geometric_aspects}

In this section we will briefly determine some geometric properties that we need in ch.\ref{ch:analytic}. 
Given a ``center node'' and its nearest neighbors of a triangular and hexagonal map\footnote{The analogous calculations for the square map was already covered by \cite{ritter_schulten}.}, we deduce the the shift of the centroids and the volume change of the corresponding voronoi cells i.e. feature sets for small deviations of the center node. Thus we obtain the matrices $a$ and $b$ which are are then used in section \ref{ch:analytic:regular_maps} to analytically pinpoint the stability limit.

\subsection{Calculating shift of centroids}


To determine the shift of the centroids, we move the center node in the $x$-,$y$- and $z$-direction of the embedding three-dimensional Euclidean space by the infinitesimal distance $d$. We then calculate the centroids of the voronoi cells in respect to the changed cell borders by integrating the slices of the cell perpendicular to the respective direction (with area $A(h)$) weighted by the coordinate $h$ of the slice in the examined dimension and then normalizing the result by the volume of the cell. As $d$ is very small, we restrict our calculations to the linear response i.e. we will ignore all the terms that contain powers of $d$ that are larger than 1.

\begin{figure}[!ht]
 \begin{center}
\includegraphics[height=0.3\linewidth]{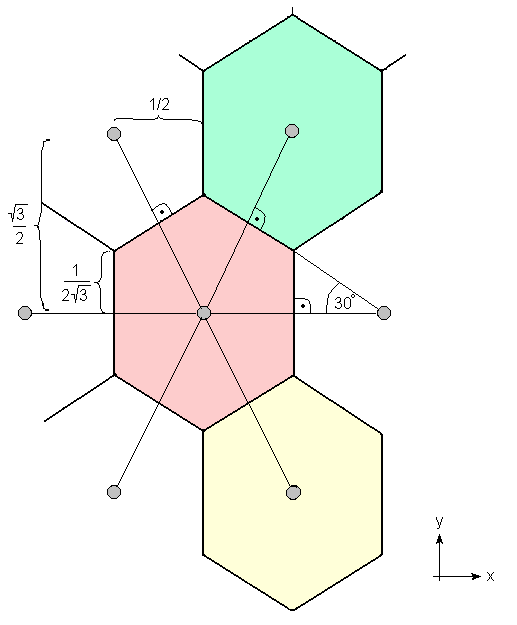}
\hspace{0.04\linewidth}
\includegraphics[height=0.3\linewidth]{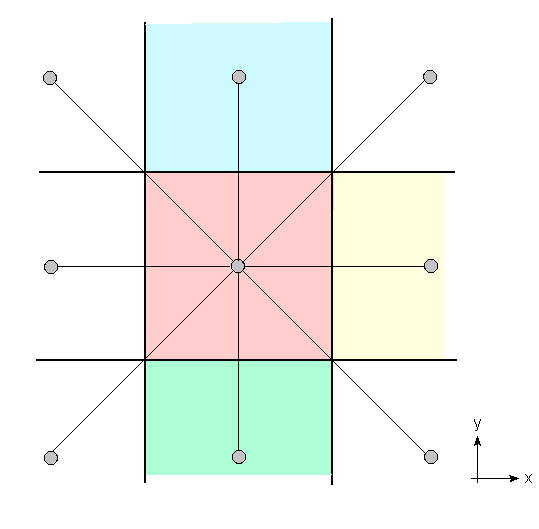}
\hspace{0.04\linewidth}
\includegraphics[height=0.3\linewidth]{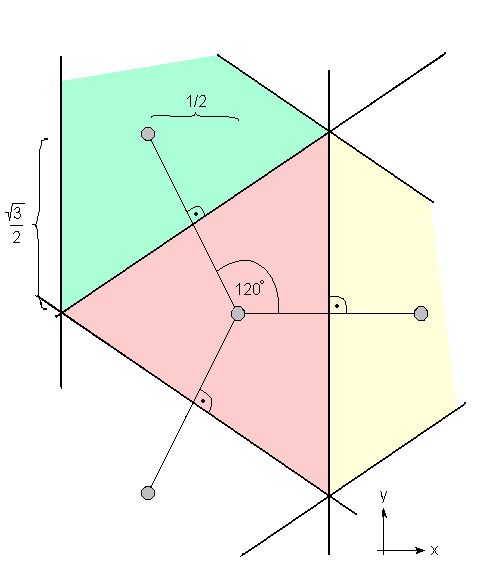}
\end{center}
\caption{center node and neighboring voronoi cells in the (left)(3,6)-map, (center)(4,4)-map and (right)(6,3)-map}
\label{fig:app_geometry:maps}
\end{figure}

\subsubsection{(3,6)-Tesselation}

As we have seen in chapter \ref{ch:tess}, each node of the (3,6)-map has six nearest neighbors. Fig.\ref{fig:app_geometry:maps} shows hereby the ``center node'' and its neighboring nodes.

\subsubsection*{x-direction}

The first case that we want to consider is when the center node is moved slightly in the x-direction (cf.Fig.\ref{fig:app_geometry:3_6_x}).

\begin{figure}[!ht]
 \begin{center}
\includegraphics[height=0.4\linewidth]{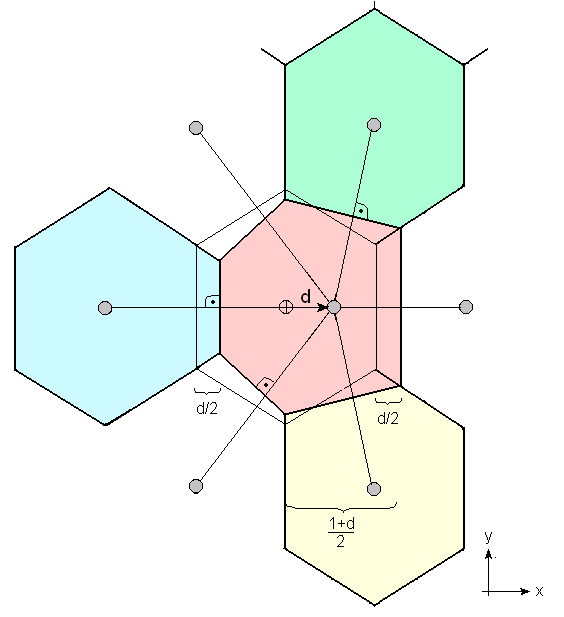}
\end{center}
\caption{Move of center node into x-direction}
\label{fig:app_geometry:3_6_x}
\end{figure}

Center in x-direction:\\
\begin{minipage}{0.2\linewidth}
\begin{center}
\includegraphics[width=\linewidth]{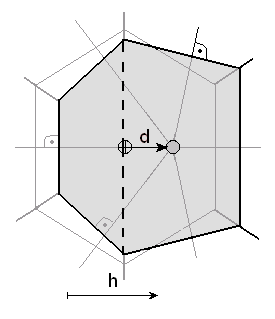}
\end{center}
\end{minipage}
\begin{minipage}{0.8\linewidth}
\begin{enumerate}[(1)]
\item 
$A(h) = \frac 2 {\sqrt 3} \left( \frac{1-d}2 \right) + 2 h \frac{1 + 2d}{\sqrt{3}} =  \frac {1-d}{\sqrt 3} + h \frac{2+4d} {\sqrt 3}$
\item 
$A(h) = \frac{1-d}{\sqrt 3} + \frac{1-d}{\sqrt 3} + \frac {1-d} 2 \frac  {2+4d}{\sqrt 3} - 2 (h - \frac {1-d} 2 ) \frac {1-2d}{\sqrt 3} $\\
$= \frac {1 -d +1 -d +2d}{\sqrt 3}  + \frac{(1-d)(1-2d)}{\sqrt 3} - \sqrt{2 - 4d}{\sqrt 3} h$ \\ 
$= \frac{2+1-d-2d}{\sqrt 3} - \frac{2 - 4d}{\sqrt 3}h = \frac {3 - 3d}{\sqrt 3} - \frac {2 - 4d}{\sqrt 3} h$
\end{enumerate}
\end{minipage}\\ \\
 \begin{eqnarray*}
\bar x  
&=& \frac {\int h A(h) dh} {\int A(h) dh} = \frac { \left[ (1-d) \frac {h^2} 2+ \frac 2 3 h^3 (1+2d)\right]_0^{\frac {1-d}2} + \left[ \frac {3 - 3d} 2 h^2 -  \frac 2 3 h^3 (1-2d) \right]_{\frac {1-d} 2}^1}
{ \left[ (1-d) h + h^2 (1 +2d) \right]_0^{\frac {1-d}2}+ \left[ (3-3d) h - h^2 (1-2d) \right]_{\frac {1-d} 2}^1 }\\
&=& \frac{ \frac {1-3d}8 + \frac 1 {12} (1-3d)(1+2d) + \frac{3-3d}{2} - \frac{2 - 4d}{3} - \frac 3 8 (1-3d) + \frac 1{12} (1-3d)(1-2d) +  \Landau{d^2} }{\frac {1 - 2d} 2 + \frac 1 4 (1-2d)(1+2d) + 3 - 3d - (1-2d) - \frac 32 (1-2d) + \frac 14 (1-2d)(1-2d) + \Landau{d^2}} \\
&=& \frac {\frac 34 + \frac 1 {12} d + \Landau{d^2}} {\frac 32 + \Landau{d^2}} = \frac 12 + \frac 1 {18} d + \Landau{d^2}
 \end{eqnarray*}
\begin{equation*}
 \Rightarrow \boxed{\Delta \bar x = \frac 5 9 d}
\end{equation*}

Center in y-direction:
\begin{equation*}
 \Rightarrow \boxed{\Delta \bar y = 0}
\end{equation*}

Lower right in x-direction:\\
\begin{minipage}{0.2\linewidth}
\begin{center}
\includegraphics[width=\linewidth]{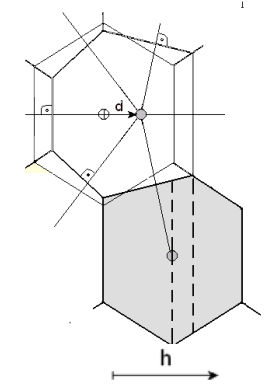}
\end{center}
\end{minipage}
\begin{minipage}{0.8\linewidth}
\begin{enumerate}[(1)]
 \item 
$A(h)=\frac 1{\sqrt 3} + h ( \frac {1 - 2d} {\sqrt 3} + \frac 1{\sqrt 3} ) = \frac 1 {\sqrt 3} + 2 h ( \frac {1-d}{\sqrt 3})$
\item 
$A(h)= \frac 1{\sqrt 3} + \frac {1-d}{\sqrt 3} + (h- \frac 12 )( \frac {1 - 2d}{\sqrt 3} - \frac 1{\sqrt 3} ) = \frac {2 - d+ d}{\sqrt 3} - \frac {2 hd}{\sqrt 3}$\\$ = \frac 2{\sqrt 3} - \frac {2 hd}{\sqrt 3}$
\item 
$A(h)=  \frac 2{\sqrt 3} - \frac {2d}{\sqrt 3}\frac {1+d} 2  - \frac 2 {\sqrt 3}(h-\frac {1+d} 2) = \frac { 2- d +1 +d}{\sqrt 3} - \frac 2 {\sqrt 3} h$\\$ ={\sqrt 3} - \frac 2{\sqrt 3} h$
\end{enumerate}
\end{minipage}\\ \\
\begin{eqnarray*}
 \bar x
&=& \frac {\int h A(h) dh} {\int A(h) dh} =\frac {\left[ \frac {h^2}{2\sqrt 3}  \frac {2 h^3} 3 \frac {1-d}{\sqrt 3} \right]_0^{\frac 12} + \left[ \frac {h^2}{\sqrt3} - \frac 2 3 h^3 \frac d {\sqrt 3} \right]^{1+\frac d2}_\frac 12 + \left[\frac {\sqrt 3} 2 h^2 - \frac 2 {3{\sqrt 3}} h^3 \right]_{\frac {1+d}2 }^1}
 { \left[ \frac h{\sqrt 3} + h^2 \frac {1-d} {\sqrt 3} \right]^{\frac 12}_0 + \left[ \frac {2h} {\sqrt 3} - \frac {h^2 d}{ \sqrt 3} \right]^{\frac {1+d}2}_{\frac 12} + \left[ {\sqrt 3} - \frac{ h^2}{\sqrt 3} \right]^1_{\frac {1+d}2}}\\
&=& \frac {\frac 1 8 + \frac{1-d} {12} + \frac d 2 + \frac  32 - \frac 23 - \frac 3 8 (1+2d) + \frac 12 (1+3d)}{\frac 1 2 + \frac {1-d} 4 + d  + 3 -1 -\frac{3 + 3d} 2 + \frac {1+2d}4} = \frac {\frac 34 - \frac d {12}} { \frac 32 - \frac d4} \\
&=& \frac  12 + \frac  1 {36} d
\end{eqnarray*}
\begin{equation*}
 \Rightarrow \boxed{\Delta \bar x = \frac 1 {36} d}
\end{equation*}

Lower right in y-direction:\\
\begin{minipage}{0.2\linewidth}
\begin{center}
\includegraphics[width=\linewidth]{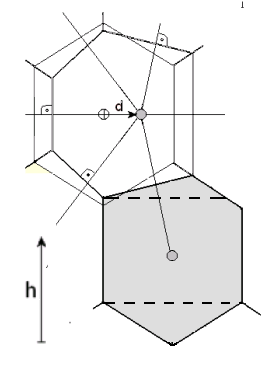}
\end{center}
\end{minipage}
\begin{minipage}{0.8\linewidth}
\begin{enumerate}[(1)]
 \item 
$A(h)= 2 h{\sqrt 3}$
\item 
$A(h)=1$
\item 
$A(h)= 1 - (h - \frac 3 {2{\sqrt 3}})(\sqrt 3 + \frac{\sqrt 3} {1-2d}) = 1 + \frac 32 (\frac {2 -2d}{1-2d}) -{\sqrt 3} h (\frac {2 -2 d}{1 -2d})$ \\
$= \frac {4-5d}{1-2d} -{\sqrt 3} h (\frac {2 -2 d}{1 -2d})$
\end{enumerate}
\end{minipage}\\ \\
\begin{eqnarray*}
 \bar y &=& \frac {\int h A(h) dh} {\int A(h) dh} = \frac {\left[ \frac {2h^3} {\sqrt 3} \right]_0^{\frac 1 {2 {\sqrt 3}}} + \left[\frac  {h^2} 2\right]_{\frac 1 {2 {\sqrt 3}}}^{\frac 3 {2 {\sqrt 3}}} + \left[ \frac {4-5d}{1-2d} \frac {h^2} 2 - \frac {h^3} {\sqrt 3} (\frac {2 -2 d}{1 -2d}) \right]_ {\frac 3 {2 {\sqrt 3}}}^{\frac {4-d} {2 {\sqrt 3}}} }
{\left[{\sqrt 3} h^2 \right]_0^{\frac 1 {2 \sqrt 3}} + \left[h\right]_{\frac 1 {2 \sqrt 3}}^{\frac 3 {2 \sqrt 3}} + \left[ \frac {4-5d}{1-2d} h - \frac {\sqrt 3 h^2} 2 (\frac {2 -2 d}{1 -2d}) \right]_ {\frac 3 {2 \sqrt 3}}^{\frac {4-d} {2\sqrt 3}} } \\ 
&=& \frac { \frac 2 {24} + \frac 9 {24} - \frac 1 {24} + \frac {(16-8)(4-5d)}{2 \cdot 12 (1-2d)} - \frac {(4-d)^2(2-2d)}{24 \cdot 3 (1-2d)} - \frac 9 {24} \frac {4 -5d}{1-2d} + \frac {27 (2-2d)}{24 \cdot 3 (1-2d)}}
{\frac 1 {4\sqrt 3} + \frac 1 {\sqrt 3}+ \frac {(4-d)(4-5d)}{2\sqrt 3 (1-2d)} - \frac{(16-8d)\sqrt 3 (2-2d)}{24 (1-2d)} - \frac 3 {2{\sqrt 3}} \frac {4-5d}{1-2d} + \frac {9{\sqrt 3}} {24} \frac {2-2d}{1-2d} } \\
&=& \frac {\frac 12 - \frac {83} {72} d} { \frac{\sqrt 3} 2 - \frac {13}{4\sqrt 3} d} = \sqrt 1{\sqrt 3} - \frac 1 {9{\sqrt 3}} d
\end{eqnarray*}
\begin{equation*}
 \Rightarrow \boxed{\Delta \bar y = \frac 1 {9\sqrt 3} d}
\end{equation*}

Analogously:

Lower left:
\begin{equation*}
 \Rightarrow \boxed{\Delta \bar x = \frac 1 {36} d}
\land \boxed{\Delta \bar y = + \frac 1 {9\sqrt 3} d}
\end{equation*}

Upper left:
\begin{equation*}
 \Rightarrow \boxed{\Delta \bar x =  \frac 1 {36} d}
 \land \boxed{\Delta \bar y = - \frac 1 {9\sqrt 3} d}
\end{equation*}

Upper right:
\begin{equation*}
 \Rightarrow \boxed{\Delta \bar x =  \frac 1 {36} d}
\land  \boxed{\Delta \bar y = + \frac 1 {9\sqrt 3} d}
\end{equation*}

Left:\\
\begin{minipage}{0.25\linewidth}
\begin{center}
\includegraphics[width=\linewidth]{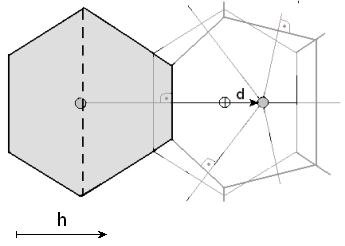}
\end{center}
\end{minipage}
\begin{minipage}{0.75\linewidth}
\begin{enumerate} [(1)]
 \item 
$2 h ( \frac { \sqrt 3 / 2}{1/2} )^{-1} = \frac {2h} {\sqrt 3} + \frac 1 {\sqrt 3}$
\item 
$\frac 1 {\sqrt 3} + 2 \cdot \frac 1 2 \cdot \frac 2 {\sqrt 3} - (h-\frac 12) \frac  2  {\sqrt 3} = \sqrt 3 - \frac 2 {\sqrt 3} h$
\end{enumerate}
\end{minipage}\\ \\
\begin{eqnarray*}
 \bar x
&=& \frac {\int h A(h) dh} {\int A(h) dh} = \frac { \left[ \frac {h^2} 2 + \frac 2 3 h^3 \right]_0^{\frac 12} + \left[ \frac 32 h^2 - \frac 23 h^3 \right]_{\frac 12}^{1-\frac d2} }
{\left[h + h^2\right]_0^{\frac 12} + \left[ 3 h - h^2 \right]_{\frac 12}^{1 - \frac d2}} \\
&=& \frac { \frac  18 + \frac 1 {12} + \frac 32 (1-\frac d2)^2 - \frac 23 (1 - \frac d2)^3 - \frac 38 + \frac 1 {12}}
{\frac 12 + \frac 14 + 3 - \frac 32 d - (1-\frac d2)^2 - \frac 3 2 + \frac 14}\\
&=& \frac 1 2 - \frac 1 6 d
\end{eqnarray*}
\begin{equation*}
 \Rightarrow \boxed{\Delta \bar x =  \frac 1 6 d}
\land  \boxed{\Delta \bar y = 0}
\end{equation*}

Right:
\begin{equation*}
 \Rightarrow \boxed{\Delta \bar x =  \frac 1 6 d}
\land  \boxed{\Delta \bar y = 0}
\end{equation*}

\subsubsection*{y-direction}

Now we want to see what we get when the center node is moved infinitesimally into the y-direction (cf.Fig.\ref{fig:app_geometry:3_6_y}).
\begin{figure}[!ht]
 \begin{center}
\includegraphics[height=0.5\linewidth]{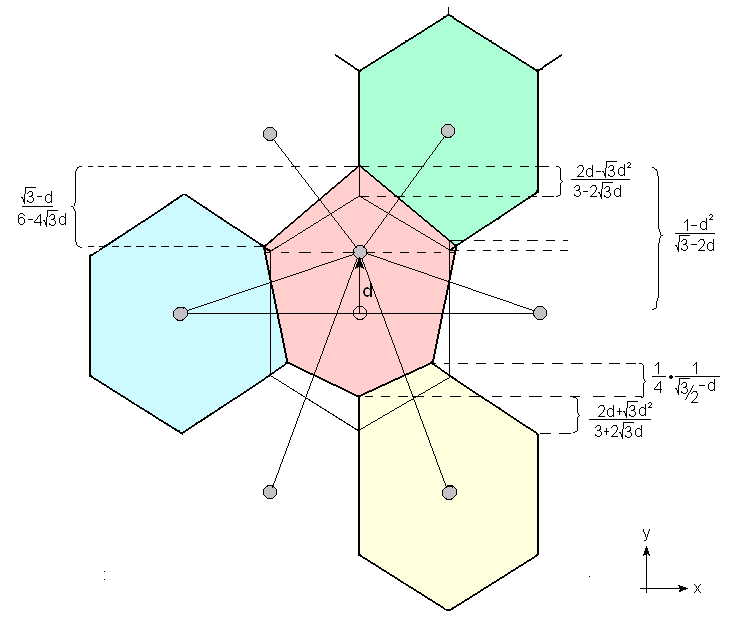}
\end{center}
\caption{Move of center node into y-direction}
\label{fig:app_geometry:3_6_y}
\end{figure}

%

Left in x-direction:\\
\begin{minipage}{0.2\linewidth}
\begin{center}
\includegraphics[width=\linewidth]{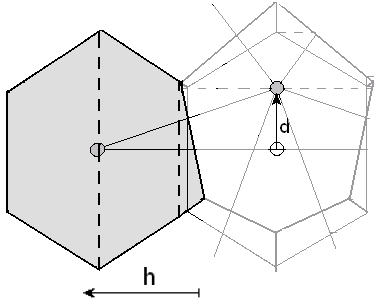}
\end{center}
\end{minipage}
\begin{minipage}{0.8\linewidth}
\begin{enumerate}[(1)]
\item 
$A(h) = \frac 1 {\sqrt 3} + \frac {2 h} {\sqrt 3}$ 
\item 
$A(h) = \frac 2 {\sqrt 3} - \frac 2 {\sqrt 3} (h-\frac 12) = \sqrt{3} - \frac {2h}{\sqrt 3}$
\item 
$A(h) = \frac 3 {\sqrt 3} - \frac 2 {\sqrt 3} \frac {\sqrt 3-d}{2 \sqrt 3} - \left(h-(\frac 1 2 + \frac {\sqrt 3-d}{2 \sqrt 3})\right)(d+ \frac 1 {2 \sqrt 3})$
$= \frac{8 \sqrt 3 + 15 d + 1 - 2 \sqrt 3 d^2}{12} - (d+\frac 1 {2 \sqrt 3})h$
\end{enumerate}
\end{minipage}\\ \\
 \begin{eqnarray*}
\bar x  
&=& \frac {\int h A(h) dh} {\int A(h) dh} \\
&=& \frac {\left[ \frac {h^2} {2\sqrt 3} + \frac {2 h^3} {3 \sqrt 3} \right]_0^{\frac 12}
+\left[\frac {\sqrt{3}h^2}2 - \frac {2h^3}{3\sqrt 3} \right]_{\frac 12}^{1-\frac d {2\sqrt 3}} 
+\left[ \frac{8 \sqrt 3 + 15 d + 1 - 2 \sqrt 3 d^2}{12} \cdot \frac {h^2}2 - (d+\frac 1 {2 \sqrt 3})\frac {h^3}3 \right]_{1-\frac d {2\sqrt 3}}^{1+\frac d {2\sqrt 3}} }
{\left[\frac h {\sqrt 3} + \frac {h^2} {\sqrt 3} \right]_0^{\frac 12}
+\left[ \sqrt{3}h - \frac {2^2}{\sqrt 3}\right]_{\frac 12}^{1-\frac d {2\sqrt 3}} 
+\left[ \frac{8 \sqrt 3 + 15 d + 1 - 2 \sqrt 3 d^2}{12}h - (d+\frac 1 {2 \sqrt 3})\frac {h^2}2 \right]_{1-\frac d {2\sqrt 3}}^{1+\frac d {2\sqrt 3}}}\\
&=& \frac {\frac 3{4\sqrt 3} + \frac 13 d + \Landau(d^2) }{\frac 6{4\sqrt 3} + \frac 13 d + \Landau(d^2)} =\frac 12 + \frac d {3 \sqrt 3}  + \Landau(d^2)
 \end{eqnarray*}
\begin{equation*}
 \Rightarrow \boxed{\Delta \bar x = \frac d {3 \sqrt 3}}
\end{equation*}


Lower right in x-direction:\\
\begin{minipage}{0.2\linewidth}
\begin{center}
\includegraphics[width=\linewidth]{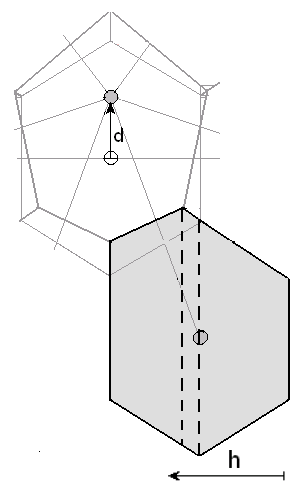}
\end{center}
\end{minipage}
\begin{minipage}{0.8\linewidth}
\begin{enumerate}[(1)]
\item 
$A(h) = \frac 1 {\sqrt 3} + \frac 2 {\sqrt 3} h$ 
\item 
$A(h) = \frac 2 {\sqrt 3}$
\item 
$A(h) = \frac 2 {\sqrt 3} - (\frac 1 {\sqrt 3} + \frac 1 {\sqrt 3 + 2d} )(h = \frac 12 - \frac d {2\sqrt 3})$
$= \frac 3 {\sqrt 3} - h \frac {2(\sqrt 3 +d)}{3 + 2 \sqrt 3 d}$
\end{enumerate}
\end{minipage}\\ \\
 \begin{eqnarray*}
\bar x  
&=& \frac {\int h A(h) dh} {\int A(h) dh} 
= \frac {\left[ \frac {h^2} {2 \sqrt 3} + \frac 2 {3\sqrt 3} h^3 \right]_0^{\frac 12}
+\left[ \frac {h^2} {\sqrt 3} \right]_{\frac 12}^{\frac 12 + \frac d {2 \sqrt 3}} 
+\left[ \frac {3h^2} {2\sqrt 3} - \frac {h^3}3 \frac {2(\sqrt 3 +d)}{3 + 2 \sqrt 3 d} \right]_{\frac 12 + \frac d {2 \sqrt 3}}^{1}}
{\left[ \frac h {\sqrt 3} + \frac {h^2} {\sqrt 3} \right]_0^{\frac 12}
+\left[ \frac {2h} {\sqrt 3}  \right]_{\frac 12}^{\frac 12 + \frac d {2 \sqrt 3}} 
+\left[ \frac 3 {\sqrt 3} - h^2 \frac {\sqrt 3 +d}{3 + 2 \sqrt 3 d} \right]_{\frac 12 + \frac d {2 \sqrt 3}}^{1}}\\
&=& \frac{(\frac 4 3 - \frac d {4 \sqrt 3})(\sqrt 3 + 2d) - \frac 2 {\sqrt 3} - \frac 2 3 d + \frac {\sqrt 3}{12} + \frac d 4 + \frac d {12}}
{(\frac 9 4 - \frac d {2 \sqrt 3})(\sqrt 3 + 2d) - \sqrt 3 - d + \frac {\sqrt 3} 4 + \frac 3 4 d} = \frac {\frac 9 {12} \sqrt 3 + \frac {25}{12} d}{\frac 6 {4} \sqrt 3 + \frac {15}{4} d} \\
&=& \frac 12 + \frac 5 {36 \sqrt 3}d + \Landau(d^2)
 \end{eqnarray*}
\begin{equation*}
 \Rightarrow \boxed{\Delta \bar x = \frac 5 {36 \sqrt 3}d }
\end{equation*}

Lower right in y-direction:\\
\begin{minipage}{0.2\linewidth}
\begin{center}
\includegraphics[width=\linewidth]{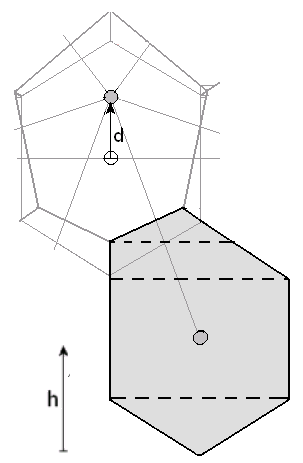}
\end{center}
\end{minipage}
\begin{minipage}{0.8\linewidth}
\begin{enumerate}[(1)]
\item 
$A(h) = 2 h \sqrt 3$ 
\item 
$A(h) = 1 $
\item 
$A(h) = 1 - \left(h- \frac{\sqrt 3 -10d}{2 \sqrt 3(\sqrt 3 - 2d)}\right)(\sqrt 3 - 2d)$
$= \frac 52 - \frac 5 {\sqrt 3} d - h(\sqrt 3 - 2d)$
\item 
$A(h) = \frac 52 - \frac 5 {\sqrt 3} d - h(\sqrt 3 - 2d) - (h - \frac {\sqrt 3} 2)(2 \sqrt 3 - 2d) $
$= 4 - \frac 5 {\sqrt 3} d - 2 h (\sqrt 3 - d)$
\end{enumerate}
\end{minipage}\\ \\
 \begin{eqnarray*}
\bar y  
&=& \frac {\int h A(h) dh} {\int A(h) dh} 
= \frac {
\left[  \right]_0^{\frac 1 {2\sqrt 3}}
+\left[ \right]_{\frac 1 {2 \sqrt 3}}^{\frac{\sqrt 3 -10d}{2 \sqrt 3(\sqrt 3 - 2d)}} 
+\left[ \right]_{\frac{\sqrt 3 -10d}{2 \sqrt 3(\sqrt 3 - 2d)}}^{ \frac {\sqrt 3} 2 }
+\left[ \right]_{\frac {\sqrt 3} 2}^{\frac 2 {\sqrt 3}}
}{
\left[  \right]_0^{\frac 1 {2 \sqrt 3}}
+\left[ \right]_{\frac 1 {2 \sqrt 3}}^{\frac{\sqrt 3 -10d}{2 \sqrt 3(\sqrt 3 - 2d)}} 
+\left[ \right]_{\frac{\sqrt 3 -10d}{2 \sqrt 3(\sqrt 3 - 2d)}}^{}
+\left[ \right]_{\frac {\sqrt 3} 2}^{\frac 2 {\sqrt 3}}
}
  \end{eqnarray*}
\begin{equation*}
 \Rightarrow \boxed{\Delta \bar y =}
\end{equation*}

\subsubsection*{z-direction}

\begin{figure}[!ht]
 \begin{center}
\includegraphics[height=0.5\linewidth]{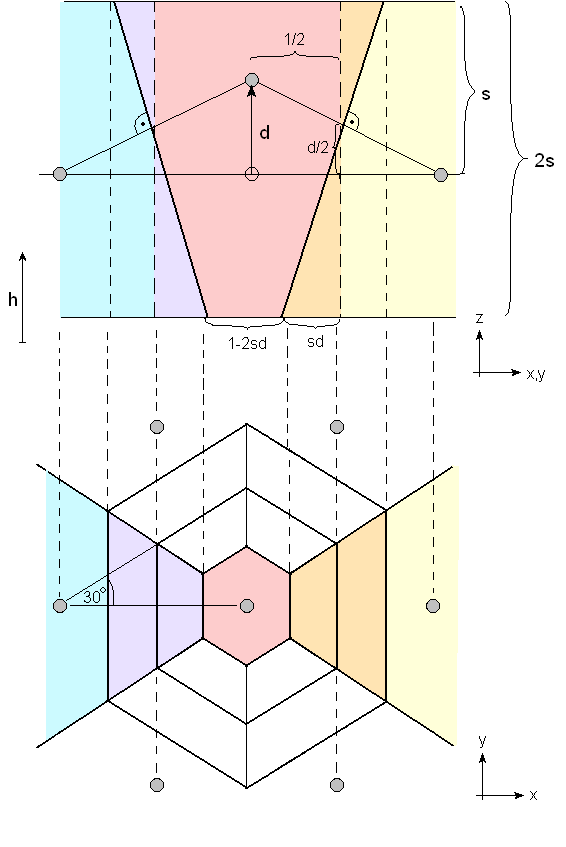}
\end{center}
\caption{Move of center node into z-direction}
\label{fig:appendix_geometry:3_6_z}
\end{figure}

Center:\\
$ 
A(h) =  2 \left[ \sqrt 3 h^2 \right]_0^{\frac w {2 \sqrt 3}} + \left[w h \right]_{\frac w {2 \sqrt 3}}^{\frac {3w} {2 \sqrt 3}} = \frac {\sqrt 3} 2 (1 - 2 ds + 2 dh) = \frac {\sqrt 3} 2 - 2 \sqrt 3 d(s-h)
$
\begin{eqnarray*}
 \bar z 
&=& \frac {\int h A(h) dh} {\int A(h) dh} = \frac{ \left[ \frac {\sqrt 3} 4 h^2  - \sqrt 3 d h^2 s + \frac {2 h^3} {\sqrt 3} d \right]_0^{2s} }
{ \left[ \frac {\sqrt 3} 2 h - 2 \sqrt 3 d h s + h^2 d \sqrt 3 \right]_0^{2s}}\\
&=&  \frac{\sqrt 3 s^2 - 4 \sqrt 3 d s^3 + \frac {16}{\sqrt 3} d s^3 }
{\sqrt 3 s - 4 \sqrt 3 d s^2 + 4 \sqrt 3 d s^2} = s + \frac 4 3 d s^2
\end{eqnarray*}
\begin{equation*}
 \Rightarrow \boxed{\Delta \bar z =  \frac 4 3 s^2 d}
\end{equation*}

the other voronoi cells yield:
\begin{eqnarray*}
 A(h) &=& \left[ \frac h {\sqrt 3} + \frac {h^2}{\sqrt 3} \right]_0^{\frac 12} + \left[ \frac {3h} {\sqrt 3} - \frac {h^2}{\sqrt 3} \right]_{\frac 12}^1 + \left[ \frac {3 h} {\sqrt 3} - \frac{h^2}{\sqrt 3} \right]_1^{\tilde v}\\
&=& \frac 1 {2 \sqrt 3} + \frac 1 {4 \sqrt 3} \sqrt 3 \tilde v - \frac {\tilde v^2} {\sqrt 3} -\frac 3 {2 \sqrt 3} + \frac 1 {4 \sqrt 3} = - \frac 1 {2 \sqrt 3} + \sqrt 3 \\
&&- d(h-s) \sqrt 3- \frac 1 {\sqrt 3} + \frac {2 d (h-s)}{\sqrt 3} = \frac{\sqrt 3} 2 - d(h-s) \frac 1 {\sqrt 3}
\end{eqnarray*} where $\tilde v = 1 - \frac v 2 + \frac 12  = 1 - d (h-s)$
\begin{eqnarray*}
 \bar z 
&=& \frac {\int h A(h) dh} {\int A(h) dh}
 = \frac{ \left[ \frac 3 4 h^2 - d (\frac{h^3} 3 - \frac {sh^2} 2) \right]_0^{2s}} { \left[\frac 32 h - d (\frac {h^2} 2 - sh) \right]_0^{2s}} 
= \frac{3s^2 -d (\frac 83 s^3 - 2 s^3)}{3 s - d (2s^2 - 2s^2)} 
= s - \frac 2 9 s^2 d
\end{eqnarray*}
\begin{equation*}
 \Rightarrow \boxed{\Delta \bar z =  -\frac 2 9 s^2 d}
\end{equation*}

\subsubsection{(6,3)-Tesselation}

For the (6,3)-map we restrict the calculations of the shifts to those resulting by a move of the central node in the $z$-direction (cf.Fig.\ref{fig:appendix_geometry:6_3_z})

\subsubsection*{z-direction}

\begin{figure}[!ht]
 \begin{center}
\includegraphics[height=0.5\linewidth]{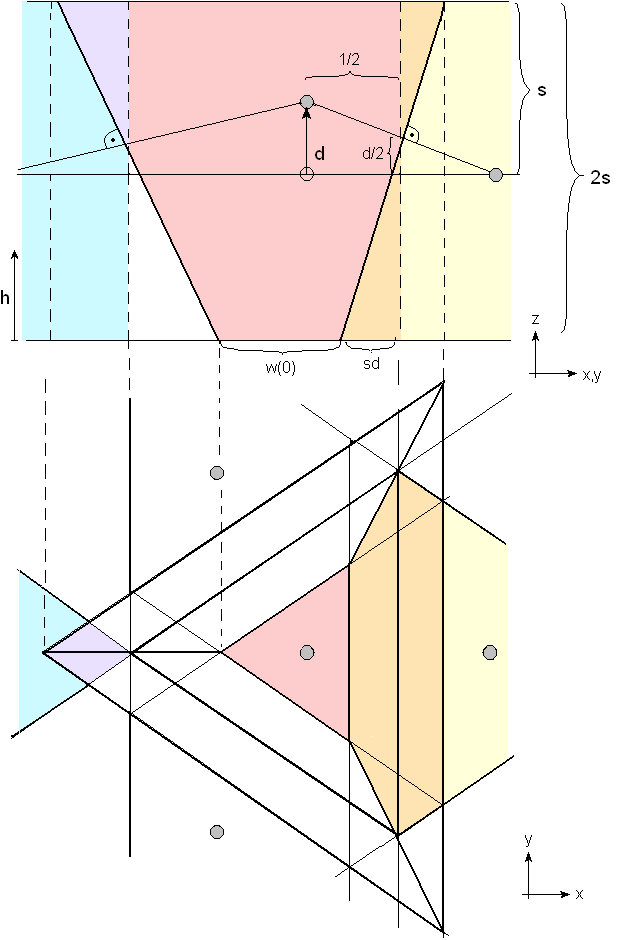}
\end{center}
\caption{Move of center node into z-direction}
\label{fig:appendix_geometry:6_3_z}
\end{figure}

Center:
\begin{equation*}
 A(h) = w(h) \cdot \frac {w(h)} {\sqrt 3} = 3 \sqrt 3 (\frac 12 + d(h-s))^2 = 3 \sqrt 3 (\frac 14 + d(h-s)) + \Landau(d^2)
\end{equation*}
where $w(h)$ is the height of the triangle of the slice at height $h$.
\begin{eqnarray*}
 \bar z 
&=& \frac {\int h A(h) dh} {\int A(h) dh}
= \frac{\left[ \frac {h^2} 8 + \frac {dh^3}3 -\frac{dsh^2}2 \right]_0^{2s} }
{ \left[ \frac h 4 + \frac {dh^2} 2 - hds \right]_0^{2s}}=  \frac{\frac {s^2}2 + \frac{8ds^3}3 - 2 ds^3 }{\frac s2 + 2 ds^2 - 2 ds^2} 
= s + \frac 4 3 d s^2
\end{eqnarray*}
\begin{equation*}
 \Rightarrow \boxed{\Delta \bar z =  \frac 4 3 s^2 d}
\end{equation*}

For the centroids of the three neighboring voronoi cells, we get:
\begin{eqnarray*}
 A(h) &=& \frac {\int h A(h) dh} {\int A(h) dh}
= \sqrt 3 ( \frac 3 4 - dh +ds)
\end{eqnarray*} where $\tilde v = 1 - \frac v 2 + \frac 12  = 1 - d (h-s)$
\begin{eqnarray*}
 \bar z 
&=& \frac {\int h A(h) dh} {\int A(h) dh}
 = \frac{\left[ \frac {3h^2} 8 - \frac {dh^3}3 +\frac{dsh^2}2 \right]_0^{2s} }
{ \left[\frac {3h} 4 - \frac{dh^2}2 + dsh \right]_0^{2s}}
= \frac{\frac {3s^2}2 + \frac 23 d s^3}{ \frac {3s}2 - 2 ds^2 + 2 ds^2} = s - \frac 4 9 s^2 d
\end{eqnarray*}
\begin{equation*}
 \Rightarrow \boxed{\Delta \bar z =  -\frac 4 9 s^2 d}
\end{equation*}
All the centroids of the other cells like the blue one in the figure are only shifted by an order of $d^2$ or higher, which can be neglected as mentioned.

\subsection{Calculation the volume change}

To shorten the discussion here, we will again focus only on the case of a move of the center node in $z$-direction.\\
\\
\subsubsection*{z-direction}

By looking closely to Fig.\ref{fig:appendix_geometry:3_6_z} and \ref{fig:appendix_geometry:6_3_z}, we can easily conclude that for each of both maps, the volume of the neighboring voronoi cells only changes by a term of $\Landau(d^2)$. Indeed, this can be seen if consider that the deformation is point symmetric about the point ($\frac 12$,$\frac d2$). Therefore an upper bound for the volume growth can be given by $sd\cdot \frac d2 \sim \Landau(d^2)$. Since this holds for all the finite neighboring voronoi cells, the volume deficit of the center voronoi cell is of the same order and can be therefore neglected.




%


\section{Proofs} \label{ch:app_distr}

\begin{proof}[Proof of Thm. \ref{thm:distributions:inversion}]
\begin{enumerate}[1)]
 \item For all $x\in \setR$:
\begin{equation*}
 P(F^{-1}(U) \leq x) = P(\inf \setgen{y\in \setR}{F(y)=U} \leq x) = P(U \leq F(x)) = F(x)
\end{equation*}
\item
\begin{equation*}
P(F(X) \leq u) = P(X \leq F^{-1}(u)) = F(F^{-1}(u)) = u
\end{equation*}
\end{enumerate}
Hence this theorem is completely proven.
\end{proof}

\begin{proof}[Proof of Thm. \ref{thm:distributions:rejection_1}]

 \begin{enumerate}[1)]
 \item Take a Borel set $B \subseteq$ A and let $B_x$ be the section of $B$ at $x$. By Tonelli's theorem follows then
$$P((X,cUf(X))\in B) = \int \int_{B_x} \frac 1 {c f(x)} du f(x) dx = \frac 1 c \int_B du dx$$
and since the area of $A$ is c, the first part has been proven.
\item It's sufficient to show that  $P(X\in B) = \int_B f(x) dx$ holds:
\begin{eqnarray*}
 P(X\in B) &=& P((X,U)\in B_1 = \setgen{(x,u)}{x\in B, 0 \leq u \leq cf(x)}) \\
&=& \frac {\int \int_{B_1} du dx}{\int \int_{A} du dx} = \frac 1c \int_B c f(x) dx = \int_B f(x) dx
\end{eqnarray*}
\end{enumerate}
\end{proof}

\begin{proof}[Proof of Thm. \ref{thm:distributions:rejection_2}]
Let $B$ be an arbitrary Borel set. Then we obtain what we want by calculation
\begin{eqnarray*}
 P(Y \in B) &=& \sum_{i=1}^\infty P (X_1 \notin A, \dots , X_{i-1} \notin A,X_i \in B \bigcap A) \\
&=& \sum_{i=1}^\infty (1-p)^{i-1} P(X_1 \in A \bigcap B) = \frac 1 {1-(1-p)} P(X_1 \in A \bigcap B) \\
&=& \frac { P(X_1 \in A \bigcap B) }p
\end{eqnarray*}
If $X_1$is uniformly distributed in $A_0$ then we get
\begin{equation*}
 P(Y \in B) = \frac { P(X_1 \in A \bigcap B) }p = \frac { P(X_1 \in A \bigcap B) }{P(X\in A_1)} = \frac {\int_{A_0 \bigcap A \bigcap B} dx }{\int_{A_0} dx} \cdot \frac {\int_{A_0 } dx }{\int_{A_0\bigcap A} dx} = \frac {\int_{A \bigcap B} dx }{\int_{A} dx} 
\end{equation*}
which concludes this proof.
\end{proof}


\chapter{Software Manual} \label{ch:software}

This manual describes how to install and use the software, that was implemented in the course of this thesis and is included as source code on the CD-ROM which is attached to the printouts.

\section{System requirements}

To install and use the GRiSOM software library, the following requirements have to been fulfilled

\begin{itemize}
 \item ISO conform C++ compiler (e.g. g++ of the GNU compiler collection)
 \item Boost C++ Libraries \cite{boost} installed
 \item MAPM \cite{mapm} (version $\ge$4.9.5) and MAPMX \cite{mapmx} Libraries installed (see below)
 \item \texttt{make} utility to install it using the Makefile
 \item (for visualization:) gnuplot installed
 \item (for creation of API:) doxygen installed \cite{doxygen}
\end{itemize}

\section{Installation manual}

Before starting to install the GRiSOM software library, make sure that the system requirements above are met. Both the MAPM \cite{mapm} and the MAPMX \cite{mapmx} Libraries are thereby included on the CD-ROM. They can be found in the subfolder \texttt{<CD-directory>/GRiSOM/libs}. Just unpack the two zipped tar files and follow the instructions in the \texttt{README} and \texttt{INSTALL} files. To install now the GRiSOM libraries, just perform the following steps

\begin{enumerate}[1.]
\item Unpack the sources (\texttt{GRiSOM.tar.gz}) located in the subfolder \texttt{<CD-directory>/GRiSOM}
\item Call $\texttt{make libs}$ to compile the sources. This will create the static libraries \texttt{libgrisom\_core.a} and \texttt{libgrisom\_aux.a} as well as the corresponding shared libraries \texttt{libgrisom\_core.so.*} and \texttt{libgrisom\_aux.so.*} 
\item \texttt{make install} will try to copy the static library files and headers into the directories specified in the Makefile. By default, this is \texttt{/usr/local/lib} for libraries and \texttt{/usr/local/include/grisom} for header files.\footnote{For further detail be referred to the commentaries in the Makefiles.} 
Alternatively, \texttt{make shared\_install} can be used to copy and install the shared libraries instead of the static ones (needs root permissions!).
\end{enumerate}

Now the libraries are installed and ready to be used e.g. for compiling the simulation program \texttt{main\_test} used as the simulation program in the numerical stability analysis as well as any testing program of the software package (cf. section \ref{sec:src:tester}).

\section{Programs}

The following programs are provided by the GRiSOM package. 

\subsection{Simulation program \texttt{main\_test}} \label{sec:software:main_test}

The simulation program \texttt{main\_test} is a tool to study the stability limit of a SOM configuration. It can be compiled by simply using the command
\begin{center}
\texttt{make simulation}
 \end{center}
It is then executed by passing a set of 12 arguments (separated by whitespaces) to it to specify the configuration of the SOM and the search procedure:
\begin{itemize}
 \item[-] <type of spaces> (available: EUCL\_EUCL, EUCL\_DISK, DISK\_DISK)
 \item[-] <first Schlaefli symbol, defining shape of cells in the map> (integer)
 \item[-] <second Schlaefli symbol, defining number of neighbors at each vertex in the map>  (integer)
 \item[-] <type of neighborhood function> (available: GAUSS, NN, VQ)
 \item[-] <size of epsilon> (double)
 \item[-] <size of sigma> (double) [is ignored if NN,VQ is used]
 \item[-] <start value for search> (double)
 \item[-] <step size of search> (double)
 \item[-] <number of steps> (integer)
 \item[-] <number of measurements per step> (integer)
 \item[-] <number of adaption processes between two measurements> (integer)
 \item[-] <euclidean map: number of nodes along one edge, hyperbolic: number of layers> (integer)
\end{itemize}

Fig.\ref{fig:man:konsole} shows such an execution of \texttt{main\_test}.

\begin{figure}[!ht]
 \begin{center}
\includegraphics[width=\linewidth]{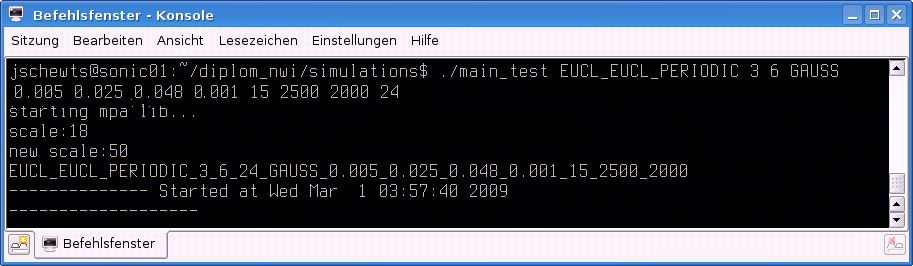}
\end{center}
\caption{execution of \texttt{main\_test}}
\label{fig:man:konsole}
\end{figure}

According to the passend arguments, it simulates a GRiSOM with both Euclidean periodic map and feature space, a hexagonal map with a 24x24 grid and a Gaussian neighborhood function with $\sigma=0.025$. The learn rate $\eps$ of the adaption is thereby set to $0.005$. The whole search run consists of $15$ steps starting at $s=0.048$ with a step size of $0.001$. In each of this steps $2500$ snapshots, which lie $2000$ adaption steps apart, are presented to the \texttt{Mean\_Extra\_Dim\_Analyzer}, which then calculates the equilibrium state by averaging the states in the snapshots and computes then the error according to the deviation of the equilibrium state thus obtained and the reference equilibrium at $s=0$. The equilibrium states for each search step and the results for the error are then saved in files in the subdirectory of \texttt{./data/} specified by the string in the line above the starting time.

\subsection{Testing programs [\texttt{tester\_*}]}
The testing programs like \texttt{tester\_tess} and \texttt{tester\_stochastics} including their source code are provided here as instructive examples how to use several features of the software library. They can be furthermore easily adapted to test extensions like new distributions or spaces. Just make sure, that you added the location of your library extension in the \texttt{INC\_PATH} (if necessary) and the library name in the \texttt{EXTERN\_LIBS} variable in the Makefile and call \texttt{make tester\_<name>} to compile the testing program.

\section{API}

An already generated pdf version of the API can be found in the \texttt{GRiSOM/API}-subdirectory, but if you want to get a HTML version or maybe only the API of the core library, just use \texttt{make full\_api} and \texttt{make core\_api}, respectively, to create them.

\chapter{Simulation data/results}

\begin{table}[!ht]
\begin{tabular}{|l||c|c|c|c|c|c|}\hline
neighb.fct. & 6x6 & 10x10 & 12x12 & 16x16 & 24x24 & 32x32  \\ \hline \hline
\multicolumn{7}{c}{periodic boundary conditions} \\ \hline
NN &0.412\err{0.002}&0.242\err{0.001}&0.202\err{0.001}&0.152\err{0.001}&0.101\err{0.001}&0.076\err{0.001}\\ \hline
G$\sigma$=0.025&0.178\err{0.001}&0.108\err{0.001}&0.091\err{0.002}&0.067\err{0.001}&0.054\err{0.001}& --- \\ \hline
G$\sigma$=0.05&0.175\err{0.001}&0.110\err{0.001}&0.105\err{0.001}&0.114\err{0.002}&0.105\err{0.005}&0.102\err{0.005}\\ \hline
G$\sigma$=0.10&0.211\err{0.001}&0.225\err{0.001}&0.220\err{0.005}&0.211\err{0.002}&0.204\err{0.002}&0.201\err{0.002}\\ \hline
G$\sigma$=0.12& --- &0.264\err{0.005}& --- & --- & --- & ---  \\ \hline
G$\sigma$=0.20&0.443\err{0.002}&0.418\err{0.002}&0.412\err{0.001}&0.406\err{0.002}&0.405\err{0.002}&0.403\err{0.002}\\ \hline
VQ &0.175\err{0.001}&0.105\err{0.001}&0.087\err{0.002}&0.065\err{0.001}&0.044\err{0.002}&0.030\err{0.003}\\ \hline
\multicolumn{7}{c}{no boundary conditions} \\ \hline
G$\sigma$=0.05&0.166\err{0.001}&0.104\err{0.001}&0.098\err{0.002}&0.100\err{0.002}&0.100\err{0.002}&0.098\err{0.001} \\ \hline
\end{tabular}
\caption{Results for (3,6)-map}
\label{tab:results:eucl_3_6}
\end{table}

\begin{table}[!ht]
\begin{tabular}{|l||c|c|c|c|c|c|} \hline
neighb.fct. & 6x6 & 10x10 & 12x12 & 16x16 & 24x24 & 32x32  \\ \hline \hline
\multicolumn{7}{c}{periodic boundary conditions} \\ \hline
NN &0.258\err{0.001}&0.154\err{0.001}&0.128\err{0.001}&0.096\err{0.001}&0.064\err{0.001}&0.048\err{0.001}\\ \hline
G$\sigma$=0.025&0.123\err{0.002} & 0.071\err{0.002}&0.060\err{0.002}&---&---&---\\ \hline
G$\sigma$=0.05&0.120\err{0.001} & 0.109\err{0.002}&0.113\err{0.002}&0.110\err{0.002}&0.104\err{0.002 }&0.102\err{0.002}\\ \hline
G$\sigma$=0.10&0.229\err{0.002}&0.215\err{0.001}&0.212\err{0.005}&0.207\err{0.003}&0.201\err{0.002}&0.200\err{0.002 }\\ \hline
G$\sigma$=0.12& --- &0.250\err{0.005}& --- & --- & --- & ---  \\ \hline
G$\sigma$=0.20&0.394\err{0.002}&0.392\err{0.005}&0.392\err{0.002}&0.388\err{0.002}&0.385\err{0.005}&0.385\err{0.002}\\ \hline
VQ &0.115\err{0.005}&0.068\err{0.001}&0.058\err{0.001}&0.042\err{0.002}&0.029\err{0.003}&0.022\err{0.002}\\ \hline
\multicolumn{7}{c}{no boundary conditions} \\ \hline
G$\sigma=0.05$ &0.116\err{0.001}&0.107\err{0.001}&0.107\err{0.002}&0.100\err{0.002}&0.095\err{0.002}&0.093\err{0.001}\\ \hline
G$\sigma=0.2$ &0.336\err{0.001}&0.320\err{0.002}&0.318\err{0.002}&0.313\err{0.002}&0.307\err{0.001}&0.305\err{0.001}\\ \hline
\end{tabular}
\caption{Results for (4,4)-map}
\label{tab:results:eucl_4_4}
\end{table}

\begin{table}[!ht]
\begin{tabular}{|l||c|c|c|c|c|c|}\hline
neighb.fct. & 6x6 & 10x10 & 12x12 & 16x16 & 24x24 & 32x32  \\ \hline \hline
\multicolumn{7}{c}{periodic boundary conditions} \\ \hline
NN &0.223\err{0.001}&0.132\err{0.001}&0.110\err{0.001}&0.083\err{0.001}&0.055\err{0.001}&0.041\err{0.001}\\ \hline
G$\sigma$=0.025&0.121\err{0.002}&0.071\err{0.002}&0.061\err{0.002}&---&---&---\\ \hline
G$\sigma$=0.05&0.122\err{0.001}&0.106\err{0.005}&0.106\err{0.002}&0.104\err{0.002}&0.104\err{0.002}&0.102\err{0.002}\\ \hline
G$\sigma$=0.10&0.214\err{0.002}&0.214\err{0.001}&0.210\err{0.001}&0.205\err{0.001}&0.202\err{0.002}&0.201\err{0.002}\\ \hline
G$\sigma$=0.12& --- &0.250\err{0.005}& --- & --- & --- & ---  \\ \hline
G$\sigma$=0.20&0.370\err{0.001}&0.365\err{0.001}&0.364\err{0.001}&0.362\err{0.001}&0.361\err{0.001}&0.359\err{0.002}\\ \hline
VQ &0.120\err{0.001}&0.068\err{0.001}&0.058\err{0.002}&0.042\err{0.001}&0.028\err{0.002}&0.022\err{0.002}\\ \hline
\multicolumn{7}{c}{no boundary conditions} \\ \hline
G$\sigma$=0.05&0.121\err{0.001}&0.097\err{0.002}&0.095\err{0.002}&0.094\err{0.001}&0.094\err{0.001}&0.091\err{0.001}\\ \hline
\end{tabular}
\caption{Results for (6,3)-map}
\label{tab:results:eucl_6_3}
\end{table}

\begin{table}[!ht]
\begin{center}
\begin{tabular}{|l||c|c|c||c|c|} \hline
& \multicolumn{3}{c||}{(3,7)} & \multicolumn{2}{c|}{(3,8)}  \\ \hline
neighbourhood & 3 layers & 4 layers & 5 layers &  3 layers & 4 layers  \\ \hline \hline
NN &0.209\err{0.002}&0.085\err{0.002}&0.032\err{0.001}&0.049\err{0.001}&0.010\err{0.001}\\ \hline
Gauss $\sigma$=0.05&0.081\err{0.001}&0.034\err{0.003}&0.013\err{0.001}&0.023\err{0.001}&0.004\err{0.001}\\ \hline
Gauss $\sigma$=0.1&0.085\err{0.002}&0.034\err{0.002}&0.013\err{0.001}&0.023\err{0.001}&0.004\err{0.001}\\ \hline
Gauss $\sigma$=0.5&0.104\err{0.001}&0.040\err{0.001}&0.015\err{0.001}&0.023\err{0.001}&0.004\err{0.001}\\ \hline
Gauss $\sigma$=1.0&0.235\err{0.005}&0.093\err{0.001}&0.035\err{0.001}&0.033\err{0.001}&0.007\err{0.001}\\ \hline
Gauss $\sigma$=2.0&0.512\err{0.003}&0.223\err{0.002}&0.090\err{0.005}&0.088\err{0.002}&0.022\err{0.003}\\ \hline
\end{tabular}
\end{center}
\caption{Results for triangular hyperbolic map in euclidean cartesian space}
\end{table}

\begin{table}[!ht]
\begin{center}
\begin{tabular}{|l||c|c|c|} \hline
& \multicolumn{3}{c|}{(4,5)}  \\ \hline
neighbourhood & 3 layers & 4 layers & 5 layers  \\ \hline \hline
NN &0.114\err{0.001}&0.044\err{0.002}&0.0165\err{0.001}\\ \hline
Gauss $\sigma$=0.05&0.123\err{0.002}&0.047\err{0.001}&0.015\err{0.001}\\ \hline
Gauss $\sigma$=0.1&0.120\err{0.001}&0.047\err{0.001}&0.015\err{0.001}\\ \hline
Gauss $\sigma$=0.5&0.120\err{0.001}&0.042\err{0.002}&0.015\err{0.001}\\ \hline
Gauss $\sigma$=1.0&0.198\err{0.002}&0.070\err{0.005}&0.023\err{0.001}\\ \hline
Gauss $\sigma$=2.0&0.470\err{0.010}&0.195\err{0.005}&---\\ \hline
\end{tabular}
\end{center}
\caption{Results for square hyperbolic map in euclidean cartesian space}
\end{table}

\begin{table}[!ht]
\begin{center}
\begin{tabular}{|l||c|c|c|c|} \hline
& \multicolumn{4}{c|}{(7,3)}  \\ \hline
neighbourhood & 3 layers & 4 layers & 5 layers & 6 layers \\ \hline \hline
NN&0.342\err{0.003}&0.260\err{0.003}&0.182\err{0.002}&0.115\err{0.005}\\ \hline
Gauss $\sigma$=0.05&0.207\err{0.001}&0.171\err{0.001}&0.122\err{0.002}&---\\ \hline
Gauss $\sigma$=0.1&0.208\err{0.002}&0.177\err{0.001}&0.126\err{0.002}&---\\ \hline
Gauss $\sigma$=0.5&0.375\err{0.001}&0.284\err{0.002}&0.198\err{0.002}&---\\ \hline
Gauss $\sigma$=1.0&0.684\err{0.002}&0.600\err{0.005}&0.435\err{0.005}&---\\ \hline
\end{tabular}
\end{center}
\caption{Results for (7,3)-hyperbolic map in euclidean cartesian space}
\end{table}

\begin{table}[!ht]
\begin{center}
\begin{tabular}{|l||c|c|c|c|} \hline
& \multicolumn{4}{c|}{(6,4)}  \\ \hline
neighbourhood & 3 layers & 4 layers & 5 layers & 6 layers  \\ \hline \hline
NN&0.134\err{0.001}&0.047\err{0.001}&0.018\err{0.002}&0.007\err{0.002}\\ \hline
Gauss $\sigma$=0.05&0.148\err{0.001}&0.059\err{0.001}&0.016\err{0.001}&---\\ \hline
Gauss $\sigma$=0.1&0.147\err{0.001}&0.061\err{0.001}&0.018\err{0.003}&---\\ \hline
Gauss $\sigma$=0.5&0.131\err{0.002}&0.053\err{0.001}&0.018\err{0.001}&---\\ \hline
Gauss $\sigma$=1.0&0.168\err{0.002}&0.060\err{0.002}&0.020\err{0.005}\\ \hline
Gauss $\sigma$=2.0&0.440\err{0.010}&0.165\err{0.010}&---\\ \hline
\end{tabular}
\end{center}
\caption{Results for (6,4)-hyperbolic map in euclidean cartesian space}
\end{table}

\begin{table}[!ht]
\begin{center}
\begin{tabular}{|l||c|c|} \hline
$s$ & mean deviation $\bar u_3$ & normalized mean deviation $\frac {\bar u_3} s$  \\ \hline \hline
\multicolumn{3}{c}{run 1}  \\ \hline
0.00000&0.00000&0.00000  \\ \hline
0.10000&0.03494&0.34943  \\ \hline
0.20000&0.09088&0.45438  \\ \hline
0.30000&0.13948&0.46495  \\ \hline
0.40000&0.19054&0.47635  \\ \hline
0.50000&0.23466&0.46933  \\ \hline
\multicolumn{3}{c}{run 2}  \\ \hline
0.02000&0.00003&0.00166  \\ \hline
0.04000&0.00007&0.00173  \\ \hline
0.06000&0.00015&0.00248  \\ \hline
0.08000&0.00040&0.00504  \\ \hline
0.10000&0.03708&0.37082  \\ \hline
0.12000&0.04539&0.37827  \\ \hline
0.14000&0.05378&0.38411  \\ \hline
\multicolumn{3}{c}{run 3}  \\ \hline
0.08000&0.00041&0.00515  \\ \hline
0.08500&0.00115&0.01357  \\ \hline
0.09000&0.00456&0.05071  \\ \hline
0.09500&0.01997&0.21020  \\ \hline
0.10000&0.03057&0.30570  \\ \hline
\multicolumn{3}{c}{run 4}  \\ \hline
0.08200&0.00062&0.00760  \\ \hline
0.08400&0.00092&0.01097  \\ \hline
0.08600&0.00200&0.02324  \\ \hline
0.08800&0.00377&0.04285  \\ \hline
0.09000&0.00463&0.05145  \\ \hline
\multicolumn{3}{c}{run 5}  \\ \hline
0.08300&0.00067&0.00805  \\ \hline
0.08400&0.00106&0.01258  \\ \hline
0.08500&0.00111&0.01301  \\ \hline
0.08600&0.00192&0.02227  \\ \hline
0.08700&0.00234&0.02694  \\ \hline
0.08800&0.00329&0.03737  \\ \hline
\multicolumn{3}{c}{final run}  \\ \hline
0.07700&0.00034&0.00442  \\ \hline
0.07800&0.00040&0.00508  \\ \hline
0.07900&0.00040&0.00503  \\ \hline
0.08000&0.00050&0.00626  \\ \hline
0.08100&0.00072&0.00883  \\ \hline
0.08200&0.00096&0.01165  \\ \hline
0.08300&0.00527&0.06344  \\ \hline
0.08400&0.01325&0.15780  \\ \hline
0.08500&0.01575&0.18526  \\ \hline
0.08600&0.01695&0.19712  \\ \hline
\end{tabular}
\end{center}
\caption{Search runs for (6,3)-map with N=16}
\label{tab:result:search_runs}
\end{table}

\chapter{UML Diagrams}
 \begin{figure}[!ht]
\begin{center}
\includegraphics[width=\linewidth]{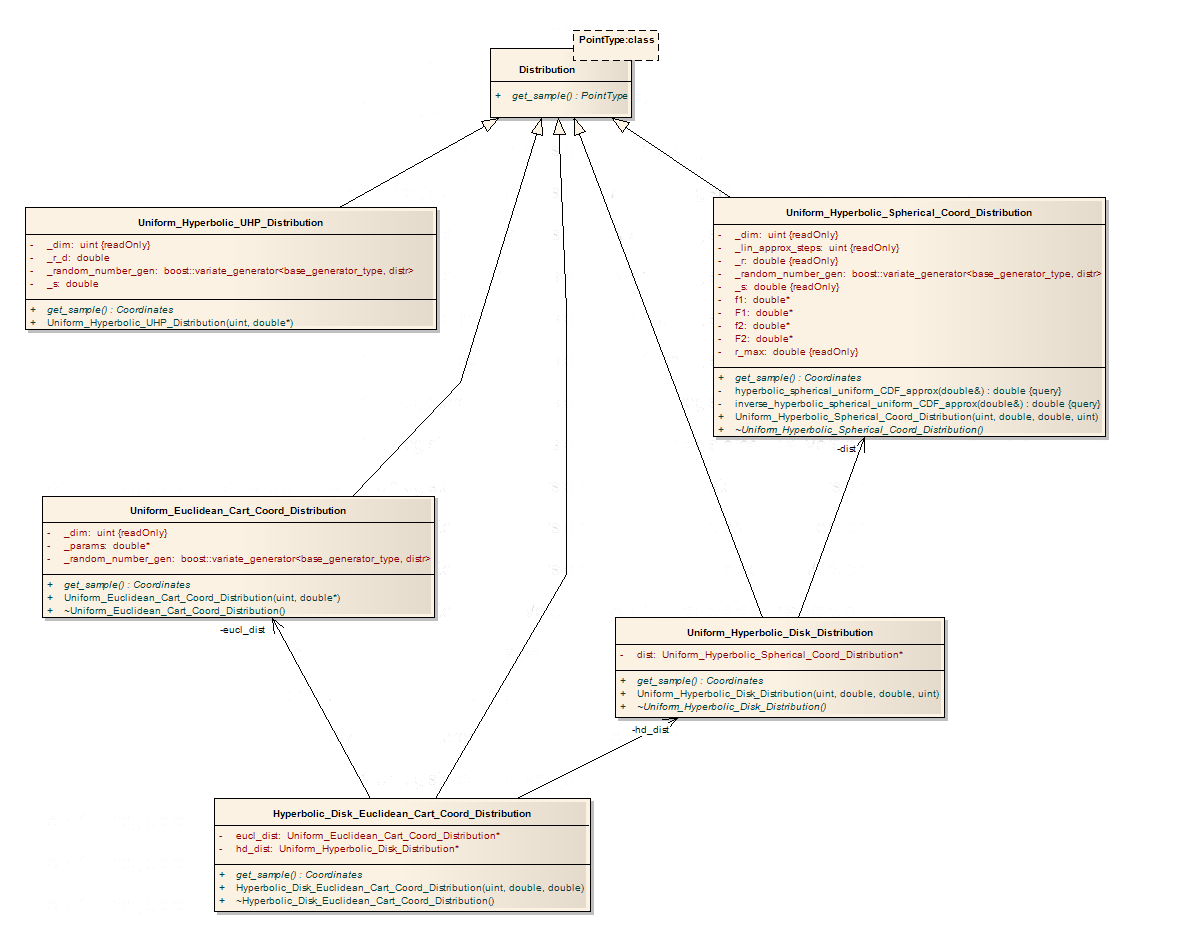}
\end{center}
\caption{Class diagram of distributions}
\label{fig:uml:dist}
\end{figure}
\begin{landscape}
 \begin{figure}[!ht]
\begin{center}
\includegraphics[width=\linewidth]{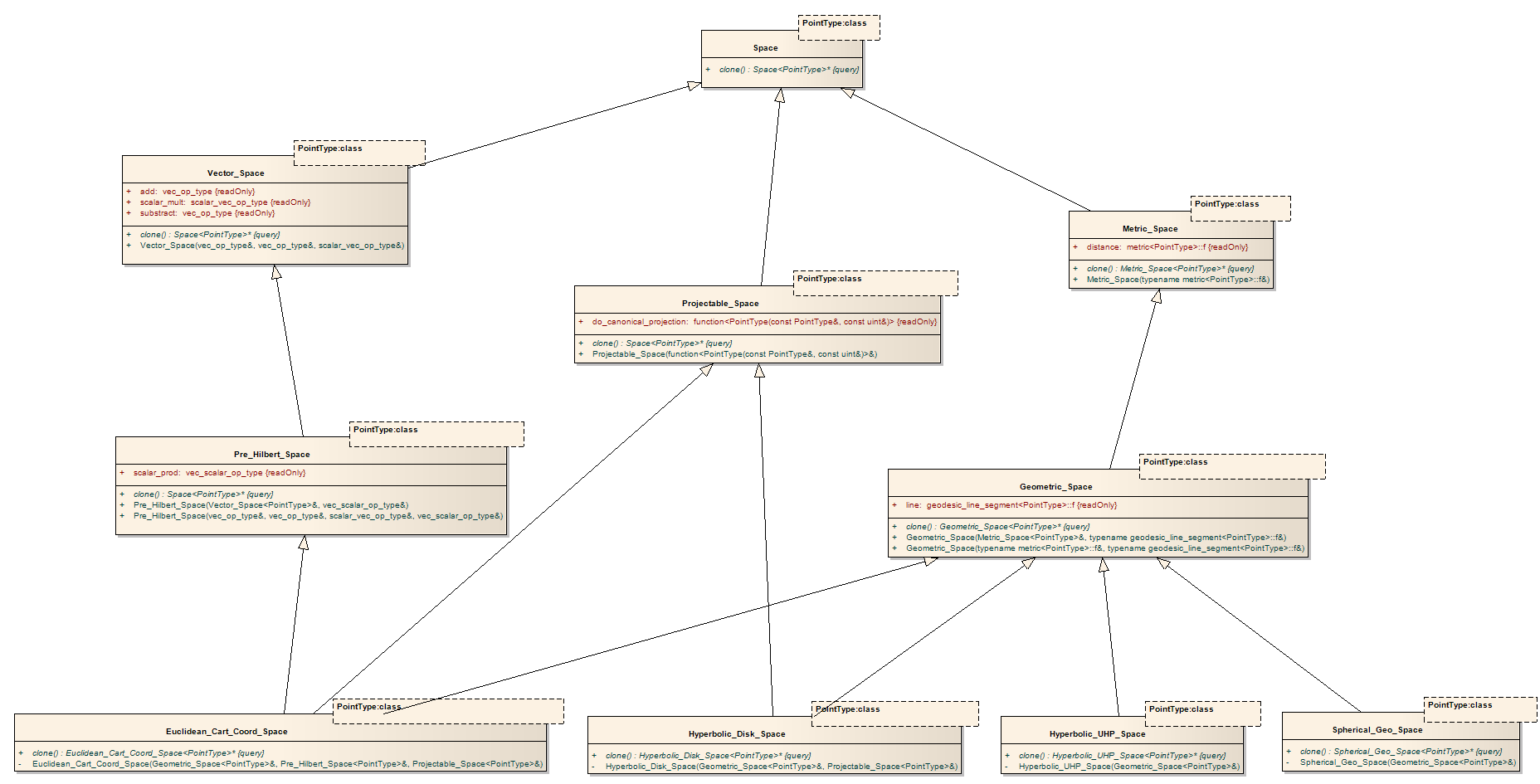}
\end{center}
\caption{Class diagram of spaces}
\label{fig:uml:space}
\end{figure}
\newpage
 \begin{figure}[!ht]
\begin{center}
\includegraphics[width=\linewidth]{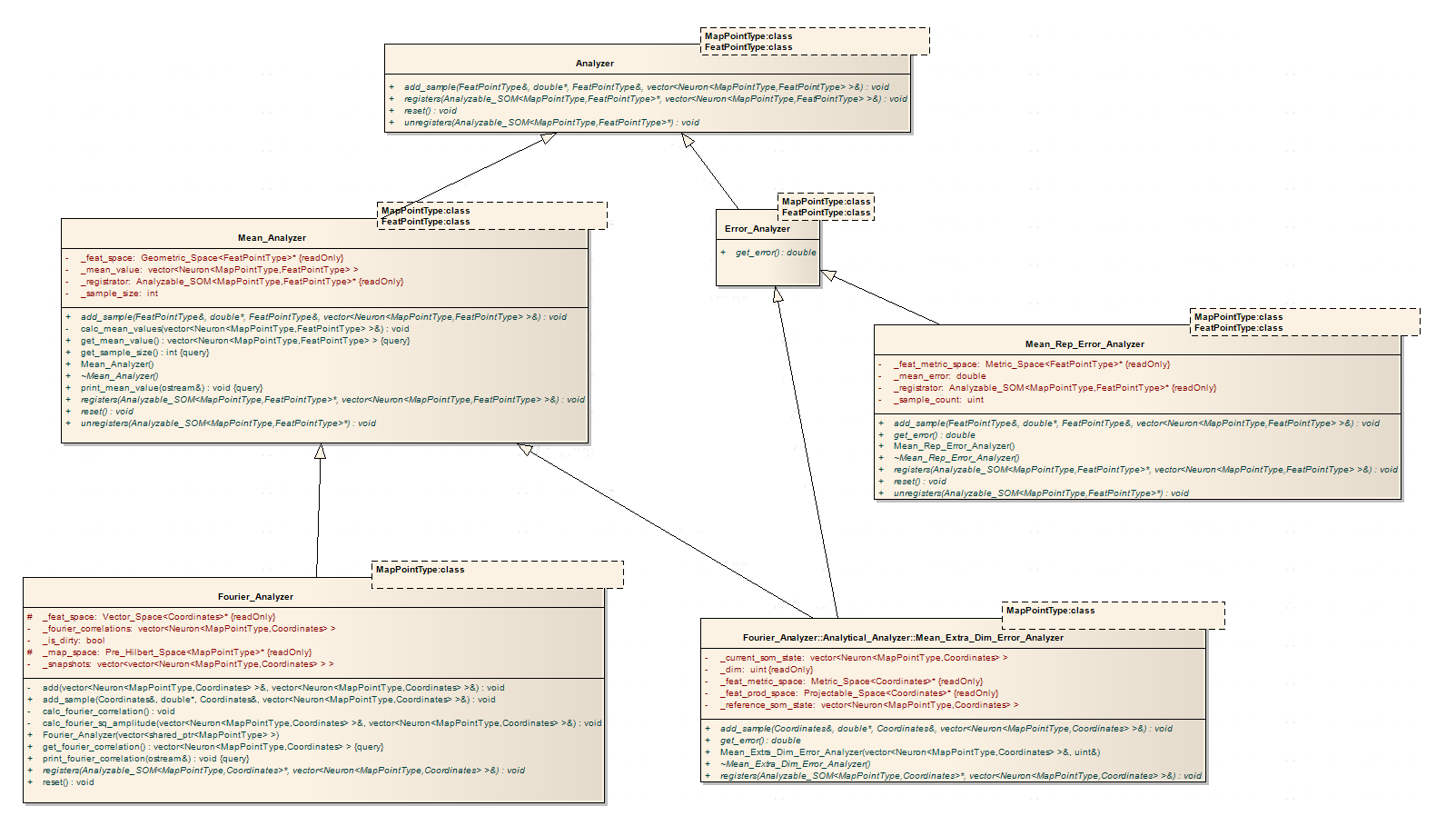}
\end{center}
\caption{Class diagram of analyzers}
\label{fig:uml:analyzers}
\end{figure}
\newpage
 \begin{figure}[!ht]
\begin{center}
\includegraphics[width=\linewidth]{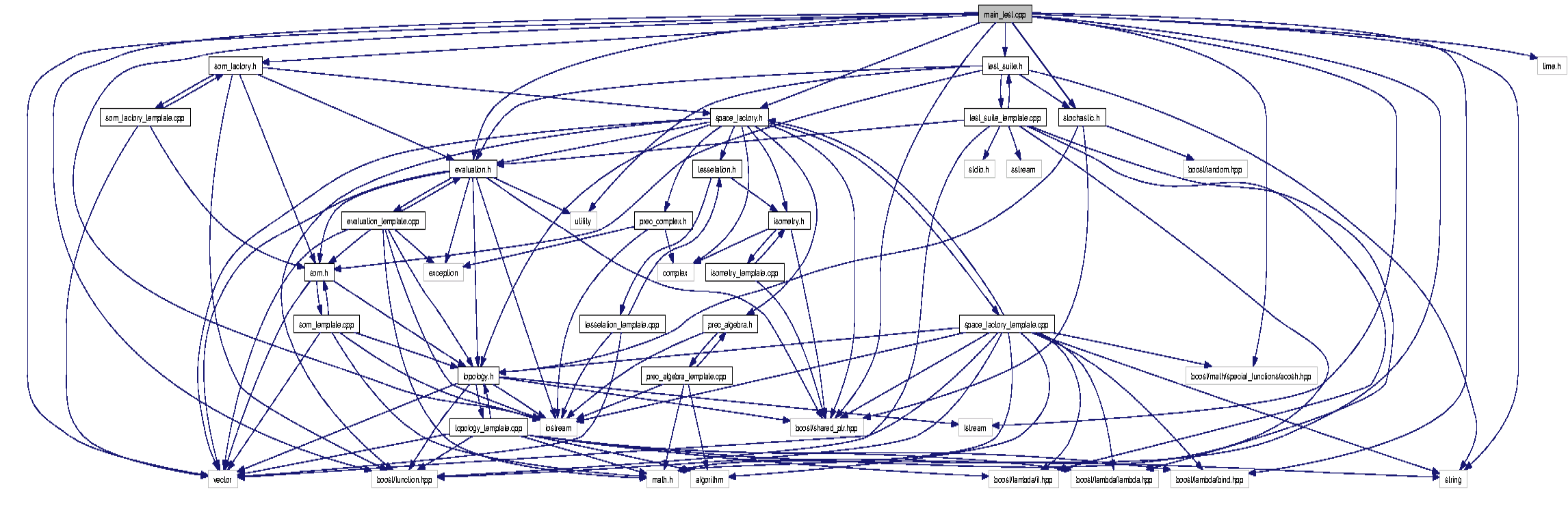}
\end{center}
\caption{Dependency diagram of the projects (header files only)}
\label{fig:uml:dep}
\end{figure}
\end{landscape}

\abschliessendeerklaerung{\author}

\end{document}